\global\long\def\E{\mathbb{E}}
\global\long\def\mA{\mathcal{A}}
\global\long\def\mS{\mathcal{S}}
\newcommand{\Kq}{K}     
\newcommand{\NN}{\mathbb{N}}
\newcommand{\RR}{\mathbb{R}}
\newcommand{\cA}{{\mathcal A}}
\newcommand{\cS}{{\mathcal S}}
\newcommand{\Te}{T_{\epsilon}}
\newcommand{\rank}{\textrm{rank}}
\newcommand{\rowrank}{\textrm{rank}_{\textrm{row}}}
\newcommand{\colrank}{\textrm{rank}_{\textrm{col}}}
\newcommand{\tr}{\textrm{Tr}}
\newcommand{\ceta}{{\sf c}_{{\sf me}}}
\newcommand{\C}{{\sf C}_{{\sf me}}}
\setlist[itemize]{leftmargin=*}
\setlist[enumerate]{leftmargin=*}
\theoremstyle{plain}
\newtheorem{thm}{Theorem}
\theoremstyle{plain}
\newtheorem{defn}[thm]{Definition}
\theoremstyle{plain}
\newtheorem{lem}[thm]{Lemma}
\theoremstyle{plain}
\theoremstyle{plain}
\newtheorem{cor}[thm]{Corollary}
\theoremstyle{definition}
\theoremstyle{plain}
\newtheorem{prop}[thm]{Proposition}
\newtheorem{assumption}{Assumption}
\title{Sample Efficient Reinforcement Learning\\ via Low-Rank Matrix Estimation}
\author{
  Devavrat Shah  \\
  EECS, MIT \\
  \texttt{devavrat@mit.edu} \\
  \and
  Dogyoon Song \\
  EECS, MIT \\
  \texttt{dgsong@mit.edu} \\
  \and
  Zhi Xu \\
  EECS, MIT \\
  \texttt{zhixu@mit.edu} \\
  \and
  Yuzhe Yang \\
  EECS, MIT \\
  \texttt{yuzhe@mit.edu} \\
}
\begin{document}

\date{}
\maketitle

\begin{abstract}

We consider the question of learning $Q$-function in a sample efficient manner for reinforcement learning with 
continuous state and action spaces under a generative model. If $Q$-function is Lipschitz continuous, 
then the minimal sample complexity for estimating $\epsilon$-optimal $Q$-function is known to scale as 
${\Omega}(\frac{1}{\epsilon^{d_1+d_2 +2}})$ per classical non-parametric learning theory, where $d_1$ and $d_2$ 
denote the dimensions of the state and action spaces respectively. The $Q$-function, when viewed as a kernel, 
induces a Hilbert-Schmidt operator and hence possesses square-summable spectrum. This motivates us to consider 
a parametric class of $Q$-functions parameterized by its ``rank'' $r$, which contains all Lipschitz $Q$-functions 
as $r \to \infty$. 
As our key contribution, we develop a simple, iterative learning algorithm that finds $\epsilon$-optimal $Q$-function with 
sample complexity of $\widetilde{O}(\frac{1}{\epsilon^{\max(d_1, d_2)+2}})$ when the optimal $Q$-function has low rank $r$ 
and the discounting factor $\gamma$ is below a certain threshold. 
Thus, this provides an exponential improvement in sample complexity. To enable our result, we develop a novel Matrix Estimation algorithm 
that faithfully estimates an unknown low-rank matrix in the $\ell_\infty$ sense even in the presence of arbitrary bounded noise, 
which might be of interest in its own right. 
Empirical results on several stochastic control tasks confirm the efficacy of our ``low-rank'' algorithms.


\end{abstract}


\section{Introduction}
Reinforcement Learning (RL) has emerged as a promising technique for a variety of decision-making tasks, 
highlighted by impressive successes such as solving Atari games~\cite{mnih2013playing,mnih2015human} 
and Go~\cite{silver2016mastering,silver2017mastering}. However, generic RL methods suffer from 
``curse-of-dimensionality''. Specifically, the classical minimax theory \cite{stone1982optimal, tsybakov2008introduction} suggests that for $\epsilon > 0$, 
we need ${\Omega}(\frac{1}{\epsilon^{d_1+d_2+2}})$ samples to learn an $\epsilon$-optimal state-action value, i.e., $Q$-function 
when the (continuous) state and action spaces have dimensions $d_1$ and $d_2$ respectively 
and the $Q$-function is Lipschitz continuous over them. 
On the other hand, as exemplified by empirical successes, practical RL tasks seem to 
possess low-dimensional latent structures. 
Indeed, feature-based methods precisely aim to explain such phenomenon 
by positing that either the transition kernel~\cite{yang2019sample,yang2019reinforcement} or the value function~\cite{tsitsiklis1997analysis,melo2008analysis,parr2008analysis,maei2010toward,zou2019finite}
is linear in low-dimensional features associated with states and actions. That is, not only the state and action 
spaces have low-dimensional representation, the value function is linear. 
While these may be true, the algorithm may not have the \emph{knowledge} of such feature map beforehand; 
and relying on the hope of a neural network to find it might be too much to ask.

Motivated by this, the primary goal in this work is to learn the optimal $Q$-function in a data-efficient
manner if it has a lower-dimensional representation, {\em without} the need of any additional information 
such as knowledge of features. Therefore, we ask the following key question in this paper: 
\begin{quote}
    ``{\em Is there a universal representation of $Q$-function that allows for designing a data-efficient 
    learning algorithm if the $Q$-function has a low-dimensional structure?}''
\end{quote}

\subsection{Our Contribution}
As the main contribution of this work, we answer this question in
the affirmative by developing a novel spectral representation of the $Q$-function for a generic RL task, 
and provide a data-efficient method to learn a near-optimal $Q$-function when it is lower-dimensional. 

\begin{table}[t]
\setlength{\tabcolsep}{7pt}

\caption{Informal summary of sample complexity results for three different state/action space configurations: our results, a few selected from literature, and the lower bounds. For ours, see Theorem \ref{thm:generic} \& Appendix \ref{sec:rankrappendix}.}
\label{tab:one}

\begin{center}
\begin{tabular}{l c c c c}
\toprule[1.3pt]
 Setting & \multicolumn{1}{c}{Our Results} & \multicolumn{2}{c}{Selected from Literature} & \multicolumn{1}{c}{Lower Bound}\\
 \midrule
 Cont. $\mS$ \& Cont. $\mA$   & $\tilde{O}\big(\frac{1}{\epsilon^{\max\{d_1, d_2\}+2}}\big)$  & \multicolumn{2}{c}{N/A} & ${\Omega}\big(\frac{1}{\epsilon^{d_1+d_2+2}}\big)$~\cite{tsybakov2008introduction}\\ [0.2ex]
 Cont. $\mS$ \& Finite $\mA$   & $\tilde{O}\big(\frac{1}{\epsilon^{d_1+2}}\big)$
 &$\tilde{O}\big(\frac{1}{\epsilon^{d_1+3}}\big)$~\cite{shah2018q} &    $\tilde{O}\big(\frac{1}{\epsilon^{d_1+2}}\big)~\cite{yang2019theoretical}$& $\tilde{\Omega}\big(\frac{1}{\epsilon^{d_1+2}}\big)$~\cite{shah2018q}\\ [0.2ex]
 Finite $\mS$ \& Finite $\mA$ &   $\tilde{O}\big(\frac{\max(|\mS|,|\mA|)}{\epsilon^{2}}\big)$   & $\tilde{O}\big(\frac{|\mS||\mA|}{(1-\gamma)^3\epsilon^2}\big)$~\cite{sidford2018near}   & $\tilde{O}\big(\frac{|\mS||\mA|}{(1-\gamma)^4\epsilon^2}\big)$~\cite{sidford2018variance} &$\tilde{\Omega}\big(\frac{|\mS||\mA|}{(1-\gamma)^3\epsilon^2}\big)$~\cite{azar2013minimax} \\
\bottomrule[1.3pt]
\end{tabular}
\vspace{-0.2in}
\end{center}
\end{table}


\medskip
\noindent{\bf Representation.} Given state space $\mS = [0,1]^{d_1}$ and action space $\mA= [0,1]^{d_2}$, 
let $Q^* : \mS \times \mA \to \mathbb{R}$ be the optimal $Q$-function for the RL task of interest. 
We consider the integral operator $\Kq  = \Kq_{Q^*}$ induced by $Q^*$ as its kernel that maps 
any real-valued integrable function $h: \mS \to \mathbb{R}$ to $\Kq h: \mA \to \mathbb{R}$ with 
$\Kq h(a) = \int_{s \in \mS} Q(s, a) h(s) ds, ~\forall a \in \mA$. For Lipschitz $Q^*$, 
we show that $\Kq$ is a Hilbert-Schmidt operator admitting generalized singular value decomposition. 
This leads to the representation of $Q^*$:
\begin{equation}\label{eq:representation}
    Q^*(s, a) = \sum_{i=1}^\infty \sigma_i f_i(s) g_i(a), \quad\forall~s \in \mS, a \in \mA,
\end{equation}
with $\sum_{i=1}^\infty \sigma_i^2 < \infty$, and ``singular vectors'' $\{f_i: i \in \mathbb{N}\}$ and 
$\{g_i: i \in \mathbb{N}\}$ being orthonormal sets of functions. That is, for any $\delta > 0$, there exists 
$r(\delta)$ such that the $r(\delta)$ components in \eqref{eq:representation} provide $\delta$-approximation 
of $Q^*$. This inspires a parametric family of $Q^*$ parameterized by $r \geq 1$, i.e.,  
$Q^*(s, a) = \sum_{i=1}^r \sigma_i f_i(s) g_i(a)$, with all Lipschitz $Q^*$ captured as $r\to\infty$. 
When $r$ is small, it suggests a form of lower-dimensional structure within $Q^*$: we call such a $Q^*$ 
to have {\em rank} $r$.   

\medskip
\noindent{\bf Sample-Efficient RL.} Given the above universal representation with the notion of dimensionality for $Q^*$ 
through its rank, we develop a data-efficient RL method. Specifically, for any $\epsilon > 0$, our method finds $\hat{Q}$ 
such that $\|\hat{Q} - Q^*\|_\infty \leq \epsilon$ using $\tilde{O}\big(\epsilon^{-(\max\{d_1, d_2\}  + 2)}\big)$ samples, 
with the hidden constant in $\tilde{O}(\cdot)$ dependent on $r, \max\{d_1, d_2\}$ (cf. Theorem \ref{thm:generic}). 
In contrast, the minimax lower bound for learning a generic Lipschitz $Q^*$ 
in the $L^{\infty}$ sense (also in the $L^2$-sense)
is of ${\Omega}\big(\epsilon^{-(d_1 + d_2 + 2)}\big)$ \cite{tsybakov2008introduction}. 
That is, our method removes the dependence on the smaller of the two dimensions 
by exploiting the low-rank structure in $Q^*$. 
Note that this provides an exponential improvement in sample complexity, 
e.g., with $d_1 = d_2 = d$, our method requires the number of samples scaling as $\epsilon^{-d-2}$ in contrast to $\epsilon^{-2d - 2}$ required for generic Lipschitz $Q^*$. 
For a quick comparison with some related works, see Table \ref{tab:one} and Section \ref{sec:related}.

\medskip
\noindent{\bf Matrix Estimation (ME), A Novel Method.}  Our data-efficient RL method relies on a novel low-rank Matrix Estimation 
method we introduce. Notice that for any set of $m$ states $\{s_k\}_{k=1}^m$ and $n$ actions $\{a_\ell\}_{\ell=1}^n$, 
the induced matrix $[Q^*(s_{k}, a_{\ell}): k \in [m], \ell \in [n]]$ has rank (at most) $r$.
{Naively, when the $m$ chosen states ``cover'' $\mS$ finely ($n$ actions cover $\mA$, resp.) and suppose we also have a good estimate for the entire matrix, 
we can estimate $Q^*$ for the entire domain $\mS \times \mA$ by interpolating the estimates for the $mn$ entries.
This leads to the sample complexity of $\tilde{O}\big(\epsilon^{-(d_1 + d_2 + 2)}\big)$, matching the mini-max lower bound.}

{ To overcome the barrier in sample complexity, we suggest to utilize the low-rank structure of $Q^*$ by developing a novel matrix estimation method.}
At a high level, to obtain 
the improved sample complexity {$\tilde{O}\big(\epsilon^{-(\max\{d_1, d_2\}  + 2)}\big)$} as claimed, we wish to faithfully recover the $m\times n$ rank $r$ matrix in the $\ell_{\infty}$ sense, 
by observing only $\tilde{O}\big(\max(m, n) r\big)$ entries with each entry having bounded but arbitrary noise $\delta$. 
In literature \cite{candes2009exact, candes2010power, chen2018harnessing,davenport2016overview}, such a harsh setting 
has not been considered. In this work, we introduce an 
ME method that manages to reconstruct the entire matrix with entry-wise error within $O(\delta)$ (cf. Proposition \ref{prop:rankr_simplified}). 
This advance in ME should be of independent interest (see Table \ref{tab:two} for comparison).
With this novel method, we improve our estimates of $Q^*$ iteratively by interleaving one-step lookahead 
and matrix estimation steps. This, ultimately leads to an $\epsilon$-optimal $Q^*$ with desired sample size.

\begin{table}[t]
\setlength{\tabcolsep}{5pt}
\small
\caption{ Comparison of different ME methods with different guarantees. Ours is the only method that provides entry-wise guarantee while allowing for arbitrary, bounded error in each entry.}
\label{tab:two}
\begin{center}
\begin{tabular}{l c c c c}
\toprule[1.3pt]
 Method             &   Noise Model         &   Error Guarantees     &   Sampling Model  &   \# of Samples \\
\midrule
Our Method                &    bounded arbitrary   &  entrywise           &   adaptive    &   $O(n)$ \\
 \hline
 Convex Relaxation  & noiseless    &  exact    &   independent w.p. $p$    &   $O(n\log^2 n)$ \\
 \cite{candes2010power, candes2010matrix, koltchinskii2011nuclear}                 &   bounded arbitrary   &   Frobenius           &   independent w.p. $p$    &   $O(n \log^2 n ) $ \\
 \hline
 Spectral Thresholding~\cite{chatterjee2015matrix} &  zero-mean     &   Frobenius   & independent w.p. $p$ &  $O(n^{1+c})$ \\
 \hline
 Factorization (noncvx)~\cite{chen2019noisy} &  zero-mean    &   entrywise   &   independent w.p. $p$ & $O(n\log^3n)$ \\
\bottomrule[1.3pt]
\end{tabular}
\vspace{-0.2in}
\end{center}
\end{table}

\medskip
\noindent{\bf Empirical Success.} While low-rank representation of $Q^*$ enables theoretical guarantees, the proof is
in the puddling: we find that for well-known control tasks, the underlying $Q^*$
has a low-rank structure. In particular, using our method that exploits the low-rank structure leads to a significant 
improvement in sample complexity over the method that does not. Our novel matrix estimation method, with provable
guarantees, turns out to be computationally most efficient, while offering superior performance of sample complexity. 

\medskip
\noindent{\bf Summary.}
Overall, to the best of our knowledge, our result is the first to show such a provable, quantitative sample complexity improvement 
for RL with continuous state and action spaces via low-rank structure. We believe that ``factorization'' of $Q^*$ can be beneficial 
more generally in improving the efficiency of RL, e.g., it could be embedded as an architectural constraint in neural network 
representation of the $Q^*$.
Moreover, our discussion in this work is not limited to $Q^*$ in RL; the main insight we develop in this paper remains 
valid and applicable more broadly for various problems in machine learning and other related fields beyond RL.
On the representation side, ``nice'' bivariate functions in many other problems should also possess a similar low-rank 
spectral representation with respect to the two variables involved. 
On the algorithmic side, we can utilize the framework introduced in this work to devise an algorithm that estimates 
such a function in a sample-efficient manner via iterative estimation of (sub-)matrices, indexed by the two variables.



\subsection{Related Work}\label{sec:related}
A brief discussion of related  work on Reinforcement Learning and Matrix Estimation is provided. 

\medskip
\noindent{\bf Reinforcement Learning.}
Reinforcement Learning problems with both continuous state and action space received significantly less attention in literature. 
While there are practical RL algorithms to deal with continuous domains~\cite{van2012reinforcement,lazaric2008reinforcement,haarnoja2018soft,lillicrap2015continuous}, theoretical 
understanding on this class of problems, especially on sample complexity, is very limited~\cite{antos2008fitted}. 
Since we interpolate our estimates to the entire space via non-parametric regression without making 
any additional model assumptions, a comparison with the non-parametric minimax rate ${\Omega}(\frac{1}{\epsilon^{d_1+d_2+2}})$ 
for learning Lipschitz function~\cite{stone1982optimal,tsybakov2008introduction} is meaningful. 

Our algorithm and proofs are general, which can be reduced to low-rank 
settings with a finite (discrete) space in a similar manner (Appendix~\ref{subsec:coro_finite}). 
The 
lower bound scales as
$\tilde{\Omega}(\frac{1}{\epsilon^{d+2}})$~\cite{shah2018q} for problems with continuous state space and finite action space  and {\small $\tilde{O}(\frac{|\mS||\mA|}{(1-\gamma)^3\epsilon^2})$}~\cite{azar2013minimax} for problems with both state and action spaces being finite.
When reduced to those domains, 
our method scales as { $\tilde{O}(\frac{1}{\epsilon^{d+2}})$} for the former and {\small $\tilde{O}(\frac{\max(|\mS|,|\mA|)}{ \epsilon^{2}})$} for the latter, respectively.
That is, the smaller of the two dimensions is ``removed'' from sample complexity by exploiting the low-rank structure in the same way as in the continuous problems.
Results in finite domains are abundant in literature and it is impossible to cover them all. 
We provide a high-level summary in Table~\ref{tab:one} to communicate how our algorithm fares with a few selected work. 
Note that the detailed setting often varies in literature and we refer readers to Appendix~\ref{sec:discussion} 
for further discussions. 
Finally, we remark that our  analysis requires the discounting factor $\gamma$ to be small, and leave it as an important future direction to extend to all $\gamma$.

We mention the recent empirical work~\cite{yang2020harnessing} that investigates low-rank $Q^*$ with matrix estimation for finite state
and action spaces. The results in \cite{yang2020harnessing} are solely empirical and it uses off-the-shelf ME methods. In that sense,
we provide a formal framework to understand why \cite{yang2020harnessing} works so well, resolving the theoretical open problem 
raised in their work, and we provide natural generalization for continuous state and action spaces that was missing, along with a novel
ME method. 

\medskip
\noindent{\bf Matrix Estimation.}
As discussed, matrix estimation concerns recovering a low-rank $m \times n$ matrix from partial, noisy observation of it. 
This problem has been extremely well studied~\cite{recht2010guaranteed, candes2009exact,candes2010power, koltchinskii2011nuclear, chatterjee2015matrix,chen2015fast,davenport2016overview,chen2018harnessing}. However, 
most recovery guarantees are given in terms of Frobenius norm of the error, or mean squared error. In this work, we need reliable estimation 
for {\em each} entry, i.e., $\ell_\infty$ error bound. This is technically hard and there are only limited results~\cite{ding2020leave,chen2019noisy}. To make matters worse, the measurement noise in our setting can be arbitrary (not necessarily zero mean) though bounded. Therefore, a new method is required and that is precisely what we do in this work. See Appendix \ref{sec:discussion} for more detailed discussions on why existing matrix estimation methods do not work and
ours does, along with directions for future research. 

\subsection{Organization}
{The remainder of the paper is organized as follows. We introduce a formal representation theorem of $Q^*$ in Section~\ref{sec:formulation}. In Section~\ref{sec:alg_generic}, we propose our efficient RL algorithm using low-rank ME. The generic convergence and sample complexity results are established in Section~\ref{sec:main_result}, under a suitable assumption on the ME method. Section~\ref{sec:lowrankME} is dedicated to the development of our new ME method that fulfills the requirement. We provide empirical evidence in Section~\ref{sec:empirical}. In Section~\ref{sec:main_discussion}, we offer a short discussion on aspects of our ME method with full discussion deferred to Appendix~\ref{sec:discussion}. 
All the proofs as well as additional experimental results can be found in Appendices.}

\section{Markov Decision Process and Representation of $Q$-function}\label{sec:formulation}
\subsection{Markov Decision Process (MDP)} 
We consider the standard setup of infinite-horizon discounted MDP, 
which is described by $(\mS,\mA,\mathcal{P},R,\gamma).$ $\mS$ and $\mA$ are the state and action 
spaces, respectively. $\mathcal{P}(s'|s,a)$ is the unknown transition kernel, while $R(s,a)$ determines the 
immediate reward received. Finally, $\gamma\in(0,1)$ is the discounting factor. A policy $\pi(a|s)$ specifies 
the probability of selecting action $a\in\mA$ at state $s\in\mS$. The standard value function associated with a policy $\pi$ is defined as
$
V^{\pi}(s)=\E_{\pi}[\sum_{t=0}^{\infty}\gamma^{t}R(s_{t},a_{t})~|~s_{0}=s].
$
The optimal value function, denoted by $V^*$, is the value function of the reward-maximizing policy. 
That is, 
$
V^{{*}}(s)=\sup_{\pi}V^{\pi}(s),\forall s\in\mS.
$
Correspondingly, we define the optimal $Q$-function, denoted by $Q^*$, as
$
Q^*(s,a) = R(s,a) + \gamma \mathbb{E}_{s'\sim \mathcal{P}(\cdot|s,a)}[V^*(s')].
$

\medskip
\noindent{\bf MDP Regularity.} 
Throughout this paper, we assume the existence of a generative model (i.e., a simulator)~\cite{kakade2003sample}. 
We consider MDPs with the following properties: 
\begin{enumerate}
    \item
    (Compact domain) The state space $\mS$ and the action space $\mA$ are compact subsets of a Euclidean space; Without loss of generality, let $\mS=[0,1]^{d_1}$ and 
$\mA=[0,1]^{d_2}$. 
    \item
    (Bounded reward) For every $(s,a)\in\mS\times\mA$, the reward $R(s,a)$ is bounded, i.e., 
$|R(s,a)|\leq R_{\max}$. 
    \item
    (Smoothness) The optimal $Q$-function, $Q^*$, is $L$-Lipschitz 
with respect to the 1-product metric in $\mS \times \mA$, i.e., $|Q^*(s_1,a_1) - Q^*(s_2,a_2)|\leq L d_{\mS \times \mA}\big( (s_1, a_1), (s_2, a_2) \big)$ 
where $d_{\mS \times \mA}\big( (s_1, a_1), (s_2, a_2) \big) = ||s_1-s_2||_2 + \| a_1 - a_2 \|_2$. 
\end{enumerate}

We note that the bounded reward implies that for any policy $\pi$, $|V^{\pi}(s)| \leq V_{\max} \triangleq R_{\max} /(1 - \gamma)$ for all $s$. 
This yields $|Q^*(s,a)| \leq V_{\max}$, too. Finally, we remark that for learning MDPs with continuous state/action space under $\ell_\infty$ 
guarantee, some form of smoothness assumption, such as the Lipschitz continuity above, is natural and typical~\cite{yang2019theoretical,antos2008fitted,shah2019nonasymptotic,shah2018q,dufour2012approximation}.

\subsection{Spectral Representation of $Q$-function} 
With the discussion above, $Q^*: [0,1]^{d_1} \times [0,1]^{d_2} \to \mathbb{R}$ is $L$-Lipschitz and also bounded. 
As introduced earlier, it induces an integral kernel operator $\Kq = \Kq_{Q^*}: L^2([0,1]^{d_1}) \to L^2([0,1]^{d_2})$ between the spaces of square integrable functions $L^2([0,1]^d)$ (for $d \in \{ d_1, d_2\}$) 
endowed with the standard inner product $\langle f, g \rangle = \int_{x \in [0,1]^d} f(x)g(x) dx$. 
Through this lens, we obtain the following representation for $Q^*$, which follows from noticing that $\Kq$ is a Hilbert-Schmidt operator. See Appedix \ref{sec:proof_thm1} for the proof.

\begin{thm}\label{thm:representation}
Suppose the MDP regularity conditions (1) - (3). Then there exist a nonincreasing sequence $( \sigma_i \geq \RR_+: i \in \NN )$ with $\sum_{i=1}^{\infty} \sigma_i^2 < \infty$ and  orthonormal sets $\{ f_i \in L^2([0,1]^{d_1}): i \in \NN \}$ and $\{ g_i \in L^2([0,1]^{d_2}): i \in \NN \}$ such that
\begin{equation}\label{eq:qstar}
    Q^*(s,a) = \sum_{i=1}^{\infty} \sigma_i f_i(s) g_i(a), \quad\forall (s, a) \in [0,1]^{d_1} \times [0,1]^{d_2}. 
\end{equation}
As a result, for any $\delta > 0$, there exists $r^* = r^*(\delta) \in \NN$ such that for all $r \geq r^*$, the rank-$r$ approximation error satisfies
$\int_{\mS \times \mA} \big(\sum_{i=1}^{r} \sigma_i f_i(s) g_i(a) - Q^*(s,a) \big)^2 ds~da = \sum_{i=r+1}^{\infty} \sigma_i^2 \leq \delta$.
\end{thm}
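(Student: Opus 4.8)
The plan is to realize $Q^*$ as the kernel of a Hilbert--Schmidt operator and then convert the singular value decomposition (SVD) of that compact operator back into a statement about the bivariate function $Q^*$. First I would verify that $Q^* \in L^2(\mathcal{S}\times\mathcal{A})$. This is immediate from MDP regularity: by boundedness $|Q^*(s,a)| \le V_{\max}$ and since $\mathcal{S}\times\mathcal{A}=[0,1]^{d_1+d_2}$ has unit Lebesgue measure, $\|Q^*\|_{L^2}^2 = \int_{\mathcal{S}\times\mathcal{A}} Q^*(s,a)^2\,ds\,da \le V_{\max}^2 < \infty$ (Lipschitz continuity guarantees measurability, though boundedness alone controls the norm). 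Consequently the integral operator $\mathcal{K}$ introduced in the text, whose kernel is $Q^*$, is a Hilbert--Schmidt operator from $L^2(\mathcal{S})$ to $L^2(\mathcal{A})$, with Hilbert--Schmidt norm equal to $\|Q^*\|_{L^2}$.

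Next, since every Hilbert--Schmidt operator is compact, I would apply the Schmidt decomposition for compact operators between Hilbert spaces. This produces a nonincreasing sequence $\sigma_1 \ge \sigma_2 \ge \cdots \ge 0$ together with orthonormal systems $\{f_i\}\subset L^2(\mathcal{S})$ and $\{g_i\}\subset L^2(\mathcal{A})$ satisfying $\mathcal{K} f_i = \sigma_i g_i$ and $\mathcal{K}^\ast g_i = \sigma_i f_i$. Square-summability of the spectrum is then exactly the identity $\sum_i \sigma_i^2 = \|\mathcal{K}\|_{\mathrm{HS}}^2 = \|Q^*\|_{L^2}^2 < \infty$, which is one of the claimed assertions.

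To pass from the operator decomposition back to the function $Q^*$, I would complete $\{f_i\}$ and $\{g_i\}$ to orthonormal bases of $L^2(\mathcal{S})$ and $L^2(\mathcal{A})$ respectively, so that the tensor products $\{f_i\otimes g_j\}$ form an orthonormal basis of $L^2(\mathcal{S}\times\mathcal{A})$. Computing the coefficients of $Q^*$ in this basis gives $\langle Q^*, f_i\otimes g_j\rangle = \langle \mathcal{K} f_i, g_j\rangle = \sigma_i \langle g_i, g_j\rangle = \sigma_i \delta_{ij}$, so every off-diagonal coefficient vanishes and $Q^* = \sum_i \sigma_i f_i\otimes g_i$, which is precisely \eqref{eq:qstar}. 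The rank-$r$ approximation claim then follows from Parseval's identity: $\int_{\mathcal{S}\times\mathcal{A}} \big(\sum_{i=1}^r \sigma_i f_i(s)g_i(a) - Q^*(s,a)\big)^2\, ds\,da = \sum_{i=r+1}^\infty \sigma_i^2$, a tail of a convergent series, so taking $r^*(\delta)$ to be the first index at which this tail drops below $\delta$ finishes the argument.

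The step I expect to require the most care is the \emph{mode of convergence}. The Schmidt decomposition yields convergence of $\sum_i \sigma_i f_i\otimes g_i$ to $Q^*$ in the $L^2$ norm (equivalently, almost everywhere along a subsequence), so the pointwise-looking equality in \eqref{eq:qstar} must be read in the $L^2$/almost-everywhere sense. Upgrading to genuine pointwise or uniform convergence would require a Mercer-type theorem, which needs more than Hilbert--Schmidt-ness (for instance continuity together with a symmetric, positive-semidefinite kernel); since $Q^*$ is a general, non-symmetric kernel I would not attempt this, and would instead emphasize that the $L^2$ statement—which is exactly what the rank-$r$ approximation bound uses—is the correct and sufficient interpretation here.
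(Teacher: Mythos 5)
Your proof is correct, and its skeleton is essentially the paper's own: view $Q^*$ as the kernel of a compact integral operator $\Kq: L^2(\mS) \to L^2(\mA)$, extract a Schmidt (singular value) decomposition, transfer it back to the function $Q^*$ through the tensor-product orthonormal basis of $L^2(\mS \times \mA) \cong L^2(\mS) \otimes L^2(\mA)$, and read off the rank-$r$ tail bound from Parseval. Two differences from the paper are worth recording. First, you get compactness for free from Hilbert--Schmidt-ness (boundedness of $Q^*$ on a unit-measure domain already gives $Q^* \in L^2$, hence $\|\Kq\|_{HS} = \|Q^*\|_{L^2} < \infty$), whereas the paper proves compactness by hand via a generalized Arzel\`a--Ascoli argument, using the Lipschitz hypothesis to establish equicontinuity of the image sequence $(\Kq f_n)$; your route is more economical and in fact shows the decomposition needs less than Lipschitz continuity. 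The paper also proves the diagonal structure slightly differently --- it posits $Q^* = \sum_i \sigma_i f_i \otimes g_i + \varepsilon$ and shows all basis coefficients of $\varepsilon$ vanish --- but this is the same Parseval computation you do directly. Second, your mode-of-convergence caveat is precisely the issue the paper disposes of in a one-sentence remark after its general theorem: the decomposition is proved in $L^2$, and the paper then asserts that Lipschitz continuity of $Q^*$ on a compact domain upgrades the a.e.\ identity to the pointwise equality ``$\forall (s,a)$'' appearing in the statement. You explicitly decline this upgrade, so, read literally, your proof establishes slightly less than the stated claim; this is the only shortfall, and it is a defensible one --- the upgrade is not automatic (one must, e.g., choose the continuous versions $f_i = \sigma_i^{-1}\Kq^* g_i$, $g_i = \sigma_i^{-1}\Kq f_i$ and still argue about the pointwise limit of the series, which in general requires Mercer-type hypotheses), and the paper itself offers only the remark, not a proof, for this step. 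Everything downstream in the paper (the tail bound, the definition of exact and approximate rank) uses only the $L^2$ statement you proved.
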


\medskip
\noindent{
\bf Low Rank $Q^*$.} 
Theorem \ref{thm:representation} motivates us to consider low-rank $Q^*$. 
For any integer $r \geq 1$, we call $Q^*$ to have rank $r$ if $\sigma_i = 0$ for all $i > r$ in \eqref{eq:qstar}. 
More generally, we say $Q^*$ has $\delta$-approximate rank $r$ if $r^*(\delta) = r$ in  Theorem \ref{thm:representation}. 
We focus on efficient RL for $Q^*$ with exact or approximate low rank $r$. 

\section{Algorithm for Reinforcement Learning with Matrix Estimation}\label{sec:alg_generic}	
We introduce an RL algorithm using generic ME procedures as a subroutine. We require the ME method in use to satisfy 
Assumption \ref{assu:me} (see Section \ref{sec:main_result}) to provide meaningful performance guarantees. However, 
there is no known ME procedure satisfying Assumption \ref{assu:me} in literature. In Section \ref{sec:lowrankME}, 
we introduce a simple ME procedure that satisfies it when $Q^*$ is exactly or approximately low-rank.

\subsection{A Narrative Description of the Algorithm}\label{sec:alg_narrative}

The RL algorithm iteratively improves estimation of $Q^*$. Each iteration consists of four steps: 
discretization, exploration, matrix estimation and generalization. We provide a narrative 
overview of the algorithm first; see Algorithm \ref{alg:generic} in Section \ref{sec:pseudocode-generic} 
for the full algorithm description in a pseudo-code format.

\begin{figure}[H]
    \centering
    \includegraphics[width = 0.99\textwidth]{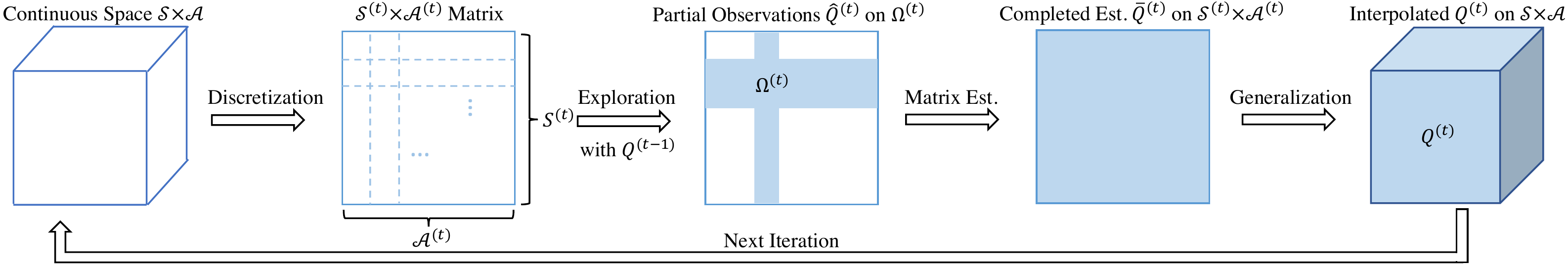}
    \caption{ Iterative RL using ME: exploration uses estimation ${Q}^{(t-1)}$ from
    previous iteration.}
    \label{fig:my_label}
\end{figure}

\medskip
\noindent{\hspace{-0.06in}
\bf Step 1. Discretization.}  
At iteration $t$, we produce $\beta^{(t)}$-nets, $\mS^{(t)} \subset \mS$ and $\mA^{(t)} \subset \mA$, for
properly chosen resolution $\beta^{(t)} \in (0,1)$ that decreases with iteration $t$. In our setup, $|\mS^{(t)}| = O\big((1/\beta^{(t)})^{d_1}\big)$, 
$|\mA^{(t)}| = O\big((1/\beta^{(t)})^{d_2}\big)$. In total, this produces $|\mS^{(t)}||\mA^{(t)}|$ many $(s,a)$ pairs 
in the discretized set $\mS^{(t)}\times\mA^{(t)}$. 

\medskip
\noindent{\bf Step 2. Exploration.} 
Using estimate ${Q}^{(t-1)}$ over the entire $\mS\times \mA$ from the previous iteration, 
we wish to produce an improved estimate of $Q^*$ over $\mS^{(t)}\times\mA^{(t)}$ through this and the next step, and 
then generalize it to $\mS \times \mA$ in Step 4. 
To produce an improved estimate over $\mS^{(t)}\times\mA^{(t)}$ in a sample-efficient manner, we first ``explore'' a carefully selected subset $\Omega^{(t)}\subset\mS^{(t)}\times\mA^{(t)}$. Specifically, for each $(s,a)\in\Omega^{(t)}$, we obtain $N^{(t)}$ 
samples of independent transitions using the generative model, which results in a set of sampled next states 
$\{ s'_i \}_{i= 1, \ldots, N^{(t)}}$. We obtain an estimate $\hat{Q}^{(t)}(s,a)$ as 
\begin{equation}
\label{eq:generic:lookahead.inside}
  \hat{Q}^{(t)}(s,a) \gets R(s,a) +\gamma\cdot \frac{1}{N^{(t)}}\sum_{i=1}^{N^{(t)}}V^{(t-1)}(s'_i),\quad\mbox{with}~V^{(t-1)}(s) = \max_{a} Q^{(t-1)}(s,a).
\end{equation}



\medskip
\noindent{\bf Step 3. Matrix Estimation.} 
Given estimates $\hat{Q}^{(t)}(s,a), \forall (s,a) \in \Omega^{(t)}$ updated in Step 2, 
we wish to obtain an improved estimate of $Q^*$ for the entire $\mS^{(t)}\times\mA^{(t)}$. 
This can be viewed as a matrix estimation problem. When $Q^*$ has rank $r$ as discussed in Section \ref{sec:formulation},
the sampled matrix $[Q^*(s, a): s \in\mS^{(t)}, a \in\mA^{(t)}]$, induced by discretization, has rank at most $r$. 
Thus, we want to estimate the low-rank matrix by having access to noisy measurements for a subset of entries 
in $\Omega^{(t)}\subset\mS^{(t)}\times\mA^{(t)}$. Specifically, the noise in the measurements are not necessarily i.i.d. as they are coupled through 
$V^{(t-1)}$; thus, they are bounded but can be arbitrary. 
Ideally, we wish to estimate the matrix with the maximum entrywise error at a similar level 
as that in $\hat{Q}^{(t)}(s,a)$. This demands that the ME method in use is well-behaved in the $\ell_{\infty}$ sense, satisfying 
Assumption \ref{assu:me} to be stated later. While such a result is absent in literature, we shall describe ME methods fulfilling the desideratum in Section \ref{sec:lowrankME}. 
Consequently, we obtain improved estimates $\bar{Q}^{(t)}(s,a)$ for all $(s,a)\in\mS^{(t)}\times\mA^{(t)}$ after the ME step.




\medskip
\noindent{\bf Step 4. Generalization.} 
With estimates $\bar{Q}^{(t)}(s,a), ~(s,a)\in\mS^{(t)}\times\mA^{(t)}$, 
we generalize to $\mS \times \mA$ via interpolating them. This can be achieved by any supervised 
learning algorithm. We simply utilize the $1$-nearest neighbor: for any $(s,a) \in \mS \times \mA$, 
at the end of iteration $t$ we output $Q^{(t)}(s,a) \gets \bar{Q}^{(t)}(s',a')$ 
where $(s',a')$ is closest to $(s,a)$ in $\mS^{(t)}\times \mA^{(t)}$, with ties broken arbitrarily.


\subsection{{Pseudo-Code for the Proposed Algorithm}}\label{sec:pseudocode-generic}

Below is the pseudo-code of the generic RL method described in Section \ref{sec:alg_narrative}.
	\begin{algorithm}[h]
	\small
		\caption{Main Algorithm: Low-rank Reinforcement Learning}
		\label{alg:generic}

		\hspace*{\algorithmicindent} \textbf{Input:}$\quad$ $\mS$, $\mA$, $\gamma$, $Q^{(0)}$, $T$, $\{\beta^{(t)}\}_{t=1, \ldots, T}$, $\{N^{(t)}\}_{t= 1, \ldots, T}$  \\
        \hspace*{\algorithmicindent} \textbf{Output: } $Q^{(T)}$, the $Q$-value oracle after $T$ iterations
	\begin{algorithmic}[1]
    	\STATE {\bfseries Initialization:} For all $s \in \mS$, initialize the value oracle $Q^{(0)}(s)$. 
    	
		\FOR{$t = 1,2, \dots, T$}{
		\STATE \texttt{/*}~~\texttt{Step 1: Discretization of $\mS$ and $\mA$}~~\texttt{*/}
		\STATE Discretize $\mS$ and $\mA$ so that $\mS^{(t)}$ is a $\beta^{(t)}$-net of $\mS$ and $\mA^{(t)}$ is a $\beta^{(t)}$-net of $\mA$. 
		
        \STATE \texttt{/*}~~\texttt{Step 2: Exploration of a few $(s,a)$ pairs}~~\texttt{*/}
		\STATE Select a subset of $(s,a)$ pairs, $\Omega^{(t)} \subseteq \mS^{(t)}\times \mA^{(t)}$.
		\FOR{$(s,a)\in\Omega^{(t)}$}{
		\STATE Estimate $Q^*(s,a)$ via simple lookahead based on the current value oracle $V^{(t-1)}$,
		    i.e., query the generative model to sample $N^{(t)}$ independent transitions from $(s,a)$
		    and obtain an estimate $\hat{Q}^{(t)}(s,a)$ with the sampled next states $\{ s'_i \}_{i= 1, \ldots, N^{(t)}}$:
		\vspace{-0.1in}
		\begin{equation}
		    \hat{Q}^{(t)}(s,a) \gets R(s,a) +\gamma\cdot \frac{1}{N^{(t)}}\sum_{i=1}^{N^{(t)}}V^{(t-1)}(s'_i).
		    \vspace{-0.1in}
		\end{equation}
		}
		\ENDFOR
		
		\STATE \texttt{/*}~~\texttt{Step 3: Matrix completion to obtain $\bar{Q}$ from $\hat{Q}$}~~\texttt{*/}
		\STATE Estimate $\bar{Q}(s,a)$ for $(s,a) \in \mS^{(t)} \times \mA^{(t)}$ from the data $\{\hat{Q}^{(t)}(s,a)\}_{(s,a)\in\Omega^{(t)}}$,
		    utilizing the low-rank structure of $\bar{Q}(s,a)$, viz.,
		\begin{equation*}
		    \bar{Q}^{(t)} \gets \textrm{Matrix Estimation}\big( \hat{Q}^{(t)};~ \Omega^{(t)}\big).
		\end{equation*}

		\STATE \texttt{/*}~~\texttt{Step 4: Generalization via interpolating $\bar{Q}$}~~\texttt{*/}
		\STATE Update the oracles $Q^{(t)}$ and $V^{(t)}$ by calling a subroutine that interpolates  $\bar{Q}^{(t)}$ through non-parametric regression methods:
		\begin{align*}
		    Q^{(t)} \gets \textrm{Interpolation}\big( \bar{Q}^{(t)};~ \mS^{(t)}, \mA^{(t)} \big),
		\end{align*}
		and subsequently, $V^{(t)}(s)\gets \max_{a\in\mA}Q^{(t)}(s,a)$, for all  $s\in\mS$.
		}
		\ENDFOR
	
		\end{algorithmic}
	\end{algorithm}

\section{Main Result: Correctness, Convergence \& Sample Complexity}
\label{sec:main_result}

In this section, we state the result establishing correctness, convergence and finite sample analysis of our RL algorithm. We 
require a specific property, stated as Assumption \ref{assu:me}, for the Matrix Estimation (ME) method utilized 
in Step 3 of the algorithm. While there is no known ME method in the literature that satisfies it, we provide a novel ME
method with the desired property in Section \ref{sec:lowrankME}.

\subsection{{Matrix Estimation: a Key Premise}}
Recall that we describe Algorithm \ref{alg:generic} with a generic matrix estimation subroutine used in Step 3, 
without specifying what ME method is used. In fact, the success of Algorithm \ref{alg:generic} hinges on the performance 
of the ME method in use. For the convenience of exposition, we define $(\C, \ceta)$-property of an ME method for given constants 
$\C, \ceta \geq 0$, which serves as a pivotal premise for the success of Algorithm \ref{alg:generic}.

\begin{assumption}[$(\C, \ceta)$-property]
\label{assu:me}
Given finite $\mS^{(t)} \subset \mS$, $ \mA^{(t)} \subset \mA$, it is possible to construct $\Omega^{(t)} \subseteq \mS^{(t)} \times \mA^{(t)}$ 
with $|\Omega^{(t)}| \leq \C \big( | \mS^{(t)} | + |\mA^{(t)}| \big)$ for given constant $\C \geq 1$ so that 
whenever the ME method in use takes $\{ \hat{Q}^{(t)}(s,a) \}_{(s,a) \in \Omega^{(t)}}$ with $\max_{(s,a) \in \Omega^{(t)}} \big| \hat{Q}^{(t)}(s,a) - Q^*(s,a) \big| \leq \epsilon $ as an input 
and outputs $\{\bar{Q}^{(t)}(s,a) \}_{(s, a) \in \mS^{(t)} \times \mA^{(t)}}$, the following inequality holds:
\begin{equation*}
    \max_{(s,a) \in \mS^{(t)}\times \mA^{(t)}} \big| \bar{Q}^{(t)}(s,a) - Q^*(s,a) \big| \leq \ceta  \epsilon. 
\end{equation*}
\end{assumption}

We assume there exists an ME method that satisfies $(\C, \ceta)$-property and we have such a method at hand. 
Assumption \ref{assu:me} ensures the $\ell_{\infty}$ error remains under control (to be precise, $\ceta$-Lipschitz 
with respect to $\ell_{\infty}$/$\ell_{\infty}$) during the ME step, while it is stated in the language of RL 
for later uses.
Note that Assumption \ref{assu:me} does not explicitly require any structure on $Q^*$. We will require $Q^*$ to be 
low-rank or approximately low-rank to produce an ME method satisfying the assumption, as will be discussed in Section \ref{sec:lowrankME}. 



\subsection{Correctness, Rate of Convergence \& Sample Complexity of Algorithm \ref{alg:generic}}
Now, we state the desired properties of the RL algorithm introduced
in Section \ref{sec:alg_generic}. To that end, let the algorithm start with initialization $Q^{(0)}(s,a)=0, ~\forall (s,a) \in \mS \times \mA$ 
and hence $V^{(0)}(s) = 0, ~\forall s \in \mS$. That is, $|Q^{(0)}(s,a) - Q^*(s,a) |\leq V_{\max}$, $\forall (s,a) \in \mS \times \mA$. For the sake of notational brevity, we let $d_1=d_2=d$ in the sequel. We remark that our theorems apply equally by simply replacing $d$ with $\max\{d_1,d_2\}$.
\begin{thm}
\label{thm:generic}
Consider the RL algorithm described in Section \ref{sec:alg_generic} with ME satisfying Assumption \ref{assu:me}. 
Given $\delta\in(0,1)$, there exists algorithmic choice of $\beta^{(t)}, \Omega^{(t)}, N^{(t)}$ 
for $1\leq t \leq T$, so that
\begin{align*}
\mathbb{P}\bigg(\sup_{ (s,a) \in \mS \times \mA } \big| Q^{(t)}(s,a) - Q^*(s,a) \big| 
            \leq (2 \gamma \ceta)^t V_{\max}, \quad\forall 1\leq t \leq T\bigg) & \geq 1-\delta. 
\end{align*}
Further, let $\gamma < \frac{1}{2 \ceta}$. Then, with $T=\Theta\big(\log\frac{1}{\epsilon}\big)$ and 
$\widetilde{O}\big( \frac{1}{\epsilon^{d+2}}\cdot\log\frac{1}{\delta}\big)$ number of samples, we have
\begin{align}\label{eqn:samples}
\mathbb{P}\bigg(\sup_{(s,a) \in \mS \times \mA}\Big| Q^{(T)}(s,a) - Q^*(s,a) \Big|\leq\epsilon\bigg) & \geq 1-\delta. 
\end{align}
\end{thm}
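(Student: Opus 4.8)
The plan is to establish a one-step error-contraction recursion and then iterate it. Define the $\ell_\infty$ error after iteration $t$ as $e^{(t)} := \sup_{(s,a) \in \mS \times \mA} \big| Q^{(t)}(s,a) - Q^*(s,a) \big|$, with $e^{(0)} \leq V_{\max}$ from the zero initialization. I would prove by induction that, for a suitable schedule $\{\beta^{(t)}, N^{(t)}, \Omega^{(t)}\}$, one has $e^{(t)} \leq (2\gamma\ceta)^t V_{\max}$ for all $t$ on a single high-probability event; this is exactly the first display of the theorem, and the sample-complexity bound then follows by optimizing the schedule.

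The core is to track how error propagates through the four steps of one iteration. For the \emph{exploration} step I use the Bellman identity $Q^*(s,a) = R(s,a) + \gamma\E_{s'}[V^*(s')]$ to decompose, for each $(s,a)\in\Omega^{(t)}$,
\[
\hat Q^{(t)}(s,a) - Q^*(s,a) = \gamma\Big(\tfrac{1}{N^{(t)}}\textstyle\sum_i V^{(t-1)}(s'_i) - \E_{s'}[V^{(t-1)}(s')]\Big) + \gamma\big(\E_{s'}[V^{(t-1)}(s')] - \E_{s'}[V^*(s')]\big).
\]
The second (bias) term is controlled by the previous error, since the max operator is non-expansive: $|V^{(t-1)}(s) - V^*(s)| \leq \sup_a |Q^{(t-1)}(s,a) - Q^*(s,a)| \leq e^{(t-1)}$. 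The first (fluctuation) term is a mean-zero average of a bounded ($|V^{(t-1)}|\leq V_{\max}$) quantity, bounded via Hoeffding's inequality by $\gamma V_{\max}\sqrt{2\log(2/\delta')/N^{(t)}}$ with probability at least $1-\delta'$. Hence $\max_{(s,a)\in\Omega^{(t)}}|\hat Q^{(t)}(s,a) - Q^*(s,a)| \leq \epsilon^{(t)} := \gamma e^{(t-1)} + \gamma V_{\max}\sqrt{2\log(2/\delta')/N^{(t)}}$ on the good event. The \emph{matrix-estimation} step is then a black box: since $[Q^*(s,a)]$ restricted to $\mS^{(t)}\times\mA^{(t)}$ has rank at most $r$ and $|\Omega^{(t)}| \leq \C(|\mS^{(t)}|+|\mA^{(t)}|)$, the $(\C,\ceta)$-property (Assumption~\ref{assu:me}) gives $\max_{\mS^{(t)}\times\mA^{(t)}}|\bar Q^{(t)} - Q^*| \leq \ceta\,\epsilon^{(t)}$. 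Finally, the \emph{generalization} step via $1$-nearest neighbor onto the $\beta^{(t)}$-net adds at most $2L\beta^{(t)}$ by $L$-Lipschitzness (the nearest grid point is within $\beta^{(t)}$ in each coordinate, hence within $2\beta^{(t)}$ in the product metric). Combining yields
\[
e^{(t)} \leq \ceta\gamma\, e^{(t-1)} + \ceta\gamma V_{\max}\sqrt{\tfrac{2\log(2/\delta')}{N^{(t)}}} + 2L\beta^{(t)}.
\]

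With this recursion I would choose the schedule so the two additive terms are each at most $\tfrac14 (2\gamma\ceta)^t V_{\max}$ under the inductive hypothesis $e^{(t-1)} \leq (2\gamma\ceta)^{t-1}V_{\max}$, which forces $\beta^{(t)} = \Theta\big((2\gamma\ceta)^t V_{\max}/L\big)$ and $N^{(t)} = \Theta\big((2\gamma\ceta)^{-2t}\log(1/\delta')\big)$, both geometric in $t$. Then $\ceta\gamma\, e^{(t-1)} \leq \tfrac12(2\gamma\ceta)^t V_{\max}$, the recursion closes to $e^{(t)} \leq (2\gamma\ceta)^t V_{\max}$, and a union bound over the $\sum_{t\leq T}|\Omega^{(t)}|$ sampling events (taking $\delta' = \delta/\sum_{t\leq T}|\Omega^{(t)}|$, whose $\log$ is polylogarithmic and absorbed by $\widetilde O$) makes this hold for all $t$ simultaneously with probability $\geq 1-\delta$. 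For the sample complexity, $\gamma < \tfrac{1}{2\ceta}$ gives $\rho := 2\gamma\ceta < 1$, so $T = \Theta(\log\tfrac1\epsilon)$ yields $\rho^T V_{\max}\leq\epsilon$ and \eqref{eqn:samples}. The total number of queries is $\sum_{t=1}^T |\Omega^{(t)}| N^{(t)}$ with $|\Omega^{(t)}| = O\big((1/\beta^{(t)})^d\big) = O(\rho^{-td})$ and $N^{(t)} = \widetilde O(\rho^{-2t}\log\tfrac1\delta)$, so the per-iteration cost $\widetilde O(\rho^{-t(d+2)}\log\tfrac1\delta)$ grows geometrically and the sum is dominated by $t=T$; since $\rho^{-T}=\Theta(1/\epsilon)$ this is $\widetilde O(\epsilon^{-(d+2)}\log\tfrac1\delta)$. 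The main obstacle is the tension in the schedule: $\beta^{(t)}$ must shrink geometrically to kill the interpolation error while $N^{(t)}$ must grow geometrically to kill the sampling error, all while the multiplicative inflation $\ceta$ from matrix estimation compounds with $\gamma$ each round—so contraction survives only when $\gamma < 1/(2\ceta)$, and the crux is verifying that, treating ME as a black box through Assumption~\ref{assu:me}, the combined per-round factor stays equal to $2\gamma\ceta<1$.
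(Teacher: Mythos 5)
Your proposal follows essentially the same route as the paper's proof: induction on $t$ with the hypothesis $e^{(t)} \leq (2\gamma\ceta)^t V_{\max}$, a per-iteration decomposition into lookahead (Hoeffding) error, ME inflation by the factor $\ceta$, and interpolation error $2L\beta^{(t)}$; the same geometric schedules $\beta^{(t)} = \Theta\big((2\gamma\ceta)^t V_{\max}/L\big)$ and $N^{(t)} = \Theta\big((2\gamma\ceta)^{-2t}\log(\cdot)\big)$; the same union bound; and the same geometric-sum computation showing the cost is dominated by the last iteration, giving $\widetilde{O}(\epsilon^{-(d+2)}\log\frac{1}{\delta})$.

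The one place you deviate is the lookahead decomposition, and as written it contains a slip. You center the empirical average at $V^{(t-1)}$, writing the fluctuation term as $\frac{1}{N^{(t)}}\sum_i V^{(t-1)}(s_i') - \E_{s'}[V^{(t-1)}(s')]$ and invoking Hoeffding with the bound $|V^{(t-1)}| \leq V_{\max}$. That bound is not justified: $Q^{(t-1)}$ is the output of matrix estimation plus interpolation, and nothing in the algorithm clips it, so all you know (on the inductive good event) is $\|V^{(t-1)}\|_\infty \leq V_{\max} + e^{(t-1)}$, which for the first claim of the theorem (where $\gamma$ is unrestricted and $(2\gamma\ceta)^{t-1}$ may exceed $1$) can be much larger than $V_{\max}$. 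In addition, since $V^{(t-1)}$ is random, Hoeffding must be applied conditionally on the history of previous iterations (valid because fresh transitions are drawn each round). Both issues are repairable — conditioning plus the corrected range costs only constants in $N^{(t)}$ — but the cleaner fix is exactly the paper's decomposition: center at the deterministic $V^*$, i.e.
\begin{align*}
\frac{1}{N^{(t)}}\sum_i V^{(t-1)}(s_i') - \E_{s'}[V^*(s')]
	= \frac{1}{N^{(t)}}\sum_i \big(V^{(t-1)} - V^*\big)(s_i')
	+ \bigg(\frac{1}{N^{(t)}}\sum_i V^*(s_i') - \E_{s'}[V^*(s')]\bigg),
\end{align*}
where the first term is bounded pointwise by $e^{(t-1)}$ (no probability needed) and the second is a mean-zero average of the \emph{fixed} function $V^*$, which genuinely satisfies $|V^*|\leq V_{\max}$, so Hoeffding applies unconditionally with the stated constant. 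With that substitution your recursion, schedule, and sample-complexity accounting match the paper's proof.
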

In the proof of Theorem \ref{thm:generic} presented in Appendix \ref{sec:proofthmgeneric}, we choose parameters
$\beta^{(t)} = \frac{V_{\max}}{8L} ( 2 \gamma \ceta)^t$, $|\Omega^{(t)}| = \C( |\mS^{(t)}| + |\mA^{(t)}| )$ and 
$N^{(t)} = \frac{ 8 }{ ( 2 \gamma \ceta )^{2(t-1)} } \log\big(\frac{2 | \Omega^{(t)} | T}{\delta}\big)$ for $1\leq t \leq T$. 
While this choice establishes the claims in Theorem \ref{thm:generic}, it is possible to 
achieve $\sup_{ (s,a) \in \mS \times \mA } \big| Q^{(t)}(s,a) - Q^*(s,a) \big| \leq \alpha^t V_{\max}$ for any 
$\alpha > \gamma \ceta$ by making a more sophisticated choice. 
Subsequently, the conclusion for sample complexity in Eq. \eqref{eqn:samples}, can be extended for any $\gamma <\frac{1}{\ceta}$.
Thus, the constant $\ceta$ in Assumption \ref{assu:me} determines the range of MDPs for which such gains can be achieved.
In our analysis of the proposed ME method, $\ceta \geq 1$ and indeed, we can achieve $\ceta = 1$ by trivially selecting 
$\Omega^{(t)} = \mS^{(t)} \times \mA^{(t)}$, which however, does not lead to any gain in efficiency. 
The key challenge is to find the right balance between small $\ceta$ with small $|\Omega^{(t)}|$ or $\C$. 
We address this next. 


\section{Matrix Estimation Methods Satisfying Assumption \ref{assu:me}}\label{sec:lowrankME}
We introduce a matrix estimation method satisfying Assumption \ref{assu:me} which is required for the success of our RL algorithm as in Theorem \ref{thm:generic}. 
For the ease of illustration, we start with describing it for the rank-$1$ setting (Section \ref{sec:rank1}), then generalize it for $Q^*$ with 
generic rank $r \geq 1$ (Section \ref{sec:rankr}) and finally for the approximate rank-$r$ setting with full generality (Section \ref{sec:rankr_approx}). 

\subsection{Matrix Estimation for $Q^*$ with Rank $1$}
\label{sec:rank1}

Consider $Q^*$ with rank $1$. That is, there exist $f: \mS \to \RR$ and $g: \mA \to \RR$ so that $Q^*(s,a) = f(s) g(a)$ for all $(s, a) \in \mS \times \mA$. For the ease of exposition, 
we assume $R(s,a)\in[R_{\min}, R_{\max}]$ with $ R_{\min} > 0$ for all $(s,a) \in \mS \times \mA$ in this warm-up only. Subsequently, $Q^*(s,a) \geq V_{\min} \triangleq \frac{R_{\min}}{1 - \gamma}, ~\forall(s,a)$.

\medskip
\noindent{\bf Matrix Estimation Algorithm.}
For $t \geq 1$, consider a discretization of state, action spaces, 
$\mS^{(t)} \subset \mS$, $\mA^{(t)} \subset \mA$. 
Let $Q^*(\mS^{(t)},\mA^{(t)})$ be the $|\mS^{(t)}|\times |\mA^{(t)}|$ matrix induced by restricting $Q^*$ to 
$\mS^{(t)} \times \mA^{(t)}$. Since $Q^*$ is rank $1$, it follows that $Q^*(\mS^{(t)},\mA^{(t)}) = F G^T$
where $F = [ f(s): s \in \mS^{(t)}] \in \mathbb{R}^{|\mS^{(t)}|}$ and $G = [g(a): a \in \mA^{(t)}]  \in \mathbb{R}^{|\mA^{(t)}|}$. 
Therefore, we can estimate $Q^*(\mS^{(t)},\mA^{(t)})$ by estimating $F, G$. 

Now we describe the selection of $\Omega^{(t)}$ such that $|\Omega^{(t)}| = |\mS^{(t)}| + |\mA^{(t)}| -1$. 
To that end, we first choose an {\em anchor} element $s^{\sharp} \in \mS^{(t)}$ and $a^\sharp \in \mA^{(t)}$. 
Then, let 
$\Omega^{(t)} = \{ (s,a) \in \mS^{(t)} \times \mA^{(t)}: s = s^{\sharp} \text{ or } a = a^{\sharp} \}$.
With access to $\{\hat{Q}^{(t)}(s,a): ~(s,a) \in \Omega^{(t)}\}$, our ME method produces estimates for all $(s, a) \in \mS^{(t)} \times \mA^{(t)}$ as 
$ \bar{Q}^{(t)}(s,a) = \frac{\hat{Q}^{(t)}(s,a^\sharp) \hat{Q}^{(t)}(s^\sharp,a)}{\hat{Q}^{(t)}(s^\sharp,a^\sharp)}$. 


\medskip
\noindent{\bf Satisfaction of Assumption \ref{assu:me}.} 
For the algorithm described above, we state the following proposition 
which verifies that Assumption \ref{assu:me} is satisfied with $\C = 1$ and $\ceta = 7 \frac{R_{\max}}{R_{\min}}$. 
\begin{prop}\label{prop:rank1}
For $\epsilon \leq \frac{1}{2} V_{\min}$, suppose that $\max_{(s,a) \in \Omega^{(t)}} \big| \hat{Q}^{(t)}(s,a) - Q^*(s,a) \big| \leq \epsilon$. Then the estimate produced by the above
ME algorithm satisfies 
\begin{align*}
    \max_{(s,a) \in \mS^{(t)}\times \mA^{(t)}} \big| \bar{Q}^{(t)}(s,a) - Q^*(s,a) \big| & \leq 7 \frac{R_{\max}}{R_{\min}} \epsilon.
\end{align*}
\end{prop}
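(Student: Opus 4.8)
The plan is to exploit the fact that the proposed estimator $\bar{Q}^{(t)}(s,a) = \hat{Q}^{(t)}(s,a^\sharp)\hat{Q}^{(t)}(s^\sharp,a)/\hat{Q}^{(t)}(s^\sharp,a^\sharp)$ is \emph{exact} on the noiseless rank-one matrix, and then to control how multiplicative perturbations of the three anchor observations propagate. First I would record the algebraic identity that drives everything: since $Q^*(s,a)=f(s)g(a)$,
\[
\frac{Q^*(s,a^\sharp)\,Q^*(s^\sharp,a)}{Q^*(s^\sharp,a^\sharp)} = \frac{f(s)g(a^\sharp)\,f(s^\sharp)g(a)}{f(s^\sharp)g(a^\sharp)} = f(s)g(a) = Q^*(s,a),
\]
so the estimator recovers $Q^*(s,a)$ exactly when fed the true anchor values. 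The error therefore stems purely from the discrepancy between $\hat{Q}^{(t)}$ and $Q^*$ on the three anchor entries $Q^*(s,a^\sharp)$, $Q^*(s^\sharp,a)$, $Q^*(s^\sharp,a^\sharp)$.

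Next I would set up bounds on these anchors. Because every entry of $Q^*$ is a value of the $Q$-function, each anchor lies in $[V_{\min},V_{\max}]$. Using the hypothesis $\epsilon \le \tfrac12 V_{\min}$, the perturbed denominator obeys $\hat{Q}^{(t)}(s^\sharp,a^\sharp)\ge V_{\min}-\epsilon \ge \tfrac12 V_{\min}>0$, so the estimator is well defined and the division is safe. The crucial modeling choice here---the step that yields the sharp constant---is to measure the errors \emph{multiplicatively}: write $\hat{Q}^{(t)}(s,a^\sharp)=Q^*(s,a^\sharp)(1+a)$, $\hat{Q}^{(t)}(s^\sharp,a)=Q^*(s^\sharp,a)(1+b)$, and $\hat{Q}^{(t)}(s^\sharp,a^\sharp)=Q^*(s^\sharp,a^\sharp)(1+c)$, where each relative error satisfies $|a|,|b|,|c|\le \epsilon/V_{\min}=:\eta\le \tfrac12$, since each denominator is at least $V_{\min}$.

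With this parametrization the estimator factors cleanly as $\bar{Q}^{(t)}(s,a)=Q^*(s,a)\cdot\frac{(1+a)(1+b)}{1+c}$, reducing the whole claim to a one-variable estimate on the multiplicative factor. I would bound $\big|\frac{(1+a)(1+b)}{1+c}-1\big|$ by monotonicity over the box $[-\eta,\eta]^3$ (the factor increases in $a,b$ and decreases in $c$, all denominators staying positive as $\eta\le\tfrac12$): the factor is largest at $(a,b,c)=(\eta,\eta,-\eta)$ and smallest at $(-\eta,-\eta,\eta)$, giving deviations $\frac{(1+\eta)^2}{1-\eta}-1=\frac{\eta(3+\eta)}{1-\eta}$ and $1-\frac{(1-\eta)^2}{1+\eta}=\frac{\eta(3-\eta)}{1+\eta}$. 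For $\eta\le\tfrac12$ both are at most $7\eta$; the first (binding) one because $\frac{3+\eta}{1-\eta}\le 7 \iff 8\eta\le 4$, with equality at $\eta=\tfrac12$. Multiplying by $Q^*(s,a)\le V_{\max}$ and using $V_{\max}/V_{\min}=R_{\max}/R_{\min}$ yields $\big|\bar{Q}^{(t)}(s,a)-Q^*(s,a)\big|\le 7\frac{R_{\max}}{R_{\min}}\epsilon$, uniformly over $(s,a)\in\mS^{(t)}\times\mA^{(t)}$.

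The main obstacle I anticipate is obtaining the \emph{linear} dependence $7R_{\max}/R_{\min}$ rather than a quadratic one. A naive additive perturbation bound on the product-over-ratio expands into a numerator of order $V_{\max}^2\epsilon$ over a denominator of order $V_{\min}^2$, which yields $(R_{\max}/R_{\min})^2$. The point of switching to relative errors is precisely that the $V_{\max}$-scale of the target $Q^*(s,a)$ factors out exactly once, leaving a single power of $V_{\max}/V_{\min}$; driving the constant down to exactly $7$ then reduces to the careful monotone optimization over $\eta\le\tfrac12$ described above.
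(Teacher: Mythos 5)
Your proof is correct and takes essentially the same route as the paper's: both rest on the exactness of the rank-one identity $Q^*(s,a) = Q^*(s,a^\sharp)\,Q^*(s^\sharp,a)/Q^*(s^\sharp,a^\sharp)$ and then bound the multiplicative distortion $\frac{(1+a)(1+b)}{1+c}$ with relative errors at most $\eta = \epsilon/V_{\min} \leq \tfrac12$. The only difference is in the finishing step: the paper uses $\tfrac{1}{1-x} \leq 1+2x$ and a polynomial expansion (bounding $(1+x)^2(1+2x)-1 \le 7x$ for $x \le \tfrac12$), while your exact monotone optimization over the box $[-\eta,\eta]^3$ is marginally sharper and shows the constant $7$ is attained exactly at $\eta=\tfrac12$.
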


Proposition \ref{prop:rank1} implies that when $Q^*$ is of rank $1$, our simple ME method described above satisfies 
$\big(1, 7 \frac{R_{\max}}{R_{\min}} \big)$-property for $\epsilon \leq \frac{1}{2}V_{\min}$. 
We remark that for any $c\in(0,1)$, one can show that the method fulfills $\big(1,\ceta \big)$-property with $\ceta = \frac{3+c}{1-c}\frac{R_{\max}}{R_{\min}}$ for all $\epsilon\leq cV_{\min}$.
By replacing Assumption \ref{assu:me} in Theorem \ref{thm:generic} with Proposition \ref{prop:rank1}, we obtain {convergence and sample complexity guarantees for the rank-$1$ setup} (cf. Theorem \ref{thm:rank1} stated in Appendix \ref{sec:proofrank1}). 
We refer interested readers to Appendix \ref{sec:proofrank1} for more details, including the proof of Proposition \ref{prop:rank1}.

\subsection{Matrix Estimation for $Q^*$ with Rank $r$}
\label{sec:rankr}

Based on the intuition developed in Section \ref{sec:rank1}, we consider a more general rank-$r$ setup. 
For notational convenience, given $Q: \mS \times \mA \to \RR$ and $\mS' \subset \mS, \mA' \subset \mA$, 
we let $Q(\mS', \mA')$ denote the $|\mS'| \times |\mA'|$ matrix $\big[ Q(s,a): ~(s,a) \in \mS' \times \mA' \big]$, 
whose entries are indexed by $(s,a) \in \mS' \times \mA'$. 

The central idea is the same as before: although $Q^*(\mS^{(t)},\mA^{(t)})\in\mathbb{R}^{m\times n}$ is an array of $mn$ real numbers, it has only $r (m+n-r)$ degrees of freedom with $r$-dimensional row and column spaces, when $\textrm{rank}(Q^*(\mS^{(t)},\mA^{(t)})) = r \leq \min\{ m, n \}$; as a result, one can successfully restore $Q^*(\mS^{(t)},\mA^{(t)})$ by exploring only $r$ entire rows and columns. There is, however, a small caveat that the $r$ rows and $r$ columns should be carefully chosen so that they are not degenerate, i.e., the $r$ rows span the entire row space of $Q^*(\mS^{(t)},\mA^{(t)})$ (the $r$ columns span the entire column space of $Q^*(\mS^{(t)},\mA^{(t)})$, respectively). Towards this end, we first define the notion of anchor states and actions.

\begin{defn}\label{defn:anchor_set}
(Anchor states and actions) 
A set of states $\mS^{\sharp} = \{s^\sharp_i\}_{i=1}^{R_s} \subset \mS$ and actions $\mA^{\sharp} = \{a^\sharp_i\}_{i=1}^{R_a} \subset\mA$ for some $R_s, R_a$ are called anchor states and actions for $Q^*$ if $\textrm{rank}~ Q^*(\mS^{\sharp}, \mA^{\sharp} ) = r$.
\end{defn}

That is, there are $r$ states in the set $\mS^\sharp$ such that $Q^*(s,\mA^\sharp), s\in \mS^\sharp$ are linearly 
independent. In other words, $\mS^\sharp$ contains states with sufficiently diverse performance on actions $\mA^\sharp$. 
Likewise, a similar interpretation holds for $\mA^\sharp$ if we look at the columns of $ Q^*(\mS^{\sharp}, \mA^{\sharp} )$. 




Indeed, $\mS^{\sharp}$ and $\mA^{\sharp}$ will be applied to construct our exploration sets 
and we want them to have small size. Finding only a few diverse states and actions is arguably easy 
in practical tasks --- in fact, for several stochastic control tasks experimented in Section~\ref{sec:empirical}, 
we simply pick a few states and actions that are far from each other in their respective metric spaces. We remark that 
assuming some ``anchor'' elements (i.e., elements having some special, relevant properties) is common 
in feature-based reinforcement learning~\cite{yang2019sample,duan2019state} or matrix factorization 
such as topic modeling~\cite{arora2012learning}.

\medskip
\noindent{\bf Matrix Estimation Algorithm.} 
We select anchor states $\mS^{\sharp} \subset \mS$, anchor actions $\mA^{\sharp} \subset \mA$ and fix
them throughout all iterations $1\leq t \leq T$. As before, we select appropriate $\beta^{(t)}$-nets $\mS^{(t)}$ and 
$\mA^{(t)}$ and augment them with the anchor states and actions: $\bar{\mS}^{(t)}\gets \mS^{(t)}\cup \mS^{\sharp}$ and 
$\bar{\mA}^{(t)}\gets \mA^{(t)}\cup \mA^{\sharp}$. For iteration $1\leq t\leq T$, we 
let $\Omega^{(t)} = \{ (s,a) \in \bar{\mS}^{(t)} \times \bar{\mA}^{(t)}: s \in \mS^{\sharp} \text{ or } a \in \mA^{\sharp} \}$ be the exploration set.

Given $\hat{Q}^{(t)}(s, a)$ for $(s, a) \in \Omega^{(t)}$, our ME method produces
estimates for all $(s, a) \in \mS^{(t)} \times \mA^{(t)}$ as
    \begin{equation}\label{eqn:ME_rankr.0}
        \bar{Q}^{(t)}(s,a) = \hat{Q}^{(t)} (s, \mA^{\sharp} ) \big[ \hat{Q}^{(t)}(\mS^{\sharp}, \mA^{\sharp} ) \big]^{\dagger} \hat{Q}^{(t)}(\mS^{\sharp}, a )
    \end{equation}
where $\big[ \hat{Q}^{(t)}(\mS^{\sharp}, \mA^{\sharp} ) \big]^{\dagger}$ denotes the Moore-Penrose pseudoinverse of $\hat{Q}^{(t)}(\mS^{\sharp}, \mA^{\sharp} )$. 
With the choice of $R_s = R_a =  r$ (or a constant multiple of $r$), the size of $\Omega^{(t)}$ is at most $ r \big( | \bar{\mS}^{(t)}| + |\bar{\mA}^{(t)}| - r \big) \ll | \bar{\mS}^{(t)} | | \bar{\mA}^{(t)}|$.

\medskip
\noindent{\bf Satisfaction of Assumption \ref{assu:me}.} 
For given matrix $X \in \mathbb{R}^{m \times n}$, we denote by $\sigma_i(X)$ its $i$-th largest singular value, i.e., $ \sigma_1(X) \geq \sigma_2(X)\geq\dots\geq\sigma_{\min(m, n)}(X) \geq 0$. We state the following guarantee, which verifies that the matrix estimation algorithm described above satisfies Assumption \ref{assu:me}. 
\begin{prop}[Simplified version of Proposition \ref{prop:rankr}]\label{prop:rankr_simplified}
    Let $\Omega^{(t)}$ and $\bar{Q}^{(t)}$ be as described above and let $|\mS^{\sharp}| = |\mA^{\sharp}| = r$.
    For any $\epsilon \leq \frac{1}{2r} \sigma_r\big( Q^*(\mS^{\sharp}, \mA^{\sharp} ) \big)$, if $\max_{(s,a) \in \Omega^{(t)}} \big| \hat{Q}^{(t)}(s,a) - Q^*(s,a) \big| \leq \epsilon$, then
    \begin{align*}
        \max_{(s,a) \in \mS^{(t)}\times \mA^{(t)}} \big| \bar{Q}^{(t)}(s,a) - Q^*(s,a) \big|		\leq c(r; \mS^{\sharp}, \mA^{\sharp} ) \epsilon
    \end{align*}
    where $c(r; \mS^{\sharp}, \mA^{\sharp} ) = \Big( 6\sqrt{2} \big( \frac{r}{\sigma_r ( Q^*(\mS^{\sharp}, \mA^{\sharp} ))} \big) + 2(1+\sqrt{5}) \big( \frac{r}{\sigma_r ( Q^*(\mS^{\sharp}, \mA^{\sharp} ))} \big)^2 \Big) V_{\max}$.
\end{prop}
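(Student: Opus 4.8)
The plan is to reduce the claim to a single entrywise stability estimate for the matrix skeleton (CUR) reconstruction, proved uniformly over grid points. Fix an arbitrary $(s,a)\in\mS^{(t)}\times\mA^{(t)}$ and abbreviate the true anchor block $M:=Q^*(\mS^\sharp,\mA^\sharp)\in\RR^{r\times r}$, the true cross slices $u:=Q^*(s,\mA^\sharp)\in\RR^{1\times r}$ and $v:=Q^*(\mS^\sharp,a)\in\RR^{r\times1}$, and their observed counterparts $\hat M,\hat u,\hat v$. Since $Q^*$ has exact rank $r$, Theorem \ref{thm:representation} gives a factorization $Q^*(s,a)=\phi(s)^\top\psi(a)$ into $r$-dimensional features (take $\phi_i=\sqrt{\sigma_i}\,f_i$, $\psi_i=\sqrt{\sigma_i}\,g_i$). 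Writing the anchor feature matrices $\Phi^\sharp,\Psi^\sharp\in\RR^{r\times r}$ we get $M=\Phi^\sharp(\Psi^\sharp)^\top$, $u=\phi(s)^\top(\Psi^\sharp)^\top$, $v=\Phi^\sharp\psi(a)$; because $\rank M=r$, both $\Phi^\sharp$ and $\Psi^\sharp$ are invertible, and substitution yields the exact \emph{noiseless skeleton identity} $Q^*(s,a)=u M^{-1}v=u M^\dagger v$, valid simultaneously for every grid point. Hence it suffices to bound $|\hat u\hat M^\dagger\hat v-u M^\dagger v|$ uniformly in $(s,a)$.

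Next I would assemble the perturbation ingredients. The entrywise hypothesis gives $\|\hat u-u\|_2,\|\hat v-v\|_2\le\sqrt r\,\epsilon$ and $\|\hat M-M\|_2\le\|\hat M-M\|_F\le r\epsilon$, while $|Q^*|\le V_{\max}$ gives $\|u\|_2,\|v\|_2\le\sqrt r\,V_{\max}$. The threshold $\epsilon\le\frac{1}{2r}\sigma_r(M)$ is exactly what forces $\|\hat M-M\|_2\le\tfrac12\sigma_r(M)$, so Weyl's inequality gives $\sigma_r(\hat M)\ge\tfrac12\sigma_r(M)>0$; thus $\hat M$ is invertible with $\rank\hat M=\rank M=r$ and $\|\hat M^\dagger\|_2\le2/\sigma_r(M)$. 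The crucial ingredient is the stability of the pseudoinverse: since $\hat M$ and $M$ have \emph{equal} rank, the Wedin/Stewart bound gives $\|\hat M^\dagger-M^\dagger\|_2\le\frac{1+\sqrt5}{2}\,\|M^\dagger\|_2\,\|\hat M^\dagger\|_2\,\|\hat M-M\|_2\le\frac{(1+\sqrt5)\,r\,\epsilon}{\sigma_r(M)^2}$, which is the origin of the golden-ratio constant $(1+\sqrt5)$.

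I would then expand the error by the telescoping decomposition $\hat u\hat M^\dagger\hat v-u M^\dagger v=(\hat u-u)\hat M^\dagger\hat v+u(\hat M^\dagger-M^\dagger)\hat v+u M^\dagger(\hat v-v)$ and bound each summand by submultiplicativity of the operator norm. The two outer terms are first order in $\epsilon$ and, after controlling $\|\hat v\|_2\le\|v\|_2+\|\hat v-v\|_2$, contribute a multiple of $\frac{r}{\sigma_r(M)}V_{\max}\,\epsilon$; combining the two symmetric contributions (e.g.\ through $\sqrt{\|\hat u-u\|_2^2+\|\hat v-v\|_2^2}=\sqrt2\,\sqrt r\,\epsilon$) accounts for the factor $\sqrt2$, and the higher-order cross terms are reabsorbed into the same scale using $\frac{r\epsilon}{\sigma_r(M)}\le\tfrac12$ together with the crude bound $\sigma_r(M)\le r V_{\max}$ (so $\epsilon\le\tfrac12 V_{\max}$). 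The middle term is bounded by $\|u\|_2\,\|\hat M^\dagger-M^\dagger\|_2\,\|\hat v\|_2$, producing the $(r/\sigma_r(M))^2$ term carrying the $(1+\sqrt5)$ constant. Summing the three groups and collecting coefficients yields $c(r;\mS^\sharp,\mA^\sharp)\,\epsilon$; as $(s,a)$ was arbitrary, taking the maximum over $\mS^{(t)}\times\mA^{(t)}$ finishes the proof.

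The main obstacle is the clean control of the pseudoinverse perturbation. One must (i) certify that the bounded noise does not change the rank of the anchor block --- this is precisely the role of the threshold $\epsilon\le\frac{1}{2r}\sigma_r(M)$, which keeps $\sigma_r(\hat M)$ bounded away from $0$ so that the \emph{sharp equal-rank} constant $\frac{1+\sqrt5}{2}$ applies rather than the discontinuous general-rank behavior of the pseudoinverse --- and (ii) track constants carefully across the three error groups, folding every higher-order-in-$\epsilon$ term back into the leading $\frac{r}{\sigma_r(M)}$ and $(r/\sigma_r(M))^2$ scales so the final coefficients $6\sqrt2$ and $2(1+\sqrt5)$ emerge. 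The skeleton identity and the norm bookkeeping are routine; the rank-stability guarantee and the choice of the correct pseudoinverse perturbation theorem are the substantive parts.
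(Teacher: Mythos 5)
Your proposal is correct and follows the same overall strategy as the paper's proof: establish the exact skeleton identity $Q^*(s,a) = Q^*(s,\mA^\sharp)\big[Q^*(\mS^\sharp,\mA^\sharp)\big]^\dagger Q^*(\mS^\sharp,a)$, then run a perturbation analysis that combines Weyl's inequality (to keep $\sigma_r(\hat M)\ge\tfrac12\sigma_r(M)$, hence $\|\hat M^\dagger\|_{op}\le 2/\sigma_r(M)$) with a Stewart/Wedin pseudoinverse perturbation bound carrying the golden-ratio constant. The differences are in the mechanics. First, you derive the skeleton identity from the rank-$r$ spectral factorization (invertibility of the $r\times r$ anchor feature matrices), whereas the paper proves a small block-matrix lemma (Lemma \ref{lem:schur_compl}: $\rank\,A=\rank\,M$ implies $D=CA^\dagger B$) and applies it to the partition of $Q^*$ by anchor rows/columns; the paper's route works purely at the matrix level without invoking the functional representation. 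Second, you use the three-term telescope $(\hat u-u)\hat M^\dagger\hat v + u(\hat M^\dagger-M^\dagger)\hat v + uM^\dagger(\hat v-v)$ bounded by submultiplicativity, while the paper groups the error into two terms and bounds the first via $|\tr(AB)|\le\sqrt{\rank\,B}\,\|A\|_{op}\|B\|_F$ with $\rank\,B\le 2$ --- that trace inequality is the actual source of the $\sqrt2$ in $6\sqrt2$, whereas in your accounting the first-order terms total roughly $4\tfrac{r}{\sigma_r}V_{\max}\epsilon\le 6\sqrt2\tfrac{r}{\sigma_r}V_{\max}\epsilon$, so the constant still closes (your heuristic attribution of $\sqrt2$ to combining $\|\hat u - u\|$ and $\|\hat v - v\|$ is not how the paper gets it, but this is immaterial). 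Third, you invoke the equal-rank (acute) product form $\tfrac{1+\sqrt5}{2}\|M^\dagger\|\|\hat M^\dagger\|\|\Delta\|$, which forces you to certify rank stability; the paper uses Stewart's general max-of-squares form $\tfrac{1+\sqrt5}{2}\max\{\|M^\dagger\|^2,\|\hat M^\dagger\|^2\}\|\Delta\|$, which needs no rank equality at all --- a small simplification you could adopt. One caveat, shared equally by your argument and the paper's: the pseudoinverse-perturbation term naturally scales as $V_{\max}^2$ (from $\|u\|\,\|\hat v\|\lesssim rV_{\max}^2$), while the stated constant $c(r;\mS^\sharp,\mA^\sharp)$ carries a single factor of $V_{\max}$; the paper absorbs this by asserting $\|Q^*(\mS^\sharp,a)\,Q^*(s,\mA^\sharp)\|_F\le V_{\max}\sqrt{|\mS^\sharp||\mA^\sharp|}$ (which silently drops a $V_{\max}$), and your writeup glosses the same point, so you are on exactly the same footing as the paper here rather than introducing a new gap.
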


Proposition \ref{prop:rankr_simplified} implies when $Q^*$ has rank $r$, our ME method
satisfies $\big(r, c(r; \mS^{\sharp}, \mA^{\sharp} ) \big)$-property 
for $\epsilon \leq \frac{1}{2r} \sigma_r\big( Q^*(\mS^{\sharp}, \mA^{\sharp} ) \big)$. 
Hence, we obtain Theorem \ref{thm:rankr} (stated in Appendix \ref{sec:proofthmrankr}) as a corollary of Theorem \ref{thm:generic}. That is, we obtain the desired convergence result and sample complexity  $\tilde{O}( \frac{1}{\epsilon^{d+2}} \cdot \log \frac{1}{\delta} )$ to achieve $\epsilon$ error with the output $Q^{(T)}$.
We remark that our algorithm and analysis also apply to low-rank $Q^*$ defined over discrete spaces (as also mentioned in Table \ref{tab:one} of the Introduction). 
We summarize results for (1) continuous $\mS$ and finite $\mA$; (2) finite $\mS$ and $\mA$ as corollaries in Appendix~\ref{subsec:coro_finite}.
We refer interested readers to Appendix \ref{sec:rankrappendix} for more details, including the proof of Proposition \ref{prop:rankr_simplified}.

\subsection{Matrix Estimation for $Q^*$ with Approximate Rank $r$} 
\label{sec:rankr_approx}

In Section \ref{sec:rankr}, we considered the setup where the underlying $Q^*$ has rank $r$. 
However, it may not be feasible to hope for exact low-rank structure in practice. Hence, it is desirable 
to seek methods that are reasonably robust to approximation error. We show that our ME method has such an 
appealing property. 

Given $r>0$ as a parameter, let $Q^*_r$ denote the best rank-$r$ approximation of $Q^*$ in the $L^2$-sense 
so that $Q^*_r(s,a) = \sum_{i=1}^r \sigma_i f_i(s) g_i(a)$; cf. Theorem \ref{thm:representation} and its general 
version in Appendix~\ref{sec:proof_thm1}. Denote by $\zeta_r \triangleq \sup_{(s,a) \in \mS \times \mA} | Q^*_r(s,a) - Q^*(s,a) |$ 
the model bias due to the approximation. 
We introduce the notion of $r$-anchor states/actions that generalizes the notion of anchor states and actions in Definition \ref{defn:anchor_set}.

\begin{defn}\label{defn:r_anchor}
($r$-Anchor States and Actions) 
A set of states $\mS^{\sharp} = \{s^\sharp_i\}_{i=1}^{R_s} \subset \mS$ and actions $\mA^{\sharp} = \{a^\sharp_i\}_{i=1}^{R_a} \subset\mA$ 
for some $R_s, R_a$ are called $r$-anchor states and $r$-anchor actions for $Q^*$ if $\textrm{rank}~ Q^*_r(\mS^{\sharp}, \mA^{\sharp} ) = r$ 
for a positive integer $r$.
\end{defn}
It is easy to see that if $\mS^{\sharp}$ and $\mA^{\sharp}$ are $r$-anchor states/actions for $Q^*$, then they are $r'$-anchor states/actions 
for $Q^*$ for all $r' \leq r$.

\medskip
\noindent{\bf Matrix Estimation Algorithm.}
The algorithm remains the same as the exact rank-$r$ case, except that we select $\mS^{\sharp} \subset \mS$ and $\mA^{\sharp} \subset \mA$ 
to be $r$-anchor states and actions. 

\medskip
\noindent{\bf Theoretical Guarantee for Approximate Rank-$r$ Setup.} 
Previously, we imposed some regularity assumptions on $Q^*$, but the truncated function $Q^*_r$ is not guaranteed to inherit the regularity properties. Here, we additionally assume that (i) $\| Q^*_r \|_{\infty} \leq V_{\max}$ and (ii) $Q^*_r$ is $L$-Lipschitz, for the convenience of exposition. 

At a high level, our analysis is simple: for a given parameter $r > 0$, we treat $Q^*_r$ as the true function and repeat our analysis for  the rank-$r$ setup. Of course, there will be an additional bias, $Q^*_r(s,a) - Q^*(s,a)$, incurred by this substitution which requires careful tracking at each iteration. We formalize this argument in Proposition \ref{prop:apprx_r} and Theorem \ref{thm:apprx_r}. 

\begin{prop}\label{prop:apprx_r}
    Let $\Omega^{(t)}$ and $\bar{Q}^{(t)}$ be  as described above. Given a positive integer $r>0$, 
    let $\mS^{\sharp}$ and $\mA^{\sharp}$ be some $r$-anchor states and actions for $Q^*$. 
    
    For any $\epsilon \leq \frac{1}{2\sqrt{|\mS^{\sharp}| |\mA^{\sharp}|}} \sigma_r\big( Q^*_r(\mS^{\sharp}, \mA^{\sharp} ) \big)$, if $\max_{(s,a) \in \Omega^{(t)}} \big| \hat{Q}^{(t)}(s,a) - Q^*_r(s,a) \big| \leq \epsilon$, then
    \[
        \max_{(s,a) \in \mS^{(t)}\times \mA^{(t)}} \big| \bar{Q}^{(t)}(s,a) - Q^*_r(s,a) \big| 
        	\leq \phi_c(r; \mS^{\sharp}, \mA^{\sharp})\epsilon,
    \]
    where
    \begin{equation}\label{eqn:phi_c}
        \phi_c(r; \mS^{\sharp}, \mA^{\sharp}) :=
        \Bigg( 6\sqrt{2} \bigg( \frac{\sqrt{|\mS^{\sharp}| |\mA^{\sharp}|}}{\sigma_r \big( Q^*_r(\mS^{\sharp}, \mA^{\sharp} )\big)} \bigg) 
		+ 2(1+\sqrt{5}) \bigg( \frac{\sqrt{|\mS^{\sharp}| |\mA^{\sharp}|}}{\sigma_r \big( Q^*_r(\mS^{\sharp}, \mA^{\sharp} )\big)} \bigg)^2 \Bigg) V_{\max}.
    \end{equation}
\end{prop}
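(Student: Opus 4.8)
The plan is to reduce the claim to the exact rank-$r$ analysis already carried out for Proposition \ref{prop:rankr}, with the truncated function $Q^*_r$ playing the role of the true kernel. The crucial observation is that, by construction in Theorem \ref{thm:representation}, $Q^*_r(s,a) = \sum_{i=1}^r \sigma_i f_i(s) g_i(a)$ is \emph{exactly} rank $r$; under the stated assumptions it also satisfies $\|Q^*_r\|_\infty \leq V_{\max}$ and is $L$-Lipschitz. Moreover, Definition \ref{defn:r_anchor} guarantees $\mathrm{rank}\, Q^*_r(\mS^\sharp,\mA^\sharp) = r$, and the input hypothesis $\max_{(s,a)\in\Omega^{(t)}}|\hat Q^{(t)}(s,a) - Q^*_r(s,a)| \le \epsilon$ is stated directly against $Q^*_r$. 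Thus every hypothesis of the exact rank-$r$ guarantee holds verbatim with $Q^*_r$ substituted for $Q^*$, and the task becomes re-running that argument while tracking how the anchor cardinalities $|\mS^\sharp|,|\mA^\sharp|$ enter the constant.

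First I would establish the noiseless reconstruction identity. Writing $A = Q^*_r(\mS^\sharp,\mA^\sharp)$, $u = Q^*_r(s,\mA^\sharp)$ and $v = Q^*_r(\mS^\sharp,a)$, I would use the factorization $Q^*_r = \Phi\Psi^\top$ induced by the $r$ spectral components, the rows of $\Phi$ and $\Psi$ being the feature vectors $\phi(\cdot),\psi(\cdot)\in\RR^r$ of states and actions. Since $\mS^\sharp,\mA^\sharp$ are $r$-anchor, $\Phi$ and $\Psi$ have full column rank $r$, so $A^\dagger = (\Psi^\top)^\dagger\Phi^\dagger$, and using $\Psi^\top(\Psi^\top)^\dagger = I_r$, $\Phi^\dagger\Phi = I_r$ a direct computation collapses $u A^\dagger v$ to $\phi(s)^\top\psi(a) = Q^*_r(s,a)$. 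This shows the cross-estimator is exact in the absence of noise, for every target $(s,a)$, regardless of whether the anchor sets have size exactly $r$ or larger.

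Next I would perform the perturbation analysis. Setting $\hat u = u + \Delta u$, $\hat v = v + \Delta v$, $\hat A = A + E$ with all perturbations entrywise bounded by $\epsilon$, I would decompose
\[
\hat u\hat A^\dagger\hat v - u A^\dagger v = \Delta u\,\hat A^\dagger\hat v + u\,\hat A^\dagger\Delta v + u(\hat A^\dagger - A^\dagger)v,
\]
and bound the three pieces. The entrywise-to-spectral conversions $\|\Delta u\|_2\le\sqrt{|\mA^\sharp|}\,\epsilon$, $\|\Delta v\|_2\le\sqrt{|\mS^\sharp|}\,\epsilon$ and $\|E\|_2\le\|E\|_F\le\sqrt{|\mS^\sharp||\mA^\sharp|}\,\epsilon$ are exactly what turns the rank-$r$ constant's factor $r$ into $\sqrt{|\mS^\sharp||\mA^\sharp|}$. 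The smallness hypothesis $\epsilon\le\sigma_r(A)/(2\sqrt{|\mS^\sharp||\mA^\sharp|})$ forces $\|E\|_2\le\tfrac12\sigma_r(A)$, so Weyl's inequality gives $\sigma_r(\hat A)\ge\tfrac12\sigma_r(A)$ and hence $\|\hat A^\dagger\|_2\le 2/\sigma_r(A)$; after collecting constants, the two ``signal $\times$ noise'' pieces contribute a linear-in-$\epsilon$ term with coefficient $6\sqrt2\,(\sqrt{|\mS^\sharp||\mA^\sharp|}/\sigma_r)\,V_{\max}$. For the last piece I would invoke the Wedin-type stable pseudoinverse perturbation bound $\|\hat A^\dagger - A^\dagger\|_2\le\tfrac{1+\sqrt5}{2}\|A^\dagger\|_2\|\hat A^\dagger\|_2\|E\|_2$, which supplies the $1+\sqrt5$ constant and the second power of $\sqrt{|\mS^\sharp||\mA^\sharp|}/\sigma_r$ in $\phi_c$. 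Taking a maximum over $(s,a)\in\mS^{(t)}\times\mA^{(t)}$ then yields the stated inequality.

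The main obstacle is the pseudoinverse term $u(\hat A^\dagger - A^\dagger)v$: obtaining the sharp constant, rather than a crude $\|u\|_2\,\|\hat A^\dagger - A^\dagger\|_2\,\|v\|_2$ estimate carrying an extra factor of $V_{\max}$, requires exploiting that $u$ and $v$ lie in the row and column spaces of $A$, so that the leading contribution $-\,u\hat A^\dagger E A^\dagger v$ can be grouped to keep only a single power of $V_{\max}$. A secondary technical point, relevant when $|\mS^\sharp|$ or $|\mA^\sharp|$ exceeds $r$, is that $\hat A$ may carry spurious singular values of size up to $\|E\|_2$ that, if inverted, would blow up $\hat A^\dagger$; I would handle this by applying the pseudoinverse on the rank-$r$ signal subspace guaranteed by the smallness condition, so that the effective inverse stays controlled by $2/\sigma_r(A)$ throughout. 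Everything else is routine bookkeeping identical to the proof of Proposition \ref{prop:rankr}.
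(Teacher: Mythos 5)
Your strategy coincides with the paper's: the paper's proof of this proposition is literally ``the proof of Proposition \ref{prop:rankr} with $Q^*_r$ in place of $Q^*$,'' and that proof runs exactly along your lines --- a noiseless exactness identity for the cross-estimator (the paper derives it from the Schur-complement Lemma \ref{lem:schur_compl} rather than your full-rank factorization identity $A^{\dagger} = (\Psi^{\top})^{\dagger}\Phi^{\dagger}$, but the two are interchangeable), then a perturbation split, entrywise-to-spectral norm conversions, Weyl's inequality, and Stewart's pseudoinverse perturbation bound. Your three-term decomposition is the paper's two-term one with the ``signal $\times$ noise'' part split in two, and your product-form Wedin bound versus the paper's max-form bound is immaterial.

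The genuine problem is the step you yourself single out as the main obstacle. Exploiting that $u$ and $v$ lie in the row and column spaces of $A$ does \emph{not} recover a single power of $V_{\max}$ in the term $u(\hat A^{\dagger} - A^{\dagger})v$, and no regrouping can: that term genuinely scales like $V_{\max}^2\epsilon$. Concretely, take $r = 2$, $|\mS^{\sharp}| = |\mA^{\sharp}| = 2$, feature vectors $\phi(s_1^{\sharp}) = e_1$, $\phi(s_2^{\sharp}) = e_2$, $\psi(a_1^{\sharp}) = e_1$, $\psi(a_2^{\sharp}) = e_2$ (standard basis of $\RR^2$), so that $A = Q^*_r(\mS^{\sharp},\mA^{\sharp}) = I_2$ and $\sigma_r(A) = 1$; a target pair with $\phi(s) = (V_{\max}, V_{\max})$ and $\psi(a) = (V_{\max}/2, -V_{\max}/2)$, so $u = (V_{\max}, V_{\max})$, $v = (V_{\max}/2, -V_{\max}/2)^{\top}$, every entry of $Q^*_r$ involved is bounded by $V_{\max}$, and $Q^*_r(s,a) = 0$; and noise only on the anchor block, $E = \epsilon\left(\begin{smallmatrix}1 & -1\\ 1 & -1\end{smallmatrix}\right)$. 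Since $E^2 = 0$, we have $\hat A^{-1} = I - E$ exactly, and the estimator's error is $|uEv| = 2V_{\max}^2\epsilon$, which violates the claimed bound $\phi_c(r;\mS^{\sharp},\mA^{\sharp})\,\epsilon \approx 43\,V_{\max}\epsilon$ as soon as $V_{\max} > 22$, even though every hypothesis of the proposition holds (here $u, v$ trivially lie in the row/column space of $A$). So your plan for this term would fail --- but you should know the paper fails at the identical spot: its proof of Proposition \ref{prop:rankr} asserts $\| Q^*(\mS^{\sharp}, a) Q^*(s,\mA^{\sharp}) \|_F \leq V_{\max}\sqrt{|\mS^{\sharp}||\mA^{\sharp}|}$, whereas the correct bound is $V_{\max}^2\sqrt{|\mS^{\sharp}||\mA^{\sharp}|}$. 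The single power of $V_{\max}$ on the quadratic term of $\phi_c$ appears unobtainable in general; the honest constant carries $V_{\max}^2$ there.

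Your secondary point is also well taken and likewise glossed over by the paper: when $|\mS^{\sharp}|$ or $|\mA^{\sharp}|$ exceeds $r$, the perturbed anchor matrix can acquire spurious nonzero singular values of order $\|E\|_{op}$, so $\|\hat A^{\dagger}\|_{op}$ need not equal $1/\sigma_r(\hat A)$ (the paper's proof writes this as an equality), and the raw pseudoinverse estimator can be made arbitrarily bad by bounded adversarial noise. Your rank-$r$ truncation is the right repair, though note that it amounts to changing the estimator ``as described above'' rather than proving the proposition as literally stated.
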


With Proposition \ref{prop:apprx_r} at hand, we can obtain the following theorem as a Corollary of Theorem \ref{thm:generic} for the approximate rank-$r$ setup. 
The theorem guarantees that when the model bias $\| Q^*_r - Q^* \|_{\infty}$ is sufficiently small, we obtain convergence and sample complexity results similar to the rank-$r$ setting with an additive error induced by the model bias.

\begin{thm}
\label{thm:apprx_r}

    Consider the approximate rank-$r$ setting in this section. Suppose that we run Algorithm \ref{alg:generic} with the ME method 
    described above.  
    Given a positive integer $r$,
    if $\gamma \leq \frac{1}{2 \phi_c(r; \mS^{\sharp}, \mA^{\sharp} )}$ and $\zeta_r \leq \min\Big\{\frac{\sigma_r\big( Q^*_r(\mS^{\sharp}, \mA^{\sharp} ) \big)}{2\sqrt{|\mS^{\sharp}||\mA^{\sharp}|} + (1 + \frac{1}{V_{\max}}) \sigma_r\big( Q^*_r(\mS^{\sharp}, \mA^{\sharp} ) \big) }, \frac{3}{2}V_{\max}\Big\}$, 
    then the following two statements are true.
    \begin{enumerate}
    \item
    For any $\delta > 0$, 
    with probability at least $1 - \delta$, 
    the following inequality holds for all $t=1, \ldots, T$ 
    \begin{align*}
        \sup_{ (s,a) \in \mS \times \mA } \big| Q^{(t)}(s,a) - Q^*(s,a) \big|
            &\leq
            \big(2 \phi_c(r; \mS^{\sharp}, \mA^{\sharp} ) \gamma \big)^t V_{\max}\\
            &\quad+ ( 1 + \phi_c(r; \mS^{\sharp}, \mA^{\sharp} )\gamma)\zeta_r \sum_{i=1}^{t} \big( \phi_c(r; \mS^{\sharp}, \mA^{\sharp} ) \gamma \big)^{i-1}
    \end{align*}
    by choosing algorithmic parameters $\beta^{(t)}, N^{(t)}$ appropriately.
    
    \item
Further, given  $\epsilon > 0$, it suffices to set $T = \Theta( \log \frac{1}{\epsilon} )$ and use $\tilde{O}( \frac{1}{\epsilon^{d+2}} \cdot \log \frac{1}{\delta} )$ number of samples to achieve 
    \begin{equation*}
        \mathbb{P}\bigg( \sup_{(s,a) \in \mS \times \mA} \big| Q^{(T)}(s,a) - Q^*(s,a) \big| \leq \epsilon + \frac{ 1 + \gamma\phi_c(r; \mS^{\sharp}, \mA^{\sharp})}{ 1 - \gamma\phi_c(r; \mS^{\sharp}, \mA^{\sharp})}\zeta_r \bigg) 
            \geq 1 - \delta.
    \end{equation*}
    \end{enumerate}
\end{thm}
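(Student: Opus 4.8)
The plan is to obtain Theorem~\ref{thm:apprx_r} as a corollary of Theorem~\ref{thm:generic} by treating the best rank-$r$ surrogate $Q^*_r$ as a proxy ground truth and then carrying the model bias $\zeta_r = \sup_{\mS\times\mA}|Q^*_r - Q^*|$ through the iteration as an additive perturbation. Proposition~\ref{prop:apprx_r} already certifies that the proposed ME subroutine, fed inputs whose $\ell_\infty$ deviation from $Q^*_r$ on $\Omega^{(t)}$ is at most $\epsilon$, returns estimates whose $\ell_\infty$ deviation from $Q^*_r$ on the whole net is at most $\phi_c\,\epsilon$ (writing $\phi_c$ for $\phi_c(r;\mS^\sharp,\mA^\sharp)$); this is exactly the $(\C,\ceta)$-property of Assumption~\ref{assu:me} with $\ceta = \phi_c$, but stated relative to $Q^*_r$ rather than $Q^*$. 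The one genuinely new ingredient, absent from the exact-rank analysis, is that the lookahead $\hat{Q}^{(t)}$ estimates the true Bellman backup of $V^{(t-1)}$ and hence targets the fixed point $Q^*$, whereas the ME step only respects $Q^*_r$. Reconciling the two targets injects a $\zeta_r$-scale bias at every iteration, and the crux is to track how it propagates.

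The core step is to set up a per-iteration recursion for $e^{(t)} := \sup_{\mS\times\mA}|Q^{(t)} - Q^*|$ by analyzing the four algorithmic phases. In the exploration phase, since $\hat{Q}^{(t)}(s,a)$ is an $N^{(t)}$-sample average of $R(s,a) + \gamma V^{(t-1)}(s')$ while $Q^* = R + \gamma\E V^*$, a Hoeffding bound (values lie in $[-V_{\max},V_{\max}]$) together with a union bound over $\Omega^{(t)}$ and over $t \leq T$ gives, on a $1-\delta$ event, $\max_{\Omega^{(t)}}|\hat{Q}^{(t)} - Q^*| \leq \gamma e^{(t-1)} + \gamma\,\xi^{(t)}$ with $\xi^{(t)} = V_{\max}\sqrt{2\log(2|\Omega^{(t)}|T/\delta)/N^{(t)}}$; the contraction factor $\gamma$ is the only place $e^{(t-1)}$ enters. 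Passing to the ME-compatible target and invoking Proposition~\ref{prop:apprx_r} (in its full form, Proposition~\ref{prop:rankr}, routing the model bias through the value function so that it inherits the contraction factor $\gamma$ before ME amplification) yields, over the net, $\max|\bar{Q}^{(t)} - Q^*_r| \leq \phi_c\gamma\, e^{(t-1)} + \phi_c\gamma\,\xi^{(t)} + \phi_c\gamma\,\zeta_r$. The $1$-NN generalization phase then adds an interpolation error $\leq 2L\beta^{(t)}$ (using $L$-Lipschitzness of $Q^*_r$), and the final conversion back to $Q^*$ contributes a single $\zeta_r$. Collecting terms gives the target recursion $e^{(t)} \leq \phi_c\gamma\, e^{(t-1)} + \big(\phi_c\gamma\,\xi^{(t)} + 2L\beta^{(t)}\big) + (1+\phi_c\gamma)\,\zeta_r$.

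With this recursion in hand, I would follow the parameter choices of Theorem~\ref{thm:generic}: take $\beta^{(t)} = \frac{V_{\max}}{8L}(2\phi_c\gamma)^t$ and pick $N^{(t)}$ large enough that the bracketed statistical-plus-discretization term is at most $\phi_c\gamma(2\phi_c\gamma)^{t-1}V_{\max}$. Under the hypothesis $\gamma \leq \frac{1}{2\phi_c}$ (so that $2\phi_c\gamma \leq 1$), an induction then yields statement~1: the homogeneous part telescopes to $(2\phi_c\gamma)^t V_{\max}$, while the fixed bias accumulates as $(1+\phi_c\gamma)\zeta_r \sum_{i=1}^{t}(\phi_c\gamma)^{i-1}$. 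For statement~2, choosing $T = \Theta(\log\frac{1}{\epsilon})$ drives $(2\phi_c\gamma)^T V_{\max} \leq \epsilon$ and sends the geometric bias sum to $\frac{1}{1-\phi_c\gamma}$, producing the residual $\frac{1+\phi_c\gamma}{1-\phi_c\gamma}\zeta_r$; the total sample count $\sum_{t=1}^T |\Omega^{(t)}| N^{(t)}$ with $|\Omega^{(t)}| = O(r(\beta^{(t)})^{-d})$ is dominated by $t = T$ and evaluates to $\tilde{O}(\epsilon^{-(d+2)}\log\frac{1}{\delta})$, exactly as in the exact-rank case.

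I expect two places to carry the real difficulty. First is the tight bias accounting: naive triangle inequalities amplify $\zeta_r$ by the full factor $\phi_c$ at each step, whereas obtaining the stated $(1+\phi_c\gamma)$ coefficient requires routing the model bias through the value function so it inherits the contraction $\gamma$ before ME amplification, leaving only the unavoidable direct $Q^*_r \to Q^*$ conversion at coefficient one. Second, and equally important, is verifying that the ME subroutine remains applicable at every iteration: Proposition~\ref{prop:apprx_r} is valid only when the input error stays below $\frac{1}{2\sqrt{|\mS^\sharp||\mA^\sharp|}}\sigma_r\big(Q^*_r(\mS^\sharp,\mA^\sharp)\big)$, and since the input error never decays below the $\zeta_r$-scale floor, the hypothesis $\zeta_r \leq \min\{\cdots, \frac{3}{2}V_{\max}\}$ is precisely what keeps this threshold respected uniformly in $t$. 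Checking this invariant throughout the induction, including the early large-error iterations, is what closes the argument.
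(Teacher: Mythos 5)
Your overall architecture coincides with the paper's proof of Theorem \ref{thm:apprx_r}: treat $Q^*_r$ as the proxy ground truth, use Proposition \ref{prop:apprx_r} in place of Assumption \ref{assu:me} with $\ceta = \phi_c(r;\mS^{\sharp},\mA^{\sharp})$ (write $\phi_c$ below), reuse the parameter choices $\beta^{(t)}, N^{(t)}$ and induction of Theorem \ref{thm:generic}, and verify at every iteration that the lookahead error stays below the threshold $\frac{1}{2\sqrt{|\mS^{\sharp}||\mA^{\sharp}|}}\sigma_r\big(Q^*_r(\mS^{\sharp},\mA^{\sharp})\big)$ using the hypothesis on $\zeta_r$ (the paper closes this last point with an explicit quadratic inequality in $X = \sqrt{|\mS^{\sharp}||\mA^{\sharp}|}/\sigma_r$, which is exactly where both entries of the $\min$ in the $\zeta_r$ condition get consumed). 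The genuine gap is in the step you yourself call the crux: the claim that the model bias can be ``routed through the value function'' so that it inherits a factor $\gamma$ before ME amplification. It cannot. The lookahead identity reads
\begin{equation*}
  \hat{Q}^{(t)}(s,a) - Q^*_r(s,a) \;=\; \gamma\Big(\tfrac{1}{N^{(t)}}\textstyle\sum_i V^{(t-1)}(s_i') - \E_{s'}\big[V^*(s')\big]\Big) \;+\; \big(Q^*(s,a) - Q^*_r(s,a)\big),
\end{equation*}
so the bias sits outside the Bellman backup and enters the ME input with coefficient exactly $1$; since $Q^*_r$ is not the fixed point of any Bellman operator, there is no alternative identity through which $Q^* - Q^*_r$ could pick up $\gamma$. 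Proposition \ref{prop:apprx_r} then amplifies this input bias to $\phi_c\zeta_r$, and the conversion of the ME output from $Q^*_r$ back to $Q^*$ adds one more $\zeta_r$, so honest accounting gives the per-iteration coefficient $(1+\phi_c)\zeta_r$, not $(1+\phi_c\gamma)\zeta_r$. This is not a constant-chasing quibble: one can check $\phi_c \geq 6\sqrt{2}$ always, while $\phi_c\gamma \leq 1/2$ under the theorem's hypothesis.

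In fairness, this is not a defect you introduced: the paper's own proof commits the same sleight. Its displayed chain begins with $B^{(t)} \leq c_r\big\{\gamma(\cdots) + \zeta_r\big\} + \zeta_r + 2L\beta^{(t)}$ (via \eqref{eqn:generic_me_approx}), whose $\zeta_r$-coefficient is $1+c_r$, and the very next line silently rewrites it as $(c_r\gamma+1)\zeta_r$. So your recursion agrees with the paper's claimed recursion, but neither your mechanism nor the paper's algebra actually establishes that constant. Everything else in your proposal survives with the honest coefficient: the induction closes unchanged (the inhomogeneous term is still a geometric series), statement 1 holds with $(1+\phi_c)\zeta_r\sum_{i=1}^{t}(\phi_c\gamma)^{i-1}$ in place of the stated prefactor, statement 2 holds with additive error $\frac{1+\phi_c}{1-\gamma\phi_c}\zeta_r$, and the sample-complexity count is untouched. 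To prove the theorem exactly as stated, a genuinely new idea for the bias accounting would be needed; the routing you propose does not supply it.
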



Theorem \ref{thm:apprx_r} establishes the robustness of our method. When the approximation error $\zeta_r$ is not too large, with high probability, we obtain estimate of $Q^*$ that is within $\ell_\infty$ error $\epsilon + \frac{ 1 + \gamma\phi_c(r; \mS^{\sharp}, \mA^{\sharp})}{ 1 - \gamma\phi_c(r; \mS^{\sharp}, \mA^{\sharp})}\zeta_r$. Again, the algorithm only efficiently utilizes  $\tilde{O}( \frac{1}{\epsilon^{d+2}} \cdot \log \frac{1}{\delta} )$ number of samples\footnote{Here, the hidden constant in $\tilde{O}(\cdot)$ depends on $|\mS^{\sharp}|, |\mA^{\sharp}|$, {but it is not dependent on $\epsilon$.}}. Overall, the results on the approximate rank-$r$ setting justifies the soundness of our approach, from both theoretical and practical perspectives.

\medskip
\noindent{\bf {Reduction to Exact Rank-$r$ Case}.}
First of all, note that the exact rank-$r$ setting is a special case of approximate rank-$r$ case with $\zeta_s = 0$ for all $s \geq r$. 
Therefore, Theorem \ref{thm:apprx_r} applies to rank-$r$ setup discussed in Section \ref{sec:rankr}. With the choice of $|\mS^{\sharp}| = |\mA^{\sharp}| = r$, 
$\phi_c(r; \mS^{\sharp}, \mA^{\sharp})$ in \eqref{eqn:phi_c} reduces to $c(r; \mS^{\sharp}, \mA^{\sharp} )$ defined in Proposition \ref{prop:rankr_simplified}.
As a result, we can apply Theorem \ref{thm:apprx_r} to arrive at the same conclusion with Theorem \ref{thm:rankr} for exact rank-$r$ case stated 
and proved in Appendix \ref{sec:proofthmrankr}.
Taking that into account, Theorem \ref{thm:apprx_r} guarantees that when the model bias $\zeta_r$ is sufficiently small, we obtain 
convergence and sample complexity results similar to the exact rank-$r$ setting with an additive error, 
$\frac{ 1 + \gamma \phi_c(r; \mS^{\sharp}, \mA^{\sharp})}{ 1 - \gamma \phi_c(r; \mS^{\sharp}, \mA^{\sharp})}\zeta_r$, induced by the approximation bias.

\section{Empirical Evaluation}
\label{sec:empirical}

Besides theory, we empirically validate the effectiveness of our method on 5 continuous control tasks. The detailed setup can be found in Appendix~\ref{appendix:control setup}. In short, we first discretize the spaces into very fine grid and run standard value iteration to obtain a proxy of $Q^*$. The proxy has a very small approximate rank in all tasks; we hence use $r=10$ for our experiments. As mentioned, we simply select $r$ states and $r$ actions that are far from each other in their respective spaces as our anchor states and actions. For example, if the space is 2-dimensional, we uniformly divide it into $r$ squares and sample one from each square. Because of unavoidable discretization error, we also provide results on mean error, which might be a more reasonable measure in practice. While our proof requires small $\gamma$, we find the method to be generally applicable with large $\gamma$ in real tasks. Therefore, we use $\gamma = 0.9$ in all the tasks. Additional results on this aspect as well as results on all 5 tasks are provided in Appendix~\ref{appendix:control results}. 

\medskip
\noindent{\bf Improved Sample Complexity with ME.} First, we confirm that the sample complexity of our algorithm
improves with the use of ME. Our baseline is the same algorithm without the ME step, i.e., we explore and 
update all $(s,a)\in\mS^{(t)}\times \mA^{(t)}$, which is equivalent to performing a simulated value 
iteration on the discretized set. We illustrate the sample complexity for achieving different levels 
of $\ell_\infty$ error (Figure \ref{fig:ip_.9_linf_complexity}) and mean error (Figure \ref{fig:ip_.9_mean_complexity}). 
It is clear from the plots that our algorithm uses significantly less samples to achieve error at a similar level, compared to the baseline. This evidences that exploiting structure leads to improved efficiency. The same conclusion holds for the other tasks.
\begin{figure}[H]
\centering
\subfigure[Sample Complexity]{
    \label{fig:ip_.9_linf_complexity}
    \includegraphics[width=0.24\textwidth]{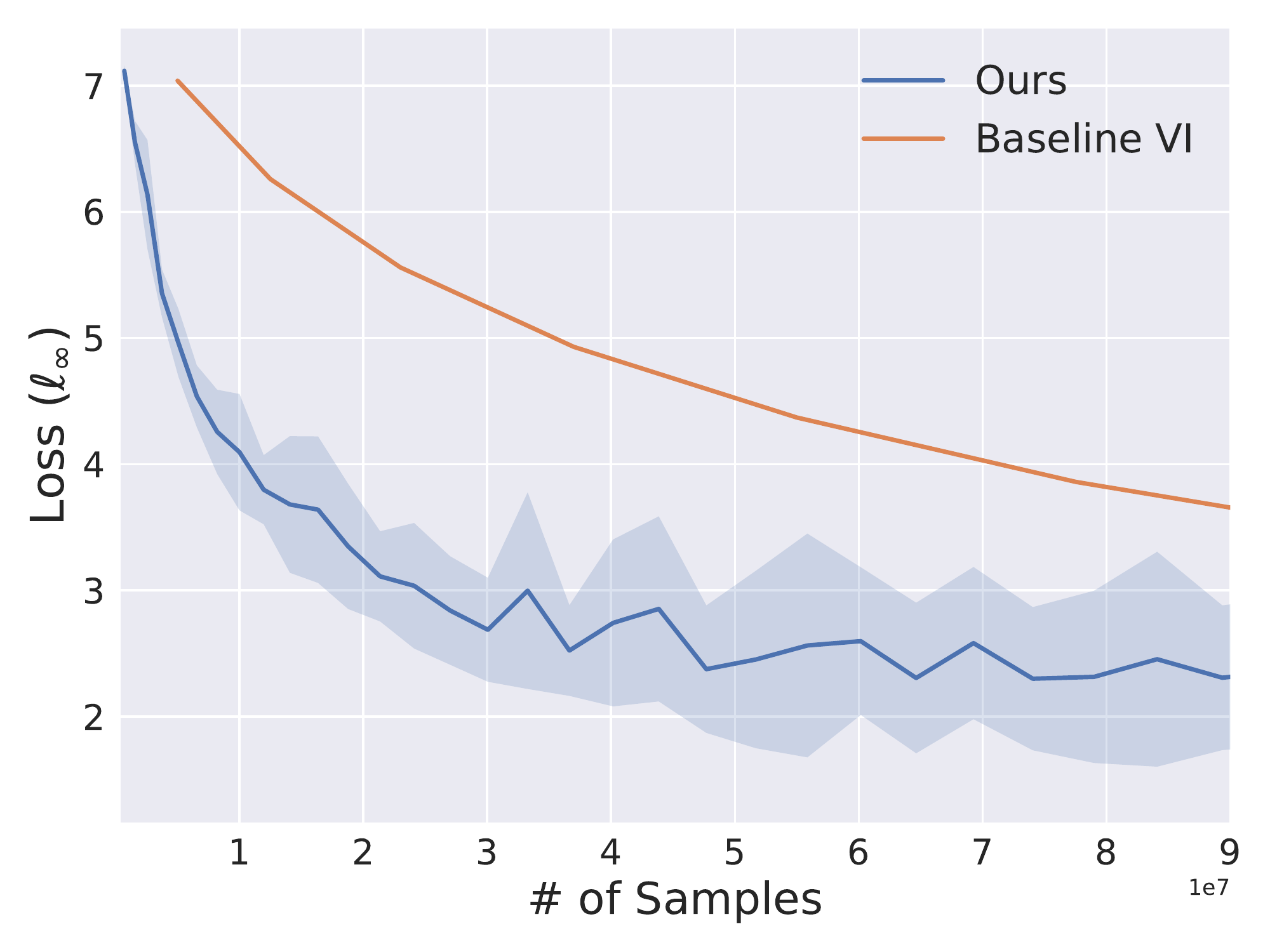}
}
\hspace{-2.26ex}
\subfigure[Sample Complexity]{
    \label{fig:ip_.9_mean_complexity}
    \includegraphics[width=0.24\textwidth]{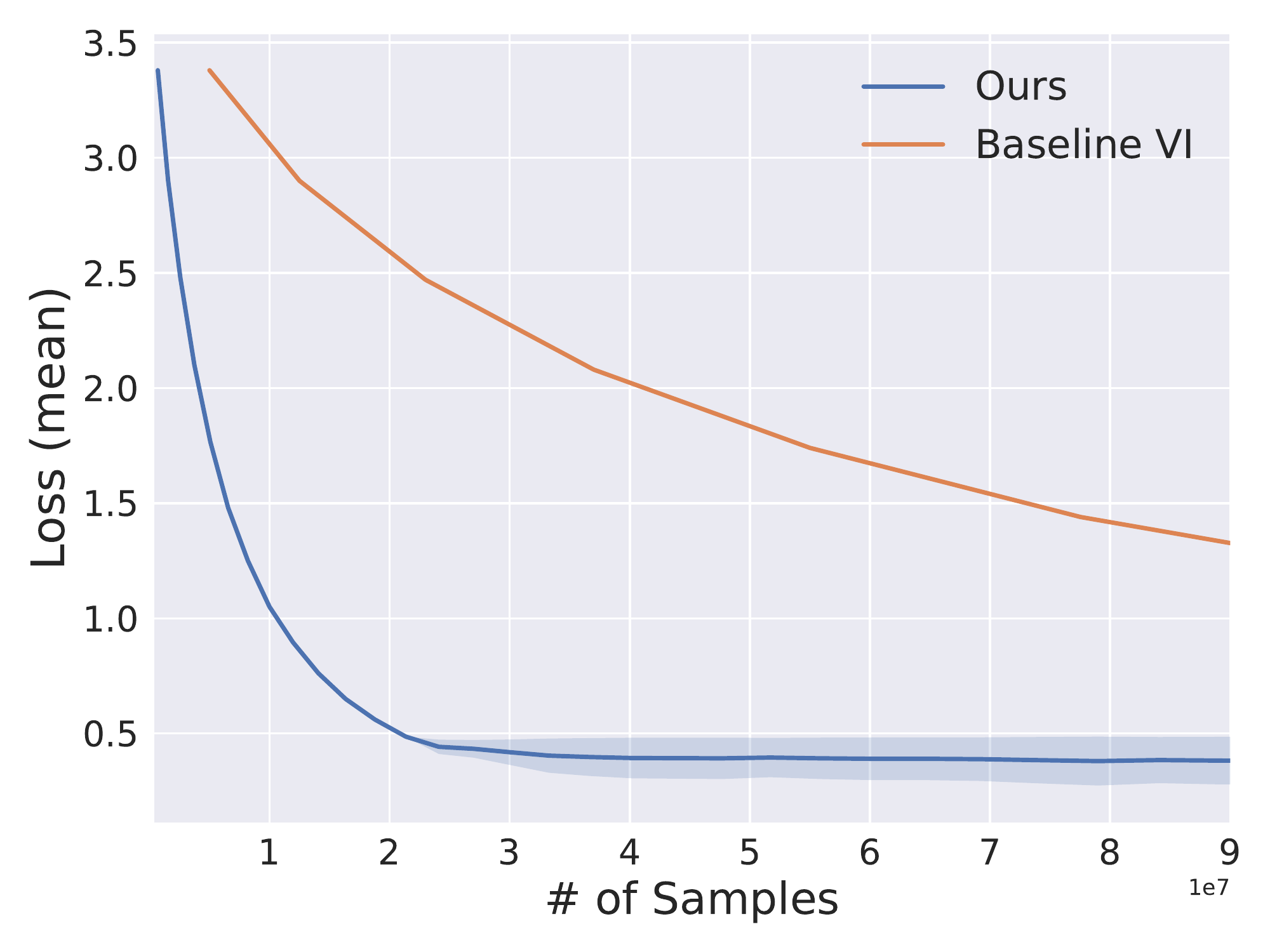}
}
\hspace{-2.26ex}
\subfigure[$\ell_{\infty}$ Errors]{
    \label{fig:ip_.9_linf_loss}
    \includegraphics[width=0.24\textwidth]{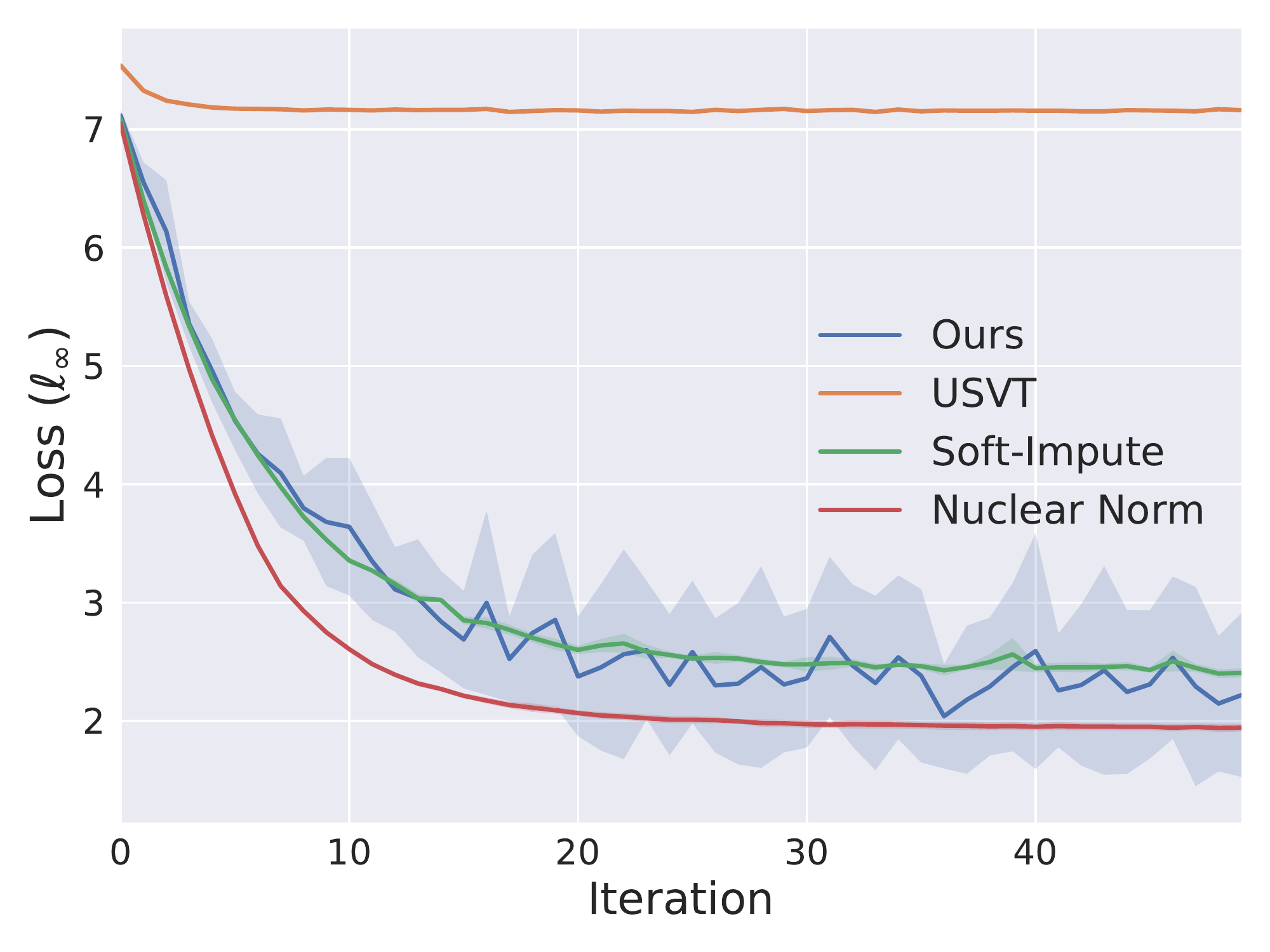}
}
\hspace{-2.26ex}
\subfigure[Mean Errors]{
    \label{fig:ip_.9_mean_loss}
    \includegraphics[width=0.24\textwidth]{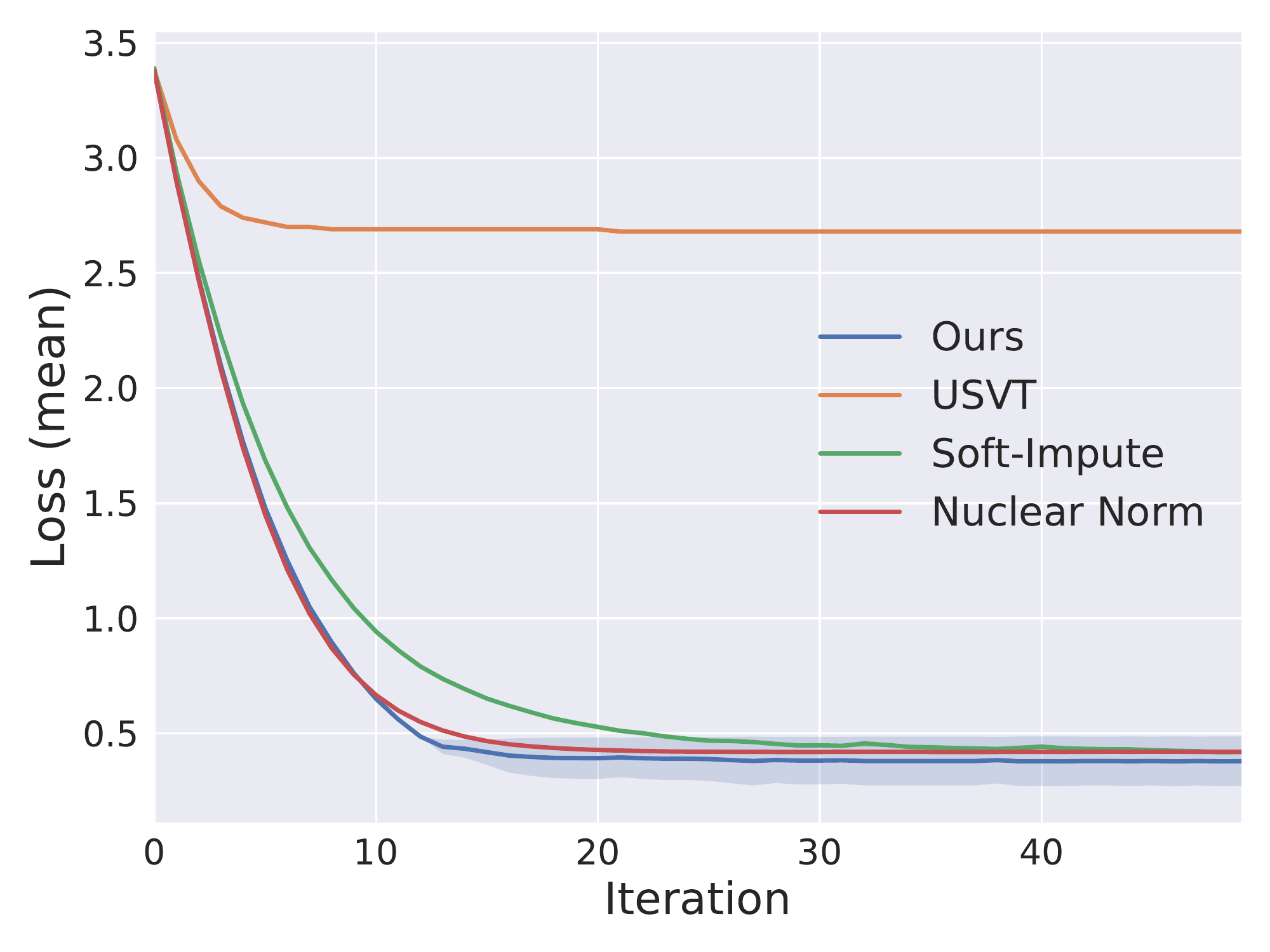}
}
\vspace{-0.46cm}
\caption{Empirical results on the Inverted Pendulum control task (results averaged across 5 runs).
}
\label{fig:ip_.9_main}
 \vspace{-0.15in}
\end{figure}


\medskip
\noindent{\bf Error Guarantees.} Next, we compare our ME method with others to validate its performance.
While theoretically insufficient for RL applications, some established ME methods~\cite{candes2009exact,chatterjee2015matrix,mazumder2010spectral} work well in practice. 
We compare the methods by feeding the same number of samples of size {\small $O ( \max\{\mS^{(t)}, \mA^{(t)}\})$}. As in Figure~\ref{fig:ip_.9_linf_loss} \& \ref{fig:ip_.9_mean_loss}, our method is competitive, both in $\ell_\infty$ \& mean errors. 
Also, we note that our simple method is computationally much more efficient,
compared to other methods based on optimization, etc. 
Overall, these results emphasize the practical value of our method beyond its theoretical soundness. 
Lastly, we remark other ME methods also show promise in our experiments; 
it is certainly a valuable open question to harmonize the established ME methods with low-rank RL.  
\begin{table}[H]
 \vspace{-0.07in}
\setlength{\tabcolsep}{7pt}
\caption{Performance metric for different stochastic control tasks using different ME methods. {A.D. stands for angular  deviation, T.G. stands for time-to-goal}; for both metrics, the smaller the better.}
\label{tab:metric}
 \vspace{-13pt}
\begin{center}
\resizebox{\textwidth}{!}{
\begin{tabular}{l c c c c c}
\toprule[1.3pt]
Method      &   Optimal   & USVT~\cite{chatterjee2015matrix}    & Soft-Impute~\cite{mazumder2010spectral}  & Nuclear Norm~\cite{candes2009exact}  &   Ours  \\
\midrule[1.3pt]
Inverted Pendulum (A.D.) & 1.6 {\scriptsize $\pm$ .0}   & 22.5 {\scriptsize $\pm$ 2.5}  & 5.3 {\scriptsize $\pm$ .6}    & 3.1 {\scriptsize $\pm$ .3}   & 3.4 {\scriptsize $\pm$ .7}   \\  [0.3ex]
Mountain Car (T.G.)      & 75.0 {\scriptsize $\pm$ .3}  & 358.8 {\scriptsize $\pm$ 5.0} & 168.4 {\scriptsize $\pm$ 8.1} & 92.4 {\scriptsize $\pm$ 2.8} & 91.8 {\scriptsize $\pm$ 7.2} \\  [0.3ex]
Double Integrator (T.G.) & 199.5 {\scriptsize $\pm$ .1} & 200.0 {\scriptsize $\pm$ .4}  & 199.9 {\scriptsize $\pm$ .3}  & 199.6 {\scriptsize $\pm$ .2} & 199.7 {\scriptsize $\pm$ .4} \\  [0.3ex]
Cart-Pole (A.D.)         & 10.1 {\scriptsize $\pm$ .0}  & 19.2 {\scriptsize $\pm$ 1.0}  & 10.4 {\scriptsize $\pm$ .1}   & 10.2 {\scriptsize $\pm$ .1}  & 10.2 {\scriptsize $\pm$ .2}  \\  [0.3ex]
Acrobot (A.D.)           & 2.4 {\scriptsize $\pm$ .0}   & 28.8 {\scriptsize $\pm$ 4.3}  & 9.1 {\scriptsize $\pm$ 1.2}   & 5.1 {\scriptsize $\pm$ .8}   & 6.2 {\scriptsize $\pm$ 1.0}  \\
\bottomrule[1.3pt]
\end{tabular}}
\vspace{-0.2in}
\end{center}
\end{table}

\medskip
\noindent{\bf Resulting Policy.} As a final proof of concept, we observe that the eventual performance of the 
policy obtained from the output $Q^{(T)}$ is very close to the policy obtained from $Q^*$ 
(cf. plots in Appendix~\ref{appendix:control results}). We summarize the results for standard 
performance metrics used in Table~\ref{tab:metric} . Obviously, our efficient method exhibits 
very competitive performance.


\section{Discussion}
\label{sec:main_discussion}
{To facilitate a better understanding, we provide a short discussion on aspects of our ME method (see Appendix~\ref{app:subsec_me_discussion} for the full version).} For the success of our analysis, it is imperative for the ME subroutine to satisfy Assumption \ref{assu:me}. 
Despite the huge success of low-rank matrix completion, currently available analysis for 
the existing methods only provides a handle on the estimation error in a few limited class of norms including Frobenius norm. 
In particular, there are no satisfactory results so far that provide a control on the $\ell_{\infty}$ error of matrix estimation, 
to the best of our knowledge. As a result, we were not able to use existing ME methods and their analysis in this work.
However, we believe that what fails in the existing ME methods is their analysis, rather than the algorithms themselves.

Instead, we develop an alternative matrix estimation subroutine, which is simple, yet sufficiently powerful for our RL task, 
thereby enabling us to achieve the ultimate conclusion of improved sample complexity. One might doubt the efficacy of 
our proposed ME method. That concern is partly true, but indeed, there are two key factors that make our method work for 
the problem of our interest. First, we assume the existence of ``anchor'' states and actions, which contain all necessary information 
for the global recovery of $Q^*$. From a theoretical point of view, this assumption is related to the eigengap condition and 
the incoherence condition between eigenspace and the sampling operator, which are commonly assumed in existing ME literature. 
Second, we are not only passively fed with data, but can actively decide which data to collect. As a byproduct of active sampling, 
we get rid of the spurious log term that appears as a result of random sampling in sample complexity analysis for existing ME methods.

Our empirical results evidence that the proposed ME method is successful in the extremely sample deficient setting where 
$|\Omega^{(t)}| \asymp \max\{|\mS^{(t)}|, |\mA^{(t)}|\}$. However, it seems other existing ME methods based on convex programs 
also work similarly well, which cannot be explained with the current analysis. Therefore, it would be an exciting open question 
to harmonize existing ME methods and the low-rank RL task we consider in this work. This question might be tackled either by 
devising new proof techniques to obtain stronger error guarantees for existing ME methods or by improving our decoupled error analysis 
for RL iteration developed in this paper. We believe both directions are promising and it would be a valuable contribution 
to make progress in either direction.

\section{Conclusion}
We provide an efficient RL framework for continuous state and action spaces via proposing 
a new low-rank perspective. 
With a novel ME method in the RL context, we demonstrate that our low-rank approach is both theoretically and practically appealing in designing sample efficient methods. 

There are several open questions such as devising better ME methods for the purpose of RL. {More broadly, this work introduces a low-rank framework for efficiently estimating functions, via appropriate matrix estimation methods at each iteration. Significant improvement in sample complexity is achieved in the RL setup, and the same recipe is likely to apply to a much broader extent in other areas in machine learning.} 
Overall, we believe that this work can serve as a starting point for many fruitful future research along this low-rank perspective.

\bibliographystyle{plain}
\bibliography{lrrl}

\begin{thebibliography}{10}

\bibitem{antos2008fitted}
Andr{\'a}s Antos, Csaba Szepesv{\'a}ri, and R{\'e}mi Munos.
\newblock Fitted q-iteration in continuous action-space mdps.
\newblock In {\em Advances in neural information processing systems}, pages
  9--16, 2008.

\bibitem{arora2012learning}
Sanjeev Arora, Rong Ge, and Ankur Moitra.
\newblock Learning topic models--going beyond svd.
\newblock In {\em 2012 IEEE 53rd Annual Symposium on Foundations of Computer
  Science}, pages 1--10. IEEE, 2012.

\bibitem{azar2013minimax}
Mohammad~Gheshlaghi Azar, R{\'e}mi Munos, and Hilbert~J Kappen.
\newblock Minimax pac bounds on the sample complexity of reinforcement learning
  with a generative model.
\newblock {\em Machine learning}, 91(3):325--349, 2013.

\bibitem{barto1983neuronlike}
Andrew~G Barto, Richard~S Sutton, and Charles~W Anderson.
\newblock Neuronlike adaptive elements that can solve difficult learning
  control problems.
\newblock {\em IEEE transactions on systems, man, and cybernetics}, pages
  834--846, 1983.

\bibitem{bhatia2009notes}
Rajendra Bhatia.
\newblock {\em Notes on functional analysis}, volume~50.
\newblock Springer, 2009.

\bibitem{candes2010matrix}
Emmanuel~J Candes and Yaniv Plan.
\newblock Matrix completion with noise.
\newblock {\em Proceedings of the IEEE}, 98(6):925--936, 2010.

\bibitem{candes2009exact}
Emmanuel~J Cand{\`e}s and Benjamin Recht.
\newblock Exact matrix completion via convex optimization.
\newblock {\em Foundations of Computational mathematics}, 9(6):717, 2009.

\bibitem{candes2010power}
Emmanuel~J Cand{\`e}s and Terence Tao.
\newblock The power of convex relaxation: Near-optimal matrix completion.
\newblock {\em IEEE Transactions on Information Theory}, 56(5):2053--2080,
  2010.

\bibitem{chatterjee2015matrix}
Sourav Chatterjee et~al.
\newblock Matrix estimation by universal singular value thresholding.
\newblock {\em The Annals of Statistics}, 43(1):177--214, 2015.

\bibitem{chen2018harnessing}
Yudong Chen and Yuejie Chi.
\newblock Harnessing structures in big data via guaranteed low-rank matrix
  estimation.
\newblock {\em arXiv preprint arXiv:1802.08397}, 2018.

\bibitem{chen2015fast}
Yudong Chen and Martin~J Wainwright.
\newblock Fast low-rank estimation by projected gradient descent: General
  statistical and algorithmic guarantees.
\newblock {\em arXiv preprint arXiv:1509.03025}, 2015.

\bibitem{chen2019noisy}
Yuxin Chen, Yuejie Chi, Jianqing Fan, Cong Ma, and Yuling Yan.
\newblock Noisy matrix completion: Understanding statistical guarantees for
  convex relaxation via nonconvex optimization.
\newblock {\em arXiv preprint arXiv:1902.07698}, 2019.

\bibitem{conway2019course}
John~B Conway.
\newblock {\em A course in functional analysis}, volume~96.
\newblock Springer, 2019.

\bibitem{davenport2016overview}
Mark~A Davenport and Justin Romberg.
\newblock An overview of low-rank matrix recovery from incomplete observations.
\newblock {\em arXiv preprint arXiv:1601.06422}, 2016.

\bibitem{ding2020leave}
Lijun Ding and Yudong Chen.
\newblock Leave-one-out approach for matrix completion: Primal and dual
  analysis.
\newblock {\em IEEE Transactions on Information Theory}, 2020.

\bibitem{duan2019state}
Yaqi Duan, Tracy Ke, and Mengdi Wang.
\newblock State aggregation learning from markov transition data.
\newblock In {\em Advances in Neural Information Processing Systems}, pages
  4488--4497, 2019.

\bibitem{dufour2012approximation}
Fran{\c{c}}ois Dufour and Tom{\'a}s Prieto-Rumeau.
\newblock Approximation of markov decision processes with general state space.
\newblock {\em Journal of Mathematical Analysis and applications},
  388(2):1254--1267, 2012.

\bibitem{ge2016matrix}
Rong Ge, Jason~D Lee, and Tengyu Ma.
\newblock Matrix completion has no spurious local minimum.
\newblock In {\em Advances in Neural Information Processing Systems}, pages
  2973--2981, 2016.

\bibitem{haarnoja2018soft}
Tuomas Haarnoja, Aurick Zhou, Pieter Abbeel, and Sergey Levine.
\newblock Soft actor-critic: Off-policy maximum entropy deep reinforcement
  learning with a stochastic actor.
\newblock {\em arXiv preprint arXiv:1801.01290}, 2018.

\bibitem{kakade2003sample}
Sham Kakade.
\newblock {\em On the sample complexity of reinforcement learning}.
\newblock PhD thesis, 2003.

\bibitem{keshavan2010matrix}
Raghunandan~H Keshavan, Andrea Montanari, and Sewoong Oh.
\newblock Matrix completion from a few entries.
\newblock {\em IEEE transactions on information theory}, 56(6):2980--2998,
  2010.

\bibitem{koltchinskii2011nuclear}
Vladimir Koltchinskii, Karim Lounici, Alexandre~B Tsybakov, et~al.
\newblock Nuclear-norm penalization and optimal rates for noisy low-rank matrix
  completion.
\newblock {\em The Annals of Statistics}, 39(5):2302--2329, 2011.

\bibitem{lazaric2008reinforcement}
Alessandro Lazaric, Marcello Restelli, and Andrea Bonarini.
\newblock Reinforcement learning in continuous action spaces through sequential
  monte carlo methods.
\newblock In {\em Advances in neural information processing systems}, pages
  833--840, 2008.

\bibitem{lillicrap2015continuous}
Timothy~P Lillicrap, Jonathan~J Hunt, Alexander Pritzel, Nicolas Heess, Tom
  Erez, Yuval Tassa, David Silver, and Daan Wierstra.
\newblock Continuous control with deep reinforcement learning.
\newblock {\em arXiv preprint arXiv:1509.02971}, 2015.

\bibitem{maei2010toward}
Hamid~Reza Maei, Csaba Szepesv{\'a}ri, Shalabh Bhatnagar, and Richard~S Sutton.
\newblock Toward off-policy learning control with function approximation.
\newblock In {\em ICML}, 2010.

\bibitem{mazumder2010spectral}
Rahul Mazumder, Trevor Hastie, and Robert Tibshirani.
\newblock Spectral regularization algorithms for learning large incomplete
  matrices.
\newblock {\em Journal of machine learning research}, 11(Aug):2287--2322, 2010.

\bibitem{melo2008analysis}
Francisco~S Melo, Sean~P Meyn, and M~Isabel Ribeiro.
\newblock An analysis of reinforcement learning with function approximation.
\newblock In {\em Proceedings of the 25th international conference on Machine
  learning}, pages 664--671, 2008.

\bibitem{mnih2013playing}
Volodymyr Mnih, Koray Kavukcuoglu, David Silver, Alex Graves, Ioannis
  Antonoglou, Daan Wierstra, and Martin Riedmiller.
\newblock Playing atari with deep reinforcement learning.
\newblock {\em arXiv preprint arXiv:1312.5602}, 2013.

\bibitem{mnih2015human}
Volodymyr Mnih, Koray Kavukcuoglu, David Silver, Andrei~A Rusu, Joel Veness,
  Marc~G Bellemare, Alex Graves, Martin Riedmiller, Andreas~K Fidjeland, Georg
  Ostrovski, et~al.
\newblock Human-level control through deep reinforcement learning.
\newblock {\em Nature}, 518(7540):529, 2015.

\bibitem{parr2008analysis}
Ronald Parr, Lihong Li, Gavin Taylor, Christopher Painter-Wakefield, and
  Michael~L Littman.
\newblock An analysis of linear models, linear value-function approximation,
  and feature selection for reinforcement learning.
\newblock In {\em Proceedings of the 25th international conference on Machine
  learning}, pages 752--759, 2008.

\bibitem{recht2010guaranteed}
Benjamin Recht, Maryam Fazel, and Pablo~A Parrilo.
\newblock Guaranteed minimum-rank solutions of linear matrix equations via
  nuclear norm minimization.
\newblock {\em SIAM review}, 52(3):471--501, 2010.

\bibitem{ren2008consensus}
Wei Ren and Randal~W Beard.
\newblock Consensus algorithms for double-integrator dynamics.
\newblock {\em Distributed Consensus in Multi-vehicle Cooperative Control:
  Theory and Applications}, pages 77--104, 2008.

\bibitem{shah2018q}
Devavrat Shah and Qiaomin Xie.
\newblock Q-learning with nearest neighbors.
\newblock In {\em Advances in Neural Information Processing Systems}, pages
  3111--3121, 2018.

\bibitem{shah2019nonasymptotic}
Devavrat Shah, Qiaomin Xie, and Zhi Xu.
\newblock Non-asymptotic analysis of monte carlo tree search.
\newblock In {\em ACM SIGMETRICS}, 2020.

\bibitem{sidford2018near}
Aaron Sidford, Mengdi Wang, Xian Wu, Lin Yang, and Yinyu Ye.
\newblock Near-optimal time and sample complexities for solving markov decision
  processes with a generative model.
\newblock In {\em Advances in Neural Information Processing Systems}, pages
  5186--5196, 2018.

\bibitem{sidford2018variance}
Aaron Sidford, Mengdi Wang, Xian Wu, and Yinyu Ye.
\newblock Variance reduced value iteration and faster algorithms for solving
  markov decision processes.
\newblock In {\em Proceedings of the Twenty-Ninth Annual ACM-SIAM Symposium on
  Discrete Algorithms}, pages 770--787. SIAM, 2018.

\bibitem{silver2016mastering}
David Silver, Aja Huang, Chris~J Maddison, Arthur Guez, Laurent Sifre, George
  Van Den~Driessche, Julian Schrittwieser, Ioannis Antonoglou, Veda
  Panneershelvam, Marc Lanctot, et~al.
\newblock Mastering the game of go with deep neural networks and tree search.
\newblock {\em nature}, 529(7587):484, 2016.

\bibitem{silver2017mastering}
David Silver, Julian Schrittwieser, Karen Simonyan, Ioannis Antonoglou, Aja
  Huang, Arthur Guez, Thomas Hubert, Lucas Baker, Matthew Lai, Adrian Bolton,
  et~al.
\newblock Mastering the game of go without human knowledge.
\newblock {\em Nature}, 550(7676):354--359, 2017.

\bibitem{stewart1977perturbation}
Gilbert~W Stewart.
\newblock On the perturbation of pseudo-inverses, projections and linear least
  squares problems.
\newblock {\em SIAM review}, 19(4):634--662, 1977.

\bibitem{stone1982optimal}
Charles~J Stone.
\newblock Optimal global rates of convergence for nonparametric regression.
\newblock {\em The annals of statistics}, pages 1040--1053, 1982.

\bibitem{sutton2018reinforcement}
Richard~S Sutton and Andrew~G Barto.
\newblock {\em Reinforcement learning: An introduction}.
\newblock MIT press, 2018.

\bibitem{russ2019}
Russ Tedrake.
\newblock Underactuated robotics: Algorithms for walking, running, swimming,
  flying, and manipulation.
\newblock {\em Course Notes for MIT 6.832}, 2020.

\bibitem{tsitsiklis1997analysis}
John~N Tsitsiklis and Benjamin Van~Roy.
\newblock Analysis of temporal-diffference learning with function
  approximation.
\newblock In {\em Advances in neural information processing systems}, pages
  1075--1081, 1997.

\bibitem{tsybakov2008introduction}
Alexandre~B Tsybakov.
\newblock {\em Introduction to nonparametric estimation}.
\newblock Springer Science \& Business Media, 2008.

\bibitem{van2012reinforcement}
Hado Van~Hasselt.
\newblock Reinforcement learning in continuous state and action spaces.
\newblock In {\em Reinforcement learning}, pages 207--251. Springer, 2012.

\bibitem{wainwright2019high}
Martin~J Wainwright.
\newblock {\em High-dimensional statistics: A non-asymptotic viewpoint},
  volume~48.
\newblock Cambridge University Press, 2019.

\bibitem{yang2019sample}
Lin Yang and Mengdi Wang.
\newblock Sample-optimal parametric q-learning using linearly additive
  features.
\newblock In {\em International Conference on Machine Learning}, pages
  6995--7004, 2019.

\bibitem{yang2019reinforcement}
Lin~F Yang and Mengdi Wang.
\newblock Reinforcement leaning in feature space: Matrix bandit, kernels, and
  regret bound.
\newblock {\em arXiv preprint arXiv:1905.10389}, 2019.

\bibitem{yang2020harnessing}
Yuzhe Yang, Guo Zhang, Zhi Xu, and Dina Katabi.
\newblock Harnessing structures for value-based planning and reinforcement
  learning.
\newblock In {\em International Conference on Learning Representations (ICLR)},
  2020.

\bibitem{yang2019theoretical}
Zhuora Yang, Yuchen Xie, and Zhaoran Wang.
\newblock A theoretical analysis of deep q-learning.
\newblock {\em arXiv preprint arXiv:1901.00137}, 2019.

\bibitem{zou2019finite}
Shaofeng Zou, Tengyu Xu, and Yingbin Liang.
\newblock Finite-sample analysis for sarsa with linear function approximation.
\newblock In {\em Advances in Neural Information Processing Systems}, pages
  8665--8675, 2019.

\end{thebibliography}

\newpage
\appendix
\appendixpage
\section{Proof of Theorem \ref{thm:representation}}
\label{sec:proof_thm1}
\newcommand{\X}{\mathbb{X}}
\newcommand{\mB}{\mathcal B}
\newcommand{\mH}{\mathcal{H}}
\newcommand{\bA}{A_{\mH}}

The proof of Theorem \ref{thm:representation} follows from the classical results in functional analysis. 
Interested reader may find lecture notes \cite{bhatia2009notes} and a classical textbook on the topic \cite{conway2019course} as excellent references. 
In this section, we present and prove a more general version of Theorem \ref{thm:representation} that is applicable to any compact metric spaces equipped with finite measures.

Let $\mS$ and $\mA$ be compact metric spaces, equipped with finite measures $\mu$, $\nu$, respectively.
We consider the space of square integrable functions 
\[
	L^2(\mS, \mu) = \bigg\{ f: \mS \to \RR \text{ such that } \| f \|_{L^2(\mS, \mu)} \equiv \Big( \int_{\mS} | f(s) |^2 d\mu(s) \Big)^{\frac{1}{2}} < \infty  \bigg\}
\]
and $L^2(\mA, \nu)$ defined similarly. $L^2(\mS, \mu)$ and $L^2(\mA, \nu)$ are known to be Hilbert spaces and in particular, they are separable because $\mS$ and $\mA$ are compact metric spaces. Therefore, they have countable bases. 

Recall that given any vector space $V$ over $\RR$, its dual space $V^*$ is defined as the set of all linear maps $\phi: V \to \RR$. It is known that the dual of $L^2(\mS, \mu)$ is isometrically isomorphic to $L^2(\mS, \mu)$, e.g., by the isomorphism $f \mapsto f^*$ where $f^*(f') = \langle f', f \rangle = \int_{\mS} f(s) f'(s) d\mu(s)$ (Appendix B, \cite{conway2019course}).

Given two Hilbert spaces, $\mH_1, \mH_2$, we let $\mH_1 \otimes \mH_2$ denote the tensor product of the two Hilbert spaces. The inner product in $\mH_1 \otimes \mH_2$ is defined on the basis elements so that $\langle \phi_1 \otimes \phi_2, \psi_1 \otimes \psi_2 \rangle_{\mH_1 \otimes \mH_2} = \langle \phi_1, \psi_1 \rangle_{\mH_1} \langle \phi_2, \psi_2 \rangle_{\mH_2}$ for all $\phi_1, \psi_1 \in \mH_1$ and $\phi_2, \psi_2 \in \mH_2$. Also, for every element $ \phi_1 \otimes \phi_2 \in \mH_1 \otimes \mH_2$, one can associate the rank-1 operator from $\mH_1^* \to \mH_2$ that maps a given $x^* \in \mH_1^*$ to $x^*(\phi_1) \phi_2$.

Our main theorem in this section is the following spectral theorem (singular value theorem) for $Q^*$. It is indeed a classical result from operator theory on Hilbert spaces. However, most results in existing literature cover the theory for self-adjoint operators and symmetric kernels. Although it is already implied by the classical results in a similar manner as eigenvalue decomposition extends to singular value decomposition, here we state our theorem and its proof for readers' convenience and future references.

\begin{thm}\label{thm:representation_general}
Let $(\mS, d_{\mS}, \mu)$ and $(\mA, d_{\mA}, \nu)$ be compact metric spaces equipped with finite measures. 
Let $Q^* \in L^2(\mS \times \mA, \mu \times \nu)$. 
If $Q^*$ is $L$-Lipschitz with respect to the product metric, then there exist a nonincreasing sequence $( \sigma_i \geq \RR_+: i \in \NN )$ with $\sum_{i=1}^{\infty} \sigma_i^2 < \infty$ and orthonormal bases $\{ f_i \in L^2(\mS, \mu): i \in \NN \}$ and $\{ g_i \in L^2(\mA, \nu): i \in \NN \}$ such that
\begin{equation}\label{eqn:Q_star} 
    Q^* = \sum_{i=1}^{\infty} \sigma_i f_i \otimes g_i.
\end{equation}
Subsequently, for any $\delta > 0$, there exists $r^*(\delta) \in \NN$ such that $\big\| \sum_{i=1}^{r} \sigma_i f_i \otimes g_i - Q^* \big\|_{L^2(\mS \times \mA, \mu \times \nu)}^2 \leq \delta)$ for all $r \geq r^*(\delta)$.
\end{thm}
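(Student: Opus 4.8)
The plan is to establish the decomposition \eqref{eqn:Q_star} by reducing the question about the bivariate function $Q^*$ to the classical spectral theorem for compact self-adjoint operators on a Hilbert space. First I would view $Q^*$ as the kernel of an integral operator $\Kq: L^2(\mS, \mu) \to L^2(\mA, \nu)$ defined by $(\Kq h)(a) = \int_{\mS} Q^*(s,a) h(s) \, d\mu(s)$, and verify that $\Kq$ is a Hilbert--Schmidt operator. The key observation here is that since $Q^*$ is $L$-Lipschitz on a compact domain, it is bounded and hence square-integrable with respect to the finite product measure $\mu \times \nu$; the Hilbert--Schmidt norm of $\Kq$ equals $\| Q^* \|_{L^2(\mS \times \mA, \mu \times \nu)} < \infty$. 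Because every Hilbert--Schmidt operator is compact, $\Kq$ is a compact operator between separable Hilbert spaces.

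Next I would apply the singular value decomposition for compact operators. The standard route is to consider the self-adjoint compact operators $\Kq^* \Kq$ on $L^2(\mS, \mu)$ and $\Kq \Kq^*$ on $L^2(\mA, \nu)$, both of which are positive semidefinite. By the spectral theorem for compact self-adjoint operators, $\Kq^* \Kq$ admits an orthonormal basis of eigenvectors $\{ f_i \}$ with nonnegative eigenvalues $\{ \sigma_i^2 \}$ that can be arranged in nonincreasing order and accumulate only at $0$. Setting $\sigma_i = \sqrt{\sigma_i^2}$ and defining $g_i = \sigma_i^{-1} \Kq f_i$ for those $i$ with $\sigma_i > 0$, one checks that $\{ g_i \}$ is orthonormal in $L^2(\mA, \nu)$ and that $\Kq f_i = \sigma_i g_i$ and $\Kq^* g_i = \sigma_i f_i$. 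This yields the operator identity $\Kq = \sum_i \sigma_i \, \langle \cdot, f_i \rangle \, g_i$, converging in operator norm, which under the identification of an integral operator with its kernel (via the isometric isomorphism between Hilbert--Schmidt operators $L^2(\mS,\mu) \to L^2(\mA,\nu)$ and the tensor product $L^2(\mS,\mu) \otimes L^2(\mA,\nu) \cong L^2(\mS \times \mA, \mu \times \nu)$) is exactly the claimed representation $Q^* = \sum_{i=1}^{\infty} \sigma_i f_i \otimes g_i$. The square-summability $\sum_i \sigma_i^2 < \infty$ is immediate from the fact that $\sum_i \sigma_i^2 = \| \Kq \|_{\mathrm{HS}}^2 = \| Q^* \|_{L^2}^2 < \infty$.

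The final truncation claim then follows at once: since the partial sums converge to $Q^*$ in the $L^2(\mS \times \mA, \mu \times \nu)$ norm, and the squared tail error equals $\big\| \sum_{i=1}^{r} \sigma_i f_i \otimes g_i - Q^* \big\|_{L^2}^2 = \sum_{i=r+1}^{\infty} \sigma_i^2$ by orthonormality, this tail tends to $0$ as $r \to \infty$. Hence for any $\delta > 0$ there is $r^*(\delta)$ beyond which the tail is at most $\delta$, giving the stated approximation guarantee.

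I expect the main obstacle to be the careful bookkeeping in identifying the abstract operator-theoretic SVD with the concrete kernel representation in $L^2(\mS \times \mA, \mu \times \nu)$, i.e.\ justifying that the Hilbert--Schmidt class is isometrically isomorphic to the tensor product space and that the operator-norm (or Hilbert--Schmidt-norm) convergence of the SVD translates into $L^2$-convergence of the kernels $\sum_i \sigma_i f_i \otimes g_i$. A secondary technical point worth a sentence is confirming that $L^2(\mS, \mu)$ and $L^2(\mA, \nu)$ are separable (so that the bases are countable and indexed by $\NN$), which holds because $\mS$ and $\mA$ are compact metric spaces equipped with finite Borel measures. The Lipschitz hypothesis is used only to guarantee boundedness and hence membership in $L^2$; the heavy lifting is entirely classical functional analysis.
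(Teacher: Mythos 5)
Your proof is correct, and its skeleton---inducing the integral operator $\Kq$, applying the spectral theorem to the compact self-adjoint operator $\Kq^*\Kq$, setting $g_i = \sigma_i^{-1}\Kq f_i$ for $\sigma_i>0$, and reading the truncation claim off the tail $\sum_{i>r}\sigma_i^2$---is the same as the paper's. Two sub-arguments differ genuinely. First, compactness: you get it for free from the classical fact that an integral operator with kernel in $L^2(\mS\times\mA,\mu\times\nu)$ is Hilbert--Schmidt, hence compact; the paper instead proves compactness by hand via a generalized Arzel\`a--Ascoli argument, using the Lipschitz continuity of $Q^*$ to establish uniform equicontinuity of the image family $\{\Kq f_n\}$. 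Your route is shorter and more general, since it needs only $Q^*\in L^2$, which is part of the hypothesis; in both treatments the Lipschitz assumption then survives only to upgrade the $L^2$ identity to a pointwise one on the compact domain. Second, the transfer from the operator SVD to the kernel identity: you invoke the isometric isomorphism between the Hilbert--Schmidt class and $L^2(\mS,\mu)\otimes L^2(\mA,\nu)\cong L^2(\mS\times\mA,\mu\times\nu)$, under which Hilbert--Schmidt-norm convergence of the truncated SVD is literally $L^2$ convergence of the kernels; the paper instead writes $Q^* = \sum_i \sigma_i f_i\otimes g_i + \varepsilon$ and shows $\langle\varepsilon, f_i\otimes g_j\rangle = 0$ for all $(i,j)$ by testing against the product basis, concluding $\varepsilon=0$. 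These are two packagings of the same fact: yours outsources the bookkeeping to a standard isometry theorem (which should be cited or proved), while the paper's is self-contained at the cost of a Fubini/Tonelli computation. One small point of care you correctly flag: it is the Hilbert--Schmidt-norm convergence, not mere operator-norm convergence, of the partial sums that yields $L^2$ convergence of the kernels, so your parenthetical ``or Hilbert--Schmidt-norm'' is the clause doing the work.
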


Note that we obtain the equality \eqref{eqn:Q_star} in the $L^2$ sense. However, since $Q^*$ is assumed Lipschitz continuous on a compact domain, this actually gives us a pointwise equality, i.e., $Q^*(s,a) = \sum_{i=1}^{\infty} \sigma_i f_i(s) g_i(a)$ for all $(s,a) \in \mS \times \mA$.

\begin{proof}
We define an integral kernel operator $\Kq = \Kq_{Q^*}: L^2( \mS, \mu ) \to L^2( \mA, \nu )$ induced by the kernel $Q^* \in L^2( \mS \times \mA, \mu \times \nu )$ so that
\[
	\Kq f(\cdot) = \int_{\mS} Q^*(s,\cdot) f(s) d\mu(s).
\]
Observe that $Q^*$ is a continuous function defined on a compact domain and hence bounded, viz., there exists $V_{\max} < \infty$ 
such that $|Q^*(s,a)| \leq V_{\max}$ for all $(s,a) \in \mS \times \mA$.

We present our proof in four parts. First, we verify that $\Kq$ is a compact operator from $L^2(\mS, \mu)$ to $L^2(\mA, \nu)$. Next, we argue $\Kq$ admits a generalized singular value decomposition with square summable singular values, based on the spectral theory of compact operators. Then we transfer the results for $\Kq \in L^2(\mS, \mu)^* \otimes L^2(\mA, \nu)$ to argue the spectral decomposition of $Q^* \in L^2(\mS, \mu) \otimes L^2(\mA, \nu)$. Lastly, we conclude the proof by discussing rank-$r$ approximation of $Q^*$.

\begin{enumerate}
\item
$\Kq$ is a compact operator from $L^2(\mS, \mu)$ to $L^2(\mA, \nu)$.

First, we argue that $\Kq$ is a bounded linear operator with $\| \Kq \| \leq V_{\max}^2 \mu(\mS) \nu(\mA)$.
Recall that $Q^*: \mS \times \mA \to \RR$ is Lipschitz continuous on a compact domain, hence, bounded, i.e., there exists $V_{\max} < \infty$ such that $| Q^*(s,a) | \leq V_{\max}$ for all $(s,a) \in \mS \times \mA$.
For any $f \in L^2( \mS, \mu)$,
\begin{align*}
	\| \Kq f \|_{L^2(\mA, \nu)}^2	&= \int_{\mA} \Kq f(a)^2 d\nu(a)\\
		&= \int_{\mA} \bigg( \int_{\mS} Q^*(s,a) f(s) d\mu(s) \bigg)^2 d\nu(a)\\
		&\leq \int_{\mA} \| Q^*(\cdot, a) \|_{L^2(\mS, \mu)}^2 \| f \|_{L^2(\mS, \mu)}^2 d\nu(a)		&&\because \text{Cauchy-Schwarz}\\
		&\leq V_{\max}^2 \mu(\mS) \nu(\mA) \| f \|_{L^2(\mS, \mu)}^2.					&&\because \| Q^*(\cdot, a) \|_{L^2(\mS)}^2 \leq V_{\max}^2 \mu(\mS)
\end{align*}

Next, we show that $\Kq: L^2(\mS, \mu) \to L^2(\mA, \nu)$ is indeed a compact operator. It suffices to show that for any bounded sequence $( f_n )_{n \geq 1}$ in $L^2(\mS, \mu)$, the sequence $( \Kq f_n )_{n \geq 1}$ contains a convergent subsequence. For this, we use (generalized) Arzel\`a-Ascoli theorem, which states that
if $( \Kq f_n )_{n \geq 1}$ is uniformly bounded and uniformly equicontinuous, then it contains a convergent subsequence. To that end, first
note that  $\| \Kq f_n \| \leq \|\Kq\| \|f_n\|$ and therefore, if $\|f_n\| \leq B$ for all $n \geq 1$, then $\| \Kq f_n \| \leq \| \Kq \|B$ for all $n \geq 1$. That is, the sequence $( \Kq f_n )_{n \geq 1}$ is uniformly bounded. Next, we can also verify that $( \Kq f_n )_{n \geq 1}$ is equicontinuous because for all $n \geq 1$,
\begin{align*}
	\big| \Kq f_n(a_1) - \Kq f_n(a_2) \big| &\leq \bigg| \int_{\mS} \big\{ Q^*(s,a_1) - Q^*(s,a_2) \big\} f_n(s) d\mu(s) \bigg|\\
		&\leq \| Q^*(s,a_1) - Q^*(s,a_2) \|_{L^2(\mS)} \| f_n \|_{L^2(\mS, \mu)}\\
		&\leq L \mu(\mS)^{\frac{1}{2}} d_{\mA}( a_1, a_2 ) \| f_n \|_{L^2(\mS, \mu)}\\
		&\leq BL\mu(\mS)^{\frac{1}{2}} d_{\mA}( a_1, a_2 ).
\end{align*}
In the second to last inequality, we used the fact that $Q^*$ is $L$-Lipschitz to show
\begin{align*}
	\| Q^*(s,a_1) - Q^*(s,a_2) \|_{L^2(\mS, \mu)} \| f_n \|_{L^2(\mS, \mu)}
		&= \bigg( \int_{\mS}  \big( Q^*(s,a_1) - Q^*(s,a_2) \big)^2 d\mu(S) \bigg)^{\frac{1}{2}}\\
		&\leq \bigg( \int_{\mS}  L^2 d_{\mA}( a_1, a_2 )^2 d\mu(S) \bigg)^{\frac{1}{2}}\\
		&= L \mu(\mS)^{\frac{1}{2}} d_{\mA}( a_1, a_2 ).
\end{align*}

\item
Spectral decomposition of $\Kq$.

\begin{itemize}
\item
First of all, we show that there exist orthonormal bases $\{ f_i \in L^2(\mS, \mu): i \in \NN \}$, $\{ g_i \in L^2(\mA, \nu): i \in \NN \}$ and singular values $\{ \sigma_i \geq 0: i \in \NN \}$ such that 
\begin{equation}\label{eqn:svd_A}
	\Kq = \sum_{i=1}^{\infty} \sigma_i f_i^* \otimes g_i.
\end{equation}

To see this, we consider the adjoint operator of $\Kq$, namely, $\Kq^*: L^2(\mA, \nu) \to L^2( \mS, \mu)$. 
Since $\Kq: L^2(\mS, \mu) \to L^2(\mA, \nu)$ is compact, $\Kq^*$ is also compact. 
Note that $\Kq^*\Kq$ is compact and self-adjoint. By the spectral theorem for compact self-adjoint operators, there exist $\{ \tau_i \in \RR: i \in \NN \}$ and an orthonormal basis $\{ f_i \in L^2( \mS, \mu ): i \in \NN \}$ such that $\Kq^*\Kq f_i = \tau_i f_i$ for all $i \in \NN$. We can observe that $\tau_i \geq 0$ for all $i$ because $\tau_i = \tau_i \langle f_i, f_i \rangle = \langle \Kq^*\Kq f_i, f_i \rangle = \| \Kq f_i \|_{L^2(\mS, \mu)}^2 \geq 0$. We let $I:= \{ i \in \NN: \tau_i > 0 \}$. 

Next, we observe that $\ker( \Kq^* \Kq ) = \ker( \Kq )$. Showing $\ker( \Kq^* \Kq ) \supseteq \ker( \Kq )$ is trivial. To show the other direction, let's suppose that $f \in \ker( \Kq^* \Kq )$. Then $\| \Kq f \|_{L^2(\mA, \nu)}^2 = \langle \Kq f, \Kq f, \rangle = \langle \Kq ^*\Kq f, f \rangle = 0$, which requires $\Kq f = 0$ and thus $f \in \ker(\Kq)$.

For $i \in I$, we let $g_i = \frac{1}{\sqrt{\tau_i}} \Kq f_i$. Then $\langle g_i, g_j \rangle = \frac{1}{\sqrt{\tau_i \tau_j}} \langle \Kq f_i, \Kq f_j \rangle = \frac{1}{\sqrt{\tau_i \tau_j}} \langle \Kq^*\Kq f_i, f_j \rangle = \delta_{ij} $, and hence, $\{g_i: i \in I\}$ consists of orthonormal vectors. We can augment $\{g_i: i \in I\}$  by adding appropriate vectors to make $\{g_i: i \in \NN\}$ an orthonormal basis of $L^2(\mA, \nu)$.

Every vector $\phi \in L^2(\mS, \mu)$ can be expanded as $\phi = \sum_{i=1}^{\infty} \langle \phi, f_i \rangle f_i$. Then we see that $\Kq \phi = \sum_{i=1}^{\infty} \langle \phi, f_i \rangle \Kq f_i = \sum_{i=1}^{\infty} \sqrt{\tau_i} \langle \phi, f_i \rangle g_i$. By letting $\sigma_i = \sqrt{\tau_i}$, we obtain \eqref{eqn:svd_A}.

\item
In addition, we show that $\sum_{i=1}^{\infty} \sigma_i^2 = \| Q^* \|_{L^2(\mS \times \mA, \mu \times \nu)}^2 < \infty$. The Hilbert-Schmidt norm of operator $\Kq$ is defined as 
$\| \Kq \|_{HS} = \tr(\Kq^*\Kq) = \sum_{i=1}^{\infty} \| \Kq f_i \|_{L^2(\mA, \nu)}^2  < \infty$. Note that $\| \Kq \|_{HS} = \sum_{i=1}^{\infty} \sigma_i^2$.

First, we observe that for each $i \in \NN$,
\begin{align*}
	\langle \Kq f_i, \Kq f_i \rangle_{L^2(\mA, \nu)}
		&= \int_{\mA} \bigg( \int_{\mS} Q^*(s,a) f_i(s) d\mu(s) \bigg)^2 d\nu(a)\\
		&= \int_{\mA} \big\langle Q^*(\cdot, a), f_i \big\rangle_{L^2(\mS, \mu)}^2 d\nu(a).
\end{align*}
We define a function $G(a) := \big\langle Q^*(\cdot, a), f_i \big\rangle_{L^2(\mS, \mu)}^2$. Recall that $Q^* \in L^2( \mS \times \mA, \mu \times \nu)$ and observe that $G$ is a nonnegative measurable function. Then we can use Tonelli's theorem to see that
\begin{align*}
	 \tr(K^*K) &= \sum_{i=1}^{\infty} \langle \Kq f_i, \Kq f_i \rangle_{L^2(\mA, \nu)}
	 	= \sum_{i=1}^{\infty}  \int_{\mA} \big\langle Q^*(\cdot, a), f_i \big\rangle_{L^2(\mS, \mu)}^2 d\nu(a)\\
		&= \int_{\mA} \sum_{i=1}^{\infty} \big\langle Q^*(\cdot, a), f_i \big\rangle_{L^2(\mS, \mu)}^2 d\nu(a)	\qquad~\because\text{Tonelli's theorem}\\
		&= \int_{\mA} \| Q^*(\cdot, a) \|_{L^2(\mS, \mu)}^2 d\nu(a).			\qquad\qquad\quad \because\text{the orthonormality of } \{f_i\}
\end{align*}
We have $\int_{\mA} \| Q^*(\cdot, a) \|_{L^2(\mS, \mu)}^2 d\nu(a) = \int_{\mA} \big( \int_{\mS} | Q^*(s, a) |^2 d\mu(s) \big) d\nu(a) = \| Q^* \|_{L^2(\mS \times \mA, \mu \times \nu)}^2$ by Fubini's theorem and therefore, $\sum_{i=1}^{\infty} \sigma_i^2 = \| Q^* \|_{L^2(\mS \times \mA, \mu \times \nu)}^2$.

\end{itemize}

\item
Spectral decomposition of $Q^*$.

Now we show that $Q^* = \sum_{i=1}^{\infty} \sigma_i f_i \otimes g_i$ for the same singular values $\{ \sigma_i \geq 0: i \in \NN \}$ and orthonormal bases $\{ f_i \in L^2(\mS, \mu): i \in \NN \}$, $\{ g_i \in L^2(\mA, \nu): i \in \NN \}$ as in \eqref{eqn:svd_A}. 

For that purpose, we assume that
\begin{equation}\label{eqn:svd_Q}
	Q^* = \sum_{i=1}^{\infty} \sigma_i f_i \otimes g_i + \varepsilon
\end{equation}
for some $\varepsilon \in L^2( \mS \times \mA, \mu \times \nu )$. For all $\phi \in L^2( \mS, \mu )$ and $\psi \in L^2( \mA, \nu )$, we have 
\begin{align*}
	\langle \psi, K \phi \rangle_{L^2(\mA, \nu)}
		&=\int_{\mA} \psi(a) \bigg( \int_{\mS} Q^*(s,a) \phi(s) d\mu(s) \bigg) d\nu(a)\\
		&= \int_{\mA} \psi(a) \bigg( \int_{\mS} \Big( \sum_{i=1}^{\infty} \sigma_i f_i(s) g_i(a) + \varepsilon(s,a) \Big) \phi(s) d\mu(s) \bigg) d\nu(a)\\
		&= \int_{\mA} \psi(a) \bigg\langle \sum_{i=1}^{\infty} \sigma_i f_i,  ~\phi \bigg\rangle_{L^2(\mS, \mu)} g_i(a) d\nu(a)
			+ \int_{\mA} \psi(a) \bigg( \int_{\mS} \varepsilon(s,a) \phi(s) d\mu(s) \bigg) d\nu(a).
\end{align*}
When $\phi = f_i$ and $\psi = g_j$, we have $\langle g_j, \Kq f_i \rangle_{L^2(\mA, \nu)} = \sigma_i \delta_{ij}$. By Fubini's theorem,
\begin{align}
	 \sigma_i \delta_{ij}
	 	&= \sigma_i \langle g_j, g_i \rangle + \int_{\mS \times \mA} \varepsilon(s,a) f_i(s) g_j(a) d (\mu \times \nu)(s \times a)  \nonumber\\
		&= \sigma_i \delta_{ij} + \langle \varepsilon, f_i \otimes g_j \rangle_{L^2(\mS \times \mA, \mu \times \nu)}.               \label{eqn:svd_Q.1}
\end{align}
In order to satisfy \eqref{eqn:svd_Q.1}, we must have $\langle \varepsilon, f_i \otimes g_j \rangle_{L^2(\mS \times \mA, \mu \times \nu)} = 0$ for all $(i,j) \in \NN^2$.

It is known that $L^2(\mS \times \mA, \mu \times \nu)$ is isomorphic to $L^2( \mS, \mu ) \otimes L^2( \mA, \nu)$ and $\{ f_i \otimes g_j: (i,j) \in \NN^2 \}$ constitutes an orthonormal basis of $L^2( \mS, \mu ) \otimes L^2( \mA, \nu)$. Therefore, $\varepsilon = 0$ and $Q^* = \sum_{i=1}^{\infty} \sigma_i f_i \otimes g_i$.

\item
Best rank-$r$ approximation of $Q^*$.

Without loss of generality, we may assume $\sigma_1 \geq \sigma_2 \geq \dots \geq 0$, i.e., the singular values are sorted in descending order. For any finite $r \in \NN$, let $Q^*_r = \sum_{i=1}^r \sigma_i f_i \otimes g_i$. 

Then, 
\begin{align*}
	\big\| Q^* - Q^*_r \big\|_{L^2(\mS \times \mA, \mu \times \nu)}^2
		&= \bigg\| \sum_{i=r+1}^{\infty} \sigma_i f_i \otimes g_i \bigg\|_{L^2(\mS \times \mA, \mu \times \nu)}^2\\
		&= \sum_{i, j=r+1}^{\infty} \sigma_i \sigma_j \big\langle f_i \otimes g_i, f_j \otimes g_j \big\rangle_{L^2(\mS \times \mA, \mu \times \nu)}\\
		&= \sum_{i=r+1}^{\infty} \sigma_i^2
\end{align*}
where we have used the orthonormality of $\{ f_i \}$ and $\{ g_i \}$.

We conclude the proof with two final remarks:
\begin{itemize}
	\item
	Among all rank-$r$ functions of the form $\sum_{i=1}^r \lambda_i \phi_i \otimes \psi_i$ for some $\phi_i \in L^2(\mS, \mu)$, $\psi_i \in L^2(\mA, \nu)$, $Q^*_r$ is the ``best'' rank-$r$ approximation of $Q^*$ in the $L^2(\mS \times \mA, \mu \times \nu)$ sense.

\item
	Since $\sum_{i=1}^\infty \sigma_i^2 < \infty$, for any $\delta > 0$, there exists $r = r(\delta)$ so that $\sum_{i=r+1}^\infty \sigma_i^2 < \delta$. That is, we can approximate $Q^*_r$ arbitrarily well with a sufficiently large, yet still finite, rank $r$.
\end{itemize}

\end{enumerate}
This completes the proof of Theorem \ref{thm:representation_general}.
\end{proof}

\section{Proof of Theorem \ref{thm:generic}}\label{sec:proofthmgeneric}

\subsection{Helper Lemma: Error Bound for Lookahead Subroutine}
This section is devoted to the proof of Theorem \ref{thm:generic}. To this end, we first need to understand the error guarantees for the 
lookahead (exploration) subroutine based on the current oracle $V^{(t-1)}$, cf. Eq.~(\ref{eq:generic:lookahead.inside}) and Line 8 of Algorithm \ref{alg:generic}.
This is summarized in the following lemma. 

\begin{lem}
\label{lem:one_step_lookahead}
    Suppose that we have access to a value oracle $V: \mS \to \mathbb{R}$ such that
    \begin{equation*}
        \sup_{s \in \mS} \big| V(s) - V^*(s) \big| \leq B.
    \end{equation*}
    Given $(s,a) \in \mS \times \mA$, let $s'_1, \ldots, s'_N$ be the next states of $(s,a)$ independently drawn from the generative model and let
    $\hat{Q}(s,a) = R(s,a) + \gamma\cdot\frac{1}{N}\sum_{i=1}^{N} V(s'_i)$.
    Then for any $\delta > 0$, 
        \begin{equation*}
            |\hat{Q}(s,a) - Q^*(s,a)| \leq \gamma \left( B + \sqrt{ \frac{2 V_{\max}^2}{N} \log\bigg(\frac{2}{\delta}\bigg)} \right)
        \end{equation*}
     with probability at least $1-\delta$.
\end{lem}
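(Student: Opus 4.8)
The plan is to exploit the defining identity $Q^*(s,a) = R(s,a) + \gamma \E_{s'\sim\mathcal{P}(\cdot|s,a)}[V^*(s')]$, so that the reward term cancels exactly and the entire error is carried by the gap between an empirical average and a true expectation of value functions. Writing $\mu_V = \E_{s'}[V(s')]$ and $\mu_{V^*} = \E_{s'}[V^*(s')]$ for the expectations under $\mathcal{P}(\cdot|s,a)$, I would first record
\[
    \hat{Q}(s,a) - Q^*(s,a) = \gamma\Big(\tfrac{1}{N}\textstyle\sum_{i=1}^N V(s'_i) - \mu_{V^*}\Big),
\]
and then split the bracket additively into a \emph{bias} term $\mu_V - \mu_{V^*}$ and a \emph{fluctuation} term $\frac{1}{N}\sum_i V(s'_i) - \mu_V$, bounding each separately before recombining by the triangle inequality.

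For the bias term, the oracle guarantee $\sup_s |V(s) - V^*(s)| \le B$ together with Jensen's inequality gives $|\mu_V - \mu_{V^*}| = \big|\E_{s'}[V(s') - V^*(s')]\big| \le \E_{s'}|V(s') - V^*(s')| \le B$ deterministically; this is the only place the pointwise oracle accuracy enters and it involves no randomness. For the fluctuation term, the key observation is that $s'_1, \ldots, s'_N$ are i.i.d.\ draws from $\mathcal{P}(\cdot|s,a)$, hence $V(s'_1), \ldots, V(s'_N)$ are i.i.d.\ with common mean $\mu_V$ and are bounded. Hoeffding's inequality for bounded i.i.d.\ variables with range $[-V_{\max}, V_{\max}]$ then yields
\[
    \Big|\tfrac{1}{N}\textstyle\sum_{i=1}^N V(s'_i) - \mu_V\Big| \le \sqrt{\tfrac{2V_{\max}^2}{N}\log\!\big(\tfrac{2}{\delta}\big)}
\]
with probability at least $1-\delta$, where setting the two-sided tail probability to $\delta$ produces exactly the constant $2V_{\max}^2$. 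Summing the two bounds inside the bracket and multiplying through by $\gamma$ delivers the claimed inequality.

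The main subtlety is the boundedness constant fed into Hoeffding. A naive estimate would only give $|V(s'_i)| \le V_{\max}+B$ and therefore a range factor $(V_{\max}+B)^2$ rather than the stated $V_{\max}^2$. To recover $V_{\max}^2$ I would invoke the fact that $|V^*(s)| \le V_{\max}$ (established via $|V^\pi| \le V_{\max}$) and note that every value oracle in Algorithm \ref{alg:generic} can be taken to lie in $[-V_{\max}, V_{\max}]$ --- e.g.\ by clipping, which can only decrease the sup-error against $V^*$ --- so that $V(s'_i) \in [-V_{\max}, V_{\max}]$ is legitimate. Once this range is fixed, the remaining steps are entirely routine.
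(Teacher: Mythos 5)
Your decomposition differs from the paper's in a way that creates a real gap. You split the error at the level of \emph{expectations}: bias $\mu_V - \mu_{V^*}$ plus fluctuation $\frac{1}{N}\sum_i V(s'_i) - \mu_V$, and then apply Hoeffding to the oracle values $V(s'_i)$. But the lemma's hypotheses only give $\sup_s |V(s)-V^*(s)| \le B$; they do \emph{not} give $|V(s)| \le V_{\max}$, so the range you feed into Hoeffding is not available. You noticed this yourself, and your repair --- clip $V$ to $[-V_{\max},V_{\max}]$ --- does not close the gap: the estimator $\hat{Q}$ in the lemma is built from the \emph{given} $V$, so proving a concentration bound for the clipped oracle proves a statement about a different estimator. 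Clipping is a modification of Algorithm \ref{alg:generic}, not of the proof, and it matters in the paper's application: the iterates $V^{(t-1)}$ come out of the matrix-estimation and interpolation steps with no projection onto $[-V_{\max},V_{\max}]$, so one cannot assume they lie in that range. As written, your argument proves either a weaker bound (with $(V_{\max}+B)^2$ in place of $V_{\max}^2$) or a different lemma (with an added hypothesis $\|V\|_\infty \le V_{\max}$), but not the statement in question.

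The paper's proof avoids this entirely by splitting at the \emph{sampled states} rather than at the expectations:
\begin{align*}
    \bigg|\frac{1}{N}\sum_{i=1}^N V(s'_i) - \E_{s'}[V^*(s')]\bigg|
        \le \frac{1}{N}\sum_{i=1}^N \big|V(s'_i) - V^*(s'_i)\big|
            + \bigg|\frac{1}{N}\sum_{i=1}^N V^*(s'_i) - \E_{s'}[V^*(s')]\bigg|.
\end{align*}
The first term is at most $B$ deterministically (pointwise, no Jensen needed), and Hoeffding is applied to the i.i.d.\ variables $V^*(s'_i)$, which are bounded by $V_{\max}$ \emph{unconditionally} --- a property of $V^*$, not of the oracle. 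This yields the stated constant $2V_{\max}^2$ for an arbitrary oracle $V$ satisfying only the sup-error hypothesis, which is exactly the generality the downstream proof of Theorem \ref{thm:generic} requires. If you recenter your fluctuation term on $V^*$ instead of $V$ in this way, the rest of your write-up (the Bellman identity, the two-sided Hoeffding calculation, the recombination by the triangle inequality) goes through verbatim.
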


\begin{proof}
    Note that $Q^*(s,a) = R(s,a) + \gamma \E_{s'\sim P_{s,a}}[V^*(s')]$ by definition of $Q^*$ and $V^*$ (cf. Bellman equation). 
    It follows that
    \begin{align}
        |\hat{Q}(s,a) - Q^*(s,a)| 
            & = \gamma\bigg|\frac{1}{N}\sum_{i=1}^N V (s_i') - \E_{s'\sim P_{s,a}}[V^*(s')]  \bigg| \nonumber \\
		    &\leq \gamma  \left| \frac{1}{N} \sum_{i=1}^N V (s_i')- \frac{1}{N} \sum_{i=1}^N V^* (s_i') \right|
		        + \gamma \left| \frac{1}{N} \sum_{i=1}^N {V}^* (s_i') - \mathbb{E}_{s' \sim P_{sa}}\left[{V}^*(s')\right] \right| \nonumber \\
		    &= \frac{\gamma}{N} \sum_{i=1}^N \big| V (s_i') - V^* (s_i') \big|
		        + \gamma \left| \frac{1}{N} \sum_{i=1}^N {V}^* (s_i') - \mathbb{E}_{s' \sim P_{sa}}\left[{V}^*(s')\right] \right|.
		\label{eq:lemma_tree_proof_1}
    \end{align}
By assumption, the first term in Eq.~(\ref{eq:lemma_tree_proof_1}) is bounded by $\gamma B$. Meanwhile, since $| V^*(s') | \leq V_{\max}$, 
we can apply Hoeffding's inequality to control the second term. Specifically, for any $t > 0$,
\begin{equation*}
    \Pr\left( \frac{1}{N} \sum_{i=1}^N {V}^* (s_i') - \mathbb{E}_{s' \sim P_{sa}}\left[{V}^*(s')\right] > t \right)
        \leq \exp \bigg( - \frac{ N t^2 }{2 V_{\max}^2} \bigg).
\end{equation*}
Solving $ \delta = 2 \exp \Big( - \frac{N t^2}{2 V_{\max}^2} \Big)$ for $t$ yields $t = \sqrt{ \frac{2 V_{\max}^2}{N} \log\big(\frac{2}{\delta}\big)}$ 
and this completes the proof.
\end{proof}



\subsection{Proof of Theorem \ref{thm:generic}}

\begin{proof}[Proof of Theorem \ref{thm:generic}]
We prove the first statement by mathematical induction.
For $t=0$, $Q^{(0)}(s,a) \equiv 0$ and thus $| Q^{(0)}(s,a) - Q^*(s,a) | \leq V_{\max}$ for all $(s,a)$.
Next, we want to show that for $t=1, \ldots, T$,
\begin{equation}\label{eqn:generic_induction}
    \sup_{(s,a) \in \mS \times \mA} \big| Q^{(t)}(s,a) - Q^*(s,a) \big| \leq \rho \sup_{(s,a) \in \mS \times \mA} \big| Q^{(t-1)}(s,a) - Q^*(s,a) \big|.
\end{equation}
Fix $t$ and suppose that $\sup_{(s,a) \in \mS \times \mA}\big| Q^{(t-1)}(s,a) - Q^*(s,a) \big| \leq B^{(t-1)}$. Note that this implies  
$\sup_{s \in \mS} \big| V^{(t-1)}(s) - V^*(s) \big| \leq B^{(t-1)}$ because $Q^{(t-1)}, Q^*$ are continuous and $\mA$ is compact \footnote{For each $s \in \mS$, there exist $a^{(t-1)}(s), a^*(s) \in \mA$ such that $V^{(t-1)}(s) = Q^{(t-1)}(s,a^{(t-1)}(s))$ and $V^*(s) = Q^*(s,a^*(s))$. If $V^{(t-1)}(s) \geq V^*(s)$, then $V^{(t-1)}(s) - V^*(s) = Q^{(t-1)}(s,a^{(t-1)}(s)) - Q^*(s,a^*(s)) \leq Q^{(t-1)}(s,a^{(t-1)}(s)) - Q^*(s,a^{(t-1)}(s))$. If $V^{(t-1)}(s) < V^*(s)$, then $V^*(s) - V^{(t-1)}(s) = Q^*(s,a^*(s)) - Q^{(t-1)}(s,a^{(t-1)}(s)) \leq Q^*(s,a^*(s)) - Q^{(t-1)}(s,a^*(s))$. Therefore, $| V^{(t-1)}(s) - V^*(s) | \leq \max_{a \in \{a^{(t-1)}(s), a^*(s)\}}\big\{ Q^{(t-1)}(s,a) - Q^*(s,a) \big\}$.}.
To prove the inequality in Eq. \eqref{eqn:generic_induction}, we backtrack the updating steps in Algorithm \ref{alg:generic}. 

For each $s \in \mS$ and $a \in \mA$, let $\hat{s}^{(t)} \in \arg\min_{s' \in \mS^{(t)}} \| s' - s \|_2$ and $\hat{a}^{(t)} \in \arg\min_{a' \in \mA^{(t)}} \| a' - a \|_2$. Since $\mS^{(t)}$ is a $\beta^{(t)}$-net of $\mS$, $\| \hat{s}^{(t)} - s \| \leq \beta^{(t)}$. Likewise, $\| \hat{a}^{(t)} - a \| \leq \beta^{(t)}$. As $Q^{(t)}(s,a) = \bar{Q}^{(t)}(\hat{s}^{(t)}, \hat{a}^{(t)})$ and $Q^*$ is $L$-Lipschitz,
\begin{align*}
    \big| Q^{(t)}(s,a) - Q^*(s,a) \big|
        &= \big| \bar{Q}^{(t)}(\hat{s}^{(t)},\hat{a}^{(t)}) - Q^*(s,a) \big|\\
        &= \big| \bar{Q}^{(t)}(\hat{s}^{(t)},\hat{a}^{(t)}) - Q^*(\hat{s}^{(t)},\hat{a}^{(t)}) \big| + \big| Q^*(\hat{s}^{(t)},\hat{a}^{(t)}) - Q^*(s,a) \big|\\
        &\leq \big| \bar{Q}^{(t)}(\hat{s}^{(t)},\hat{a}^{(t)}) - Q^*(\hat{s}^{(t)},\hat{a}^{(t)}) \big| + 2 L \beta^{(t)}.
\end{align*}
Therefore, we obtain the following upper bound for Step 4 (interpolation):
\begin{equation}\label{eqn:generic_interpol}
    \sup_{ (s,a) \in \mS \times \mA } \big| Q^{(t)}(s,a) - Q^*(s,a) \big|
        \leq \max_{(s,a) \in \mS^{(t)} \times \mA^{(t)}} \big| \bar{Q}^{(t)}(s, a) - Q^*(s, a) \big| 
            + 2 L \beta^{(t)}.
\end{equation}

By Assumption \ref{assu:me}, we have the following upper bound for Step 3 (matrix estimation):
\begin{equation}\label{eqn:generic_me}
    \max_{(s,a) \in \mS^{(t)} \times \mA^{(t)}} \big| \bar{Q}^{(t)}(s, a) - Q^*(s, a) \big| 
        \leq \ceta \max_{(s,a) \in \Omega^{(t)} } \big| \hat{Q}^{(t)}(s, a) - Q^*(s, a) \big|.
\end{equation}

Lastly, applying Lemma \ref{lem:one_step_lookahead} and taking union bound over $(s,a) \in \Omega^{(t)}$, we can show that
\begin{equation}\label{eqn:generic_mcla}
    \max_{(s,a) \in \Omega^{(t)} } \big| \hat{Q}^{(t)}(s, a) - Q^*(s, a) \big|
        \leq \gamma \left( B^{(t-1)} + \sqrt{ \frac{2 V_{\max}^2}{N^{(t)}} \log\bigg(\frac{2 | \Omega^{(t)} | T}{\delta}\bigg)} \right)
\end{equation}
with probability at least $1-\frac{\delta}{T}$.

Combining Eqs. \eqref{eqn:generic_interpol}, \eqref{eqn:generic_me}, \eqref{eqn:generic_mcla} yields
\[
    \sup_{ (s,a) \in \mS \times \mA } \big| Q^{(t)}(s,a) - Q^*(s,a) \big|
        \leq B^{(t)}
\]
with probability at least $1-\frac{\delta}{T}$ where
\[
    B^{(t)}
        = \gamma \ceta  \left( B^{(t-1)} + \sqrt{ \frac{2 V_{\max}^2}{N^{(t)}} \log\bigg(\frac{2 | \Omega^{(t)} | T}{\delta}\bigg)} \right)
            + 2 L \beta^{(t)}.
\]
By Assumption \ref{assu:me}, this requires at most
$|\Omega^{(t)}| = \C \big( |\mS^{(t)}| + |\mA^{(t)}| \big)$. Moreover, for each $1\leq t \leq T$, if we choose $\beta^{(t)} = \frac{V_{\max}}{8L} ( 2 \gamma \ceta)^t$ 
and 
\begin{equation}\label{eqn:Nt}
    N^{(t)} = \frac{ 8 }{ ( 2 \gamma \ceta )^{2(t-1)} } \log\bigg(\frac{2 | \Omega^{(t)} | T}{\delta}\bigg),
\end{equation}
then $B^{(t-1)} \leq (2 \gamma \ceta)^{t-1} V_{\max}$ implies that $B^{(t)} \leq (2 \gamma \ceta)^t V_{\max}$ with probability at least $1 - \frac{\delta}{T}$. 

At the beginning, we observed $| Q^{(0)}(s,a) - Q^*(s,a) | \leq V_{\max}$ for all $(s,a)$, i.e., $B^{(0)} \leq V_{\max}$.
By taking the union bound over $t = 1, \ldots, T$, 
\[
    \sup_{ (s,a) \in \mS \times \mA } \big| Q^{(t)}(s,a) - Q^*(s,a) \big|
        \leq (2 \gamma \ceta)^t V_{\max}, \quad \forall t=1, \ldots, T
\]
with probability at least $1 - \delta$.

\paragraph{Sample complexity.}
If $\gamma < \frac{1}{2 \ceta}$, then $2 \gamma \ceta < 1$. Let $\Te = \Big\lceil \frac{ \log \big( \frac{V_{\max}}{\epsilon} \big)}{ \log \big( \frac{1}{2 \gamma \ceta } \big)} \Big\rceil$ and observe that $(2 \gamma \ceta) \epsilon \leq (2\gamma \ceta)^{\Te} V_{\max} \leq \epsilon$. 
For each $t, 1\leq t \leq T$, we query 
$\hat{Q}^{(t)}(s,a)$ for $(s,a) \in \Omega^{(t)}$, each of which requires exploring 
$N^{(t)}$ samples. Therefore, the total sample complexity of Algorithm \ref{alg:generic} with 
$T=\Te$ is $\sum_{t=1}^{\Te} \big| \Omega^{(t)} \big| N^{(t)}$.

By standard argument on covering number, we can see that $|\cS^{(t)}|, |\cA^{(t)}| \leq C' \big( \frac{1}{\beta^{(t)}}\big)^d = C'\big( \frac{8L}{V_{\max}} \big)^d \big( 2\gamma \ceta \big)^{-dt}$ for some absolute constant $C' > 0$. This is an increasing function of $t$ and hence, $| \Omega^{(t)}| = \C\big( |\mS^{(t)}| + |\mA^{(t)}| \big)$ and 
$N^{(t)}$ as described in Eq. \eqref{eqn:Nt} are also increasing with respect to $t$. 

Observe that $\beta^{(\Te)} =  \frac{V_{\max}}{8L} ( 2 \gamma \ceta)^{\Te} \geq \frac{2 \gamma \ceta}{8L} \epsilon$. Hence, $|\cS^{(\Te)}|, |\cA^{(\Te)}| \leq C' \big( \frac{8L}{2\gamma \ceta} \big)^d \frac{1}{\epsilon^d}$. Therefore, the overall number of samples utilized
by the algorithm are 
\begin{align}
\sum_{t=1}^{\Te} \big| \Omega^{(t)} \big| N^{(t)}
        & \leq \Te \big| \Omega^{(\Te)} \big| N^{(\Te)}   \nonumber\\
        &\leq \Te \cdot \C \big( |\mS^{(\Te)}| + |\mA^{(\Te)}| \big) \cdot \frac{8}{(2\gamma \ceta)^{2(\Te - 1)}}\log\bigg(\frac{2 \C \big( |\mS^{(\Te)}| + |\mA^{(\Te)}| \big) \Te}{\delta}\bigg)  \nonumber\\
        &\leq \Te \cdot 2\C C' \bigg( \frac{8L}{2\gamma \ceta} \bigg)^d \frac{1}{\epsilon^d} \cdot 8 \bigg(\frac{V_{\max}}{\epsilon}\bigg)^2 \log \bigg( \frac{4\C C'\Te}{\delta} \Big( \frac{8L}{2\gamma \ceta} \Big)^d \frac{1}{\epsilon^d} \bigg)    \nonumber\\
        &= 16\C C'V_{\max}^2 \bigg(\frac{8L}{2\gamma \ceta} \bigg)^d \cdot \frac{\Te}{\epsilon^{d+2}}  \cdot \log \bigg( 4\C C' \Big( \frac{8L}{2\gamma \ceta}\Big)^d \cdot \frac{\Te}{\epsilon^d} \cdot \frac{1}{\delta} \bigg).    \label{eqn:sc}
\end{align}
Since $\Te = \Big\lceil \frac{ \log \big( \frac{V_{\max}}{\epsilon} \big)}{ \log \big( \frac{1}{2 \gamma \ceta } \big)} \Big\rceil = O \big( \log \frac{1}{\epsilon} \big)$, it follows from \eqref{eqn:sc} that the overall sample complexity scales as 
$O \bigg( \frac{1}{\epsilon^{d+2}} \log \frac{1}{\epsilon} \cdot \Big( \log \frac{1}{\epsilon} + \log \frac{1}{\delta} \Big) \bigg)$. This
completes the proof of Theorem \ref{thm:generic}.
\end{proof}

\section{Supplement to Section \ref{sec:rank1}: Rank($Q^*$) $= 1$}\label{sec:proofrank1}

We prove Propsition \ref{prop:rank1} here. We also state and prove Theorem \ref{thm:rank1} 
which incorporates implications of Proposition \ref{prop:rank1} on Theorem \ref{thm:generic}.

\subsection{Proof of Proposition \ref{prop:rank1}}\label{sec:proprank1}

\begin{proof}[Proof of Proposition \ref{prop:rank1}]
First, we note that for any $(s,a) \in \mS \times \mA$,
\begin{equation*}
    Q^*(s,a) = f(s) g(a) = \frac{ f(s) g(a^{\sharp}) f(s^{\sharp}) g(a)}{ f(s^{\sharp}) g(a^{\sharp})} = \frac{Q^*(s, a^\sharp)Q^*(s^\sharp,a)}{Q^*(s^\sharp,a^\sharp)}.
\end{equation*}

We assumed that $\big| \hat{Q}^{(t)}(s,a) - Q^*(s,a) \big| \leq \epsilon$ for all $(s,a)\in\Omega^{(t)}$. 
Since $(s,a^{\sharp}), (s^{\sharp}, a), (s^{\sharp}, a^{\sharp}) \in \Omega^{(t)}$,
\begin{align*}
    \bar{Q}^{(t)}(s,a)  
        &\leq \frac{ \big( 1 + \frac{\epsilon}{ Q^*(s ,a^{\sharp})} \big) \big( 1 + \frac{\epsilon}{ Q^*(s^{\sharp} ,a)} \big) }{1 - \frac{\epsilon}{Q^*(s^{\sharp} ,a^{\sharp})}} Q^*(s,a) 
         \leq \Big( 1 + \frac{\epsilon}{V_{\min}} \Big)^2\Big( 1 + \frac{2 \epsilon}{V_{\min}} \Big) Q^*(s,a).
\end{align*}
The last inequality follows from that $\frac{1}{1-x} \leq 1 + 2x$ for $0 \leq x \leq \frac{1}{2}$ and that $\epsilon \leq \frac{1}{2}V_{\min} 
\leq \min \{ Q^*(s,a^{\sharp}), Q^*(s^{\sharp}, a), Q^*(s^{\sharp} ,a^{\sharp}) \}$. Therefore,
\begin{align*}
    \bar{Q}^{(t)}(s,a) - Q^*(s,a) 
        &\leq \bigg[ 4 \Big( \frac{\epsilon}{V_{\min}} \Big) + 5 \Big( \frac{\epsilon}{V_{\min}} \Big)^2 + 2 \Big( \frac{\epsilon}{V_{\min}} \Big)^3 \bigg] Q^*(s,a)\\
        &\leq 7 Q^*(s,a) \frac{\epsilon}{V_{\min}}
        \leq 7 \frac{ V_{\max}}{V_{\min}} \epsilon.
\end{align*}
In a similar manner,
\begin{align*}
    \bar{Q}^{(t)}(s,a)  
        &\geq \frac{ \big( 1 - \frac{\epsilon}{ Q^*(s ,a^{\sharp})} \big) \big( 1 - \frac{\epsilon}{ Q^*(s^{\sharp} ,a)} \big) }{1 + \frac{\epsilon}{Q^*(s^{\sharp} ,a^{\sharp})}} Q^*(s,a)
        \geq \Big( 1 - \frac{\epsilon}{V_{\min}} \Big)^2\Big( 1 - \frac{\epsilon}{Q^*(s^{\sharp}, a^{\sharp})} \Big) Q^*(s,a)
\end{align*}
because $\frac{1}{1+x} \geq 1 - x$ for $0 \leq x \leq \frac{1}{2}$, and thus,
\begin{align*}
    \bar{Q}^{(t)}(s,a) - Q^*(s,a) 
        &\geq \bigg[ -3 \Big( \frac{\epsilon}{V_{\min}} \Big) + 3 \Big( \frac{\epsilon}{V_{\min}} \Big)^2 - \Big( \frac{\epsilon}{V_{\min}} \Big)^3 \bigg] Q^*(s,a)
        \geq -\frac{7}{4} \frac{V_{\max}}{V_{\min}} \epsilon.
\end{align*}
Therefore, for all $(s,a) \in \mS^{(t)} \times \mA^{(t)}$, $\big| \bar{Q}^{(t)}(s,a) - Q^*(s,a) \big| \leq 7 \frac{V_{\max}}{V_{\min}} \epsilon = 7 \frac{R_{\max}}{R_{\min}} \epsilon$. This completes the proof of Proposition \ref{prop:rank1}.
\end{proof}

\subsection{Theorem \ref{thm:rank1} $=$ Proposition \ref{prop:rank1} $+$ Theorem \ref{thm:generic}}

\begin{thm}
\label{thm:rank1}
Let $Q^*$ be rank 1. Consider the RL algorithm (cf. Section \ref{sec:alg_generic}) 
with the Matrix Estimation method as described in Section \ref{sec:rank1}.
If $\gamma < \frac{R_{\min}}{14 R_{\max}}$, then the following two statements are true. 

\begin{enumerate}
    \item For any $\delta > 0$, we have 
    \begin{equation*}
        \sup_{ (s,a) \in \mS \times \mA } \big| Q^{(t)}(s,a) - Q^*(s,a) \big|
                    \leq \bigg( \frac{14 R_{\max}}{R_{\min}} \gamma \bigg)^t V_{\max}, ~~\forall~ 1\leq t\leq T,
    \end{equation*}
    with probability at least $1 - \delta$ by choosing algorithmic parameters $\beta^{(t)}, N^{(t)}$ appropriately.
    \item  Further, given $\epsilon > 0$, it suffices to set $T = \Theta( \log \frac{1}{\epsilon} )$ and use $\tilde{O}( \frac{1}{\epsilon^{d+2}} \cdot \log \frac{1}{\delta} )$ number of samples to achieve 
    \[
        \mathbb{P}\bigg( \sup_{(s,a) \in \mS \times \mA} \big| Q^{(T)}(s,a) - Q^*(s,a) \big| \leq \epsilon\bigg) \geq 1 - \delta.
    \]
\end{enumerate}
\end{thm}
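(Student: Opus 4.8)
The plan is to obtain Theorem \ref{thm:rank1} as a direct instantiation of the generic guarantee in Theorem \ref{thm:generic}, once we certify that the rank-$1$ matrix estimation subroutine of Section \ref{sec:rank1} satisfies Assumption \ref{assu:me}. Proposition \ref{prop:rank1} already does exactly this: with the anchor-based exploration set $\Omega^{(t)}$ of size $|\mS^{(t)}| + |\mA^{(t)}| - 1 \leq |\mS^{(t)}| + |\mA^{(t)}|$, the estimator $\bar{Q}^{(t)}(s,a) = \hat{Q}^{(t)}(s,a^\sharp)\hat{Q}^{(t)}(s^\sharp,a)/\hat{Q}^{(t)}(s^\sharp,a^\sharp)$ turns an $\ell_\infty$ input error of $\epsilon$ into an $\ell_\infty$ output error of at most $7\frac{R_{\max}}{R_{\min}}\epsilon$. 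Thus the subroutine enjoys the $(\C,\ceta)$-property with $\C = 1$ and $\ceta = 7\frac{R_{\max}}{R_{\min}}$, and feeding these two constants into Theorem \ref{thm:generic} yields the stated threshold $\gamma < \frac{1}{2\ceta} = \frac{R_{\min}}{14 R_{\max}}$, the contraction rate $(2\gamma\ceta)^t V_{\max} = \big(\frac{14 R_{\max}}{R_{\min}}\gamma\big)^t V_{\max}$, and the sample complexity $\tilde{O}\big(\frac{1}{\epsilon^{d+2}}\log\frac{1}{\delta}\big)$.

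The one point requiring care, and what I expect to be the main obstacle, is that Proposition \ref{prop:rank1} is \emph{not} unconditional: it verifies the $(\C,\ceta)$-property only for input errors below the threshold $\epsilon \leq \frac{1}{2}V_{\min}$, whereas Assumption \ref{assu:me} as invoked inside Theorem \ref{thm:generic} is phrased for arbitrary $\epsilon$. Hence one cannot apply Theorem \ref{thm:generic} as a pure black box; instead I would retrace its inductive step and confirm that the error $\max_{(s,a)\in\Omega^{(t)}}\big|\hat{Q}^{(t)}(s,a) - Q^*(s,a)\big|$ fed into the matrix estimation step never exceeds $\frac{1}{2}V_{\min}$. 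Under the algorithmic choice $N^{(t)} = \frac{8}{(2\gamma\ceta)^{2(t-1)}}\log\big(\frac{2|\Omega^{(t)}|T}{\delta}\big)$, Lemma \ref{lem:one_step_lookahead} bounds this lookahead error by $\gamma\big(B^{(t-1)} + \tfrac{1}{2}(2\gamma\ceta)^{t-1}V_{\max}\big)$, and the induction hypothesis $B^{(t-1)} \leq (2\gamma\ceta)^{t-1}V_{\max}$ then produces the bound $\tfrac{3}{2}\gamma(2\gamma\ceta)^{t-1}V_{\max} \leq \tfrac{3}{2}\gamma V_{\max}$, where the last step uses $2\gamma\ceta < 1$.

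It then remains to confirm $\tfrac{3}{2}\gamma V_{\max} \leq \tfrac{1}{2}V_{\min}$, i.e.\ $3\gamma R_{\max} \leq R_{\min}$, which holds because the contraction hypothesis $\gamma < \frac{R_{\min}}{14 R_{\max}}$ is strictly stronger than $\gamma \leq \frac{R_{\min}}{3 R_{\max}}$. Consequently the threshold condition of Proposition \ref{prop:rank1} is met at every iteration, the $(\C,\ceta)$-property is legitimately applied throughout, and the induction of Theorem \ref{thm:generic} carries through verbatim with $\ceta = 7\frac{R_{\max}}{R_{\min}}$. The two conclusions of the theorem, namely the per-iteration contraction bound and, under $\gamma < \frac{R_{\min}}{14 R_{\max}}$, the $\tilde{O}\big(\epsilon^{-(d+2)}\log\frac{1}{\delta}\big)$ sample complexity with $T = \Theta(\log\frac{1}{\epsilon})$, then follow immediately. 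The noteworthy takeaway is that no additional smallness assumption on $\gamma$ beyond the contraction requirement is needed: the very inequality guaranteeing geometric decay also keeps every lookahead estimate inside the regime where the rank-$1$ estimator is well-controlled.
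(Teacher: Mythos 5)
Your proposal is correct and follows essentially the same route as the paper's proof: instantiate Theorem \ref{thm:generic} with $(\C,\ceta) = \big(1, 7\tfrac{R_{\max}}{R_{\min}}\big)$ from Proposition \ref{prop:rank1}, recognize that the proposition is only a local guarantee for $\epsilon \leq \tfrac{1}{2}V_{\min}$, and retrace the induction with the same choices of $\beta^{(t)}, N^{(t)}$ to verify $\max_{(s,a)\in\Omega^{(t)}}|\hat{Q}^{(t)}(s,a)-Q^*(s,a)| \leq \tfrac{3}{2}\gamma V_{\max} \leq \tfrac{1}{2}V_{\min}$ at every iteration. The paper performs exactly this check (obtaining $\tfrac{3}{28}V_{\min} \leq \tfrac{1}{2}V_{\min}$), so your argument matches theirs in both structure and detail.
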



\begin{proof}[Proof of Theorem \ref{thm:rank1}]
The proof is basically the same as the proof of Theorem \ref{thm:generic}, with the assumption on the matrix estimation oracle (i.e., Assumption \ref{assu:me}) replaced with the explicit guarantee provided in Proposition \ref{prop:rank1}. 
The only subtlety comes from that Proposition \ref{prop:rank1} is a ``local'' guarantee that holds only for $\epsilon \leq \frac{1}{2} V_{\min}$ whereas Assumption \ref{assu:me} is a global condition that holds for any $\epsilon$.
This requires us to ensure $\max_{(s,a) \in \Omega^{(t)} } \big| \hat{Q}^{(t)}(s, a) - Q^*(s, a) \big| \leq \frac{1}{2} V_{\min}$ for all $t = 1, \ldots, T$, but the argument in the proof of Theorem \ref{thm:generic} itself remains valid.

To that end, we make exactly the same choice of algorithmic parameters $\beta^{(t)}, N^{(t)}$ as 
\begin{equation}\label{eqn:params_rank1}
    \beta^{(t)} = \frac{V_{\max}}{8L} ( 2 \gamma \ceta)^t
    \quad\text{and}\quad
    N^{(t)} = \frac{ 8 }{ ( 2 \gamma \ceta )^{2(t-1)} } \log\bigg(\frac{2 | \Omega^{(t)} | T}{\delta}\bigg)
\end{equation}
with $\ceta = \frac{7 R_{\max}}{R_{\min}}$ as suggested in Proposition \ref{prop:rank1}
and $| \Omega^{(t)} | = \C (|\mS^{(t)}| + |\mA^{(t)}|) $ with $\C = 1$.  To complete the proof, it suffices to 
show that $ \max_{(s,a) \in \Omega^{(t)} } \big| \hat{Q}^{(t)}(s, a) - Q^*(s, a) \big|  \leq \frac{1}{2} V_{\min} $ 
for all $t$. 

We establish this via mathematical induction. For $0 \leq t \leq T$, let 
$B^{(t)} := \sup_{(s,a) \in \mS \times \mA} \big| Q^{(t)}(s,a) - Q^*(s,a) \big|$. 
We can see that if $B^{(t-1)} \leq \big( \frac{14 R_{\max}}{R_{\min}} \gamma \big)^{t-1} V_{\max}$, 
then with probability at least $1 - \frac{\delta}{T}$, the following two inequalities hold:
\begin{enumerate}
    \item 
    $\max_{(s,a) \in \Omega^{(t)} } \big| \hat{Q}^{(t)}(s, a) - Q^*(s, a) \big| \leq \frac{1}{2} V_{\min}$, and
    \item
    $B^{(t)} \leq \frac{14 R_{\max}}{R_{\min}} \gamma B^{(t-1)}$.
\end{enumerate}
The first inequality follows from Lemma \ref{lem:one_step_lookahead} (see also \eqref{eqn:generic_mcla}): with probability at least $1 - \frac{\delta}{T}$,
\begin{align*}
    \max_{(s,a) \in \Omega^{(t)} } \big| \hat{Q}^{(t)}(s, a) - Q^*(s, a) \big|
        &\leq \gamma \left( B^{(t-1)} + \sqrt{ \frac{2 V_{\max}^2}{N^{(t)}} \log\bigg(\frac{2 | \Omega^{(t)} | T}{\delta}\bigg)} \right)
        \leq \frac{3}{2} \gamma B^{(t-1)}\\
        &\leq \frac{3}{2} \gamma V_{\max} \leq \frac{3}{2}\frac{R_{\min}}{14 R_{\max}}V_{\max} = \frac{3}{28} V_{\min} \\
        &\leq \frac{1}{2} V_{\min}.
\end{align*}
Also, the second inequality follows from the same argument as in the proof of Theorem \ref{thm:generic}, cf. Eqs. \eqref{eqn:generic_interpol}, \eqref{eqn:generic_me}, \eqref{eqn:generic_mcla}.

It remains to certify that $B^{(t)} \leq \big( \frac{14 R_{\max}}{R_{\min}} \gamma \big)^{t} V_{\max}$ for $t = 0, \ldots, T-1$. 
First of all, $Q^{(0)}(s,a) \equiv 0$ by assumption, and hence, $B^{(0)} \leq V_{\max}$. Thus, by the second inequality and the condition on $\gamma$, $B^{(t)} \leq \big( \frac{14 R_{\max}}{R_{\min}} \gamma \big) B^{(t-1)} \leq \cdots \leq \big( \frac{14 R_{\max}}{R_{\min}} \gamma \big)^t B^{(0)} \leq \big( \frac{14 R_{\max}}{R_{\min}} \gamma \big)^t V_{\max}$ for all $t = 0, \ldots, T-1$ and the proof is complete.
\end{proof}

\section{Supplement to Section \ref{sec:rankr}: Rank($Q^*$) $= r$}\label{sec:rankrappendix}
In this section, we state and prove a general version of Propsition \ref{prop:rankr_simplified}. Once we have the general version  
Proposition \ref{prop:rankr}, we state and prove Theorem \ref{thm:rankr} which incorporates implications of 
Proposition \ref{prop:rankr} on Theorem \ref{thm:generic}. Lastly, we discuss corollaries for finite space 
in Section \ref{subsec:coro_finite}.

\subsection{Proof of Proposition \ref{prop:rankr_simplified}}
\begin{lem}\label{lem:schur_compl}
	Let $M = \begin{bmatrix} A & B \\ C & D \end{bmatrix}$. If $\textrm{rank}~A = \textrm{rank}~M$, then $D = C A^{\dagger} B$.
\end{lem}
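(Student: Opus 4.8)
The plan is to exploit the rank hypothesis to show that both off-diagonal blocks $B$ and $C$ factor through $A$, and then reduce $M$ to block-diagonal form by rank-preserving block operations, which forces the Schur-complement block to vanish. First I would fix dimensions, say $A \in \RR^{p \times q}$, so that $B \in \RR^{p \times s}$, $C \in \RR^{r \times q}$, and $D \in \RR^{r \times s}$. The hypothesis $\rank A = \rank M$ together with the elementary inequalities $\rank A \leq \rank \begin{bmatrix} A & B \end{bmatrix} \leq \rank M$ and $\rank A \leq \rank \begin{bmatrix} A \\ C \end{bmatrix} \leq \rank M$ shows that appending $B$ (resp. $C$) to $A$ does not increase the rank. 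Hence the columns of $B$ lie in the column space of $A$ and the rows of $C$ lie in the row space of $A$; equivalently, there exist $X \in \RR^{q \times s}$ and $Y \in \RR^{r \times p}$ with $B = AX$ and $C = YA$.

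Next I would substitute these factorizations and eliminate the off-diagonal blocks by block row and column operations, which preserve rank. Subtracting $Y$ times the first block row from the second turns $M = \begin{bmatrix} A & AX \\ YA & D \end{bmatrix}$ into $\begin{bmatrix} A & AX \\ 0 & D - YAX \end{bmatrix}$, and then subtracting $X$ times the first block column from the second clears the top-right block, yielding the block-diagonal matrix $\begin{bmatrix} A & 0 \\ 0 & D - YAX \end{bmatrix}$. Since these operations do not change the rank, $\rank M = \rank A + \rank(D - YAX)$. The hypothesis $\rank M = \rank A$ then forces $\rank(D - YAX) = 0$, i.e. $D = YAX$.

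Finally I would convert $D = YAX$ into the desired pseudoinverse expression using the defining identity $A A^{\dagger} A = A$: writing $D = YAX = Y (A A^{\dagger} A) X = (YA) A^{\dagger} (AX) = C A^{\dagger} B$ completes the proof. The only real subtlety, which I would regard as the main obstacle, is the first step: one must argue cleanly that the rank equalities yield the containments of the column and row spaces, hence the factorizations $B = AX$ and $C = YA$. Everything afterward is routine block manipulation together with the single identity $A A^{\dagger} A = A$. I note that $X$ and $Y$ need not be unique, but the final product $C A^{\dagger} B$ depends only on $A, B, C$, so this non-uniqueness causes no difficulty.
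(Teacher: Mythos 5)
Your proof is correct, and it takes a somewhat different route from the paper's. Both arguments begin the same way, by turning the hypothesis $\textrm{rank}\,A = \textrm{rank}\,M$ into subspace containments; but after that they diverge. The paper factors the entire bottom block row through the top one, writing $\begin{bmatrix} C & D \end{bmatrix} = P \begin{bmatrix} A & B \end{bmatrix}$ via a row-rank argument, notes that the column space of $B$ lies in that of $A$ so that $AA^{\dagger}B = B$, and then reads off $C = PA$ and $D = PAA^{\dagger}B = CA^{\dagger}B$ in one substitution. You instead factor the two off-diagonal blocks separately, $B = AX$ and $C = YA$, block-diagonalize $M$ by the invertible operations $\begin{bmatrix} I & 0 \\ -Y & I \end{bmatrix} M \begin{bmatrix} I & -X \\ 0 & I \end{bmatrix} = \begin{bmatrix} A & 0 \\ 0 & D - YAX \end{bmatrix}$, and invoke rank additivity of block-diagonal matrices to force $D = YAX$, converting to $CA^{\dagger}B$ via $AA^{\dagger}A = A$. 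Your route is a bit longer but buys more: the intermediate identity $\textrm{rank}\,M = \textrm{rank}\,A + \textrm{rank}(D - YAX)$ is the Guttman rank-additivity formula, which in particular shows the converse direction (if $D = CA^{\dagger}B$ fails, the rank strictly exceeds $\textrm{rank}\,A$), and it treats $B$ and $C$ symmetrically. The paper's argument is more economical, needing only one factorization and one projection identity. Your closing remark about non-uniqueness of $X$ and $Y$ is fine but not even needed: once one pair $(X,Y)$ is fixed, the chain $D = YAX = YAA^{\dagger}AX = CA^{\dagger}B$ goes through verbatim.
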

\begin{proof}
	Since $\rowrank ~ \begin{bmatrix} A & B \end{bmatrix} \geq \rowrank~A = \rank~ A = \rank~ M = \rowrank~ M$, there exists a matrix $P$ such that 
	$ \begin{bmatrix} C & D \end{bmatrix} = P \begin{bmatrix} A & B \end{bmatrix}$. Also, observe that $\colrank \begin{bmatrix} A & B \end{bmatrix} 
	\leq \colrank~ M = \rank~M = \rank~A = \colrank~A$. That is, the column space of $B$ is a subspace of the column space of $A$. It follows that
	$AA^{\dagger} A = A$ and $AA^{\dagger}B = B$ because the left multiplication of $AA^{\dagger}$ is the projection on the column space of $A$.
	We obtain
	\[
		\begin{bmatrix} C & D \end{bmatrix}
			= P \begin{bmatrix} A & B \end{bmatrix}
			= P \begin{bmatrix} AA^{\dagger} A & AA^{\dagger}B \end{bmatrix}
			= \begin{bmatrix} PA & PAA^{\dagger}B \end{bmatrix}.
	\]
	Therefore, $PA = C$ and $D = PAA^{\dagger}B = C A^{\dagger} B$.
\end{proof}

\begin{prop}\label{prop:rankr}
    Let $\Omega^{(t)}$ and $\bar{Q}^{(t)}$ as described above.
    For any $\epsilon \leq \frac{1}{2\sqrt{|\mS^{\sharp}| |\mA^{\sharp}|}} \sigma_r\big( Q^*(\mS^{\sharp}, \mA^{\sharp} ) \big)$, if $\max_{(s,a) \in \Omega^{(t)}} \big| \hat{Q}^{(t)}(s,a) - Q^*(s,a) \big| \leq \epsilon$, then
    \begin{align}
        &\max_{(s,a) \in \mS^{(t)}\times \mA^{(t)}} \big| \bar{Q}^{(t)}(s,a) - Q^*(s,a) \big|   \nonumber\\
        &\qquad\qquad\qquad\leq \Bigg( 6\sqrt{2} \bigg( \frac{\sqrt{|\mS^{\sharp}| |\mA^{\sharp}|}}{\sigma_r \big( Q^*(\mS^{\sharp}, \mA^{\sharp} )\big)} \bigg) + 2(1+\sqrt{5}) \bigg( \frac{\sqrt{|\mS^{\sharp}| |\mA^{\sharp}|}}{\sigma_r \big( Q^*(\mS^{\sharp}, \mA^{\sharp} )\big)} \bigg)^2 \Bigg) V_{\max} \epsilon.
            \label{eqn:prop_rankr}
    \end{align}
\end{prop}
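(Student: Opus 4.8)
The plan is to reduce the statement to a matrix perturbation estimate and then to control the perturbation of a triple product, the genuinely delicate factor being the pseudoinverse $[\hat Q^{(t)}(\mS^\sharp,\mA^\sharp)]^\dagger$. The starting observation is that our estimator is exactly the plug-in version of a formula that is \emph{exact} in the noiseless case. Fix a target pair $(s,a)\in\mS^{(t)}\times\mA^{(t)}$ and consider the $(|\mS^\sharp|+1)\times(|\mA^\sharp|+1)$ matrix obtained by restricting $Q^*$ to rows $\mS^\sharp\cup\{s\}$ and columns $\mA^\sharp\cup\{a\}$. Since $Q^*$ has rank $r$, any such submatrix has rank at most $r$, while the anchor block $M:=Q^*(\mS^\sharp,\mA^\sharp)$ has rank exactly $r$ by Definition \ref{defn:anchor_set}; hence the augmented matrix has the same rank as its top-left block. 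Lemma \ref{lem:schur_compl} then yields the exact reconstruction $Q^*(s,a)=Q^*(s,\mA^\sharp)\,M^\dagger\,Q^*(\mS^\sharp,a)$. Writing $u=Q^*(s,\mA^\sharp)^\top$, $v=Q^*(\mS^\sharp,a)$ and their noisy counterparts $\hat u,\hat v,\hat M$, the quantity to bound is therefore exactly $\big|\hat u^\top\hat M^\dagger\hat v-u^\top M^\dagger v\big|$.

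Next I would record the perturbation sizes. The hypothesis $\max_{\Omega^{(t)}}|\hat Q^{(t)}-Q^*|\le\epsilon$ gives the entrywise bounds $\|\hat u-u\|_2\le\sqrt{|\mA^\sharp|}\,\epsilon$, $\|\hat v-v\|_2\le\sqrt{|\mS^\sharp|}\,\epsilon$ and $\|\hat M-M\|_{\mathrm{op}}\le\|\hat M-M\|_F\le\sqrt{|\mS^\sharp||\mA^\sharp|}\,\epsilon$, together with the magnitude bounds $\|u\|_2\le\sqrt{|\mA^\sharp|}\,V_{\max}$ and $\|v\|_2\le\sqrt{|\mS^\sharp|}\,V_{\max}$. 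I would then telescope the triple product by swapping one factor at a time,
\[
\hat u^\top\hat M^\dagger\hat v-u^\top M^\dagger v=(\hat u-u)^\top\hat M^\dagger\hat v+u^\top(\hat M^\dagger-M^\dagger)\hat v+u^\top M^\dagger(\hat v-v),
\]
and bound each piece by submultiplicativity. The first and third terms are first order in $\epsilon$ and, after inserting $\|\hat M^\dagger\|,\|M^\dagger\|\le O(1/\sigma_r(M))$, contribute the leading $\tfrac{\sqrt{|\mS^\sharp||\mA^\sharp|}}{\sigma_r(M)}V_{\max}\,\epsilon$ term.

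The crux is the middle term, the perturbation of the pseudoinverse. Here the condition $\epsilon\le\frac{1}{2\sqrt{|\mS^\sharp||\mA^\sharp|}}\sigma_r(M)$ does the essential work: it forces $\|\hat M-M\|_{\mathrm{op}}\le\tfrac12\sigma_r(M)$, so by Weyl's inequality $\sigma_r(\hat M)\ge\tfrac12\sigma_r(M)>0$ while the tail singular values stay $O(\sqrt{|\mS^\sharp||\mA^\sharp|}\,\epsilon)$. This simultaneously yields $\|\hat M^\dagger\|\le 2/\sigma_r(M)$ and, more importantly, guarantees that $\hat M$ retains rank $r$ --- the \emph{rank-preservation} hypothesis that a stable (Wedin/Stewart-type) pseudoinverse perturbation bound requires, namely $\|\hat M^\dagger-M^\dagger\|\le\tfrac{1+\sqrt5}{2}\,\|M^\dagger\|\,\|\hat M^\dagger\|\,\|\hat M-M\|$. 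Feeding this into the middle term produces the second-order factor $\big(\tfrac{\sqrt{|\mS^\sharp||\mA^\sharp|}}{\sigma_r(M)}\big)^2$ and the constant $2(1+\sqrt5)$. Collecting the three contributions and taking the maximum over $(s,a)\in\mS^{(t)}\times\mA^{(t)}$ gives the claimed bound.

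The step I expect to be the main obstacle is precisely this rank-preservation/pseudoinverse-perturbation argument: without it the pseudoinverse is discontinuous (tiny singular values of $\hat M$ are inverted into large ones), so the whole estimate collapses. The role of the quantitative hypothesis on $\epsilon$ is exactly to keep $\hat M$ well conditioned on its top-$r$ subspace; one must also be mindful that when $|\mS^\sharp|,|\mA^\sharp|$ exceed $r$ the raw pseudoinverse $\hat M^\dagger$ need not have rank $r$, in which case the perturbation bound should be applied to (or the estimator defined through) the best rank-$r$ truncation of $\hat M$. All remaining work --- converting the entrywise $\ell_\infty$ noise into the $\ell_2$/Frobenius norms above and bookkeeping the numerical constants $6\sqrt2$ and $2(1+\sqrt5)$ --- is routine.
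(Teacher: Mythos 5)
Your proposal follows essentially the same route as the paper's own proof: the exact reconstruction identity $Q^*(s,a)=Q^*(s,\mA^{\sharp})[Q^*(\mS^{\sharp},\mA^{\sharp})]^{\dagger}Q^*(\mS^{\sharp},a)$ obtained from Lemma \ref{lem:schur_compl}, a telescoping of the perturbed triple product, Weyl's inequality combined with the hypothesis $\epsilon\leq\frac{1}{2\sqrt{|\mS^{\sharp}||\mA^{\sharp}|}}\sigma_r(Q^*(\mS^{\sharp},\mA^{\sharp}))$ to control $\big\|[\hat{Q}^{(t)}(\mS^{\sharp},\mA^{\sharp})]^{\dagger}\big\|_{op}$, and Stewart's golden-ratio pseudoinverse perturbation theorem for the remaining term. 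The only differences are cosmetic --- the paper swaps the two outer vectors at once and bounds the resulting trace via $|\tr(AB)|\leq\sqrt{\textrm{rank}\,B}\,\|A\|_{op}\|B\|_F$, whereas you swap one factor at a time --- and your closing caveat about rank preservation of the noisy anchor block when $|\mS^{\sharp}|,|\mA^{\sharp}|>r$ is a genuine subtlety that the paper's proof passes over silently (it writes $\|[\hat{Q}^{(t)}(\mS^{\sharp},\mA^{\sharp})]^{\dagger}\|_{op}=1/\sigma_r(\hat{Q}^{(t)}(\mS^{\sharp},\mA^{\sharp}))$, which presumes the perturbed block has rank exactly $r$).
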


\begin{proof}[Proof of Proposition \ref{prop:rankr}]
First, we observe that for any $(s,a) \in \mS^{(t)} \times \mA^{(t)}$
\begin{equation}\label{eqn:ME_rankr.true} 
	Q^*(s,a) = Q^*(s, \mA^{\sharp}) \big[ Q^*(\mS^{\sharp}, \mA^{\sharp}) \big]^{\dagger} Q^*(\mS^{\sharp}, a ).
\end{equation}
This can be verified by applying Lemma \ref{lem:schur_compl} to $M = \begin{bmatrix} A & B \\ C & D \end{bmatrix} \in \RR^{|\bar{\mS}^{(t)}| \times |\bar{\mA}^{(t)}|}$ where
\begin{align*}
	A &= Q^*( \mS^{\sharp}, \mA^{\sharp} ),	&B &= Q^*( \mS^{\sharp}, \mA^{(t)} ), \\
	C &= Q^*( \mS^{(t)}, \mA^{\sharp} ),		&D &= Q^*( \mS^{(t)}, \mA^{(t)} ).
\end{align*}
Here, $\rank~ A = r = \rank~ M$ by definition of anchor states/actions and the fact that $Q^*$ has rank $r$. Hence, 
\[
	Q^*( \mS^{(t)}, \mA^{(t)} ) 
		= D = C A^{\dagger} B
		= Q^*( \mS^{(t)}, \mA^{\sharp} ) \big[ Q^*( \mS^{\sharp}, \mA^{\sharp} ) \big]^{\dagger}  Q^*( \mS^{\sharp}, \mA^{(t)} ).
\]

Next, we fix $(s,a) \in \mS^{(t)} \times \mA^{(t)}$ and consider the error $\bar{Q}^{(t)}(s,a) - Q^*(s,a)$.
According to the definition of $\bar{Q}^{(t)}(s,a)$ (cf. \eqref{eqn:ME_rankr.0}) and \eqref{eqn:ME_rankr.true},
\begin{align*}
	\bar{Q}^{(t)}(s,a) - Q^*(s,a)
		&= \hat{Q}^{(t)} (s, \mA^{\sharp} ) \big[ \hat{Q}^{(t)}(\mS^{\sharp}, \mA^{\sharp} ) \big]^{\dagger} \hat{Q}^{(t)}(\mS^{\sharp}, a ) 
				- Q^*(s, \mA^{\sharp}) \big[ Q^*(\mS^{\sharp}, \mA^{\sharp}) \big]^{\dagger} Q^*(\mS^{\sharp}, a ) \\
		&\leq \hat{Q}^{(t)} (s, \mA^{\sharp} ) \big[ \hat{Q}^{(t)}(\mS^{\sharp}, \mA^{\sharp} ) \big]^{\dagger} \hat{Q}^{(t)}(\mS^{\sharp}, a ) 
			- Q^* (s, \mA^{\sharp} ) \big[ \hat{Q}^{(t)}(\mS^{\sharp}, \mA^{\sharp} ) \big]^{\dagger} Q^*(\mS^{\sharp}, a )\\
		&\qquad + Q^* (s, \mA^{\sharp} ) \big[ \hat{Q}^{(t)}(\mS^{\sharp}, \mA^{\sharp} ) \big]^{\dagger} Q^*(\mS^{\sharp}, a )
				- Q^*(s, \mA^{\sharp}) \big[ Q^*(\mS^{\sharp}, \mA^{\sharp}) \big]^{\dagger} Q^*(\mS^{\sharp}, a ) \\
		&= \tr \Big( \big[ \hat{Q}^{(t)}(\mS^{\sharp}, \mA^{\sharp} ) \big]^{\dagger} \cdot \Big[ \hat{Q}^{(t)}(\mS^{\sharp}, a )  \hat{Q}^{(t)} (s, \mA^{\sharp} ) -  Q^*(\mS^{\sharp}, a ) Q^* (s, \mA^{\sharp} )  \Big]  \Big)\\
		&\qquad + \tr \Big( \Big\{ \big[ \hat{Q}^{(t)}(\mS^{\sharp}, \mA^{\sharp} ) \big]^{\dagger} - \big[ Q^*(\mS^{\sharp}, \mA^{\sharp}) \big]^{\dagger} \Big\} \cdot Q^*(\mS^{\sharp}, a ) Q^*(s, \mA^{\sharp}) \Big).
\end{align*}
Since $\big| \tr(AB) \big| \leq \sqrt{\rank~ B}\|A\|_{op} \|B\|_F$, we obtain
\begin{align}
	\big| \bar{Q}^{(t)}(s,a) - Q^*(s,a) \big|
		&\leq \sqrt{2} \Big\| \big[ \hat{Q}^{(t)}(\mS^{\sharp}, \mA^{\sharp} ) \big]^{\dagger} \Big\|_{op} \Big\| \hat{Q}^{(t)}(\mS^{\sharp}, a )  \hat{Q}^{(t)} (s, \mA^{\sharp} ) -  Q^*(\mS^{\sharp}, a ) Q^* (s, \mA^{\sharp} )  \Big\|_F		\label{eqn:err_rankr_term.1}\\
			&\qquad + \Big\| \big[ \hat{Q}^{(t)}(\mS^{\sharp}, \mA^{\sharp} ) \big]^{\dagger} - \big[ Q^*(\mS^{\sharp}, \mA^{\sharp}) \big]^{\dagger} \Big\|_{op} \Big\| Q^*(\mS^{\sharp}, a ) Q^*(s, \mA^{\sharp}) \Big\|_F.	\label{eqn:err_rankr_term.2}
\end{align}
In the remainder of the proof, we establish upper bounds for \eqref{eqn:err_rankr_term.1} and \eqref{eqn:err_rankr_term.2} separately.

\begin{itemize}
\item
Upper bound for \eqref{eqn:err_rankr_term.1}.
Note that $\hat{Q}^{(t)}(\mS^{\sharp}, \mA^{\sharp} ) = Q^*(\mS^{\sharp}, \mA^{\sharp}) + E$ for some $E \in \RR^{|\mS^{\sharp}| \times | \mA^{\sharp}|}$ such that $\| E \|_{\max} \leq \epsilon$ by assumption. Therefore, $\| E \|_{op} \leq \sqrt{|\mS^{\sharp}| |\mA^{\sharp}|} \| E \|_{\max} \leq \epsilon \sqrt{|\mS^{\sharp}| |\mA^{\sharp}|}$. Since $\sigma_r\big( \hat{Q}^{(t)}(\mS^{\sharp}, \mA^{\sharp} ) \big) \geq \sigma_r \big( Q^*(\mS^{\sharp}, \mA^{\sharp}) \big) - \| E \|_{op}$ due to Weyl's inequality, we have
\[
	\Big\| \big[ \hat{Q}^{(t)}(\mS^{\sharp}, \mA^{\sharp} ) \big]^{\dagger} \Big\|_{op}
		= \frac{1}{ \sigma_r \big( \hat{Q}^{(t)}(\mS^{\sharp}, \mA^{\sharp} )  \big)}
		\leq \frac{1}{ \sigma_r \big( Q^*(\mS^{\sharp}, \mA^{\sharp} ) \big) - \epsilon \sqrt{|\mS^{\sharp}| |\mA^{\sharp}|} }
		\leq \frac{2}{ \sigma_r \big( Q^*(\mS^{\sharp}, \mA^{\sharp} ) \big) }
\]
provided that $\epsilon \leq \frac{1}{2\sqrt{|\mS^{\sharp}| |\mA^{\sharp}|}} \sigma_r\big( Q^*(\mS^{\sharp}, \mA^{\sharp} ) \big)$.

It is easy to see $\Big\| \hat{Q}^{(t)}(\mS^{\sharp}, a )  \hat{Q}^{(t)} (s, \mA^{\sharp} ) -  Q^*(\mS^{\sharp}, a ) Q^* (s, \mA^{\sharp} )  \Big\|_F \leq \big( 2 V_{\max} \epsilon + \epsilon^2 \big) \sqrt{|\mS^{\sharp}| |\mA^{\sharp}|}$ because $\big| \hat{Q}^{(t)}(s, a )  \hat{Q}^{(t)} (s, a ) -  Q^*(s, a ) Q^* (s, a ) \big| \leq 2 V_{\max} \epsilon + \epsilon^2$ for all $(s,a) \in \mS^{(t)} \times \mA^{(t)}$.

\item
Upper bound for \eqref{eqn:err_rankr_term.2}.
We derive an upper bound on $\big\| \big[ \hat{Q}^{(t)}(\mS^{\sharp}, \mA^{\sharp} ) \big]^{\dagger} - \big[ Q^*(\mS^{\sharp}, \mA^{\sharp}) \big]^{\dagger} \big\|_{op}$ using a classical result on the perturbation of pseudoinverses. By Theorem 3.3 of \cite{stewart1977perturbation}, for any $A$ and $B$ with $B = A + \Delta$,
\[	\| B^{\dagger} - A^{\dagger} \|_{op} \leq \frac{1 + \sqrt{5}}{2} \max\{ \| A^{\dagger} \|_{op}^2, \| B^{\dagger} \|_{op}^2  \} \| \Delta \|_{op} 	\]
Therefore,
\[
	\big\| \big[ \hat{Q}^{(t)}(\mS^{\sharp}, \mA^{\sharp} ) \big]^{\dagger} - \big[ Q^*(\mS^{\sharp}, \mA^{\sharp}) \big]^{\dagger} \big\|_{op}
		\leq \frac{1 + \sqrt{5}}{2} \cdot \frac{4}{ \sigma_r \big( Q^*(\mS^{\sharp}, \mA^{\sharp} ) \big)^2} \cdot \epsilon \sqrt{|\mS^{\sharp}| |\mA^{\sharp}|}.
\]

Also, it is easy to see that $\big\| Q^*(\mS^{\sharp}, a ) Q^*(s, \mA^{\sharp}) \big\|_F \leq V_{\max} \sqrt{|\mS^{\sharp}| |\mA^{\sharp}|}$.
\end{itemize}

All in all, inserting the upper bounds back into \eqref{eqn:err_rankr_term.1} and \eqref{eqn:err_rankr_term.2}, we have
\begin{align*}
	\big| \bar{Q}^{(t)}(s,a) - Q^*(s,a) \big|
		&\leq \frac{2\sqrt{2}}{ \sigma_r \big( Q^*(\mS^{\sharp}, \mA^{\sharp} ) \big) } ( 2 V_{\max} \epsilon + \epsilon^2 ) \sqrt{|\mS^{\sharp}| |\mA^{\sharp}|}\\
		&\qquad
			+ \frac{2(1 + \sqrt{5})}{\sigma_r \big( Q^*(\mS^{\sharp}, \mA^{\sharp} )\big)^2} V_{\max} |\mS^{\sharp}| |\mA^{\sharp}| \epsilon\\
		&\leq \Bigg( 6\sqrt{2} \bigg( \frac{\sqrt{|\mS^{\sharp}| |\mA^{\sharp}|}}{\sigma_r \big( Q^*(\mS^{\sharp}, \mA^{\sharp} )\big)} \bigg) + 2(1+\sqrt{5}) \bigg( \frac{\sqrt{|\mS^{\sharp}| |\mA^{\sharp}|}}{\sigma_r \big( Q^*(\mS^{\sharp}, \mA^{\sharp} )\big)} \bigg)^2 \Bigg) V_{\max} \epsilon.
\end{align*}
\end{proof}

\subsection{Theorem \ref{thm:rankr} $=$ Proposition \ref{prop:rankr} $+$ Theorem \ref{thm:generic}}\label{sec:proofthmrankr}

We state Theorem \ref{thm:rankr} that follows as Corollary of Theorem \ref{thm:generic} using Proposition 
\ref{prop:rankr_simplified} (or Proposition \ref{prop:rankr}). Recall that assuming $|\mS^{\sharp}| = |\mA^{\sharp}| = r$, we defined the following quantity
\[
    c(r; \mS^{\sharp}, \mA^{\sharp} ) = \bigg( 6\sqrt{2} \Big( \frac{r}{\sigma_r ( Q^*(\mS^{\sharp}, \mA^{\sharp} ))} \Big) + 2(1+\sqrt{5}) \Big( \frac{r}{\sigma_r ( Q^*(\mS^{\sharp}, \mA^{\sharp} ))} \Big)^2 \bigg) V_{\max}
\]
in Proposition \ref{prop:rankr_simplified}. This is a special case of $\ceta$ for $|\mS^{\sharp}| = |\mA^{\sharp}| = r$ 
that appears in Proposition \ref{prop:rankr} as the multiplier on the right-hand side of \eqref{eqn:prop_rankr}. 
This quantity appears in the following theorem statement to determine the range of $\gamma$ and the convergence rate.

As a matter of fact, our algorithm does not require $|\mS^{\sharp}| = |\mA^{\sharp}| = r$. We present a general 
theorem for approximate rank-$r$ setup (Theorem \ref{thm:apprx_r}) in Appendix \ref{sec:apprx_r} in full generality 
without assuming $|\mS^{\sharp}| = |\mA^{\sharp}| = r$. One can derive a general version of Theorem \ref{thm:rankr} 
for $\mS^{\sharp}, \mA^{\sharp}$ beyond $|\mS^{\sharp}| = |\mA^{\sharp}| = r$ from Theorem \ref{thm:apprx_r} 
by letting $\zeta_r = 0$, where $\zeta_r$ is the approximation error between the rank-$r$ approximation of $Q^*$ and the actual $Q^*$. That is, if $Q^*$ is of rank $r$, $\zeta_r = 0$.
Parsing our general results briefly, we remark that as long as $|\mS^{\sharp}| = |\mA^{\sharp}| = O(r)$, we achieve the same scaling of sample complexity in terms of the problem dimensions.

\begin{thm}
\label{thm:rankr}
Let $Q^*$ have rank $r$. Consider  the RL algorithm (cf. Section \ref{sec:alg_generic}) with the Matrix Estimation method
as described in Section \ref{sec:rankr}. If $\gamma \leq \frac{1}{2 c(r; \mS^{\sharp}, \mA^{\sharp} )}$, 
then the following statements hold.
    \begin{enumerate}
    \item    For any $\delta > 0$, we have 
    \begin{equation*}
        \sup_{ (s,a) \in \mS \times \mA } \big| Q^{(t)}(s,a) - Q^*(s,a) \big|
            \leq \big(  2 c(r; \mS^{\sharp}, \mA^{\sharp} ) \gamma  \big)^t V_{\max},
             \quad\textrm{for all } t = 1, \ldots, T
    \end{equation*}
    with probability at least $1 - \delta$ by choosing algorithmic parameters $\beta^{(t)}, N^{(t)}$ appropriately.

    \item
    Further, given $\epsilon > 0$, it suffices to set $T = \Theta( \log \frac{1}{\epsilon} )$ and use $\tilde{O}( \frac{1}{\epsilon^{d+2}} \cdot \log \frac{1}{\delta} )$ number of samples to achieve 
    \[
        \mathbb{P}\bigg( \sup_{(s,a) \in \mS \times \mA} \big| Q^{(T)}(s,a) - Q^*(s,a) \big| \leq \epsilon \bigg)
            \geq 1 - \delta.
    \]
    \end{enumerate}
\end{thm}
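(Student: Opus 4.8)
The plan is to derive Theorem~\ref{thm:rankr} as a corollary of Theorem~\ref{thm:generic}, using Proposition~\ref{prop:rankr_simplified} to supply the required Matrix Estimation guarantee. Proposition~\ref{prop:rankr_simplified} establishes that the rank-$r$ ME subroutine of Section~\ref{sec:rankr} satisfies the $\big( r, c(r; \mS^{\sharp}, \mA^{\sharp}) \big)$-property, so I would instantiate Assumption~\ref{assu:me} with $\C = r$ and $\ceta = c(r; \mS^{\sharp}, \mA^{\sharp})$. Under the hypothesis $\gamma \leq \frac{1}{2 c(r; \mS^{\sharp}, \mA^{\sharp})}$ we have $2\gamma\ceta \leq 1$, and I would adopt the same algorithmic parameters $\beta^{(t)} = \frac{V_{\max}}{8L}(2\gamma\ceta)^t$ and $N^{(t)}$ as in Eq.~\eqref{eqn:Nt}. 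With these in place, both the per-iteration contraction $\sup_{(s,a)} | Q^{(t)} - Q^* | \leq (2\gamma\ceta)^t V_{\max}$ and the sample-complexity count $\tilde{O}\big( \epsilon^{-(d+2)} \log\tfrac{1}{\delta} \big)$ would be inherited from the proof of Theorem~\ref{thm:generic} essentially verbatim.

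The only genuine gap is that Proposition~\ref{prop:rankr_simplified} is a \emph{local} guarantee---it requires the input error to obey $\epsilon \leq \frac{1}{2r}\sigma_r\big( Q^*(\mS^{\sharp}, \mA^{\sharp}) \big)$---whereas Assumption~\ref{assu:me} is phrased as a bound valid for every $\epsilon$. This is exactly the discrepancy resolved in the proof of Theorem~\ref{thm:rank1}, and I would repeat that device. Thus the substantive part of the argument is an induction certifying that the lookahead error never leaves the admissible regime, i.e. that $\max_{(s,a) \in \Omega^{(t)}} | \hat{Q}^{(t)}(s,a) - Q^*(s,a) | \leq \frac{1}{2r}\sigma_r\big( Q^*(\mS^{\sharp}, \mA^{\sharp}) \big)$ for all $t = 1, \ldots, T$; once this is secured, Proposition~\ref{prop:rankr_simplified} applies at each iteration and the generic contraction machinery goes through unchanged.

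For the induction I would track $B^{(t)} := \sup_{(s,a)} | Q^{(t)}(s,a) - Q^*(s,a) |$, with $B^{(0)} \leq V_{\max}$. Assuming $B^{(t-1)} \leq (2\gamma\ceta)^{t-1} V_{\max} \leq V_{\max}$, Lemma~\ref{lem:one_step_lookahead} together with the choice of $N^{(t)}$ (which caps the sampling error at $\tfrac{1}{2} V_{\max} (2\gamma\ceta)^{t-1} \leq \tfrac{1}{2} V_{\max}$) yields $\max_{(s,a) \in \Omega^{(t)}} | \hat{Q}^{(t)}(s,a) - Q^*(s,a) | \leq \tfrac{3}{2}\gamma V_{\max}$ with probability at least $1 - \frac{\delta}{T}$. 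It then suffices to observe that $c(r; \mS^{\sharp}, \mA^{\sharp}) \geq 6\sqrt{2}\,\frac{r}{\sigma_r( Q^*(\mS^{\sharp}, \mA^{\sharp}))} V_{\max}$, so the hypothesis $\gamma \leq \frac{1}{2 c(r; \mS^{\sharp}, \mA^{\sharp})}$ gives $\tfrac{3}{2}\gamma V_{\max} \leq \frac{\sigma_r( Q^*(\mS^{\sharp}, \mA^{\sharp}))}{8\sqrt{2}\, r} \leq \frac{1}{2r}\sigma_r\big( Q^*(\mS^{\sharp}, \mA^{\sharp}) \big)$, keeping the local hypothesis of Proposition~\ref{prop:rankr_simplified} satisfied. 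The ME step then contracts the error by $\ceta$, and combining with the interpolation bound closes the induction at $B^{(t)} \leq (2\gamma\ceta)^t V_{\max}$; a union bound over $t = 1, \ldots, T$ delivers the first claim.

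Given the contraction, the second claim is the same bookkeeping as in Theorem~\ref{thm:generic}: taking $T = \Te = \Theta(\log\tfrac{1}{\epsilon})$ forces $(2\gamma\ceta)^T V_{\max} \leq \epsilon$, and summing $|\Omega^{(t)}| N^{(t)}$---with $|\Omega^{(t)}| = O\big( r(|\mS^{(t)}| + |\mA^{(t)}|) \big)$ and the covering bound $|\mS^{(t)}|, |\mA^{(t)}| = O\big( (\beta^{(t)})^{-d} \big)$---reproduces the $\tilde{O}\big( \epsilon^{-(d+2)} \log\tfrac{1}{\delta} \big)$ sample count. I expect the main (mild) obstacle to be the verification that the condition $\gamma \leq \frac{1}{2 c(r; \mS^{\sharp}, \mA^{\sharp})}$ implies the threshold inequality that keeps the ME guarantee valid throughout; every other ingredient is a direct transcription of arguments already established for the rank-$1$ and generic cases.
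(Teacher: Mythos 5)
Your proposal is correct and follows essentially the same route as the paper's proof: invoke Theorem~\ref{thm:generic} with $(\C,\ceta) = (r, c(r;\mS^{\sharp},\mA^{\sharp}))$, and close the gap between the local guarantee of Proposition~\ref{prop:rankr_simplified} and the global Assumption~\ref{assu:me} by an induction showing $\max_{(s,a)\in\Omega^{(t)}} |\hat{Q}^{(t)}(s,a)-Q^*(s,a)| \leq \frac{3}{2}\gamma V_{\max} \leq \frac{1}{2r}\sigma_r\big(Q^*(\mS^{\sharp},\mA^{\sharp})\big)$ at every iteration, with the same choices of $\beta^{(t)}$ and $N^{(t)}$. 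The only (cosmetic) difference is in verifying the threshold inequality: you lower-bound $c(r;\mS^{\sharp},\mA^{\sharp})$ by its linear term $6\sqrt{2}\,\frac{r}{\sigma_r}V_{\max}$ and conclude directly, whereas the paper reduces the same inequality to a quadratic in $r/\sigma_r\big(Q^*(\mS^{\sharp},\mA^{\sharp})\big)$ that is trivially nonnegative; both are valid.
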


\begin{proof}[Proof of Theorem \ref{thm:rankr}]
We take the same approach as in the proof of Theorem \ref{thm:rank1}; the proof is essentially the same as the proof of Theorem \ref{thm:generic}, with the assumption on the matrix estimation oracle (i.e., Assumption \ref{assu:me}) replaced with the explicit guarantee provided in Proposition \ref{prop:rankr_simplified}. 

As before, the only subtlety comes from that Proposition \ref{prop:rankr_simplified} is a ``local'' guarantee that holds only for $\epsilon \leq \frac{1}{2r} \sigma_r\big( Q^*(\mS^{\sharp}, \mA^{\sharp} ) \big)$ whereas Assumption \ref{assu:me} is a global condition that holds for any $\epsilon$.
This requires us to ensure $\max_{(s,a) \in \Omega^{(t)} } \big| \hat{Q}^{(t)}(s, a) - Q^*(s, a) \big| \leq \frac{1}{2r} \sigma_r\big( Q^*(\mS^{\sharp}, \mA^{\sharp} ) \big)$ for all $t = 1, \ldots, T$, but the argument in the proof of Theorem \ref{thm:generic} itself remains valid.

We make exactly the same choice of algorithmic parameters $\beta^{(t)}, N^{(t)}$ as in the proof of Theorem \ref{thm:generic}:
\begin{equation*}
    \beta^{(t)} = \frac{V_{\max}}{8L} ( 2 \gamma \ceta)^t
    \quad\text{and}\quad
    N^{(t)} = \frac{ 8 }{ ( 2 \gamma \ceta )^{2(t-1)} } \log\bigg(\frac{2 | \Omega^{(t)} | T}{\delta}\bigg)
\end{equation*}
with an adaptation $\ceta = c(r; \mS^{\sharp}, \mA^{\sharp} )$ as suggested in Proposition \ref{prop:rankr_simplified}.
To complete the proof, it suffices to show that $ \max_{(s,a) \in \Omega^{(t)} } \big| \hat{Q}^{(t)}(s, a) - Q^*(s, a) \big|  \leq \frac{1}{2r} \sigma_r\big( Q^*(\mS^{\sharp}, \mA^{\sharp} ) \big)$ for all $t = 1, \ldots, T$.

In the rest of the proof, we prove the above claim by mathematical induction. 
For $t = 0, \ldots, T$, we let $B^{(t)} := \sup_{(s,a) \in \mS \times \mA} \big| Q^{(t)}(s,a) - Q^*(s,a) \big|$. We can see that if $B^{(t-1)} \leq \big( 2 c(r; \mS^{\sharp}, \mA^{\sharp} ) \gamma \big)^{t-1} V_{\max}$, then with probability at least 
$1 - \frac{\delta}{T}$, the following two inequalities hold:
\begin{enumerate}
    \item 
    $\max_{(s,a) \in \Omega^{(t)} } \big| \hat{Q}^{(t)}(s, a) - Q^*(s, a) \big| \leq \frac{1}{2r} \sigma_r\big( Q^*(\mS^{\sharp}, \mA^{\sharp} ) \big)$, and
    \item
    $B^{(t)} \leq 2 c(r; \mS^{\sharp}, \mA^{\sharp} ) \gamma B^{(t-1)}$.
\end{enumerate}

\begin{itemize}
    \item 
    To prove the first claim, we apply Lemma \ref{lem:one_step_lookahead} (see also \eqref{eqn:generic_mcla}) and observe that 
    with probability at least $1 - \frac{\delta}{T}$,
    \begin{align*}
        \max_{(s,a) \in \Omega^{(t)} } \big| \hat{Q}^{(t)}(s, a) - Q^*(s, a) \big|
            &\leq \gamma \left( B^{(t-1)} + \sqrt{ \frac{2 V_{\max}^2}{N^{(t)}} \log\bigg(\frac{2 | \Omega^{(t)} | T}{\delta}\bigg)} \right)\\
            &\leq \frac{3}{2} \gamma B^{(t-1)}\leq \frac{3}{2} \gamma V_{\max}\\
            &\leq \frac{3}{4 c(r; \mS^{\sharp}, \mA^{\sharp} )} V_{\max}.
    \end{align*}
    Here, the last two inequalities follow from the assumptions that $\gamma \leq \frac{1}{2 c(r; \mS^{\sharp},\mA^{\sharp} )}$ and that $B^{(t-1)} \leq \big( 2 c(r; \mS^{\sharp}, \mA^{\sharp} ) \gamma \big)^{t-1} V_{\max} \leq V_{\max}$. Then it suffices to show that 
    \begin{equation}\label{eqn:induction_1}
        \frac{3}{4 c(r; \mS^{\sharp}, \mA^{\sharp} )} V_{\max}
            \leq \frac{1}{2r} \sigma_r\big( Q^*(\mS^{\sharp}, \mA^{\sharp} ) \big).
    \end{equation}
    Recall that $c(r; \mS^{\sharp}, \mA^{\sharp} ) = \Big( 6\sqrt{2} \big( \frac{r}{\sigma_r ( Q^*(\mS^{\sharp}, \mA^{\sharp} ))} \big) + 2(1+\sqrt{5}) \big( \frac{r}{\sigma_r ( Q^*(\mS^{\sharp}, \mA^{\sharp} ))} \big)^2 \Big) V_{\max}$. Also, we observe that $\sigma_r ( Q^*(\mS^{\sharp}, \mA^{\sharp} )) > 0$ by definition of the anchor states/actions. Thus, \eqref{eqn:induction_1} is satisfied if the following inequality is true:
    \[
        \bigg( \frac{r}{\sigma_r\big( Q^*(\mS^{\sharp}, \mA^{\sharp}) \big)} \bigg) 
        \Bigg\{ 2(1+\sqrt{5})  \bigg( \frac{r}{\sigma_r\big( Q^*(\mS^{\sharp}, \mA^{\sharp}) \big)} \bigg) + 6\sqrt{2} - \frac{3}{2} \Bigg\}
            \geq 0.
    \]
    This quadratic inequality is satisfied if and only if $\frac{r}{\sigma_r( Q^*(\mS^{\sharp}, \mA^{\sharp}) )} \geq 0$ or $\frac{r}{\sigma_r( Q^*(\mS^{\sharp}, \mA^{\sharp}) )} \leq \frac{3 - 12\sqrt{2}}{4(1+\sqrt{5})}$. Since $r \geq 1$ and $\sigma_r( Q^*(\mS^{\sharp}, \mA^{\sharp}) ) > 0$, we always have $\frac{r}{\sigma_r( Q^*(\mS^{\sharp}, \mA^{\sharp}) )} \geq 0$ and therefore, the 
    inequality in \eqref{eqn:induction_1} is always true.

    \item
    The inequality in the second claim follows from the same argument as in the proof of Theorem \ref{thm:generic}, cf. Eqs. \eqref{eqn:generic_interpol}, \eqref{eqn:generic_me}, \eqref{eqn:generic_mcla}.
\end{itemize}

It remains to certify that $B^{(t)} \leq \big( 2 c(r; \mS^{\sharp}, \mA^{\sharp} ) \gamma \big)^{t} V_{\max}$ for $t = 0, \ldots, T-1$. 
First of all, $Q^{(0)}(s,a) \equiv 0$ by assumption, and hence, $B^{(0)} \leq V_{\max}$. Thus, by the second inequality and the condition on $\gamma$, $B^{(t)} \leq 2 c(r; \mS^{\sharp}, \mA^{\sharp} ) \gamma B^{(t-1)} \leq \ldots \leq \big( 2 c(r; \mS^{\sharp}, \mA^{\sharp} ) \gamma \big)^t B^{(0)} \leq \big( 2 c(r; \mS^{\sharp}, \mA^{\sharp} ) \gamma \big)^t V_{\max}$ for all $t = 1, \ldots, T-1$ and the proof is complete.
\end{proof}

\subsection{Corollaries of Theorem \ref{thm:rankr}}
\label{subsec:coro_finite}

Recall that our algorithm do not demand any special properties of $\mS, \mA$ except the existence of $\beta^{(t)}$-net, 
which is the case whenever $\mS, \mA$ are compact. Also, our analysis is general in the sense that it only requires 
$\mS, \mA$ to be compact with finite measures, and $Q^*$ to be $L$-Lipschitz. Therefore, it is not hard to see that 
our algorithm and analysis are applicable to the case where state or action space is finite, or both. 
We summarize results below as corollaries of Theorem \ref{thm:rankr} without proofs.

Before presenting the results, we recall the following quantity defined in Proposition \ref{prop:rankr_simplified}:
\[
    c(r; \mS^{\sharp}, \mA^{\sharp} ) = \bigg( 6\sqrt{2} \Big( \frac{r}{\sigma_r ( Q^*(\mS^{\sharp}, \mA^{\sharp} ))} \Big) + 2(1+\sqrt{5}) \Big( \frac{r}{\sigma_r ( Q^*(\mS^{\sharp}, \mA^{\sharp} ))} \Big)^2 \bigg) V_{\max},
\]
which is a special case of $\ceta$ for $|\mS^{\sharp}| = |\mA^{\sharp}| = r$ that appears in 
Proposition \ref{prop:rankr} as the multiplier on the right-hand side of \eqref{eqn:prop_rankr}. 
This quantity determines the range of $\gamma$ and the convergence rate.

\medskip
\noindent{\bf Continuous $\mS\subset\mathbb{R}^d$ and Finite $\mA$.} In this case, the algorithm only needs to discretize the state space at each iteration. In other words, $\mA^{(t)}=\mA$, for all $t=1,\dots, T$ and  $\Omega^{(t)} = \{ (s,a) \in \bar{\mS}^{(t)} \times {\mA}: s \in \mS^{\sharp} \text{ or } a \in \mA^{\sharp} \}$. Finally, the generalization step only needs to interpolate the state space $\mS$. 
Let $|\mS^{\sharp}| = |\mA^{\sharp}| = r$. Then, 
we have the following guarantees as an immediate corollary of Theorem \ref{thm:rankr}:

\begin{cor}
\label{cor:rankr}
    
    Consider the rank-$r$ setting with continuous $\mS$ and finite $\mA$.
    Suppose that we run  the RL algorithm (cf. Section \ref{sec:alg_generic}) with the Matrix Estimation method
described in Section \ref{sec:rankr}.
    If $\gamma \leq \frac{1}{2 c(r; \mS^{\sharp}, \mA^{\sharp} )}$, then the following holds.
    \begin{enumerate}
    \item
    For any $\delta > 0$, we have 
    \begin{equation*}
        \sup_{ (s,a) \in \mS \times \mA } \big| Q^{(t)}(s,a) - Q^*(s,a) \big|
            \leq \big(  2 c(r; \mS^{\sharp}, \mA^{\sharp} ) \gamma  \big)^t V_{\max},
             \quad\textrm{for all } t = 1, \ldots, T
    \end{equation*}
    with probability at least $1 - \delta$ by choosing algorithmic parameters $\beta^{(t)}, N^{(t)}$ appropriately.
    
    \item
Further, given  $\epsilon > 0$,
it suffices to set $T = \Theta( \log \frac{1}{\epsilon} )$ and use $\tilde{O}( \frac{1}{\epsilon^{d+2}} \cdot \log \frac{1}{\delta} )$ number of samples to achieve 
    \[
        \mathbb{P}\bigg(\sup_{(s,a) \in \mS \times \mA} \big| Q^{(T)}(s,a) - Q^*(s,a) \big| \leq \epsilon \bigg)
            \geq 1 - \delta.
    \]
    \end{enumerate}
\end{cor}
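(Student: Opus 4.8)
The plan is to obtain Corollary \ref{cor:rankr} as a direct specialization of Theorem \ref{thm:rankr}, observing that the only places where continuity of the spaces is used are the discretization and generalization steps (Steps 1 and 4 of Algorithm \ref{alg:generic}), and that a finite $\mA$ simply means we take $\mA^{(t)} = \mA$ for every $t$, incurring no approximation error along the action coordinate. For brevity, write $c := c(r; \mS^{\sharp}, \mA^{\sharp})$. First I would note that the matrix-estimation guarantee of Proposition \ref{prop:rankr_simplified} is purely linear-algebraic: it relies only on $\rank\big(Q^*(\mS^{\sharp}, \mA^{\sharp})\big) = r$, on $Q^*(\mS^{(t)}, \mA^{(t)})$ having rank at most $r$, and on the pseudoinverse reconstruction \eqref{eqn:ME_rankr.0}, none of which invokes the topology of $\mS$ or $\mA$. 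Hence the $(\C, \ceta)$-property continues to hold verbatim with $\C = r$ and $\ceta = c$, exactly as in the fully continuous setting.

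For the convergence claim (Part 1), I would re-run the inductive argument underlying Theorem \ref{thm:generic} and Theorem \ref{thm:rankr}. The matrix-estimation bound \eqref{eqn:generic_me} and the lookahead concentration bound \eqref{eqn:generic_mcla} are unaffected, while the interpolation bound \eqref{eqn:generic_interpol} only improves: with $\mA^{(t)} = \mA$, the nearest action to any $a \in \mA$ is $a$ itself, so $Q^{(t)}(s,a) = \bar{Q}^{(t)}(\hat{s}^{(t)}, a)$ and the Lipschitz property gives
\[
    \sup_{(s,a) \in \mS \times \mA} \big| Q^{(t)}(s,a) - Q^*(s,a) \big|
        \leq \max_{(s,a) \in \mS^{(t)} \times \mA} \big| \bar{Q}^{(t)}(s,a) - Q^*(s,a) \big| + L \beta^{(t)},
\]
where only the state net contributes the $L\beta^{(t)}$ term. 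Composing the three bounds yields the same recursion for $B^{(t)} := \sup_{(s,a)} | Q^{(t)}(s,a) - Q^*(s,a) |$, and the choices $\beta^{(t)} = \frac{V_{\max}}{8L}(2\gamma c)^t$ and $N^{(t)}$ as in \eqref{eqn:Nt} (with $\ceta = c$) deliver $B^{(t)} \leq (2c\gamma)^t V_{\max}$ by induction, provided $\gamma \leq \frac{1}{2c}$. As in Theorem \ref{thm:rankr}, one must also maintain the local validity range $\max_{(s,a) \in \Omega^{(t)}} | \hat{Q}^{(t)}(s,a) - Q^*(s,a) | \leq \frac{1}{2r}\sigma_r\big(Q^*(\mS^{\sharp}, \mA^{\sharp})\big)$ of Proposition \ref{prop:rankr_simplified}; this follows from precisely the same quadratic inequality \eqref{eqn:induction_1} verified there, which is insensitive to whether $\mA$ is finite.

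For the sample-complexity claim (Part 2), the only change from Theorem \ref{thm:rankr} is the accounting of $|\Omega^{(t)}|$. Since $\mA$ is finite, $|\mA^{(t)}| = |\mA|$ is a fixed constant and only the state net grows with $t$, namely $|\mS^{(t)}| = O\big((1/\beta^{(t)})^d\big)$. Consequently $|\Omega^{(t)}| = \C\big(|\mS^{(t)}| + |\mA|\big) = O\big((1/\beta^{(t)})^d\big)$, so $|\mA|$ is absorbed into the hidden constant. Retracing the computation \eqref{eqn:sc} with $\Te = \Theta(\log\frac{1}{\epsilon})$, $\beta^{(\Te)} = \Theta(\epsilon)$ and $|\mS^{(\Te)}| = O(1/\epsilon^d)$ gives total sample count $\sum_{t=1}^{\Te} |\Omega^{(t)}| N^{(t)} = \tilde{O}\big(\frac{1}{\epsilon^{d+2}} \log\frac{1}{\delta}\big)$, now with $d = d_1$ the state dimension alone.

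The exercise is genuinely routine, being a corollary, so there is no substantive obstacle; the only point requiring care is the bookkeeping. One must confirm that dropping the action discretization neither invalidates any of the three per-iteration bounds nor the local-range induction, and that the constant-sized $|\mA|$ drops out of the $\epsilon$-scaling, entering only the $\tilde{O}(\cdot)$ constant. This isolates the state dimension $d$ as the sole exponent in the sample complexity, matching the Cont.\ $\mS$ \& Finite $\mA$ entry of Table \ref{tab:one}.
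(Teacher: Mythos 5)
Your proposal is correct and follows exactly the route the paper intends: the paper states Corollary \ref{cor:rankr} without proof, asserting it is an immediate specialization of Theorem \ref{thm:rankr} obtained by setting $\mA^{(t)} = \mA$ (so only the state space is discretized and interpolated), and your argument simply fills in that bookkeeping — the topology-free ME guarantee, the improved interpolation term $L\beta^{(t)}$, the unchanged local-range induction, and the observation that the fixed $|\mA|$ is absorbed into the $\tilde{O}(\cdot)$ constant so the exponent becomes $d = d_1$.
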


\medskip\noindent
{\bf Finite $\mS$ and Finite $\mA$.} Since the spaces are discrete, we have an optimal $Q^*$ being a $|\mS|\times|\mA|$ matrix. For this special case, the algorithm simply skips the discretization (i.e., $\beta^{(t)}=0$) and generalization steps at each iteration. In other words, $\mS^{(t)}=\mS$ and $\mA^{(t)}=\mA$, for all $t=1,\dots, T$, and  $\Omega^{(t)} = \{ (s,a) \in {\mS} \times {\mA}: s \in \mS^{\sharp} \text{ or } a \in \mA^{\sharp} \}$. Suppose that the optimal matrix $Q^*(\mS,\mA)$ is rank-$r$ and let $|\mS^{\sharp}| = |\mA^{\sharp}| = r$. We then have the following guarantees:

\begin{cor}
\label{cor:rankr_finite}
    
    Consider finite $\mS$ and finite $\mA$ with the optimal matrix $Q^*(\mS,\mA)$ being rank-$r$.  Suppose that we run  the RL algorithm (cf. Section \ref{sec:alg_generic}) with the Matrix Estimation method
described in Section \ref{sec:rankr}.
    If $\gamma \leq \frac{1}{2 c(r; \mS^{\sharp}, \mA^{\sharp} )}$, then the following holds.
    \begin{enumerate}
    \item
    For any $\delta > 0$, we have 
    \begin{equation*}
        \sup_{ (s,a) \in \mS \times \mA } \big| Q^{(t)}(s,a) - Q^*(s,a) \big|
            \leq \big(  2 c(r; \mS^{\sharp}, \mA^{\sharp} ) \gamma  \big)^t V_{\max},
             \quad\textrm{for all } t = 1, \ldots, T
    \end{equation*}
    with probability at least $1 - \delta$ by choosing algorithmic parameters $\beta^{(t)}, N^{(t)}$ appropriately.
    
    \item
Further, given  $\epsilon > 0$, it suffices to set $T = \Theta( \log \frac{1}{\epsilon} )$ and use $\tilde{O}( \frac{\max(|\mS|,|\mA|)}{\epsilon^{2}} \cdot \log \frac{1}{\delta} )$ number of samples to achieve 
    \[
        \mathbb{P}\bigg(\sup_{(s,a) \in \mS \times \mA} \big| Q^{(T)}(s,a) - Q^*(s,a) \big| \leq \epsilon \bigg)
            \geq 1 - \delta.
    \]
    \end{enumerate}
\end{cor}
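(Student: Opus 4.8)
The plan is to obtain Corollary \ref{cor:rankr_finite} as a direct specialization of Theorem \ref{thm:rankr} to finite spaces, where the discretization and generalization steps degenerate. Concretely, since $\mS$ and $\mA$ are finite, I would set $\mS^{(t)} = \mS$ and $\mA^{(t)} = \mA$ for every $t$ (equivalently $\beta^{(t)} = 0$), take the generalization step (Step 4) of Algorithm \ref{alg:generic} to be the identity map, and fix anchor sets $\mS^{\sharp} \subset \mS$, $\mA^{\sharp} \subset \mA$ with $|\mS^{\sharp}| = |\mA^{\sharp}| = r$ and $\sigma_r(Q^*(\mS^{\sharp}, \mA^{\sharp})) > 0$. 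The exploration set becomes $\Omega^{(t)} = \{(s,a) \in \mS \times \mA : s \in \mS^{\sharp} \text{ or } a \in \mA^{\sharp}\}$, and the matrix to be recovered, $Q^*(\mS, \mA)$, is exactly the rank-$r$ target for which Proposition \ref{prop:rankr_simplified} applies with $\ceta = c(r; \mS^{\sharp}, \mA^{\sharp})$.

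For the convergence claim (Part 1), I would rerun verbatim the induction from the proof of Theorem \ref{thm:rankr}, tracking $B^{(t)} := \sup_{(s,a)} |Q^{(t)}(s,a) - Q^*(s,a)|$. The only structural change is that, with $\beta^{(t)} = 0$, the interpolation-error term $2L\beta^{(t)}$ in the bound \eqref{eqn:generic_interpol} vanishes outright, so combining it with \eqref{eqn:generic_me} and \eqref{eqn:generic_mcla} (Lemma \ref{lem:one_step_lookahead}) yields the simplified recursion $B^{(t)} \leq \gamma \ceta \big( B^{(t-1)} + \sqrt{\frac{2 V_{\max}^2}{N^{(t)}} \log(\frac{2|\Omega^{(t)}|T}{\delta})} \big)$. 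Choosing $N^{(t)}$ exactly as in \eqref{eqn:Nt} caps the sampling error at $\tfrac{V_{\max}}{2}(2\gamma \ceta)^{t-1}$, and the induction delivers $B^{(t)} \leq (2\gamma \ceta)^t V_{\max}$. As in Theorem \ref{thm:rankr}, I would simultaneously verify that the input error stays inside the validity window $\epsilon \leq \tfrac{1}{2r}\sigma_r(Q^*(\mS^{\sharp}, \mA^{\sharp}))$ required by Proposition \ref{prop:rankr_simplified}; this is where the hypothesis $\gamma \leq \tfrac{1}{2c(r; \mS^{\sharp}, \mA^{\sharp})}$ enters, and the verification mirrors \eqref{eqn:induction_1}.

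The sample-complexity claim (Part 2) is where the finite setting genuinely departs from the continuous one, and it amounts to recounting $\sum_{t=1}^{T} |\Omega^{(t)}| N^{(t)}$. With $|\mS^{\sharp}| = |\mA^{\sharp}| = r$ held fixed, the exploration set has size $|\Omega^{(t)}| \leq r(|\mS| + |\mA| - r) = O(\max\{|\mS|,|\mA|\})$, which is now a \emph{constant} in $\epsilon$ rather than the $\Theta(\epsilon^{-d})$ covering-number factor $|\mS^{(t)}|, |\mA^{(t)}| \asymp (1/\beta^{(t)})^d$ appearing in \eqref{eqn:sc}. Setting $T = \Te = \Theta(\log\tfrac{1}{\epsilon})$ so that $(2\gamma\ceta)^{\Te} \asymp \epsilon/V_{\max}$, the largest summand satisfies $N^{(\Te)} = O(V_{\max}^2 \epsilon^{-2} \log(\cdots))$; since both $|\Omega^{(t)}|$ and $N^{(t)}$ are nondecreasing in $t$, I would bound the sum by $\Te \cdot |\Omega^{(\Te)}| N^{(\Te)}$, giving $\tilde{O}\big( \tfrac{\max\{|\mS|,|\mA|\}}{\epsilon^{2}} \cdot \log\tfrac{1}{\delta} \big)$. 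Replacing the $\epsilon^{-d}$ factor by $\max\{|\mS|,|\mA|\}$ is precisely what drops the exponent from $d+2$ to $2$.

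I do not expect a deep obstacle here, since the result is a corollary by design; the only care needed is bookkeeping. The two points meriting attention are (i) confirming that the local validity condition of Proposition \ref{prop:rankr_simplified} is preserved at \emph{every} iteration through the induction, rather than merely assumed once, and (ii) carrying out the sample count cleanly so that the covering-number factor is correctly replaced by $\max\{|\mS|,|\mA|\}$ with no spurious $\epsilon$-dependence surviving inside the logarithm. Both are routine given the machinery already established for Theorems \ref{thm:generic} and \ref{thm:rankr}.
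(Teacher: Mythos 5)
Your proposal is correct and follows exactly the route the paper intends: the paper states Corollary \ref{cor:rankr_finite} without proof, noting only that one takes $\mS^{(t)}=\mS$, $\mA^{(t)}=\mA$ (i.e., $\beta^{(t)}=0$), skips the generalization step, and invokes Theorem \ref{thm:rankr}, which is precisely your specialization. Your filled-in bookkeeping — the vanishing $2L\beta^{(t)}$ term in the recursion, the per-iteration check of the validity window of Proposition \ref{prop:rankr_simplified}, and the recount $\sum_t |\Omega^{(t)}| N^{(t)} \leq T\cdot r(|\mS|+|\mA|)\cdot O(\epsilon^{-2}\log(\cdot))$ replacing the covering-number factor — matches the machinery of Theorems \ref{thm:generic} and \ref{thm:rankr} that the paper relies on implicitly.
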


\section{Proof of Theorem \ref{thm:apprx_r}}
\label{sec:apprx_r}

The proof of Proposition \ref{prop:apprx_r} is omitted due to its similarity to the proof of Proposition \ref{prop:rankr} with minor modifications.
\begin{proof}[Proof of Theorem \ref{thm:apprx_r}]
In this proof, we repeat the proof of Theorem \ref{thm:rankr} with necessary modifications. 
As before, we must ensure $\sup_{(s,a) \in \Omega^{(t)} } \big| \hat{Q}^{(t)}(s, a) - Q^*_r(s, a) \big| \leq \frac{1}{2\sqrt{|\mS^{\sharp}||\mA^{\sharp}|}} \sigma_r\big( Q^*_r(\mS^{\sharp}, \mA^{\sharp} ) \big)$ for all $t = 1, \ldots, T$ to use Proposition \ref{prop:apprx_r}. For that purpose, we make exactly the same choice of algorithmic parameters $\beta^{(t)}, N^{(t)}$ as in the proof of Theorem \ref{thm:generic}, i.e., we let
\begin{equation*}
    \beta^{(t)} = \frac{V_{\max}}{8L} ( 2 \gamma  \ceta)^t
    \quad\text{and}\quad
    N^{(t)} = \frac{ 8 }{ ( 2 \gamma  \ceta )^{2(t-1)} } \log\bigg(\frac{2 | \Omega^{(t)} | T}{\delta}\bigg)
\end{equation*}
with an adaptation $ \ceta = \phi_c(r; \mS^{\sharp}, \mA^{\sharp} )$ as suggested in Proposition \ref{prop:apprx_r}, cf. \eqref{eqn:phi_c}.
To complete the proof, it suffices to show that $ \sup_{(s,a) \in \Omega^{(t)} } \big| \hat{Q}^{(t)}(s, a) - Q^*_r(s, a) \big|  \leq \frac{1}{2\sqrt{|\mS^{\sharp}||\mA^{\sharp}|}} \sigma_r\big( Q^*_r(\mS^{\sharp}, \mA^{\sharp} ) \big)$ for all $t = 1, \ldots, T$.

In the rest of the proof, we prove the above claim by mathematical induction. For notational brevity, we use the shorthand notation $\sigma_r := \sigma_r\big( Q^*_r(\mS^{\sharp}, \mA^{\sharp} ) \big)$ and $c_r := \phi_c(r; \mS^{\sharp}, \mA^{\sharp})$.
For $t = 0, \ldots, T$, we let $B^{(t)} := \sup_{(s,a) \in \mS \times \mA} \big| Q^{(t)}(s,a) - Q^*_r(s,a) \big|$. We can see that if 
\[
    B^{(t-1)} \leq \big( 2 c_r \gamma \big)^{t-1} V_{\max} + (1 + c_r \gamma) \zeta_r \sum_{i=1}^{t-1} \big( c_r \gamma \big)^{i-1},
\] 
then with probability at least $1 - \frac{\delta}{T}$, the following two inequalities hold:
\begin{enumerate}
    \item 
    $\sup_{(s,a) \in \Omega^{(t)} } \big| \hat{Q}^{(t)}(s, a) - Q^*_r(s, a) \big| \leq \frac{1}{2\sqrt{|\mS^{\sharp}||\mA^{\sharp}|}} \sigma_r$, and
    \item
    $B^{(t)} \leq \big( 2 c_r \gamma \big)^{t} V_{\max} + (1 + c_r \gamma) \zeta_r \sum_{i=1}^{t} \big( c_r \gamma \big)^{i-1}$.
\end{enumerate}

\begin{itemize}
    \item 
    To prove the first claim, we apply Lemma \ref{lem:one_step_lookahead} (see also \eqref{eqn:generic_mcla}) and observe that 
    with probability at least $1 - \frac{\delta}{T}$,
    \begin{align*}
        \sup_{(s,a) \in \Omega^{(t)} } \big| \hat{Q}^{(t)}(s, a) - Q^*(s, a) \big|
            &\leq \gamma \left( B^{(t-1)} + \sqrt{ \frac{2 V_{\max}^2}{N^{(t)}} \log\bigg(\frac{2 | \Omega^{(t)} | T}{\delta}\bigg)} \right)\\
            &\leq \frac{3\gamma}{2} ( 2 c_r \gamma )^{t-1} V_{\max} + \gamma (1 + c_r \gamma) \zeta_r \sum_{i=1}^{t-1} (c_r \gamma )^{i-1}\\
            &\leq \frac{3\gamma}{2} V_{\max} + \frac{(1+c_r \gamma)\gamma}{1- c_r \gamma} \zeta_r.
    \end{align*}
    Since $\sup_{(s,a) \in \Omega^{(t)} } \big| \hat{Q}^{(t)}(s, a) - Q^*_r(s, a) \big| \leq \sup_{(s,a) \in \Omega^{(t)} } \big| \hat{Q}^{(t)}(s, a) - Q^*(s, a) \big| + \zeta_r$, it suffices to show that 
    \begin{equation*}
        \frac{3\gamma}{2} V_{\max} + \Big( \frac{(1+c_r \gamma)}{1- c_r \gamma}\gamma + 1 \Big) \zeta_r
            \leq \frac{1}{2\sqrt{|\mS^{\sharp}||\mA^{\sharp}|}} \sigma_r.
    \end{equation*}
    Since $\gamma \leq \frac{1}{2 c_r} \leq 1$, the above inequality is satisfied if the following inequality is true:
    \begin{equation}\label{eqn:approx_induction_1}
        \frac{3}{4c_r}\Big( V_{\max} + 2 \zeta_r \Big) + \zeta_r \leq \frac{\sigma_r}{2\sqrt{|\mS^{\sharp}||\mA^{\sharp}|}}
    \end{equation}
    
    We observe that $\sigma_r > 0$ by definition of $r$-anchor states/actions, cf. Definition \ref{defn:r_anchor}. Also, recall from \eqref{eqn:phi_c} that 
    \begin{align*}
        c_r &= \bigg( 6\sqrt{2} \frac{ \sqrt{|\mS^{\sharp}||\mA^{\sharp}|}}{\sigma_r} + 2(1+\sqrt{5}) \Big(             \frac{\sqrt{|\mS^{\sharp}||\mA^{\sharp}|}}{\sigma_r } \Big)^2 \bigg) V_{\max} \\
            &\geq 6 \frac{ \sqrt{|\mS^{\sharp}||\mA^{\sharp}|}}{\sigma_r} \bigg( 1 + \frac{ \sqrt{|\mS^{\sharp}||\mA^{\sharp}|}}{\sigma_r} \bigg) V_{\max}\\
            &> 0.
    \end{align*} 
    Therefore, \eqref{eqn:approx_induction_1} is satisfied if
    \[
        1 + 2 \frac{\zeta_r}{V_{\max}} \leq \bigg( \frac{1}{2}\frac{\sigma_r}{\sqrt{|\mS^{\sharp}||\mA^{\sharp}|}} - \zeta_r \bigg) 8 \frac{ \sqrt{|\mS^{\sharp}||\mA^{\sharp}|}}{\sigma_r} \bigg( 1 + \frac{ \sqrt{|\mS^{\sharp}||\mA^{\sharp}|}}{\sigma_r} \bigg).
    \]
    We introduce a variable $X = \frac{\sqrt{|\mS^{\sharp}||\mA^{\sharp}|}}{\sigma_r}$ (which is always positive) to rewrite this inequality as
    \begin{equation*}
        8 \zeta_r X^2 + (8\zeta_r - 4 ) X - 3 + 2\frac{\zeta_r}{V_{\max}} \leq 0.
    \end{equation*}
    
    It is easy to see that the above inequality is satisfied if
    \[
        X_- \leq X \leq X_+
    \]
    where $X_{\pm} = \frac{-(4\zeta_r - 2) \pm \sqrt{ (4\zeta_r - 2)^2 + 8\zeta_r ( 3 - 2\frac{\zeta_r}{V_{\max}} ) }}{8\zeta_r}$. As $X_- < 0$ and $X > 0$, we can conclude that \eqref{eqn:approx_induction_1} is satisfied if $X \leq X_+$, which is equivalent to the condition $0 \leq \zeta_r \leq \frac{4X+3}{8 X^2 + 8X + \frac{2}{V_{\max}}}$. 
    By assumption, $\zeta_r \leq \frac{1}{2X + 1 + \frac{1}{V_{\max}}}$ and therefore, $\frac{4X + 3 }{\zeta_r} \geq (4X+3)(2X + 1 + \frac{1}{V_{\max}}) = 8X^2 + (10 + \frac{4}{V_{\max}})X + \frac{3}{V_{\max}} + 3 \geq 8 X^2 + 8X + \frac{2}{V_{\max}}$. Thus, $0 \leq \zeta_r \leq \frac{4X+3}{8 X^2 + 8X + \frac{2}{V_{\max}}}$ is satisfied. Consequently, \eqref{eqn:approx_induction_1} is also satisfied, and the first inequality is proved.

    \item
    To prove the second claim, We revisit Eqs. \eqref{eqn:generic_interpol}, \eqref{eqn:generic_me}, \eqref{eqn:generic_mcla} in the proof of Theorem \ref{thm:generic}.
    Note that nothing has changed for Step 4 (interpolation) and we obtain the same upper bound as in \eqref{eqn:generic_interpol}:
    \begin{equation}\label{eqn:generic_interpol_approx}
        \sup_{ (s,a) \in \mS \times \mA } \big| Q^{(t)}(s,a) - Q^*(s,a) \big|
            \leq \max_{(s,a) \in \mS^{(t)} \times \mA^{(t)}} \big| \bar{Q}^{(t)}(s, a) - Q^*(s, a) \big| 
            + 2 L \beta^{(t)}.
    \end{equation}

    For Step 3 (matrix completion), it follows from Proposition \ref{prop:apprx_r}:
    \begin{equation}\label{eqn:generic_me_approx}
        \max_{(s,a) \in \mS^{(t)} \times \mA^{(t)}} \big| \bar{Q}^{(t)}(s, a) - Q^*(s, a) \big| 
            \leq c_r \Big( \max_{(s,a) \in \Omega^{(t)} } \big| \hat{Q}^{(t)}(s, a) - Q^*(s, a) \big| + \zeta_r \Big) + \zeta_r.
    \end{equation}
    The above inequality follows from the observation that
    \begin{align*}
        \max_{(s,a) \in \Omega^{(t)} } \big| \hat{Q}^{(t)}(s, a) - Q^*_r(s, a) \big|
            &\leq \max_{(s,a) \in \Omega^{(t)} } \big| \hat{Q}^{(t)}(s, a) - Q^*(s, a) \big| + \zeta_r,\\
        \max_{(s,a) \in \mS^{(t)} \times \mA^{(t)}} \big| \bar{Q}^{(t)}(s, a) - Q^*(s, a) \big|
            &\leq \max_{(s,a) \in \mS^{(t)} \times \mA^{(t)}} \big| \bar{Q}^{(t)}(s, a) - Q^*_{{r}}(s, a) \big| + \zeta_r.
    \end{align*}

    Lastly, applying Lemma \ref{lem:one_step_lookahead} and taking union bound over $(s,a) \in \Omega^{(t)}$, we can show that
    \begin{equation}\label{eqn:generic_mcla_approx}
        \max_{(s,a) \in \Omega^{(t)} } \big| \hat{Q}^{(t)}(s, a) - Q^*(s, a) \big|
            \leq \gamma \left( B^{(t-1)} + \sqrt{ \frac{2 V_{\max}^2}{N^{(t)}} \log\bigg(\frac{2 | \Omega^{(t)} | T}{\delta}\bigg)} \right)
    \end{equation}
    with probability at least $1-\frac{\delta}{T}$.
    
    Combining Eqs. \eqref{eqn:generic_interpol_approx}, \eqref{eqn:generic_me_approx}, \eqref{eqn:generic_mcla_approx} yields that the following holds with probability at least $1-\frac{\delta}{T}$
    \begin{align*}
        B^{(t)}
            &\leq c_r \Bigg\{ \gamma \left( B^{(t-1)} + \sqrt{ \frac{2 V_{\max}^2}{N^{(t)}} \log\bigg(\frac{2 | \Omega^{(t)} | T}{\delta}\bigg)} \right) + \zeta_r \Bigg\} + \zeta_r + 2L \beta^{(t)}\\
            &= c_r \gamma B^{(t-1)} + c_r \gamma (2 c_r \gamma )^{t-1} V_{\max} + ( c_r \gamma + 1) \zeta_r\\
            &\leq c_r \gamma \bigg\{ \big( 2 c_r \gamma \big)^{t-1} V_{\max} + (1 + c_r \gamma ) \zeta_r \sum_{i=1}^{t-1} \big( c_r \gamma \big)^{i-1} \bigg\} + c_r \gamma (2 c_r \gamma )^{t-1} V_{\max} + ( c_r \gamma + 1) \zeta_r\\
            &= \big(2 c_r \gamma \big)^t V_{\max} + (c_r \gamma + 1) \zeta_r \sum_{i=1}^{t} \big( c_r \gamma \big)^{i-1}.
    \end{align*}
\end{itemize}

It remains to certify that for $t = 0, \ldots, T-1$,
\[
    B^{(t)} \leq \big( 2 c_r \gamma \big)^{t} V_{\max} + (1 + c_r \gamma) \zeta_r \sum_{i=1}^{t} \big( c_r \gamma \big)^{i-1},
\] 
First of all, $Q^{(0)}(s,a) \equiv 0$ by assumption, and hence, $B^{(0)} \leq V_{\max}$. Thus, by the second inequality, this condition is satisfied for all $t = 1, \ldots, T-1$ and the proof is complete.
\end{proof}

\section{Additional Discussions on RL and ME}\label{sec:discussion}
\subsection{Reinforcement Learning}
Our work is motivated by the need to improve efficiency of RL algorithms for problems with continuous state and action space, where literature results are scarce. As a byproduct of our analysis, the resulting ``low-rank'' algorithm can also be reduced to settings where one of the spaces is finite or both. We offer a high-level comparison in Table~\ref{tab:one} with a few selected work from literature to help readers see how our approach fares with others from literature. This is by no means a complete illustration, given the vast literature on the finite settings. 

We remark that Table~\ref{tab:one} is not aimed at a strict comparison on sample complexity since each work focuses on different problem settings. Rather, we intend to convey a rough sense of how our efficient algorithm performs in the setting with finite spaces, and especially what we gain in sample complexity with exploiting low-rank structure. In continuous state and action, our algorithm effectively removes the dependence on the smaller dimension by leveraging the low-rank factorization. The same heuristic in fact carries over to the finite cases, where the dependence on the size of smaller space is ``removed,'' i.e., the sample complexity depends on $|\mS|$ instead of $|\mS|| \mA|$, assuming $|\mS| \geq |\mA|$. That is, exploitation of low-rank structure consistently benefits the sample complexity of our method in the same manner for all three settings.

In Table~\ref{tab:one}, we include two work per setting selected from literature (except the setting with continuous $\mS$ \& continuous $\mA$ where we were not able to find an appropriate work to compare with).
This is because there are extremely various problem settings considered in literature, which involve different technical conditions, partly due to the long history involving finite spaces. For example, between the two work selected for continuous $\mS$ and finite $\mA$, \cite{shah2018q} considers learning the $Q$-function in a single sample path, whereas \cite{yang2019theoretical} considers learning the $Q$-function with sparse neural networks when re-sampling i.i.d. transitions is possible. Learning from a single sample path is harder than the other setup, and hence, leads to a sample complexity of $\tilde{O}(\frac{1}{\epsilon^{d+3}})$

For problems with finite $\mS$ and finite $\mA$, there has been a great effort in learning an $\epsilon$-optimal policy instead of just learning an $\epsilon$-optimal value function. In this context, a line of work~\cite{sidford2018near, sidford2018variance} attempted to improve the dependence on the term $1/({1-\gamma})$ in sample complexity and recently this question is addressed in~\cite{sidford2018near} by achieving an {\small $\tilde{O}(\frac{|\mS||\mA|}{(1-\gamma)^3\epsilon^2})$}~ upper bound that matches the lower bound from \cite{azar2013minimax}. 
Regardless, traditional results on learning $\epsilon$-optimal policy/value commonly scale as the product $|\mS||\mA|$. The main message we want to convey with Table~\ref{tab:one} in the setting is that the dependence of sample complexity on the size of state/action space can be significantly improved from $|\mS||\mA|$ to $\max\{|\mS|,|\mA|\}$ by exploiting the low-rank structure of $Q$-function.

\subsection{Matrix Estimation}
\label{app:subsec_me_discussion}
We analyze the performance of our proposed algorithm in a decoupled fashion, controlling the worst-case error (for a high-probability event w.r.t. the randomness in sampling step). For the success of our analysis, it is imperative for the matrix estimation subroutine to satisfy Assumption \ref{assu:me} with the two constants $\C, \ceta$ as small as possible. The assumption ensures that the matrix estimation method in use does not amplify the $\ell_{\infty}$ error too wildly.

\medskip\noindent
{\bf Why Existing Methods Fail.}
Matrix estimation has been a popular topic of active research for the last few decades, which culminated in the low-rank matrix completion via convex relaxation of rank minimization \cite{recht2010guaranteed, candes2009exact, candes2010power}. Also, various algorithms for matrix completion/estimation -- including singular value thresholding \cite{keshavan2010matrix, chatterjee2015matrix} and nuclear-norm regularization \cite{candes2009exact, candes2010power, koltchinskii2011nuclear} -- have been proposed and analyzed with provable guarantees. Despite the huge success in both theory and practice, the available analysis for those existing methods only provides a handle on the error measured in Frobenius norm and a few other limited class of norms (Schatten norms, regularizing norm and its dual, etc.) under certain circumstances (Chapters 9-10, \cite{wainwright2019high}). In particular, there are no satisfactory results so far that provide a control on the $\ell_{\infty}$ error of matrix estimation, to the best of our knowledge. 

Recently, the convergence guarantees for the so-called Burer-Monteiro approach, which takes low-rank factor matrices as decision variables (also commonly referred to as ``nonconvex optimization’' in literature), has been actively studied in pursuit of developing a computationally more efficient alternative of convex program-based approaches \cite{ge2016matrix, chen2019noisy}. For example, \cite{chen2019noisy} provides an $\ell_{\infty}$ guarantee under certain setup. However, they assumed i.i.d. zero-mean noise and requires a proper initialization at the ground truth (for analysis).
As a result, we were not able to use existing ME methods and their analysis in this work.

We do not believe this is an algorithmic failure of the ME methods, but it is rather a limitation stemming from the disparity between the traditional analysis in ME and the needs in RL application. For example, considering the error in Frobenius norm is natural in the ME tradition for several reasons, but that analysis is not sufficient for applications where entrywise error is more important. Moreover, it seems manageable, but is not straightforward at once how the mathematical conditions for matrix recovery in ME literature will translate in the context of reinforcement learning. For example, the finite-dimensional incoherence condition between the principal subspaces and the measurement in matrix estimation could translate to a similar infinite-dimensional version of incoherence condition, but some efforts would be needed to reforge existing ME analysis to fit in RL applications seamlessly.

\medskip\noindent
{\bf Why Our ME Method Works.}
Instead, we develop an alternative matrix estimation subroutine, which is simple, yet sufficiently powerful for our RL task, thereby enabling us to achieve the ultimate conclusion for the RL problem of interest. 
The proposed method is amenable to $\ell_{\infty}$-error analysis facilitated by matrix algebra (see Proposition \ref{prop:rankr} and its proof). 
At first glance, our proposal seems extremely simple, and one might doubt its efficacy, worrying about its numerical stability, etc. because it involves the pseudoinverse of a matrix. 
That concern is partly true, but indeed, there are two key factors that make our method work for the problem of our interest. 

First, we assume the existence of ``anchor'' states and actions, which contain all necessary information for the global recovery of $Q^*$.  
From a theoretical point of view, this assumption is related to the eigengap condition and the incoherence condition between eigenspace and the sampling operator, which are commonly assumed in existing ME literature.
From a  practical perspective, this means the existence of faithful representatives that reflect the ``diversity’' of states and actions, which is the case in many real-world applications.

Second, we are not only passively fed with data, but can actively decide which data to collect. Note that our algorithm requires full measurement for the two cylinders (rectangles when represented as a matrix) corresponding to the anchor states and anchor actions without any missing values in them. This is feasible by adaptive sampling, which is not achievable by random sampling. As a byproduct, active sampling allows us to get rid of the spurious log term that appears in sample complexity of existing ME methods as a result of random sampling.

All in all, our ME method is expected to perform reasonably well in the setup considered in this work. 
We confirm this is the case with experiments (see Section \ref{sec:empirical} and Appendix \ref{appendix:control setup}).

\medskip\noindent
{\bf Open Questions for Future Work.}
We have seen that the proposed ME method is successful in the extremely sample deficient setting where $|\Omega^{(t)}| \asymp \max\{|\mS^{(t)}|, |\mA^{(t)}|\}$. 
However, it seems other existing ME methods based on convex programs also work similarly well, which cannot be expained with the current analysis. 

As a matter of fact, when the computation cost is ignored, convex-relaxation-based approach is widely accepted as the best one in terms of robustness. 
This is glimpsed by the evolution of $\ell_{\infty}$ error in our experiments; unlike the fluctuations observed in our method and soft-impute, 
the error steadily decreases for the nuclear norm minimization. Also, we believe existing ME methods can perform better as $|\Omega^{(t)}|$ becomes larger. We have observed that our simple ME method is most efficient in the sample-deficient setting where $|\Omega^{(t)}| \asymp \max\{|\mS^{(t)}|, |\mA^{(t)}|\}$, but we do not know if the same conclusion will still hold as $|\Omega^{(t)}|$ increases. 

Therefore, how to harmonize existing ME methods and the low-rank RL task we consider in this paper would be an exciting open question. 
This question might be tackled either by devising new proof techniques to obtain stronger error guarantees for existing ME methods 
or by improving our decoupled error analysis for RL iteration developed in this paper. We believe both directions are promising and 
it would be a valuable contribution to make progress in either direction.

\section{Experimental Setup for Stochastic Control Tasks}
\label{appendix:control setup}

In this section, we formalize the detailed settings for several stochastic control tasks we use. Following previous work~\cite{russ2019,yang2020harnessing}, we briefly introduce the background for each task, and then present the system dynamics as well as our simulation setting.
For consistency, we follow the dynamics setup in \cite{russ2019,yang2020harnessing}, while adding additionally a noise term $\mathcal{N}$ to one dimension of the state dynamics.

\medskip\noindent
{\bf General Setup.} 
We first discretize the state space and the the action space into a fine grid and run standard value iteration to obtain a proxy of $Q^*$. Subsequently, when measuring the $\ell_\infty$ error, we take the max (absolute) difference between our estimate $Q^{(t)}$ and the proxy of $Q^*$ over this fine grid. For the mean error, we use the average of the (absolute) difference over this grid. For anchor states and actions, we simply select $r$ states and $r$ actions that are well separated in their respective space. To do so, we divide the space uniformly into $r$ parts and then select a state/action from each part randomly.
We use $r=10$ in all experiments. In terms of the comparison with different Matrix Estimation methods, we note that as mentioned, the sampling procedure is different: traditional methods often work by independently sampling each entry with some fixed probability $p$, while our method explores a few entire rows and columns. We hence control all the ME methods to have the same number of observations (i.e., same size of the exploration set $\Omega^{(t)}$ as ours) at each iteration, but switch to independent sampling for the traditional methods.

\medskip\noindent
{\bf Inverted Pendulum.}
In this control task, we aim to balance an inverted pendulum on the equilibrium position, i.e., the upright position \cite{russ2019}.
The angle and the angular speed tuple, $(\theta, \dot{\theta})$, describes the system dynamics, which is formulated as follows \cite{sutton2018reinforcement}:
\begin{align*}
    & \theta := \theta + \dot{\theta}~\tau,\\
    & \dot{\theta} := \dot{\theta} + \left( \sin{\theta} - \dot{\theta} + u \right)\tau + \mathcal{N}(\mu,\sigma^2),
\end{align*}
where $\tau$ is the time interval between decisions, $u$ denotes the input torque on the pendulum, and $\mathcal{N}$ refers to the noise term we added with mean $\mu$ and variance $\sigma$. We formulate the reward function to stabilize the pendulum on an upright pendulum:
\begin{equation*}
    r(\theta,u) = - 0.1u^2 + \exp{\left(\cos{\theta}-1 \right)}.
\end{equation*}
In the simulation, 
we limit the input torque in $[-1,1]$ and set $\tau=0.3$, $\mu=0$, and $\sigma=0.1$.
We discretize each dimension of the state space into 50 values, and action space into 1000 values, which forms the discretization of the optimal $Q$-value matrix to be of dimension $2500\times 1000$. 

\medskip\noindent
{\bf Mountain Car.}
The Mountain Car problem aims to drive an under-powered car up to a hill~\cite{sutton2018reinforcement}. We use the position and the velocity of the car, $\left(x, \dot{x}\right)$, to describe the physical dynamics of the system. Denote $\mathcal{N}$ as the noise term added, $u$ as the acceleration input on the car, we can express the system dynamics as
\begin{align*}
    & x := x + \dot{x} + \mathcal{N}(\mu,\sigma^2),\\
    & \dot{x} := \dot{x} - 0.0025\cos{(3x)} + 0.001u.
\end{align*}
We define a reward function that encourages the car to drive up to the top of the hill at $x_0=0.5$:
\begin{equation*}
r(x) = \left\{
\begin{aligned}
10, & \qquad x\ge x_0,\\
-1, & \qquad \text{else}.
\end{aligned}
\right.
\end{equation*}
We follow standard settings \cite{yang2020harnessing} to limit the input $u\in[-1,1]$. We choose $\mu=0$ and $\sigma=1e^{-3}$. Similarly, the whole state space is discretized into 2500 values, and the action space is discretized into 1000 values, which translates to a discretization of $2500\times 1000$ for the optimal $Q$-value matrix. 

\medskip\noindent
{\bf Double Integrator.}
We consider the Double Integrator system~\cite{ren2008consensus}, where a unit mass brick moves along the $x$-axis on a frictionless surface. The brick is controlled with a horizontal force input $u$, which aims to regulate the brick to $\bm{x}=[0,0]^T$~\cite{russ2019}. Similarly, we use the position and the velocity $\left(x, \dot{x}\right)$ of the brick to describe the physical dynamics:
\begin{align*}
    & x := x + \dot{x}~\tau + \mathcal{N}(\mu,\sigma^2),\\
    & \dot{x} := \dot{x} + u~\tau,
\end{align*}
where $\mathcal{N}$ is the noise term added. Following~\cite{russ2019}, we define the reward function using the quadratic cost formulation, which regulates the brick to $\bm{x}=[0,0]^T$:
\begin{equation*}
r(x, \dot{x}) = - \frac{1}{2} \left( x^2 + \dot{x}^2 \right).
\end{equation*}
The input torque is limited to be $u\in[-1,1]$. We again set $\tau=0.1$, $\mu=0$, and $\sigma=0.1$. Similar to the previous tasks, we obtain a discretization of $2500\times 1000$ for the optimal $Q$-value matrix, with state space discretized into 2500 values and action space discretized into 1000 values.

\medskip\noindent
{\bf Cart-Pole.}
Despite simple tasks with smaller state dimensions, we consider the harder Cart-Pole problem with $4$-dimensional state space~\cite{barto1983neuronlike}. The problem consists a pole attached to a cart moving on a frictionless track, aiming to stabilize the pole at the upright stable position. The cart is controlled by a limited force that can be applied to both sides of the cart.
To describe the physical dynamics of the Cart-Pole system, we use a 4-element tuple $(\theta, \dot{\theta}, x, \dot{x})$, corresponding to the angle and the angular speed of the pole, and the position and the speed of the cart. The dynamics can be expressed as follows:
\begin{align*}
    & \ddot{\theta} := \frac{g\sin{\theta} - \frac{u+ml \dot{\theta}^2 \sin{\theta}}{m_c+m} \cos{\theta}}{l\left( \frac{4}{3} - \frac{m\cos^2{\theta}}{m_c+m} \right)},\\
    & \ddot{x} := \frac{u + ml\left( \dot{\theta}^2 \sin{\theta} - \ddot{\theta}\cos{\theta}\right) }{m_c+m},\\
    & \theta := \theta + \dot{\theta}~\tau,\\
    & \dot{\theta} := \dot{\theta} + \ddot{\theta}~\tau + \mathcal{N}(\mu,\sigma^2),\\
    & x := x + \dot{x}~\tau,\\
    & \dot{x} := \dot{x} + \ddot{x}~\tau,
\end{align*}
where $u\in[-10,10]$ denotes the input applied to the cart, $\mathcal{N}$ with $\mu=0$ nad $\sigma=0.1$ denotes the noise term, $m_c = 1kg$ denotes the mass of the cart, $m = 0.1kg$ denotes the mass of the pole, and $g=9.8m/s^2$ corresponds to the gravity acceleration.

We define the reward function similar to Inverted Pendulum that tries to stabilize the pole in the upright position:
\begin{equation*}
    r(\theta) = \cos^4{(15 \theta)}.
\end{equation*}
In the simulation, we discretize each dimension of the state space into 10 values, and action space into 1000 values, which forms an optimal $Q$-value function as a matrix of dimension $10000\times 1000$.

\medskip\noindent
{\bf Acrobot.}
Finally, we present the Acrobot swinging up task \cite{russ2019}.
The Acrobot is an underactuated two-link robotic arm in the vertical plane (i.e., a two-link pendulum), with only an actuator on the second joint.
The goal is to stabilize the Acrobot at the upright position.
The equations of motion for the Acrobot can be derived using the method of Lagrange~\cite{russ2019}. 
The physical dynamics of the system is described by the angle and the angular speed of both links, i.e., $(\theta_1, \dot{\theta_1}, \theta_2, \dot{\theta_2})$. Denote $\tau$ as the time interval, $u$ as the input force on the second joint, $\mathcal{N}$ as the noise term added, the dynamics of Acrobot can be derived as
\begin{align*}
    & D_1 := m_1 \left(l_{1}^2 + l_{c1}^2 \right) + m_2\left(l_1^2 + l_{2}^2 + l_{c2}^2 + 2 l_1 l_{c2}\cos{\theta_2}\right),\\
    & D_2 := m_2 \left(l_{2}^2 + l_{c2}^2 + l_1  l_{c2} \cos{\theta_2}\right),\\
    & \phi_2 := m_2 l_{c2}g \sin{\left(\theta_1 + \theta_2\right)},\\
    & \phi_1 := - m_2 l_1 l_{c2} \dot{\theta_2} \left(\dot{\theta_2} + 2\dot{\theta_1} \right) \sin{\theta_2} + \left(m_1 l_{c1} + m_2 l_1\right) g \sin{\theta_1} + \phi_2,\\
    & \ddot{\theta_2} := \frac{u + \frac{D_2}{D_1}\phi_1 - m_2 l_1 l_{c2} \dot{\theta_1}^2\sin{\theta_2} - \phi_2}{m_2 (l_2^2 + l_{c2}^2) - \frac{D_2^2}{D_1}},\\
    & \ddot{\theta_1} := -\frac{D_2 \ddot{\theta_2} + \phi_1}{D_1},\\
    & \theta_1 := \theta_1 + \dot{\theta_1}~\tau,\\
    & \dot{\theta_1} := \dot{\theta_1} + \ddot{\theta_1}~\tau + \mathcal{N}(\mu,\sigma^2),\\
    & \theta_2 := \theta_2 + \dot{\theta_2}~\tau,\\
    & \dot{\theta_2} := \dot{\theta_2} + \ddot{\theta_2}~\tau,
\end{align*}
where $l_1=l_2=1m$ are the length of two links, $l_{c1}=l_{c2}=0.5m$ denote position of the center of mass of both links, $m_{1}=m_{2}=1kg$ denote the mass of two links, and $g=9.8m/s^2$ denotes the gravity acceleration. $u$ corresponds to the input force applied, which is limited by $u\in[-10,10]$.

Similar to the Inverted Pendulum, we define the reward function that favors the Acrobot to stabilize at the upright unstable fixed point $\bm{x}=[\pi,0,0,0]^T$:
\begin{equation*}
    r(\bm{\theta},u) = \exp{\left(-\cos{\theta_1}-1\right)} + \exp{\left(-\cos{(\theta_1 + \theta_2)}-1 \right)}.
\end{equation*}
Since the state space of Acrobot is also 4-dimensional, we again discretize each dimension of the state space into 10 values, and action space into 1000 values, which forms discretization of the optimal $Q$-value matrix to be of dimension $10000\times 1000$.

\section{Additional Results on Stochastic Control Tasks}
\label{appendix:control results}
In this section, we provide additional results on all the 5 tasks. These include plots for sample complexity, error guarantees and visualization of the learned policies. 

\medskip\noindent
{\bf Summary of Empirical Results.} We remark that the conclusion remains the same as in the main paper (cf. Section \ref{sec:empirical}). Using our low-rank algorithm with the proposed ME method, the sample complexity is significantly improved as compared with the baselines. For the error guarantees, our ME method is very competitive, both in $\ell_\infty$ and mean error. We again note that our simple method is much more efficient in terms of computational complexity, compared to other ME methods based on optimizations.
Finally, the visualization of policies demonstrates that the learned policy, obtained from the output $Q^{(T)}$ is often very close to the policy obtained from $Q^*$, and this leads to the desired behavior in terms of performance metrics, as summarized in Table~\ref{tab:metric} of the main paper.
Overall, these consistent results across various stochastic control tasks confirm the efficacy of our generic low-rank algorithm.

\subsection{Inverted Pendulum}
{\bf Sample Complexity and Error Guarantees.} Repeated from the main paper for completeness.
\begin{figure}[H]
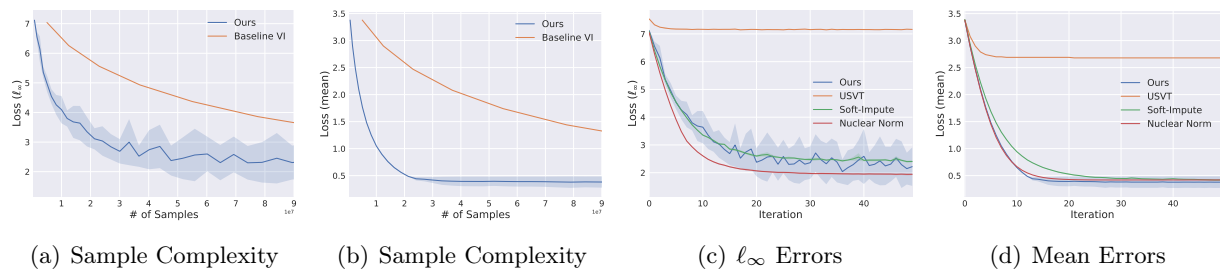

\vspace{-0.16in}
\centering
\subfigure[Sample Complexity]{
    \includegraphics[width=0.24\textwidth]{figures/ip_gamma_9_l_inf_complexity.pdf}
}
\hspace{-2.3ex}
\subfigure[Sample Complexity]{
    \includegraphics[width=0.24\textwidth]{figures/ip_gamma_9_mean_complexity.pdf}
}
\hspace{-2.3ex}
\subfigure[$\ell_{\infty}$ Errors]{
    \includegraphics[width=0.24\textwidth]{figures/ip_gamma_9_l_inf.pdf}
}
\hspace{-2.3ex}
\subfigure[Mean Errors]{
    \includegraphics[width=0.24\textwidth]{figures/ip_gamma_9_mean.pdf}
}
\vspace{-0.3cm}
\caption{ Empirical results on the Inverted Pendulum  control task. In (a) and (b), we show the improved sample complexity for achieving different levels of $\ell_\infty$ error and mean error, respectively. In (c) and (d), we compare the $\ell_\infty$ error and the mean error for various ME methods. Results are averaged across 5 runs for each method.
}
\vspace{-0.3cm}
\end{figure}

\medskip\noindent
{\bf Policy Visualization.}
\begin{figure}[H]
\vspace{-0.16in}
\centering
\subfigure[Optimal Policy]{
    \label{fig:policy_ip_optimal}
    \includegraphics[height=0.22\textwidth]{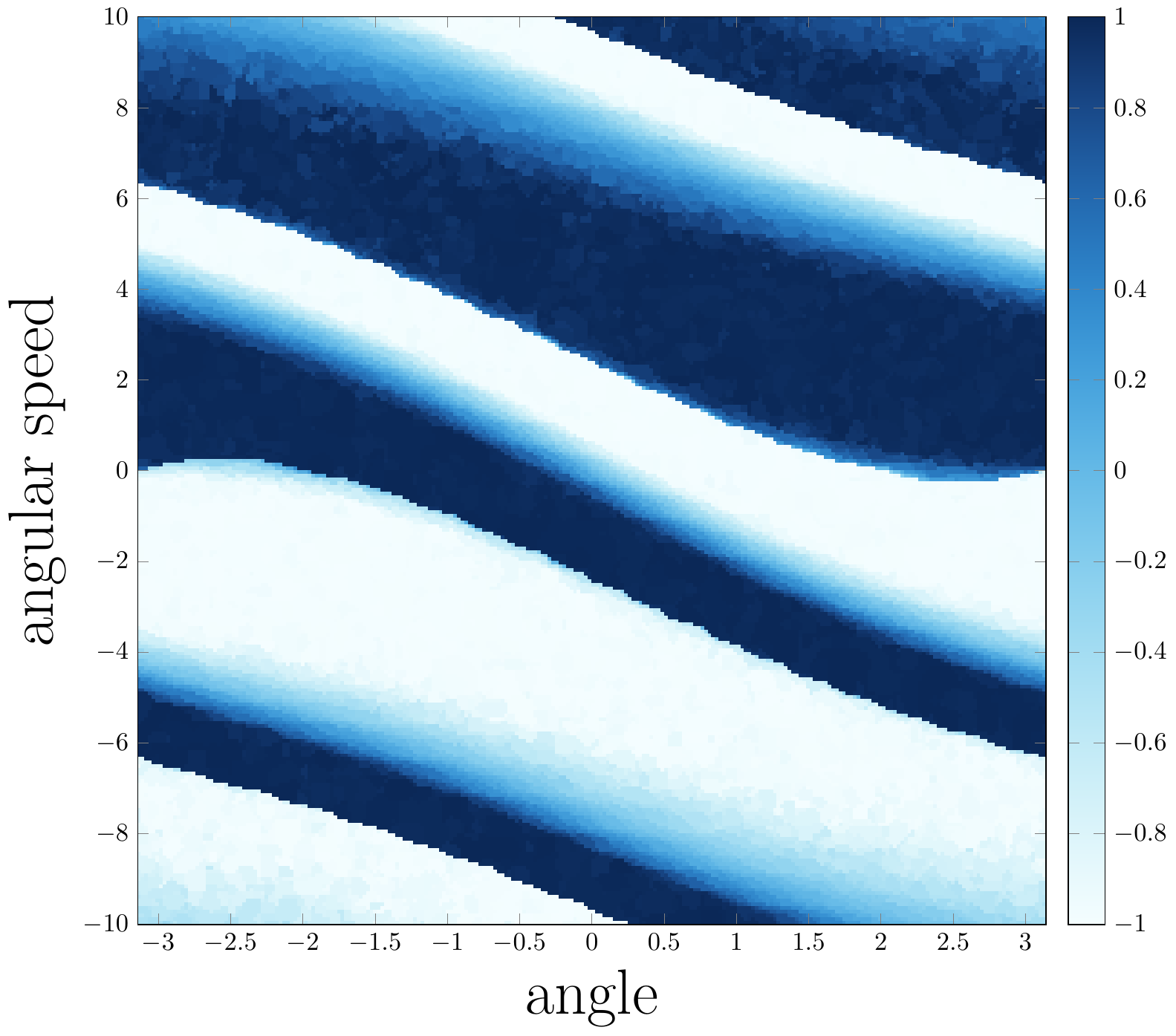}
}
\hspace{-2.48ex}
\subfigure[Soft-Impute]{
    \label{fig:policy_ip_softimp}
    \includegraphics[trim={27 0 0 0},clip,height=0.22\textwidth]{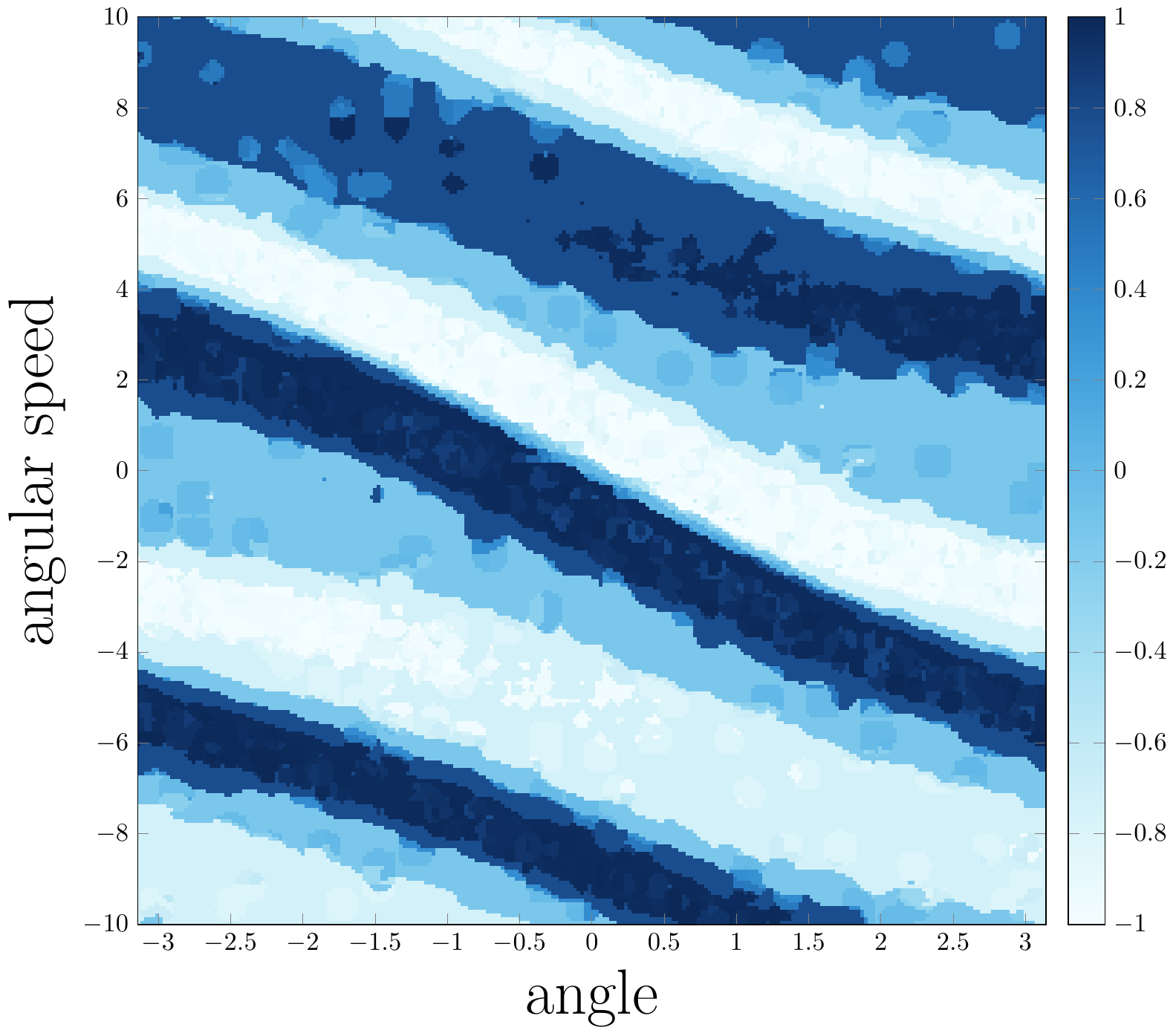}
}
\hspace{-2.48ex}
\subfigure[Nuclear Norm]{
    \label{fig:policy_ip_nucnorm}
    \includegraphics[trim={27 0 0 0},clip,height=0.22\textwidth]{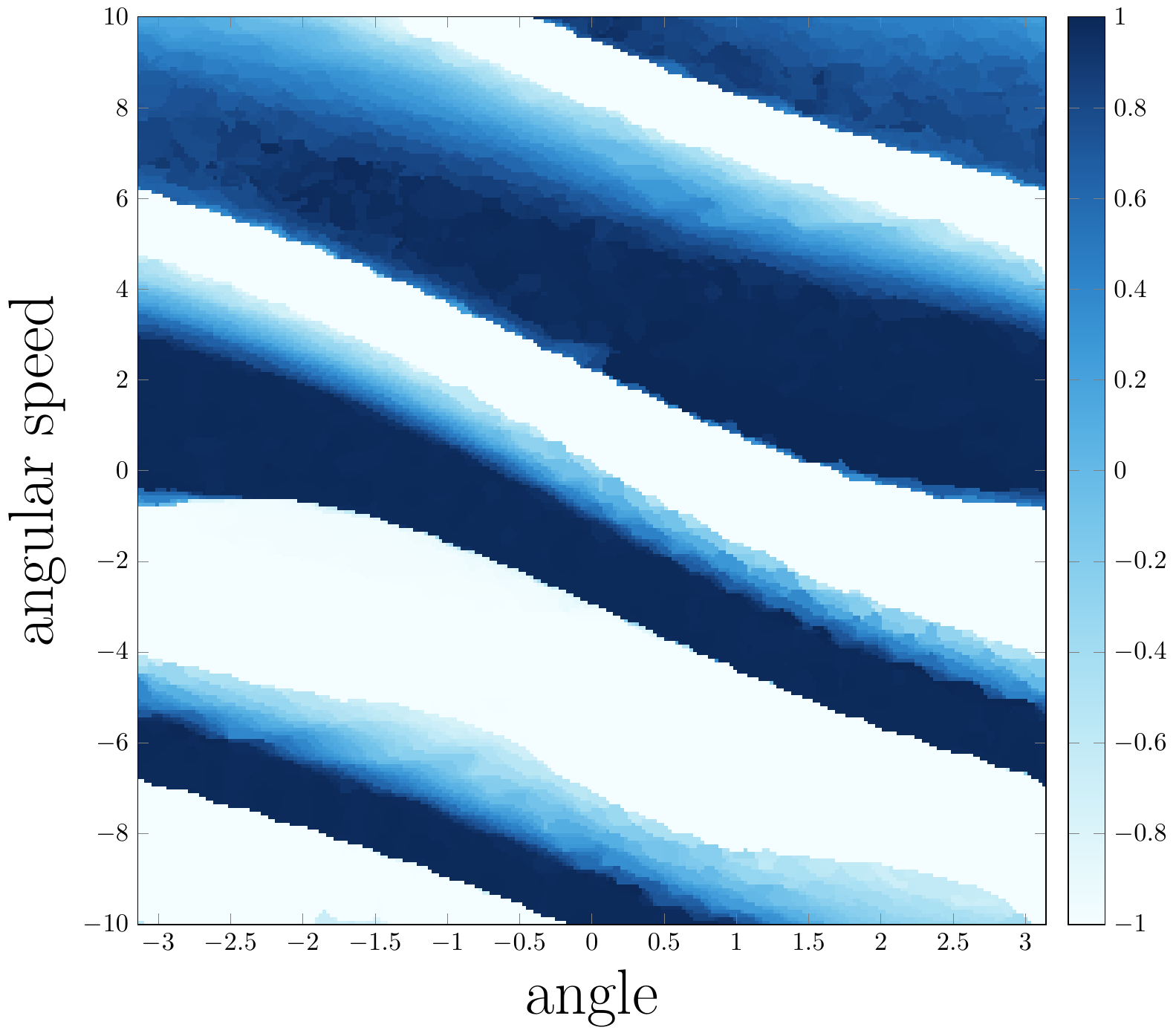}
}
\hspace{-2.48ex}
\subfigure[Ours]{
    \label{fig:policy_ip_ours}
    \includegraphics[trim={27 0 0 0},clip,height=0.22\textwidth]{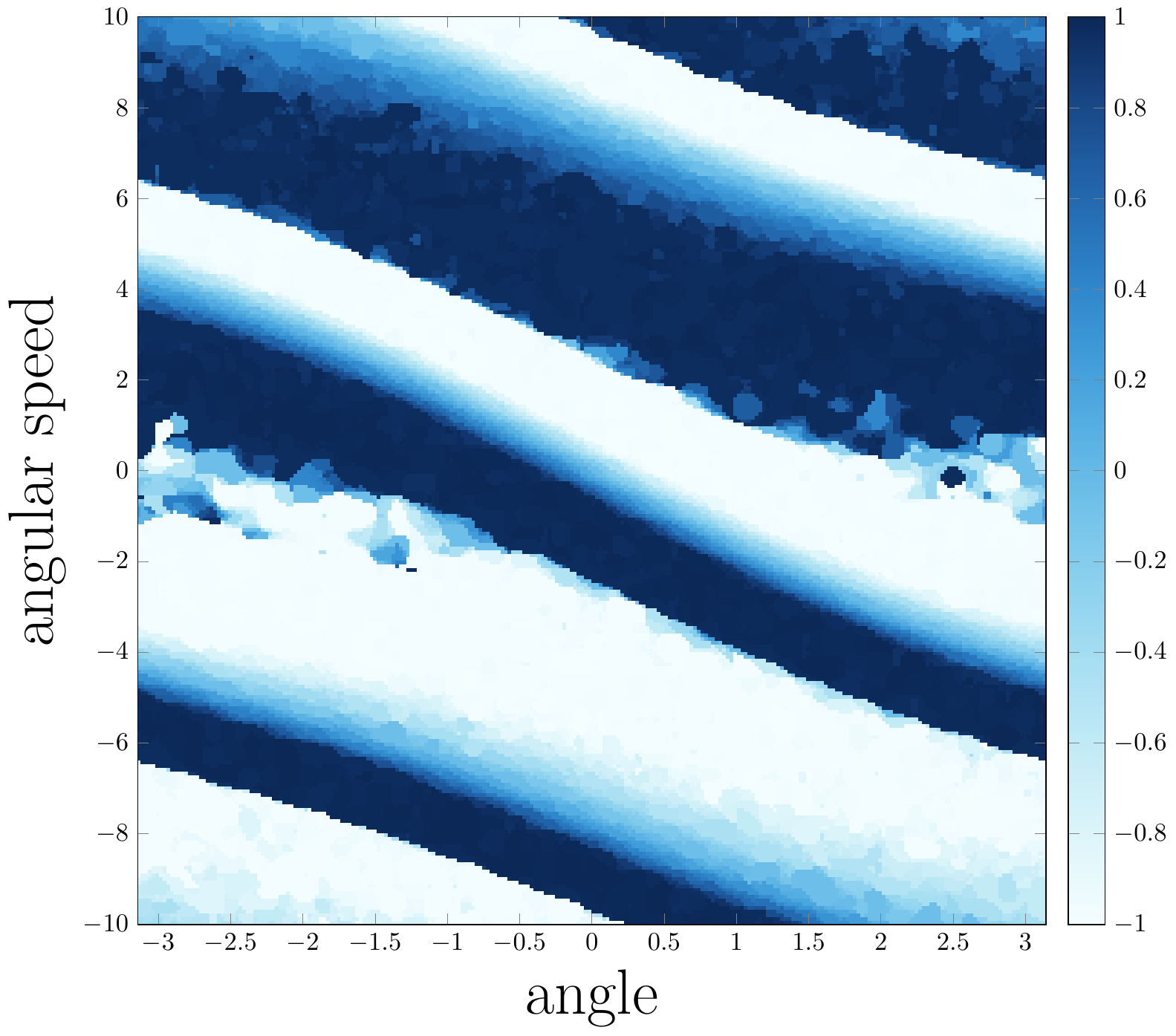}
}
\caption{Policy visualization of different methods on the Inverted Pendulum control task. The policy is obtained from the output $Q^{(T)}$  by taking $\arg\max_{a\in\mA}Q^{(T)}(s,a)$ at each state $s$.}
\label{fig:policy-ip}
\vspace{-0.3cm}
\end{figure}


\subsection{Mountain Car}

{\bf Sample Complexity and Error Guarantees.}
\begin{figure}[H]
\vspace{-0.16in}
\centering
\subfigure[Sample Complexity]{
    \label{fig:mc_.9_linf_complexity}
    \includegraphics[width=0.24\textwidth]{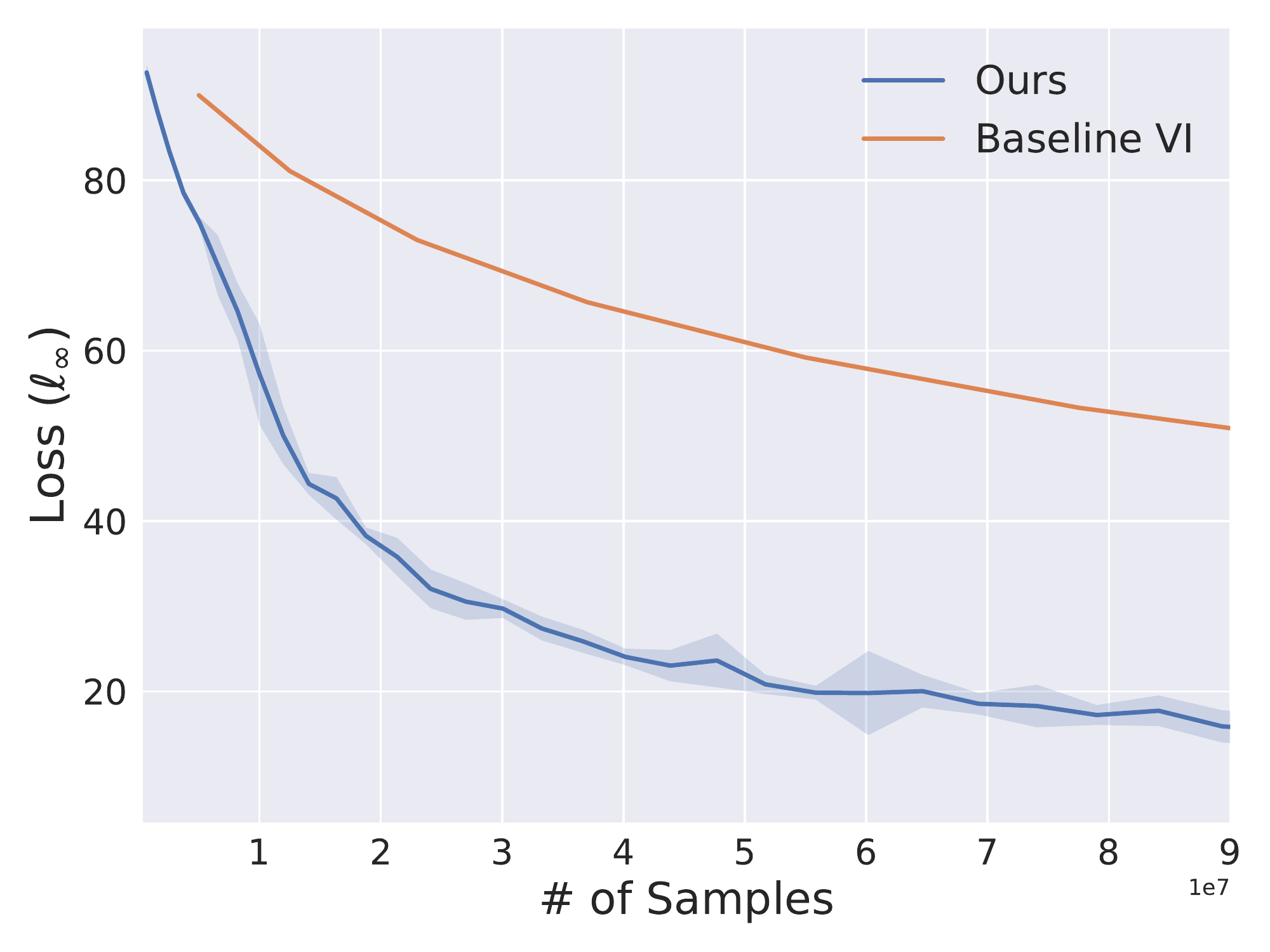}
}
\hspace{-2.3ex}
\subfigure[Sample Complexity]{
    \label{fig:mc_.9_mean_complexity}
    \includegraphics[width=0.24\textwidth]{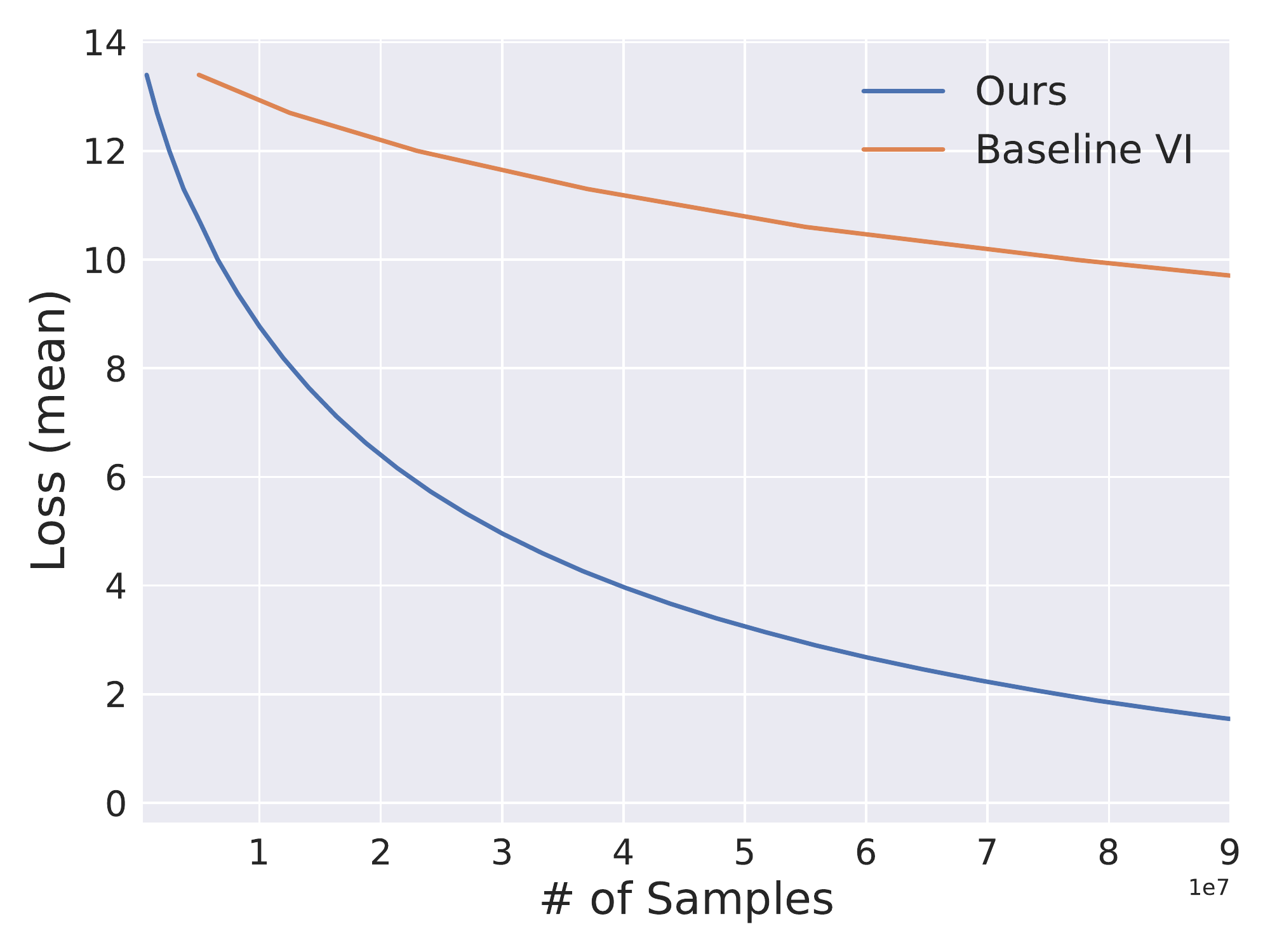}
    }
\hspace{-2.3ex}  
\subfigure[$\ell_{\infty}$ Errors]{
    \label{fig:mc_.9_linf_loss}
    \includegraphics[width=0.24\textwidth]{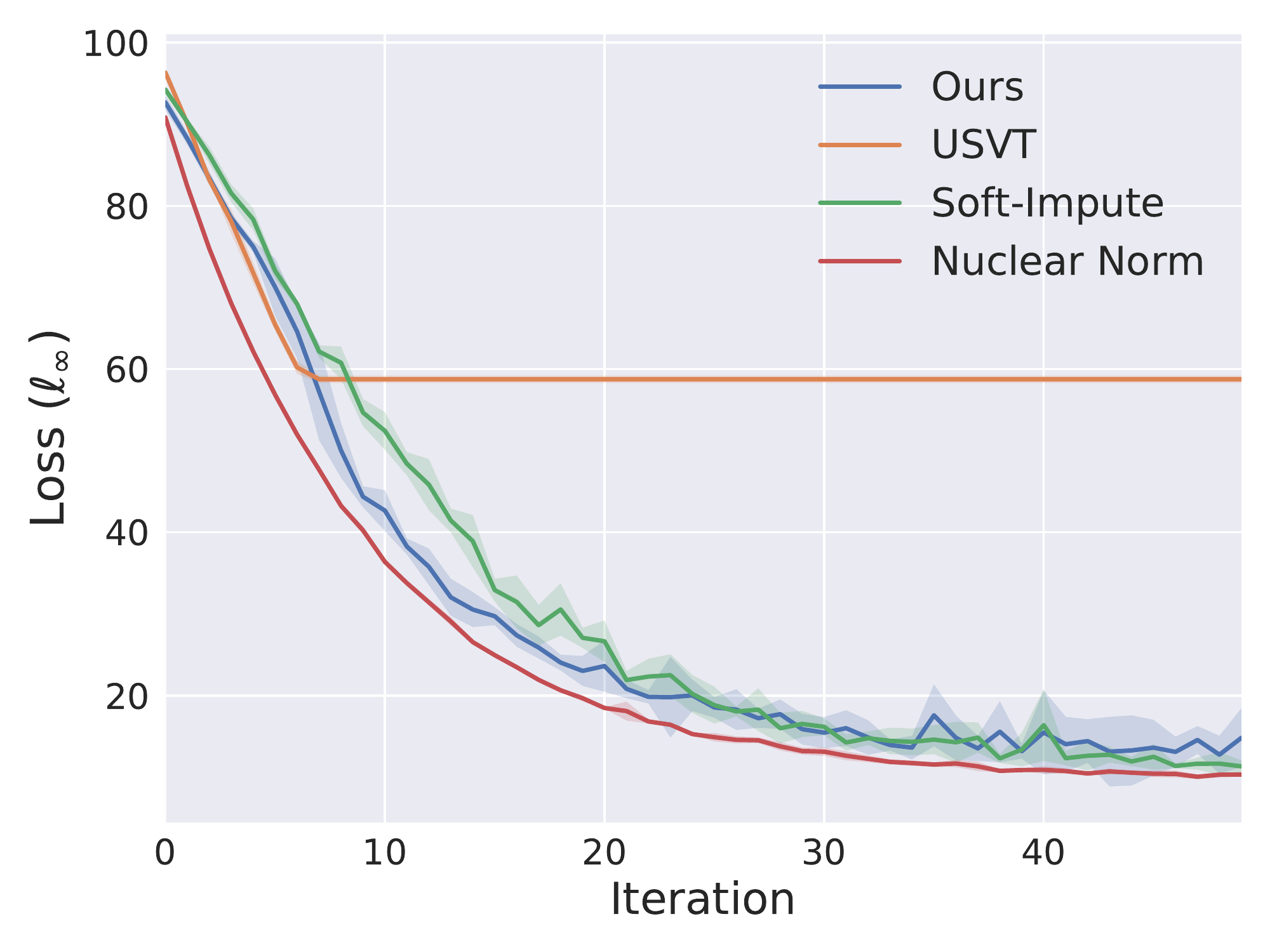}
}
\hspace{-2.3ex}
\subfigure[Mean Errors]{
    \label{fig:mc_.9_mean_loss}
    \includegraphics[width=0.24\textwidth]{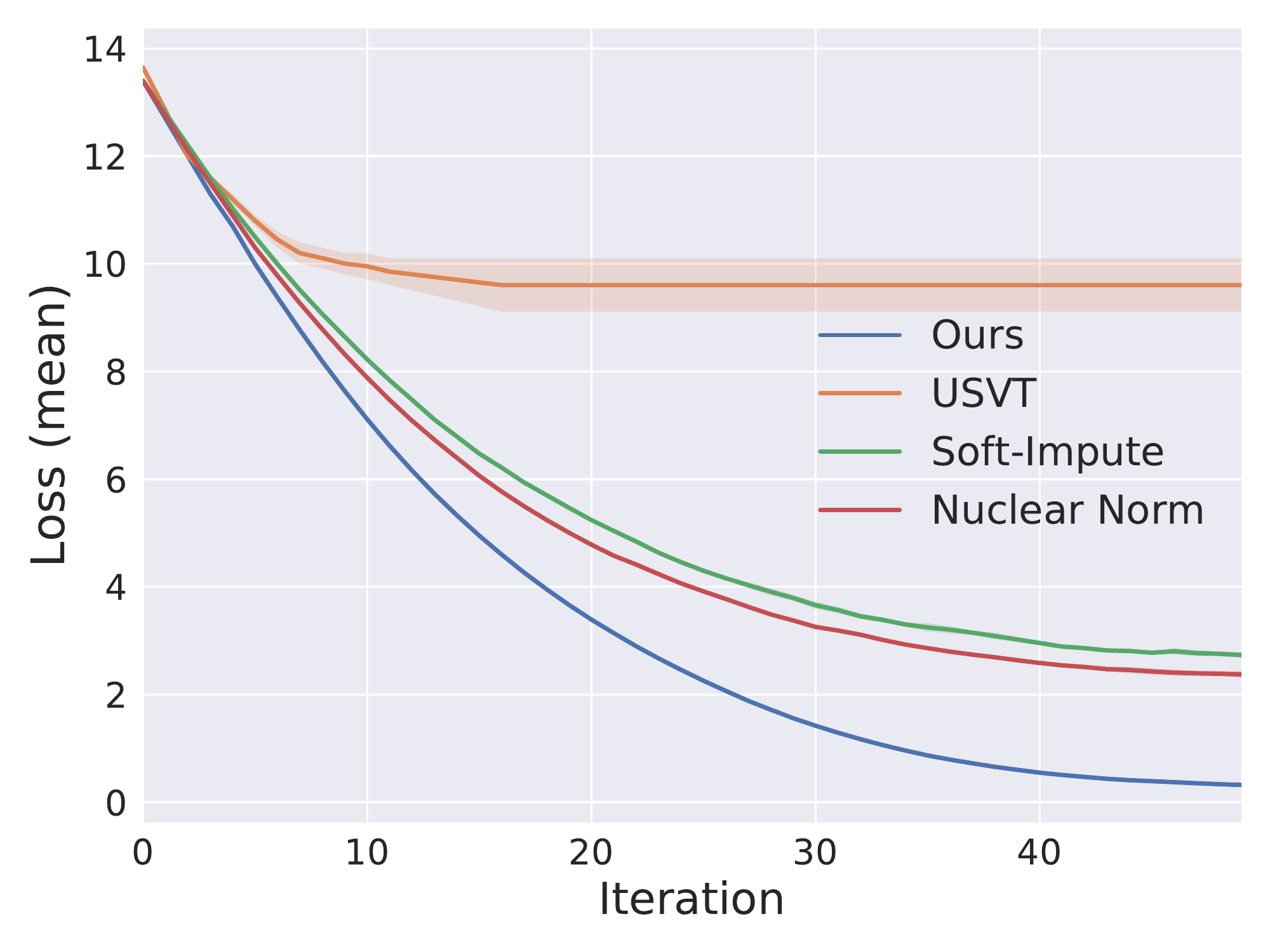}
}
\vspace{-0.2cm}
\caption{Empirical results on the Mountain Car control task. In (a) and (b), we show the improved sample complexity for achieving different levels of $\ell_\infty$ error and mean error, respectively. In (c) and (d), we compare the $\ell_\infty$ error and the mean error for various ME methods. Results are averaged across 5 runs for each method.}
\label{fig:mc_.9_main}
\vspace{-0.3cm}
\end{figure}

\medskip\noindent
{\bf Policy Visualization.}
\begin{figure}[H]
\vspace{-0.16in}
\centering
\subfigure[Optimal Policy]{
    \label{fig:policy_mc_optimal}
    \includegraphics[height=0.22\textwidth]{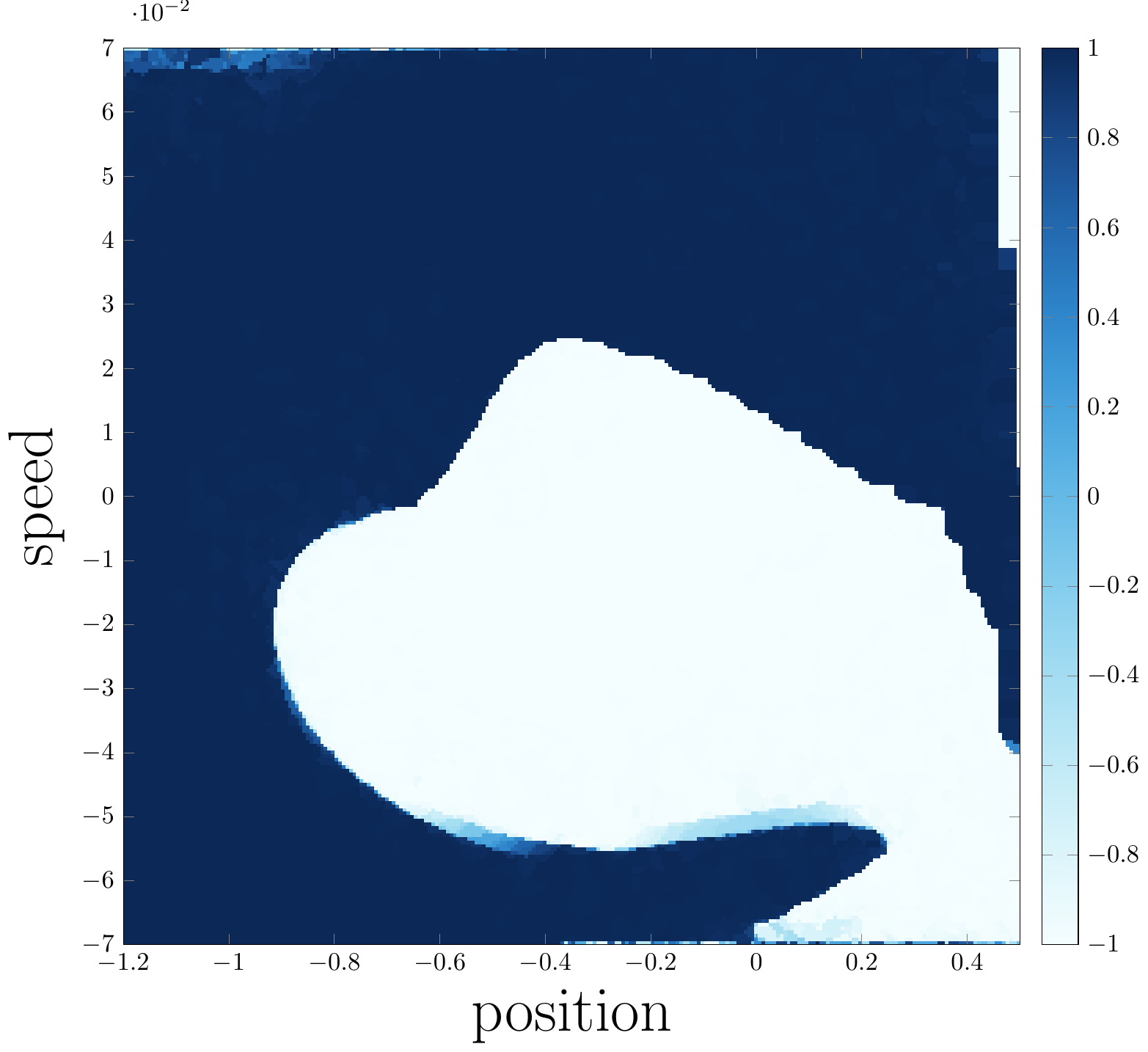}
}
\hspace{-2.48ex}
\subfigure[Soft-Impute]{
    \label{fig:policy_mc_softimp}
    \includegraphics[trim={27 0 0 0},clip,height=0.22\textwidth]{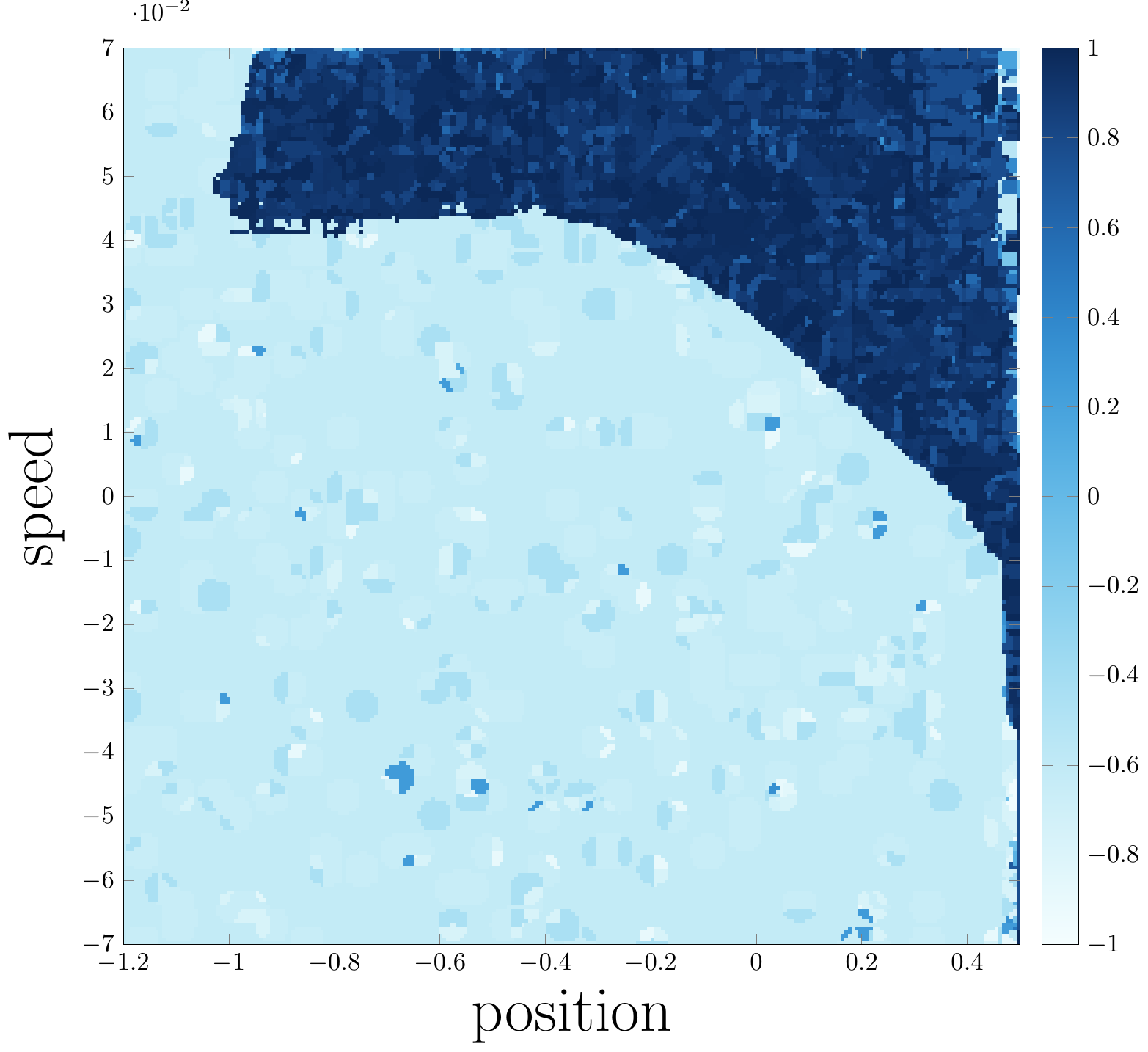}
}
\hspace{-2.48ex}
\subfigure[Nuclear Norm]{
    \label{fig:policy_mc_nucnorm}
    \includegraphics[trim={27 0 0 0},clip,height=0.22\textwidth]{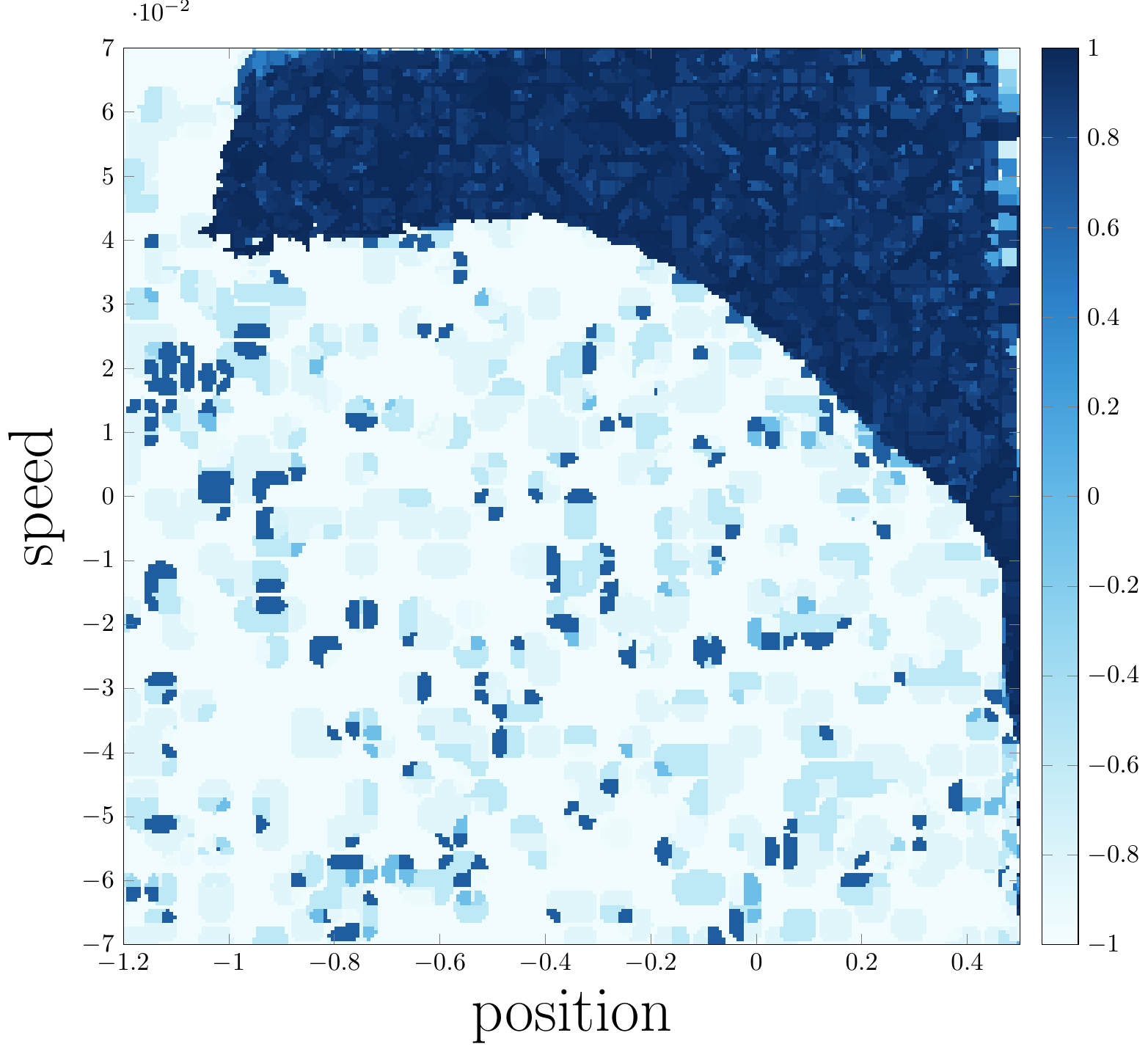}
}
\hspace{-2.48ex}
\subfigure[Ours]{
    \label{fig:policy_mc_ours}
    \includegraphics[trim={27 0 0 0},clip,height=0.22\textwidth]{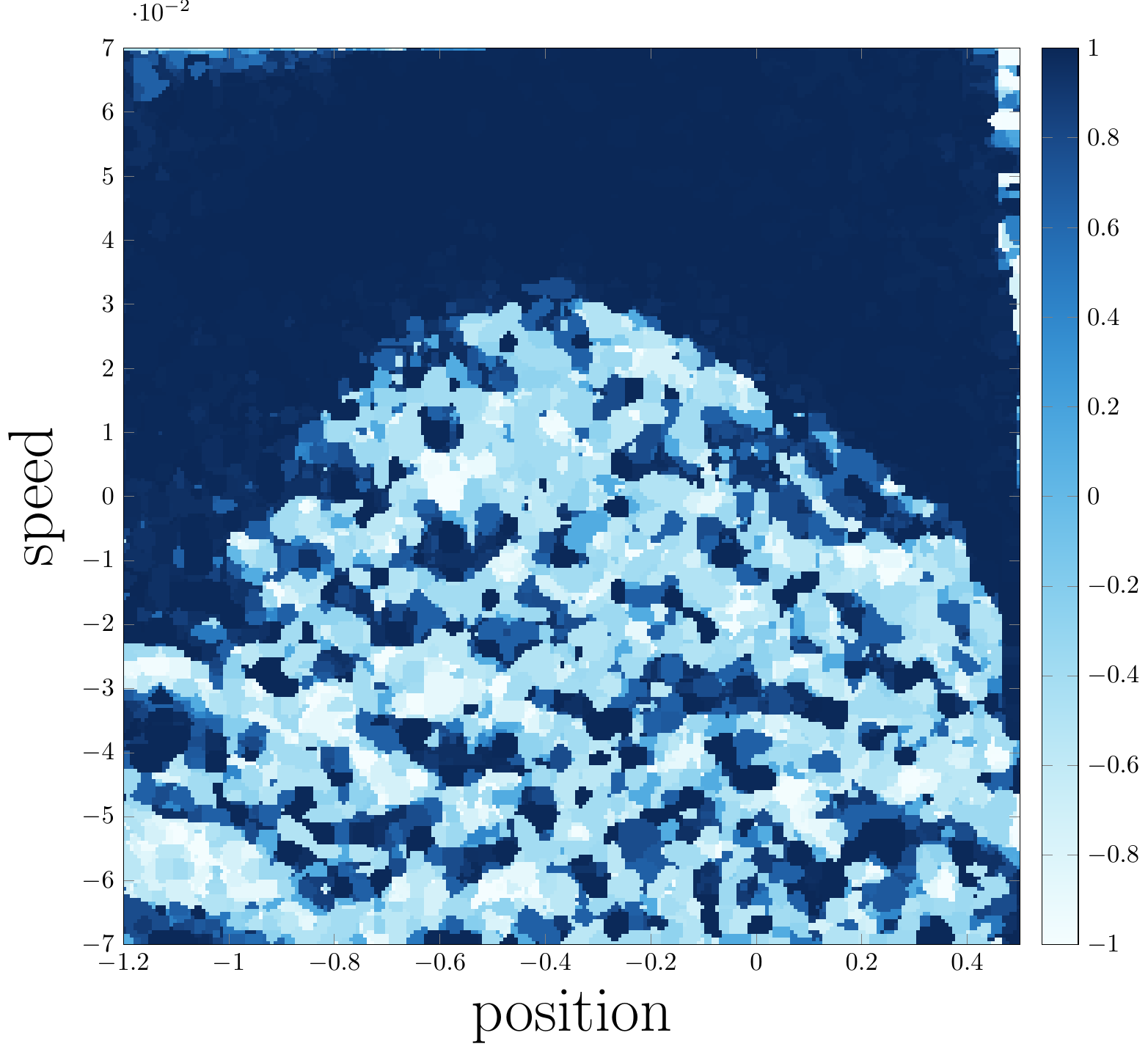}
}
\caption{Policy visualization of different methods on the Mountain Car control task. The policy is obtained from the output $Q^{(T)}$  by taking $\arg\max_{a\in\mA}Q^{(T)}(s,a)$ at each state $s$.}
\label{fig:policy-mc}
\vspace{-0.3cm}
\end{figure}

\subsection{Double Integrator}
{\bf Sample Complexity and Error Guarantees.}
\begin{figure}[H]
\vspace{-0.16in}
\centering
\subfigure[Sample Complexity]{
    \label{fig:di_.9_linf_complexity}
    \includegraphics[width=0.241\textwidth]{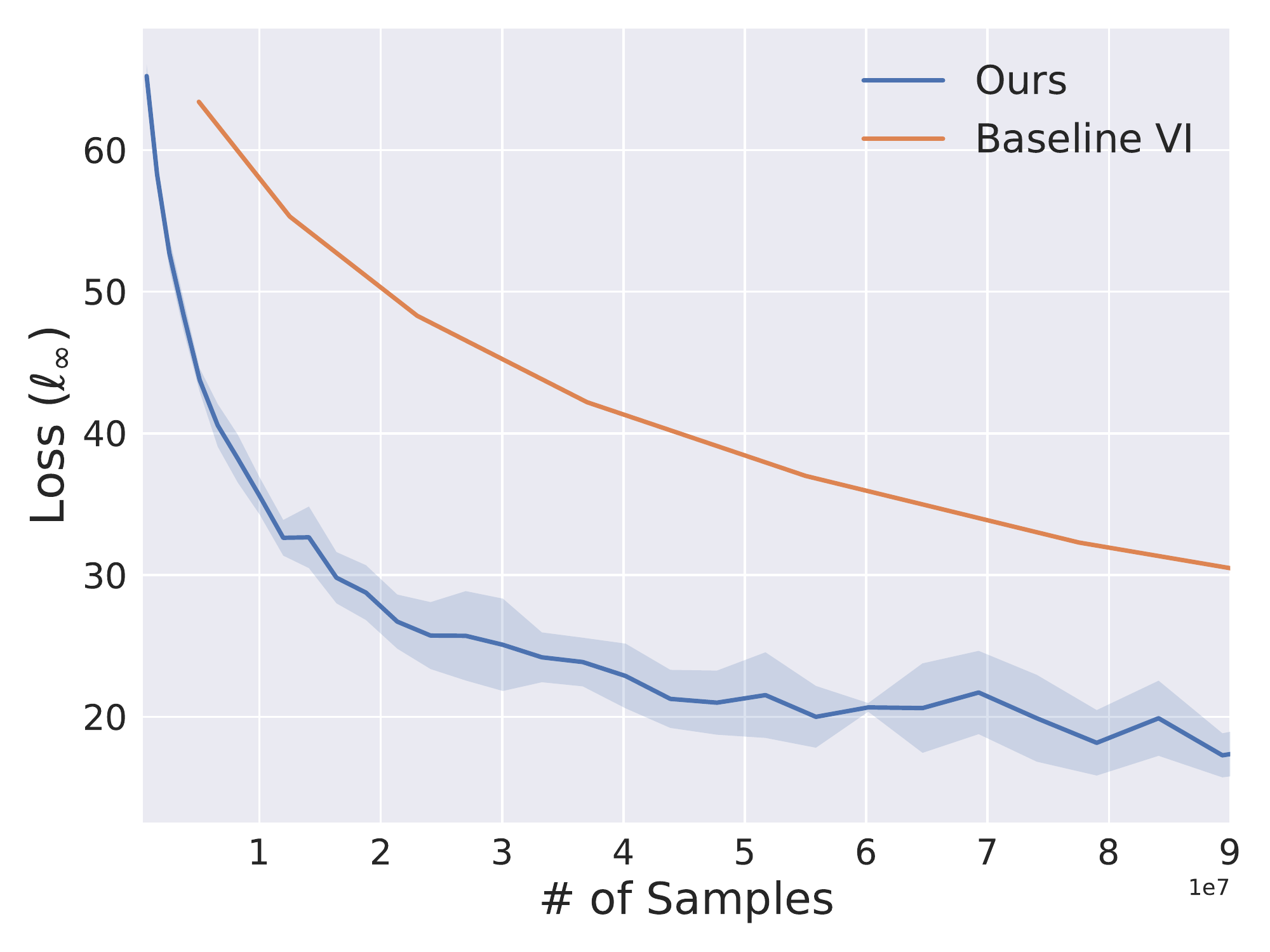}
}
\hspace{-2.3ex}
\subfigure[Sample Complexity]{
    \label{fig:di_.9_mean_complexity}
    \includegraphics[width=0.241\textwidth]{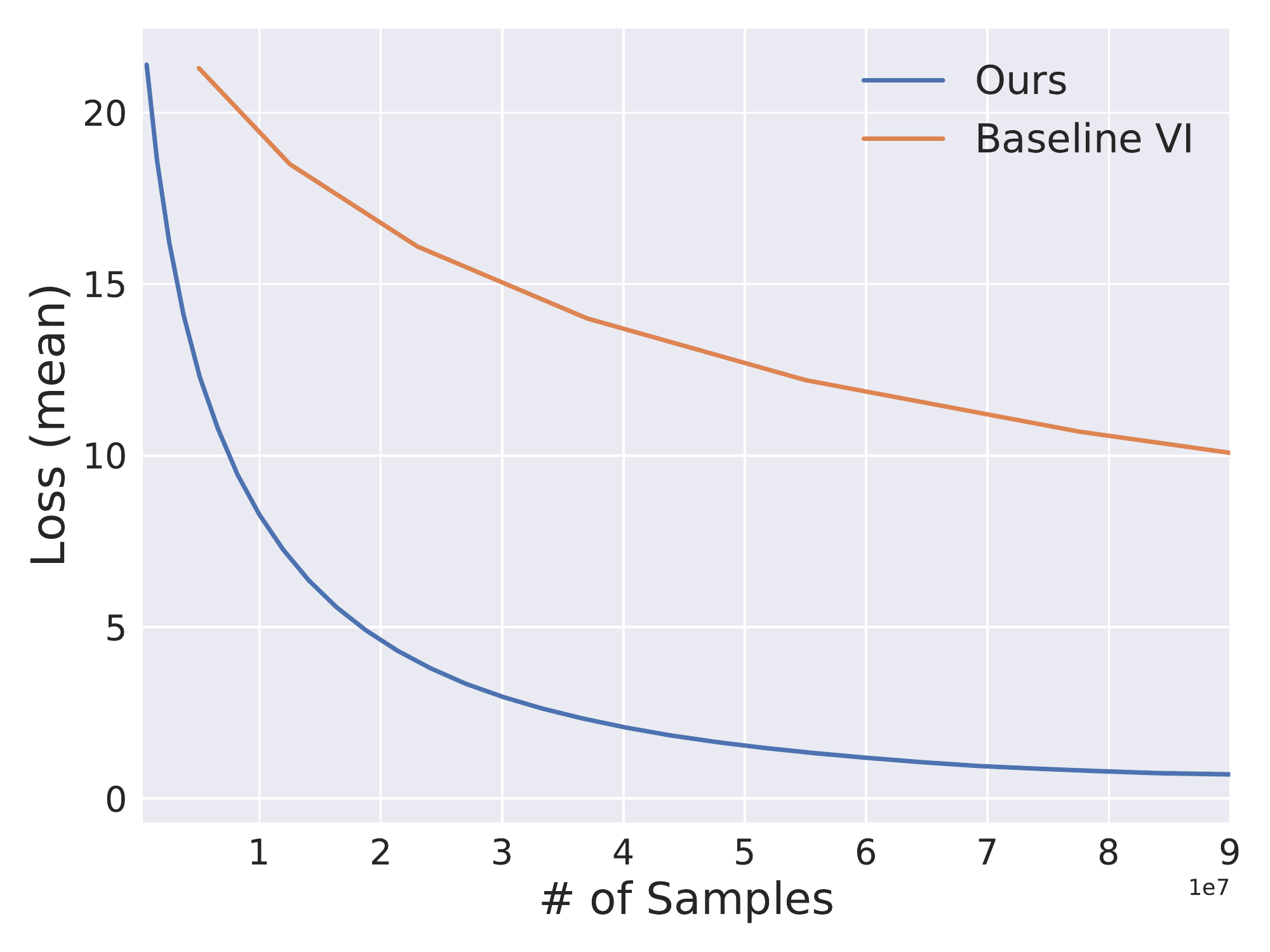}}
\hspace{-2.3ex}
\subfigure[$\ell_{\infty}$ Errors]{
    \label{fig:di_.9_linf_loss}
    \includegraphics[width=0.241\textwidth]{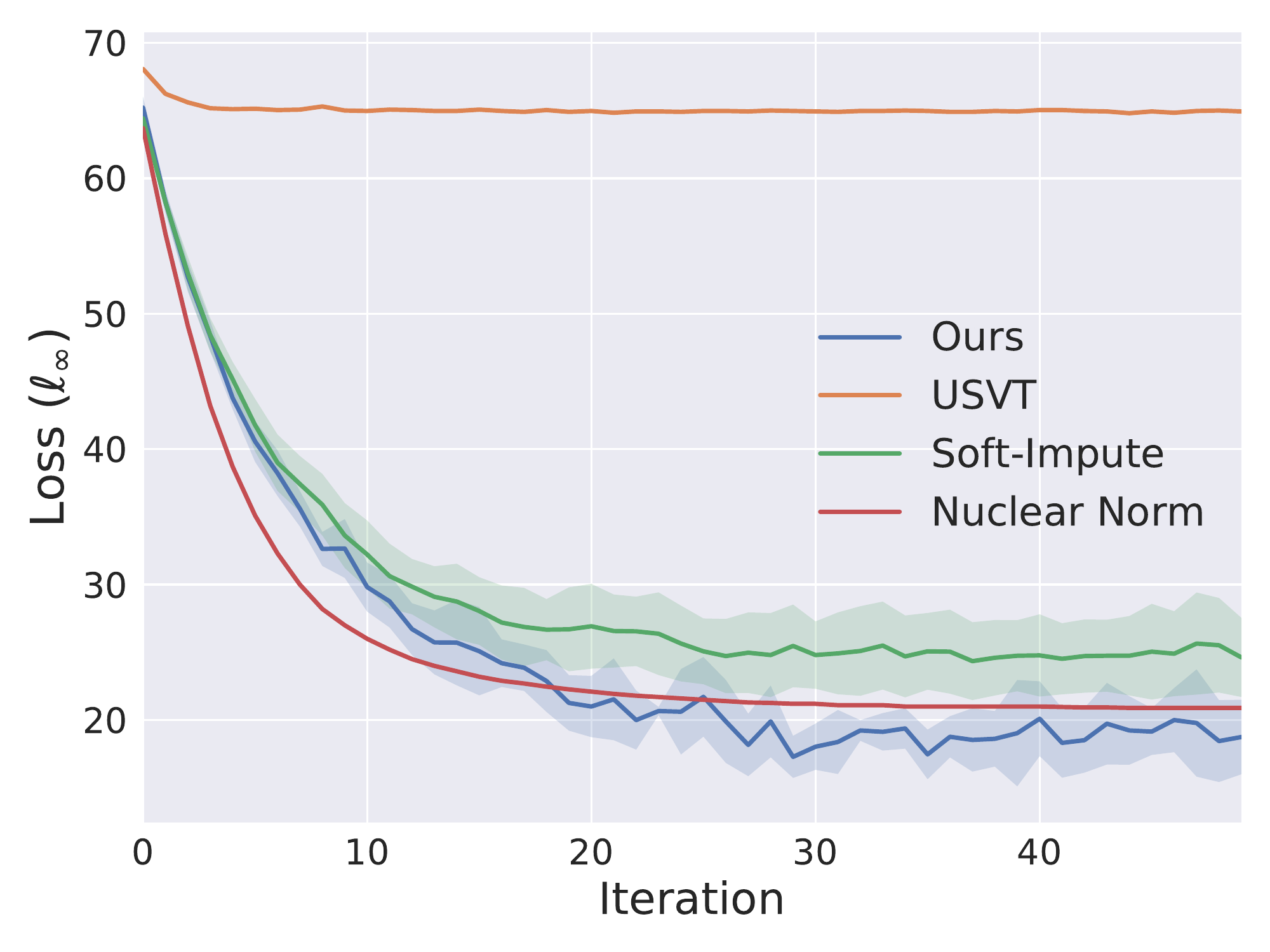}
}
\hspace{-2.3ex}
\subfigure[Mean Errors]{
    \label{fig:di_.9_mean_loss}
    \includegraphics[width=0.241\textwidth]{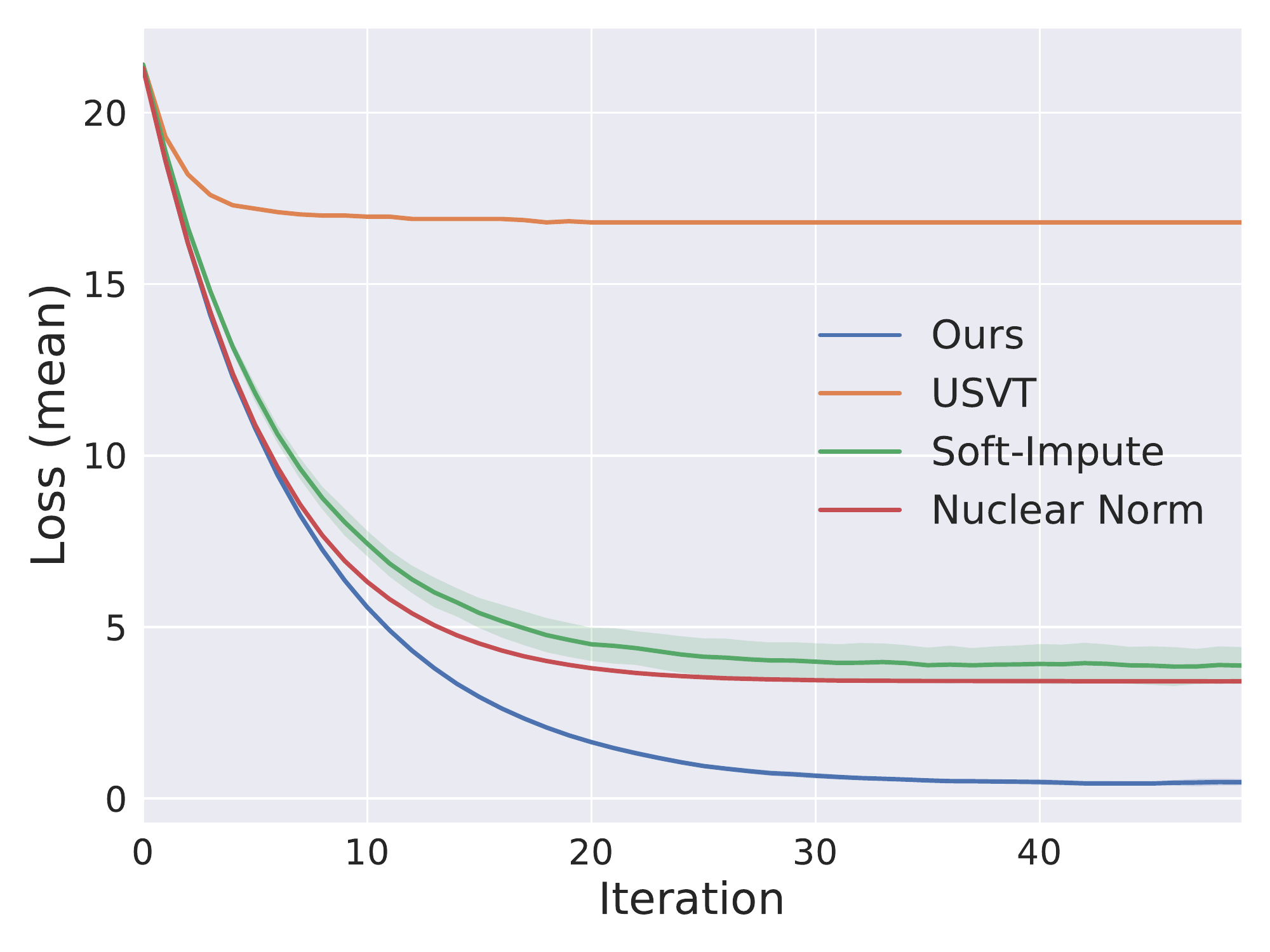}
}

\caption{ Empirical results on the Double Integrator control task.  In (a) and (b), we show the improved sample complexity for achieving different levels of $\ell_\infty$ error and mean error, respectively. In (c) and (d), we compare the $\ell_\infty$ error and the mean error for various ME methods. Results are averaged across 5 runs for each method.}
\label{fig:di_.9_main}
\vspace{-0.3cm}
\end{figure}

\medskip\noindent
{\bf Policy Visualization.}
\begin{figure}[H]
\vspace{-0.16in}
\centering
\subfigure[Optimal Policy]{
    \label{fig:policy_di_optimal}
    \includegraphics[height=0.218\textwidth]{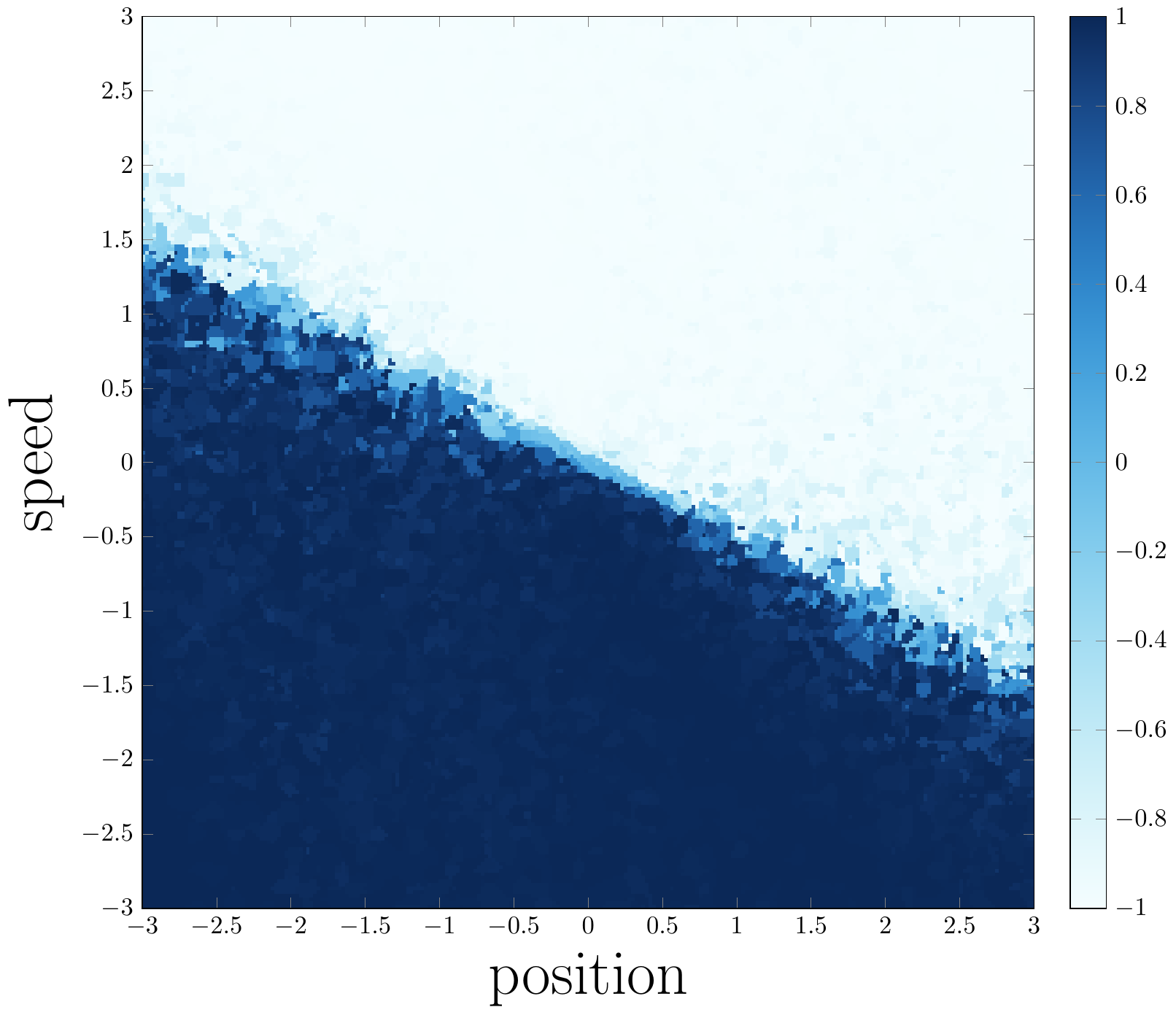}
}
\hspace{-2.48ex}
\subfigure[Soft-Impute]{
    \label{fig:policy_di_softimp}
    \includegraphics[trim={27 0 0 0},clip,height=0.218\textwidth]{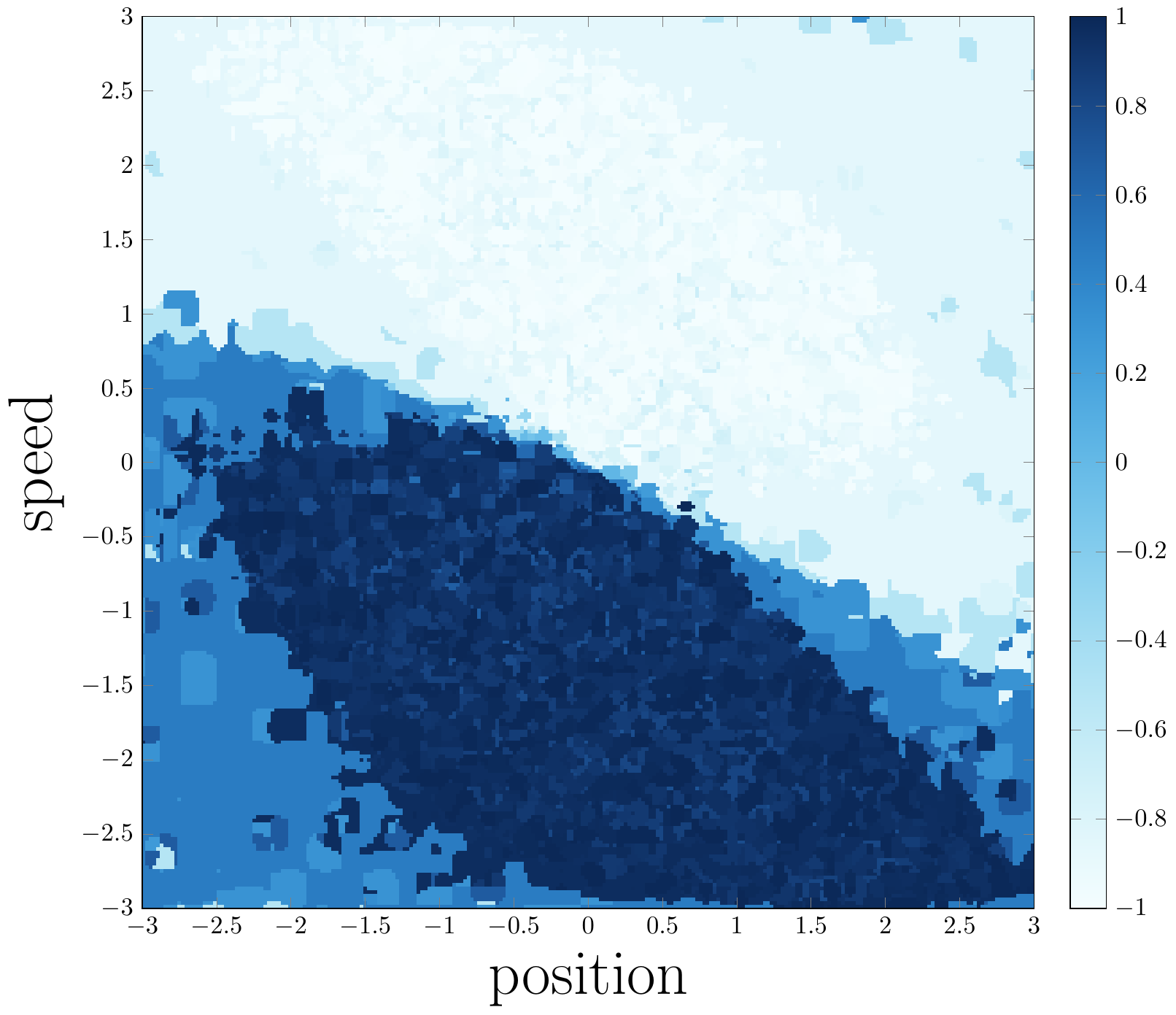}
}
\hspace{-2.48ex}
\subfigure[Nuclear Norm]{
    \label{fig:policy_di_nucnorm}
    \includegraphics[trim={27 0 0 0},clip,height=0.218\textwidth]{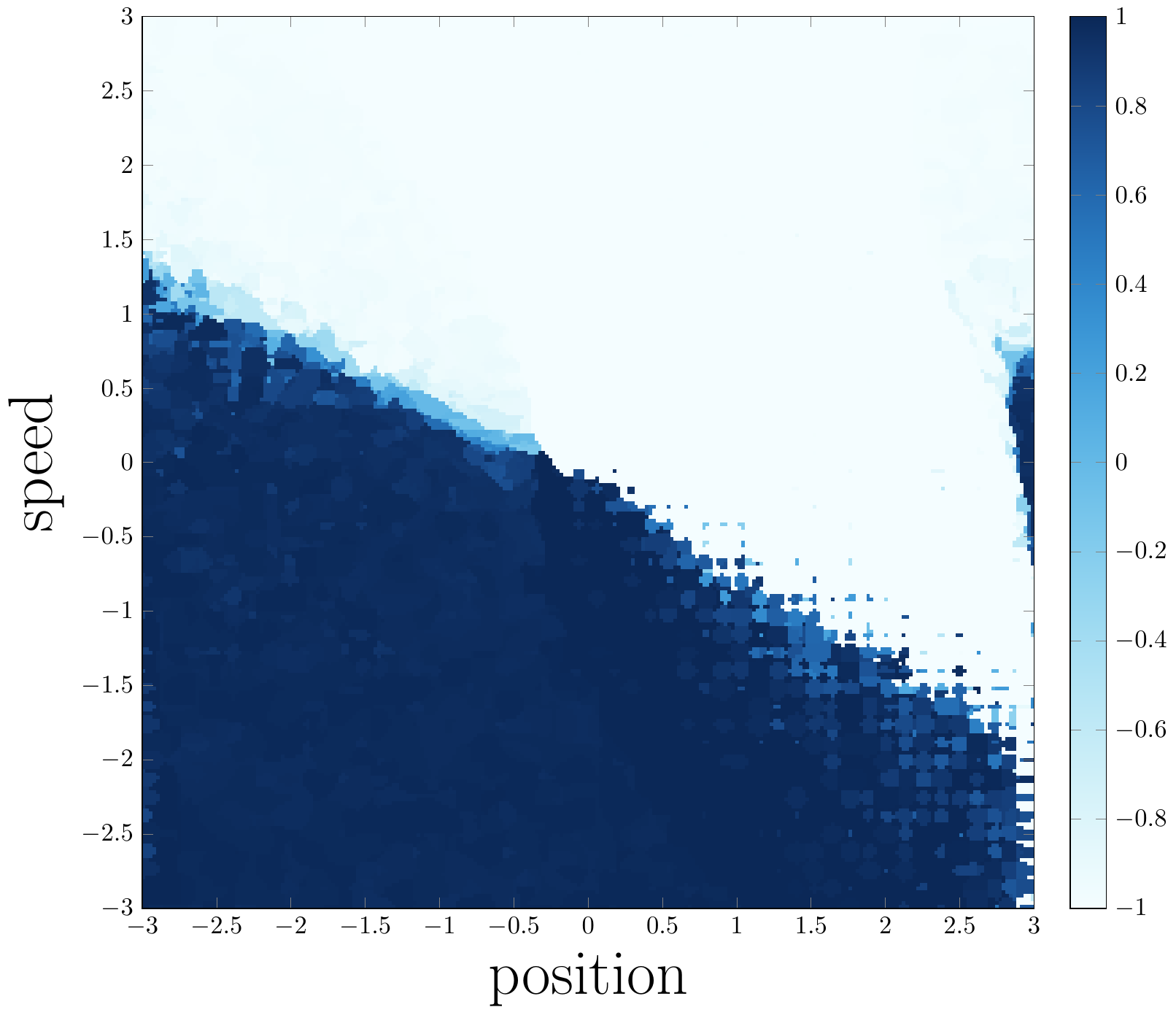}
}
\hspace{-2.48ex}
\subfigure[Ours]{
    \label{fig:policy_di_ours}
    \includegraphics[trim={27 0 0 0},clip,height=0.218\textwidth]{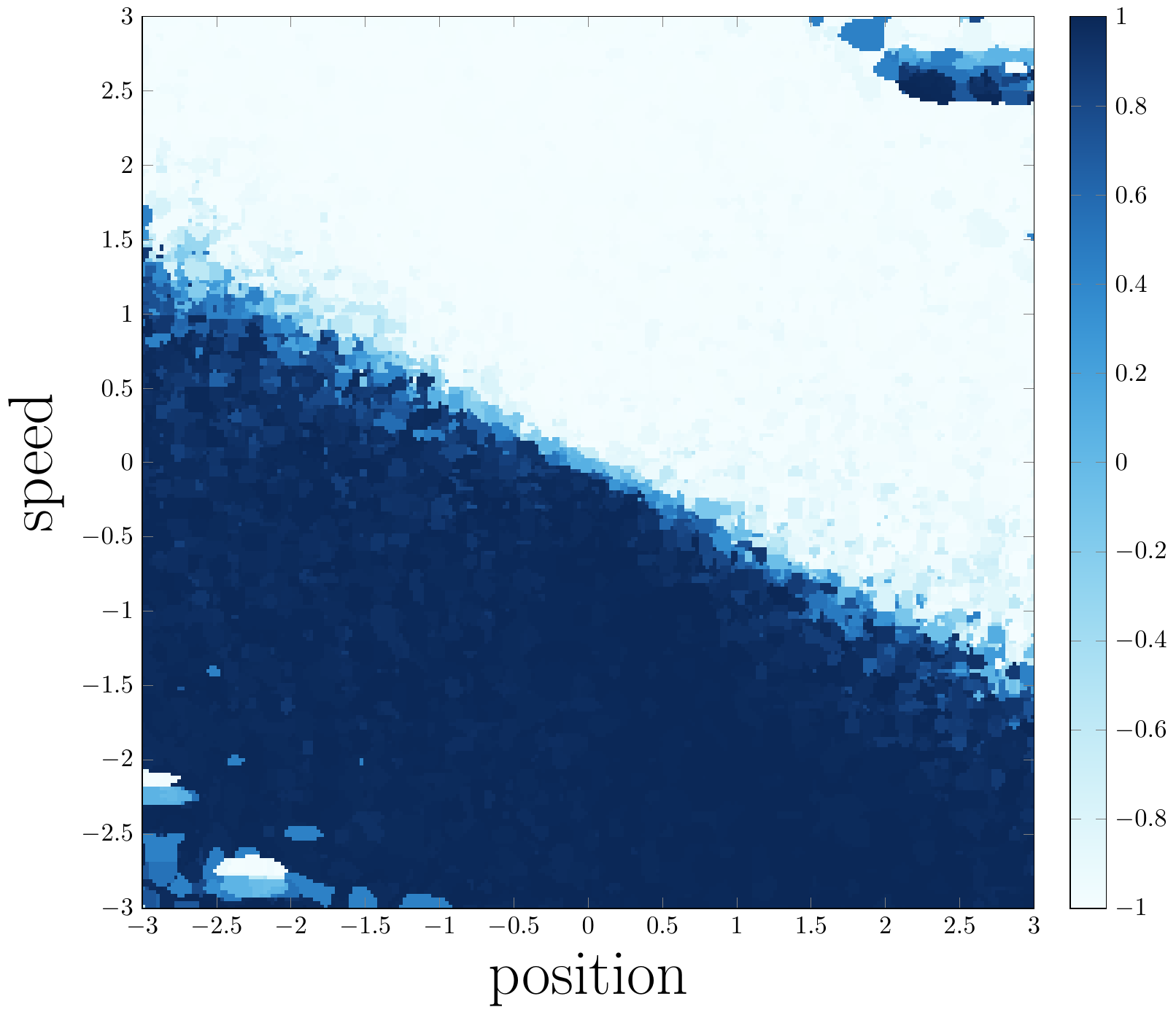}
}
\caption{ Policy visualization of different methods on the Double Integrator control task. The policy is obtained from the output $Q^{(T)}$  by taking $\arg\max_{a\in\mA}Q^{(T)}(s,a)$ at each state $s$.}
\label{fig:policy-di}
\vspace{-0.3cm}
\end{figure}

\subsection{Cart-Pole}
{\bf Sample Complexity and Error Guarantees.}
\begin{figure}[H]
\vspace{-0.16in}
\centering
\subfigure[Sample Complexity]{
    \label{fig:cp_.9_linf_complexity}
    \includegraphics[width=0.241\textwidth]{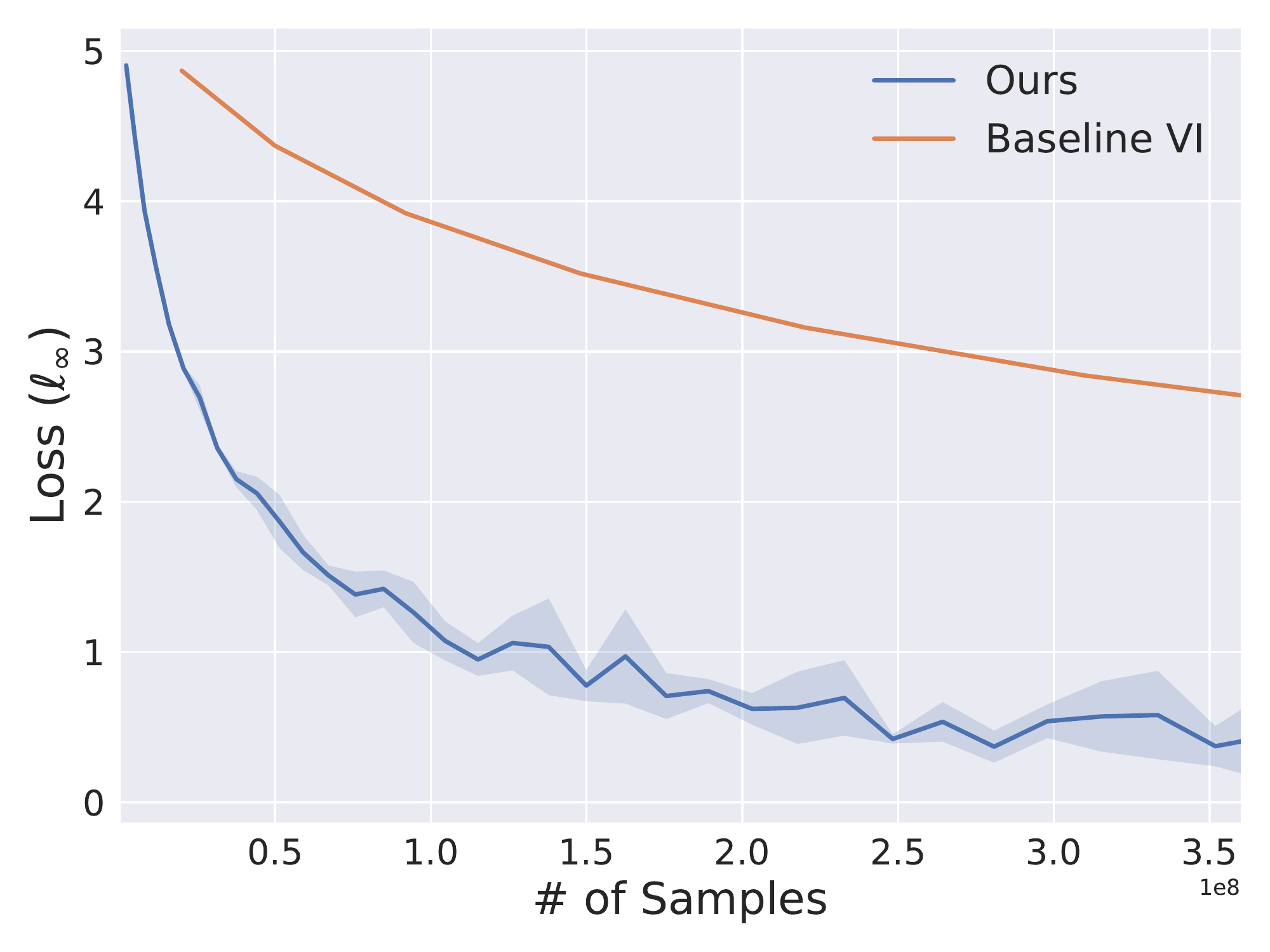}
}
\hspace{-2.3ex}
\subfigure[Sample Complexity]{
    \label{fig:cp_.9_mean_complexity}
    \includegraphics[width=0.241\textwidth]{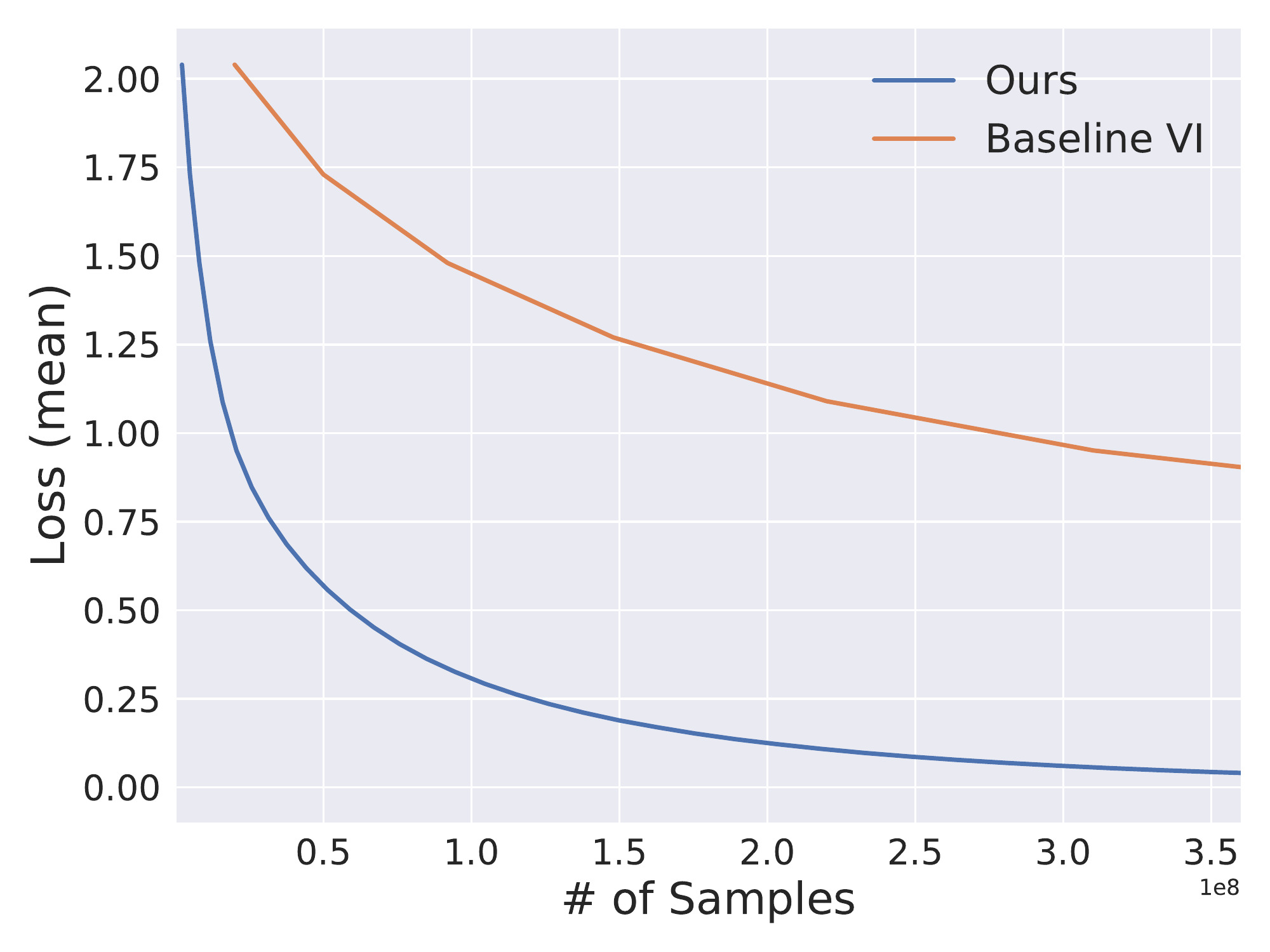}
}
\hspace{-2.3ex}
\subfigure[$\ell_{\infty}$ Errors]{
    \label{fig:cp_.9_linf_loss}
    \includegraphics[width=0.241\textwidth]{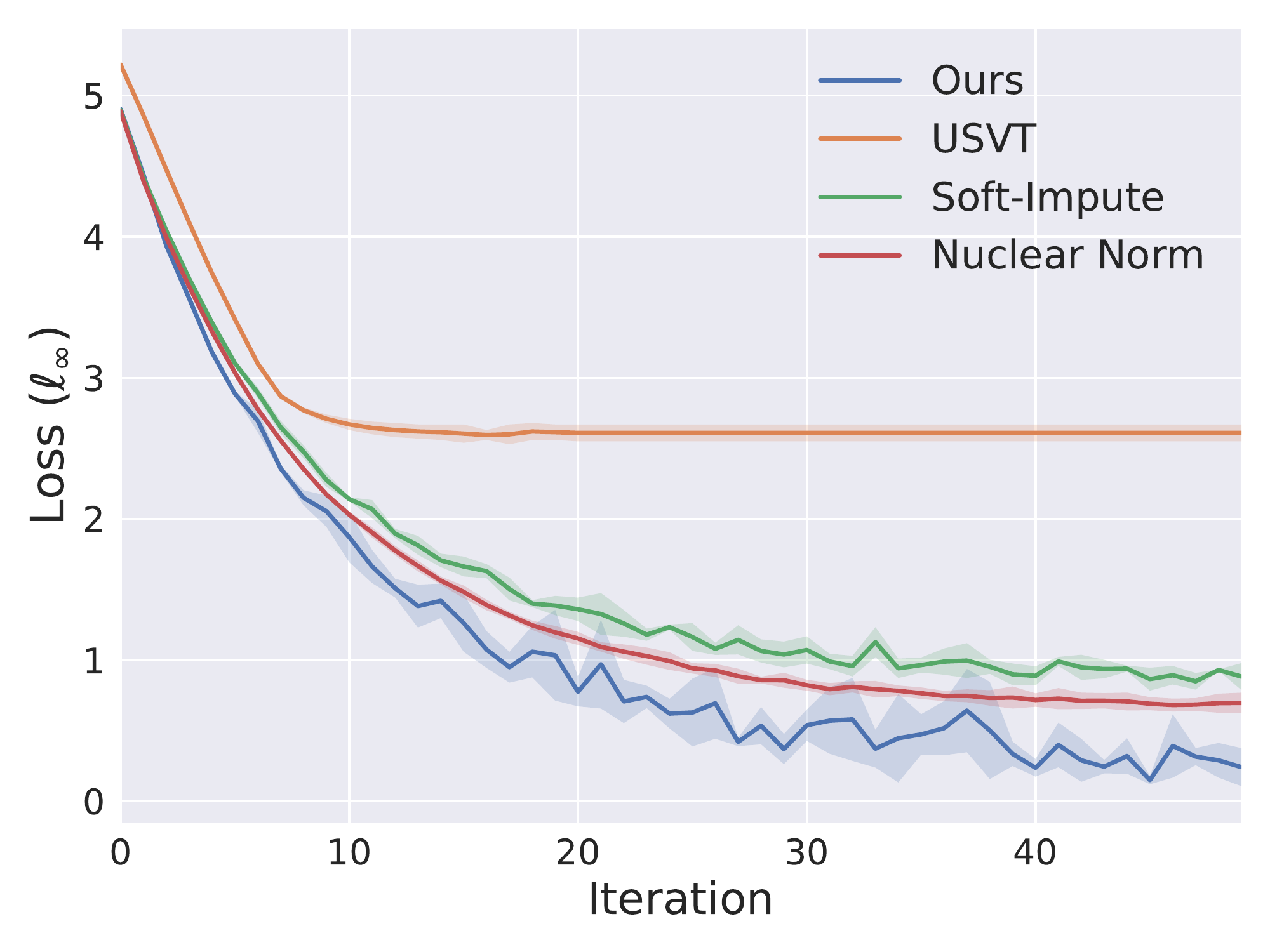}
}
\hspace{-2.3ex}
\subfigure[Mean Errors]{
    \label{fig:cp_.9_mean_loss}
    \includegraphics[width=0.241\textwidth]{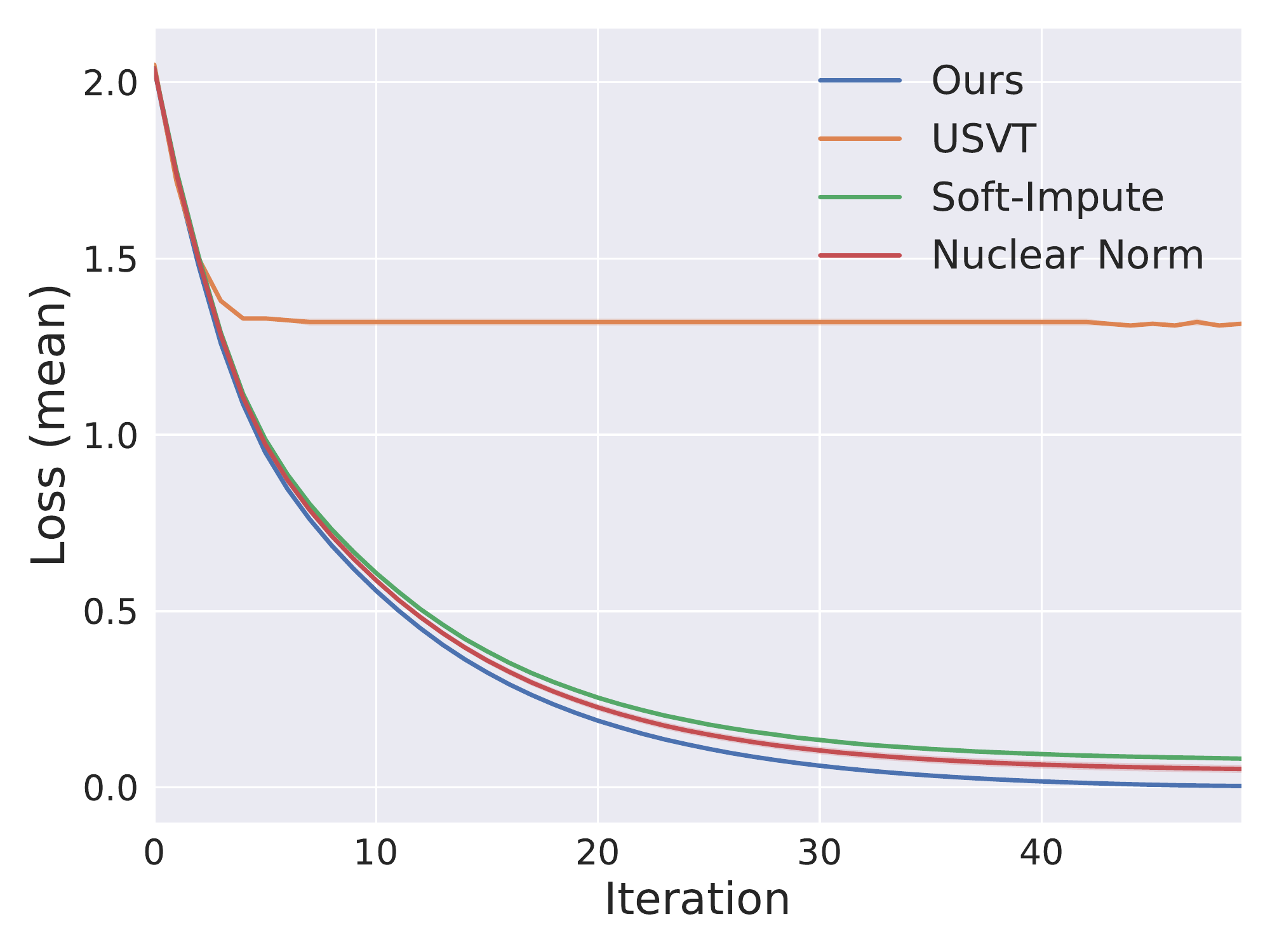}
}

\caption{Empirical results on the Cart-Pole control task.  In (a) and (b), we show the improved sample complexity for achieving different levels of $\ell_\infty$ error and mean error, respectively. In (c) and (d), we compare the $\ell_\infty$ error and the mean error for various ME methods. Results are averaged across 5 runs for each method.}
\label{fig:cp_.9_main}
\vspace{-0.3cm}
\end{figure}

\medskip\noindent
{\bf Policy Visualization.}
\begin{figure}[H]
\vspace{-0.16in}
\centering
\subfigure[Optimal Policy]{
    \label{fig:policy_cp_optimal}
    \includegraphics[height=0.22\textwidth]{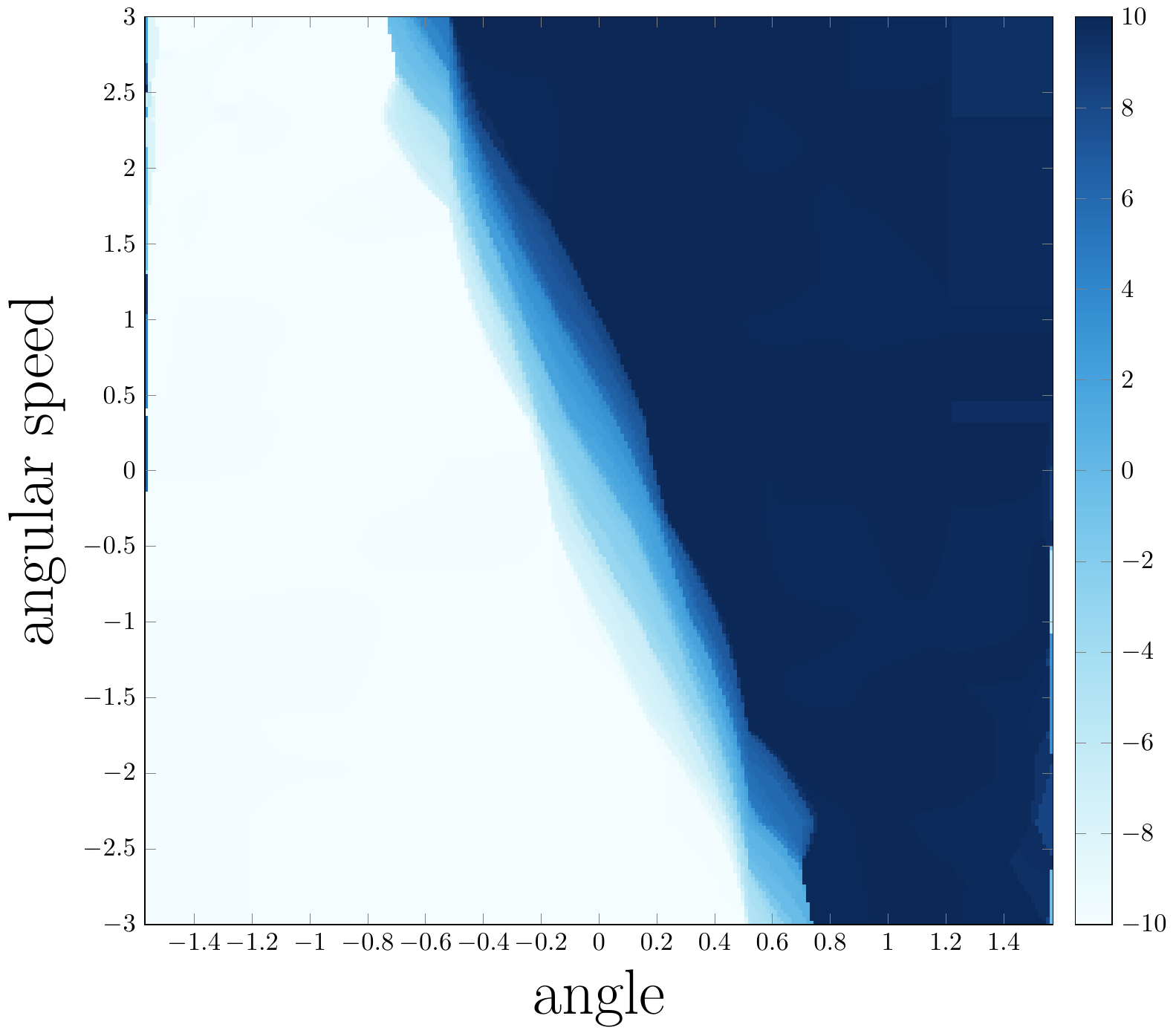}
}
\hspace{-2.48ex}
\subfigure[Soft-Impute]{
    \label{fig:policy_cp_softimp}
    \includegraphics[trim={27 0 0 0},clip,height=0.22\textwidth]{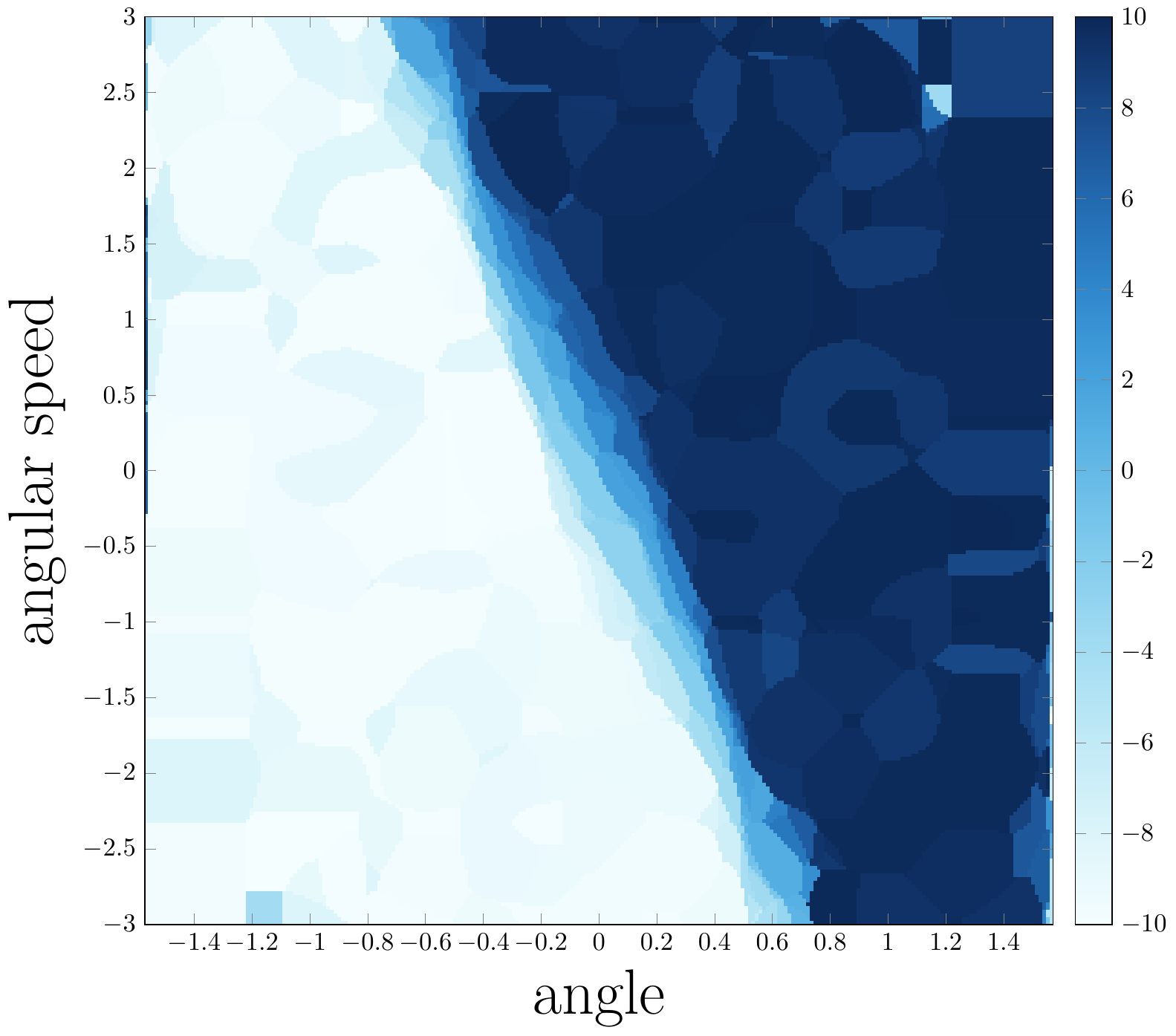}
}
\hspace{-2.48ex}
\subfigure[Nuclear Norm]{
    \label{fig:policy_cp_nucnorm}
    \includegraphics[trim={27 0 0 0},clip,height=0.22\textwidth]{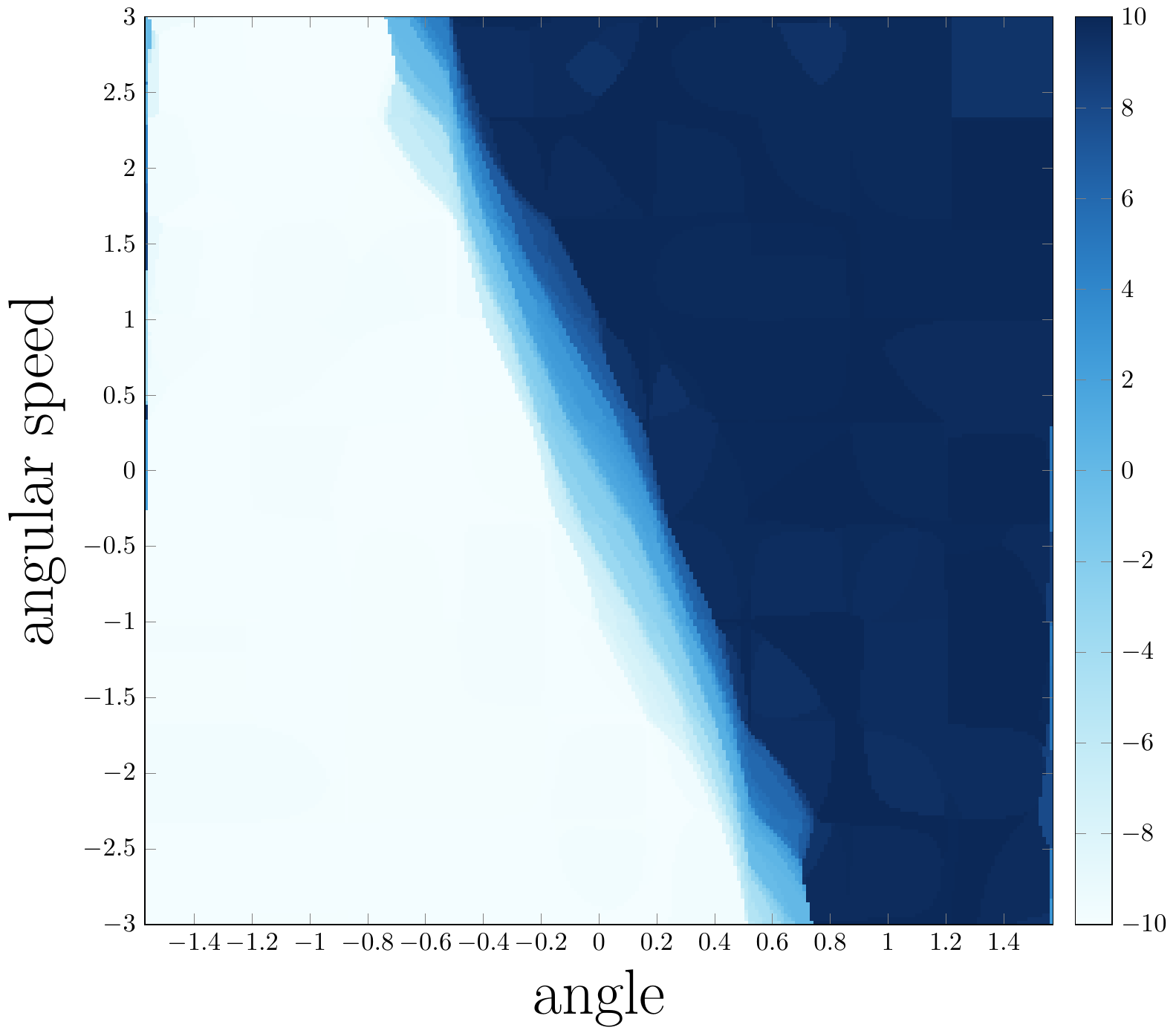}
}
\hspace{-2.48ex}
\subfigure[Ours]{
    \label{fig:policy_cp_ours}
    \includegraphics[trim={27 0 0 0},clip,height=0.22\textwidth]{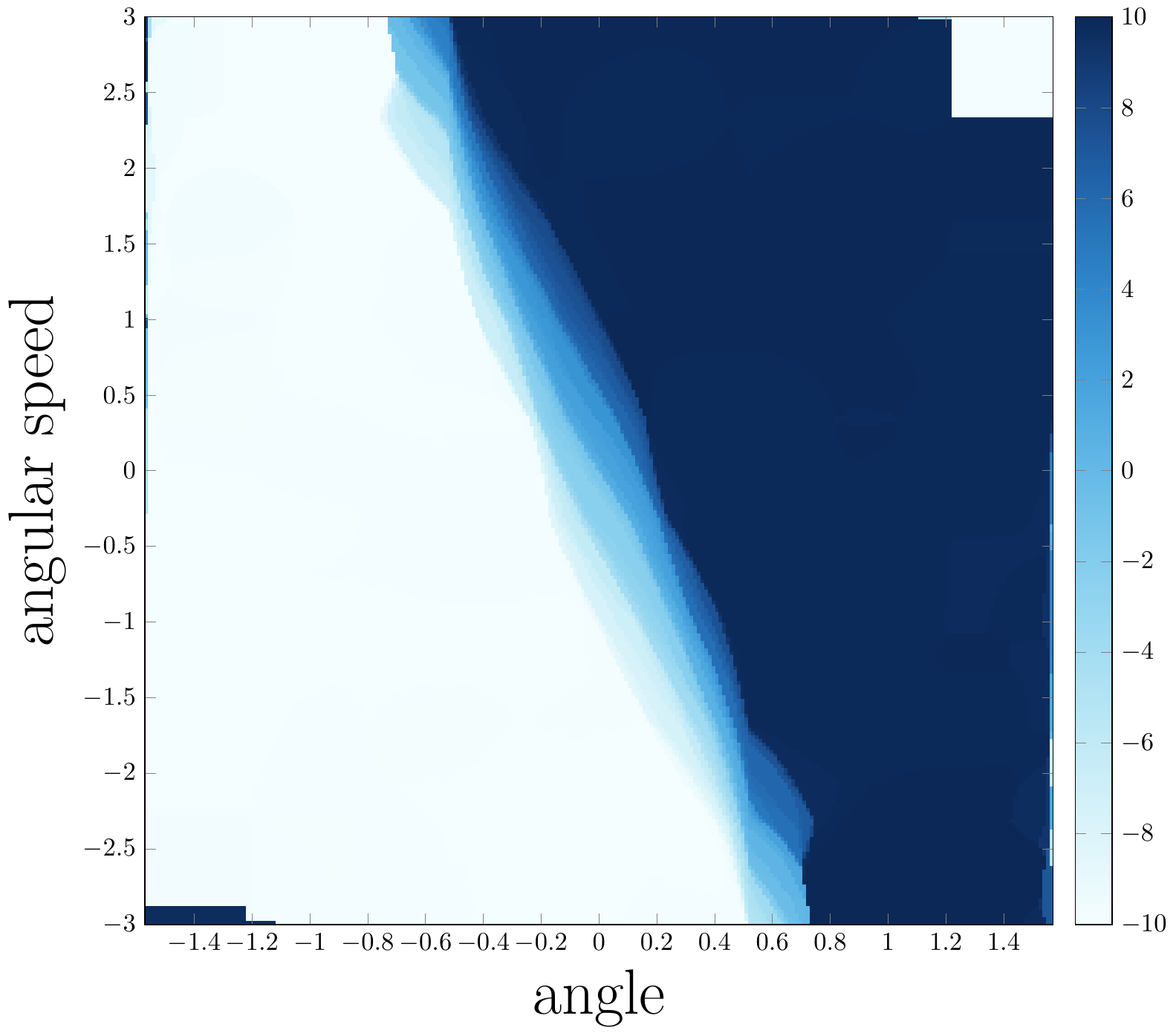}
}
\caption{ Policy visualization of different methods on the Cart-Pole control task. The policy is obtained from the output $Q^{(T)}$  by taking $\arg\max_{a\in\mA}Q^{(T)}(s,a)$ at each state $s$. Recall that the state space is 4-dimensional. We hence visualize a 2-dimensional slice in the figure.}
\label{fig:policy-cp}
\vspace{-0.3cm}
\end{figure}

\subsection{Acrobot}
{\bf Sample Complexity and Error Guarantees.}
\begin{figure}[H]
\vspace{-0.16in}
\centering

\subfigure[Sample Complexity]{
    \label{fig:ac_.9_linf_complexity}
    \includegraphics[width=0.241\textwidth]{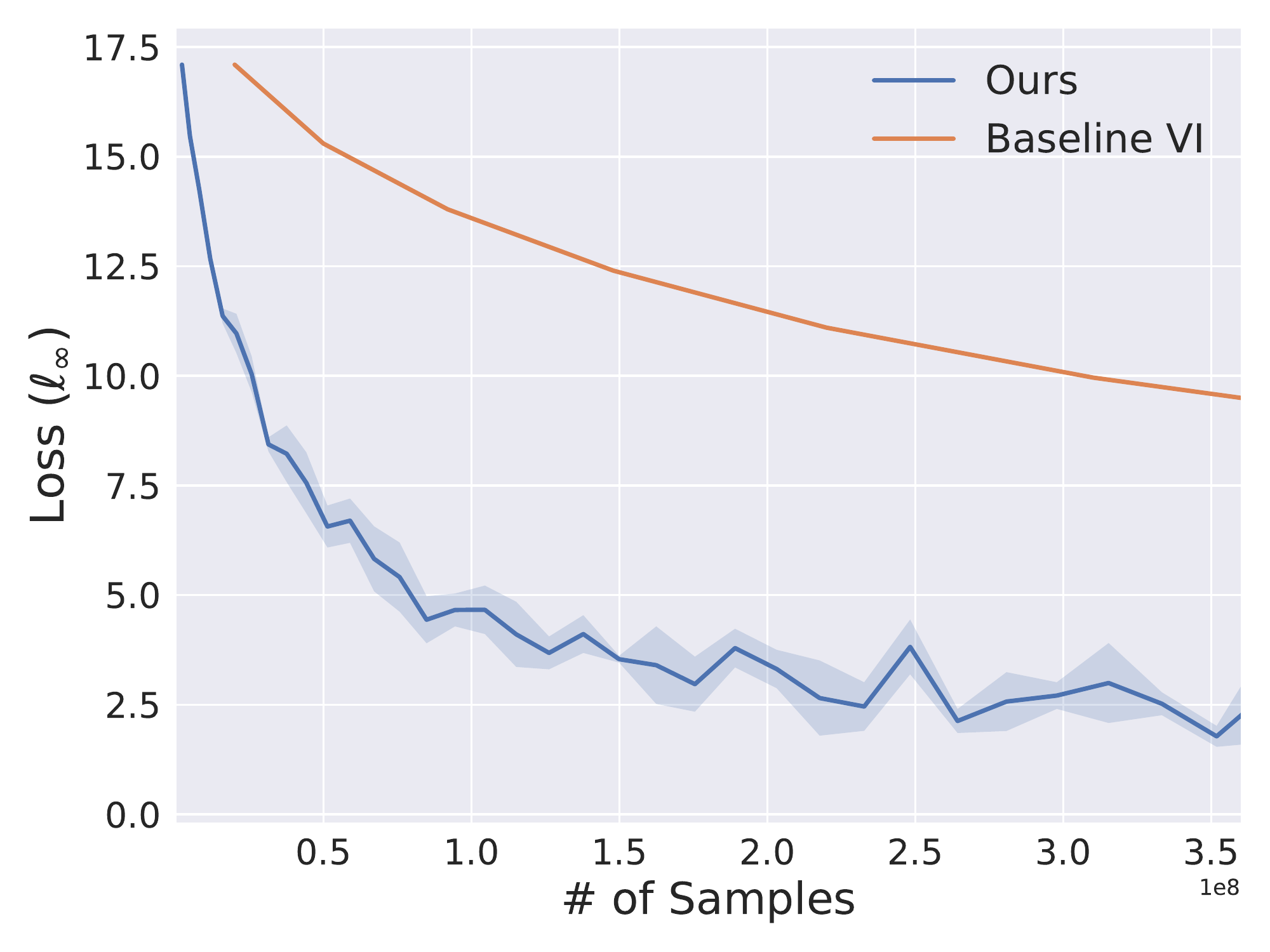}
}
\hspace{-2.3ex}
\subfigure[Sample Complexity]{
    \label{fig:ac_.9_mean_complexity}
    \includegraphics[width=0.241\textwidth]{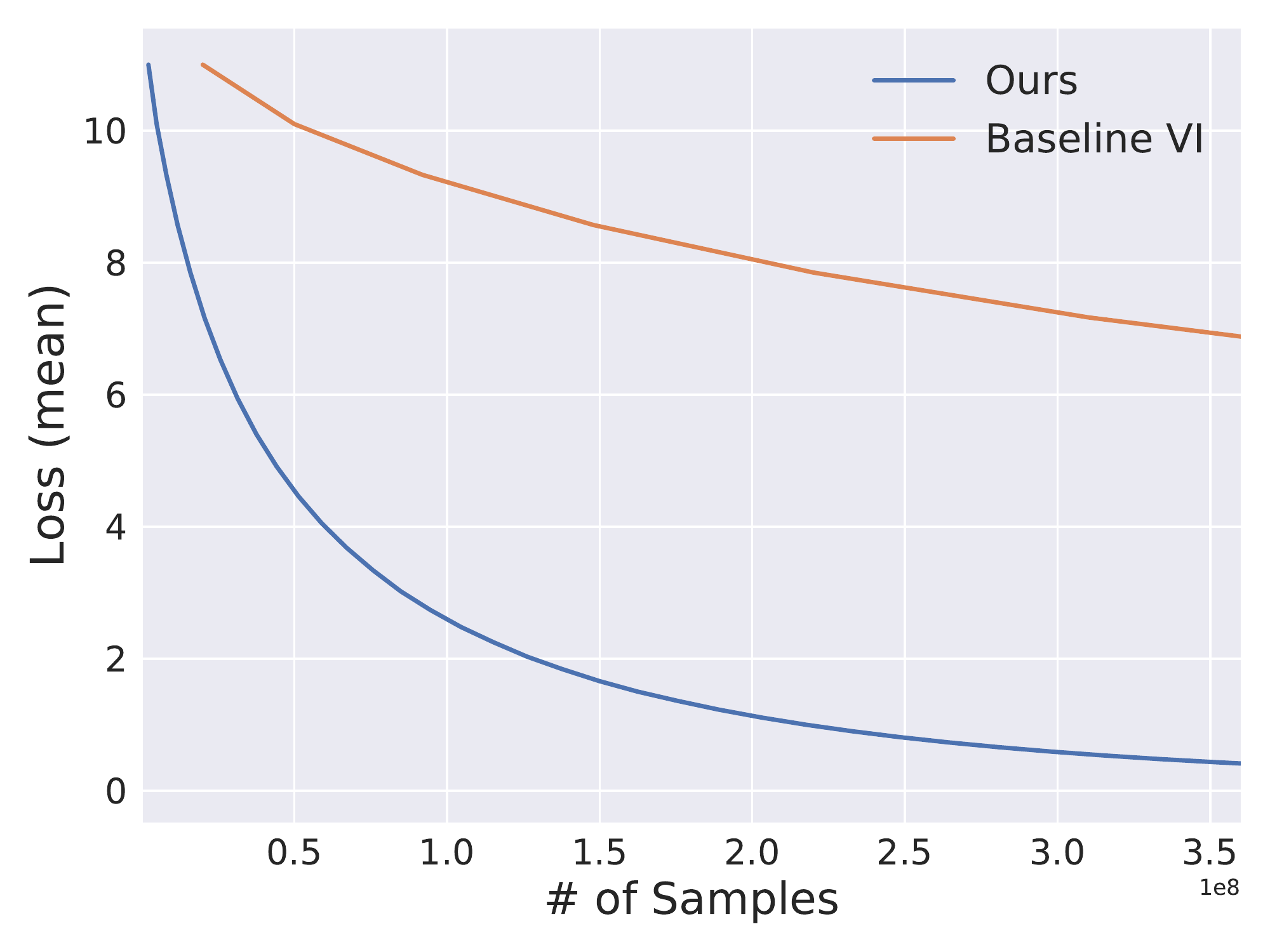}
}
\hspace{-2.3ex}
\subfigure[$\ell_{\infty}$ Errors]{
    \label{fig:ac_.9_linf_loss}
    \includegraphics[width=0.241\textwidth]{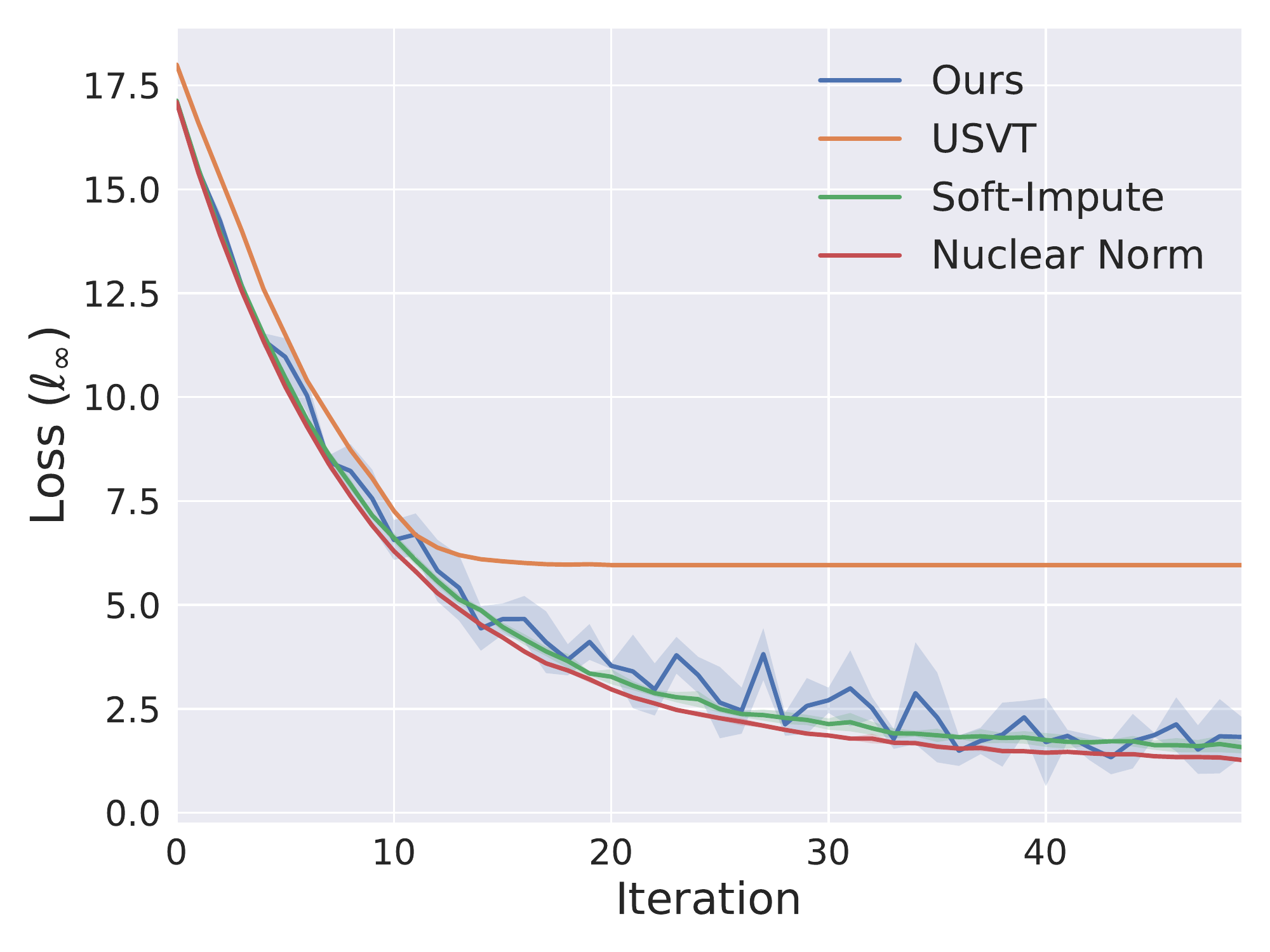}
}
\hspace{-2.3ex}
\subfigure[Mean Errors]{
    \label{fig:ac_.9_mean_loss}
    \includegraphics[width=0.241\textwidth]{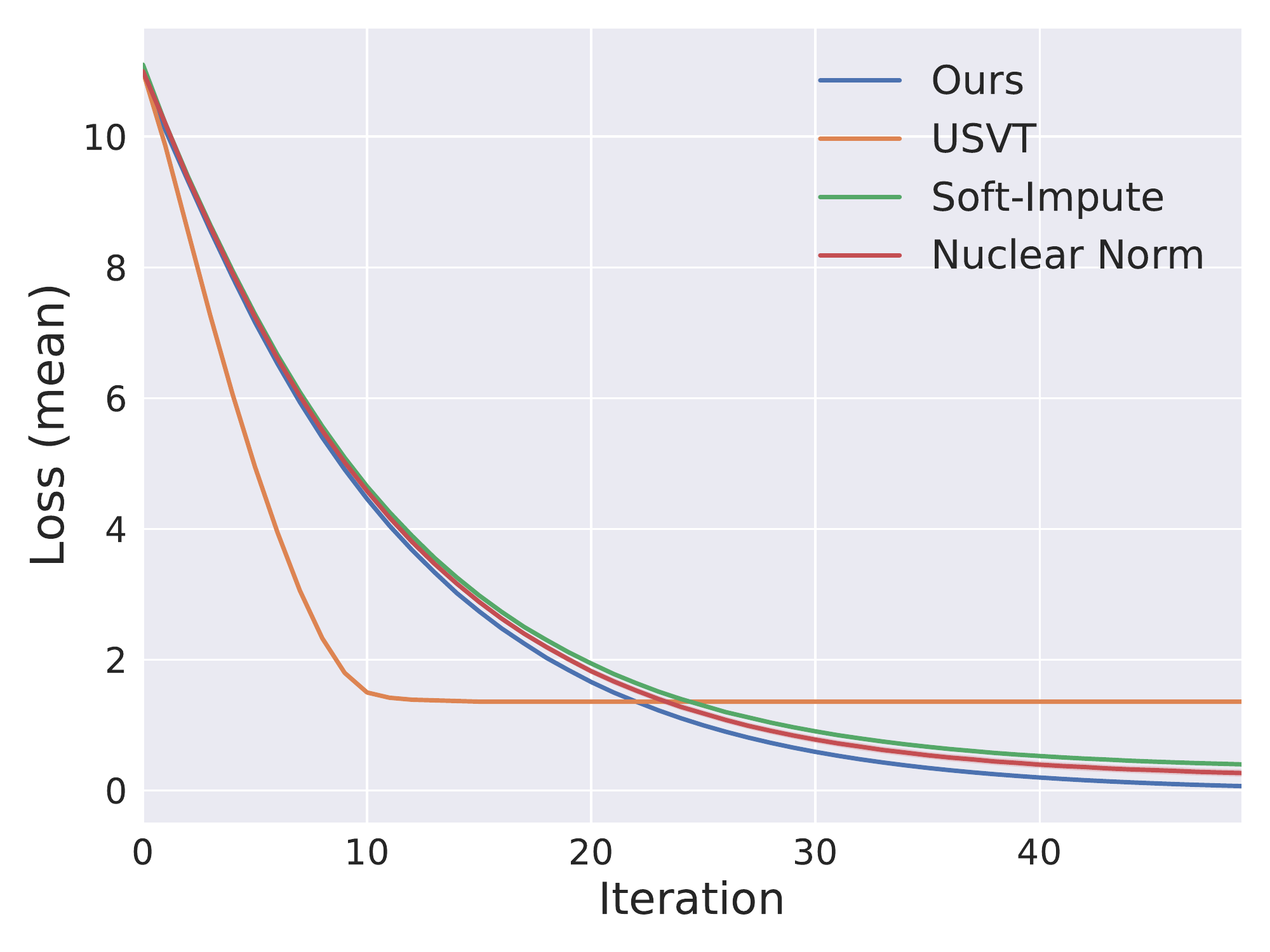}
}

\caption{ Empirical results on the Acrobot control task.  In (a) and (b), we show the improved sample complexity for achieving different levels of $\ell_\infty$ error and mean error, respectively. In (c) and (d), we compare the $\ell_\infty$ error and the mean error for various ME methods. Results are averaged across 5 runs for each method.}
\label{fig:ac_.9_main}
\vspace{-0.3cm}
\end{figure}

\medskip\noindent
{\bf Policy Visualization.}
\begin{figure}[H]
\vspace{-0.16in}
\centering
\subfigure[Optimal Policy]{
    \label{fig:policy_ac_optimal}
    \includegraphics[height=0.22\textwidth]{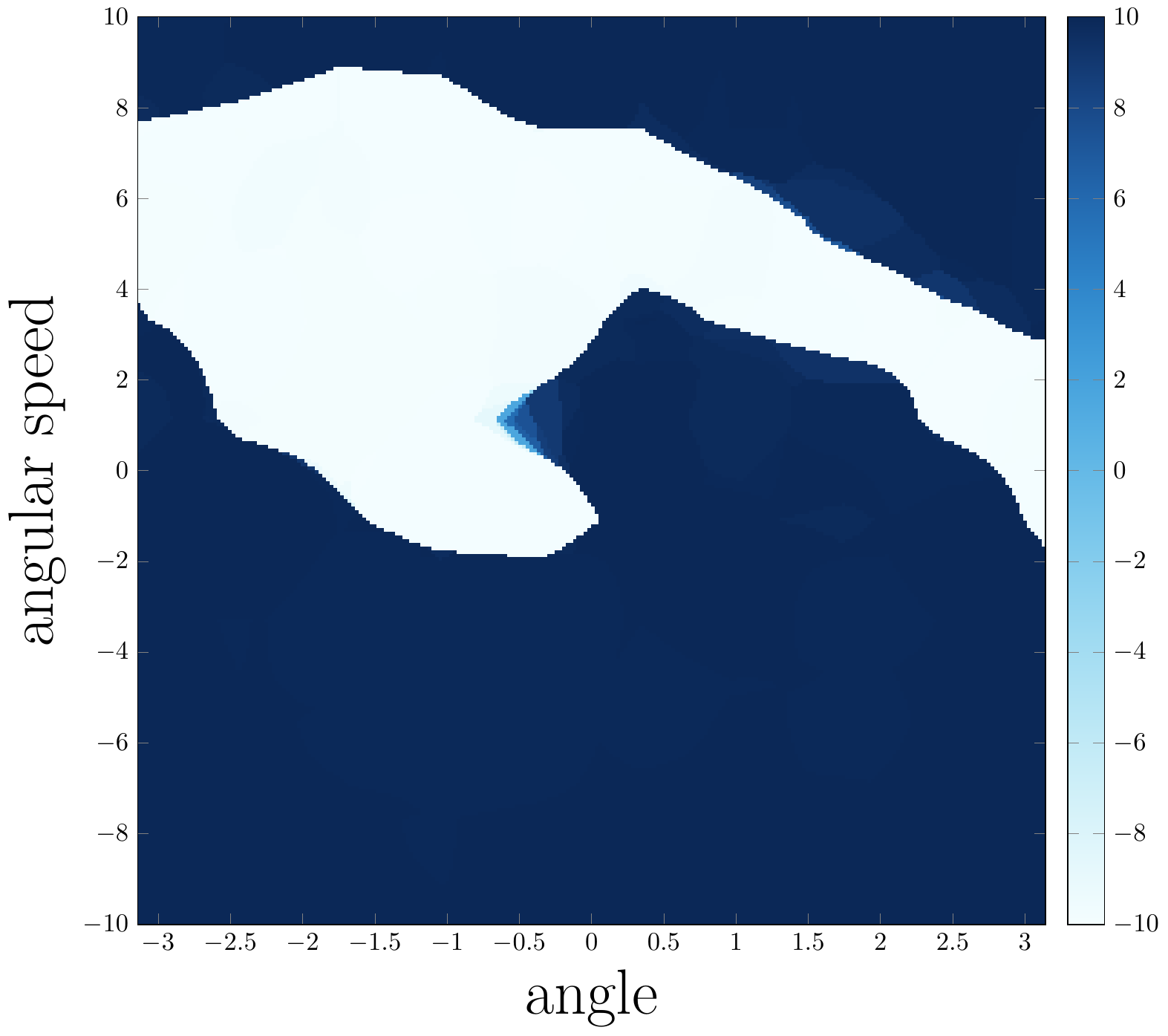}
}
\hspace{-2.3ex}
\subfigure[Soft-Impute]{
    \label{fig:policy_ac_softimp}
    \includegraphics[trim={27 0 0 0},clip,height=0.22\textwidth]{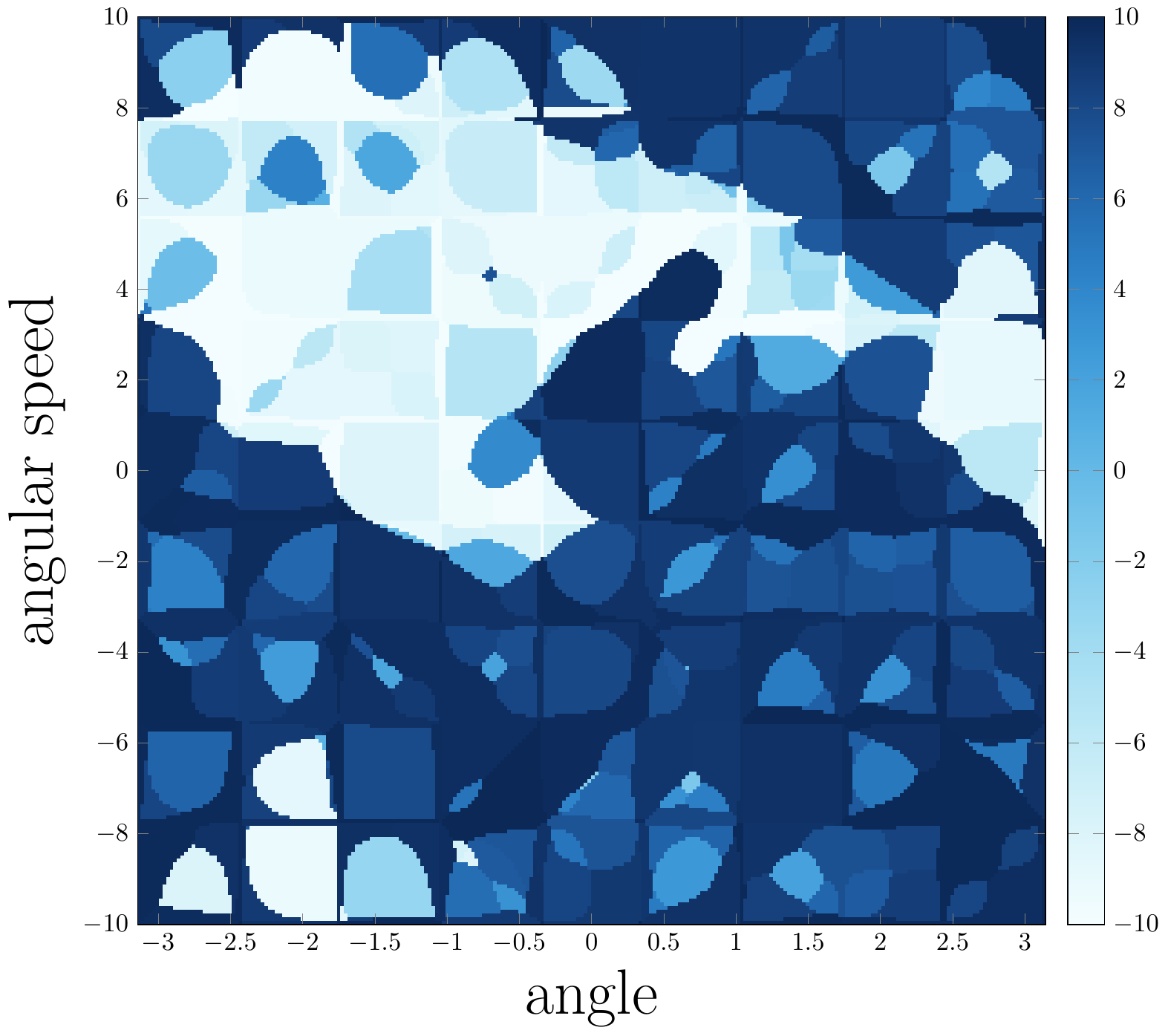}
}
\hspace{-2.3ex}
\subfigure[Nuclear Norm]{
    \label{fig:policy_ac_nucnorm}
    \includegraphics[trim={27 0 0 0},clip,height=0.22\textwidth]{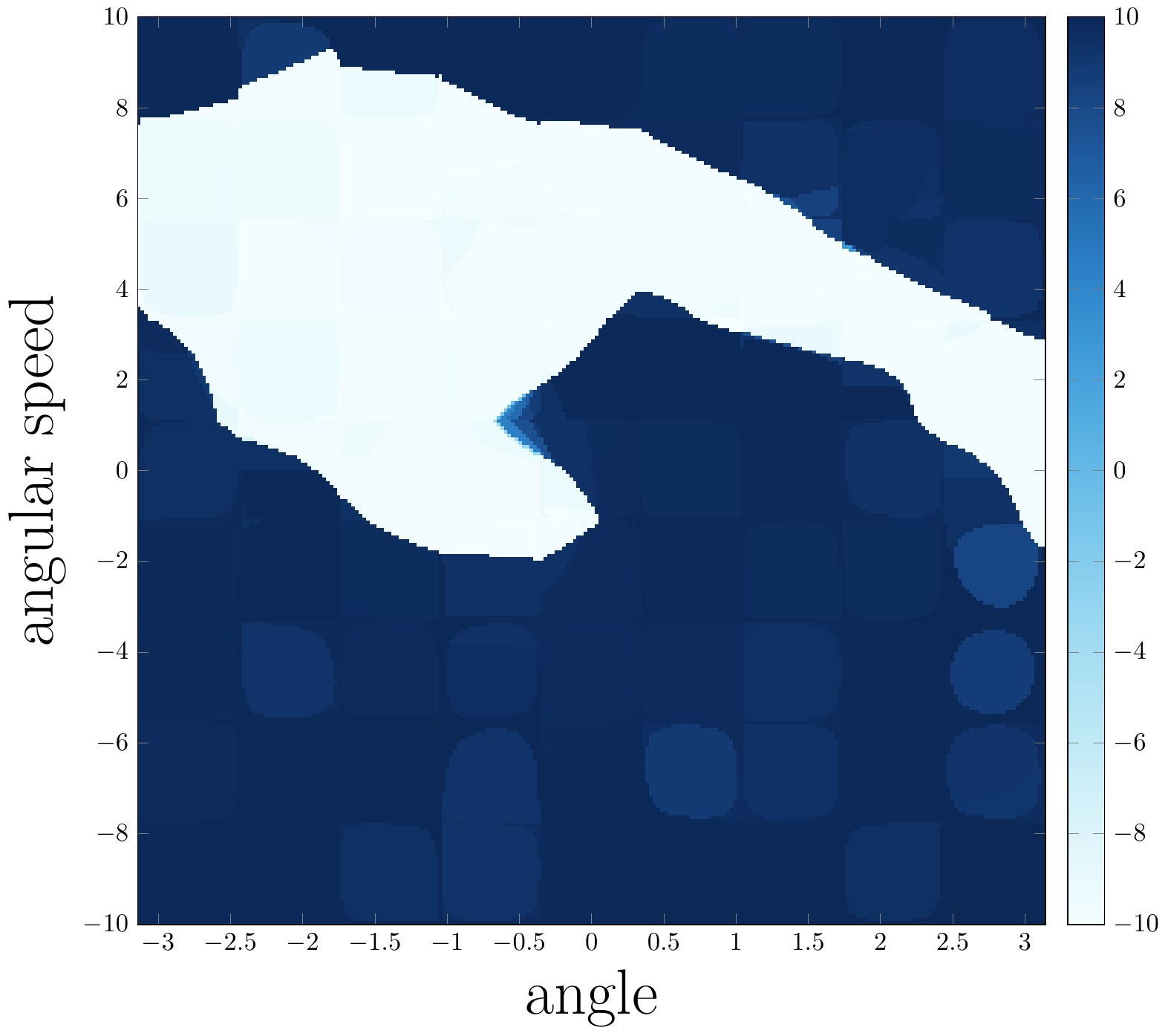}
}
\hspace{-2.3ex}
\subfigure[Ours]{
    \label{fig:policy_ac_ours}
    \includegraphics[trim={27 0 0 0},clip,height=0.22\textwidth]{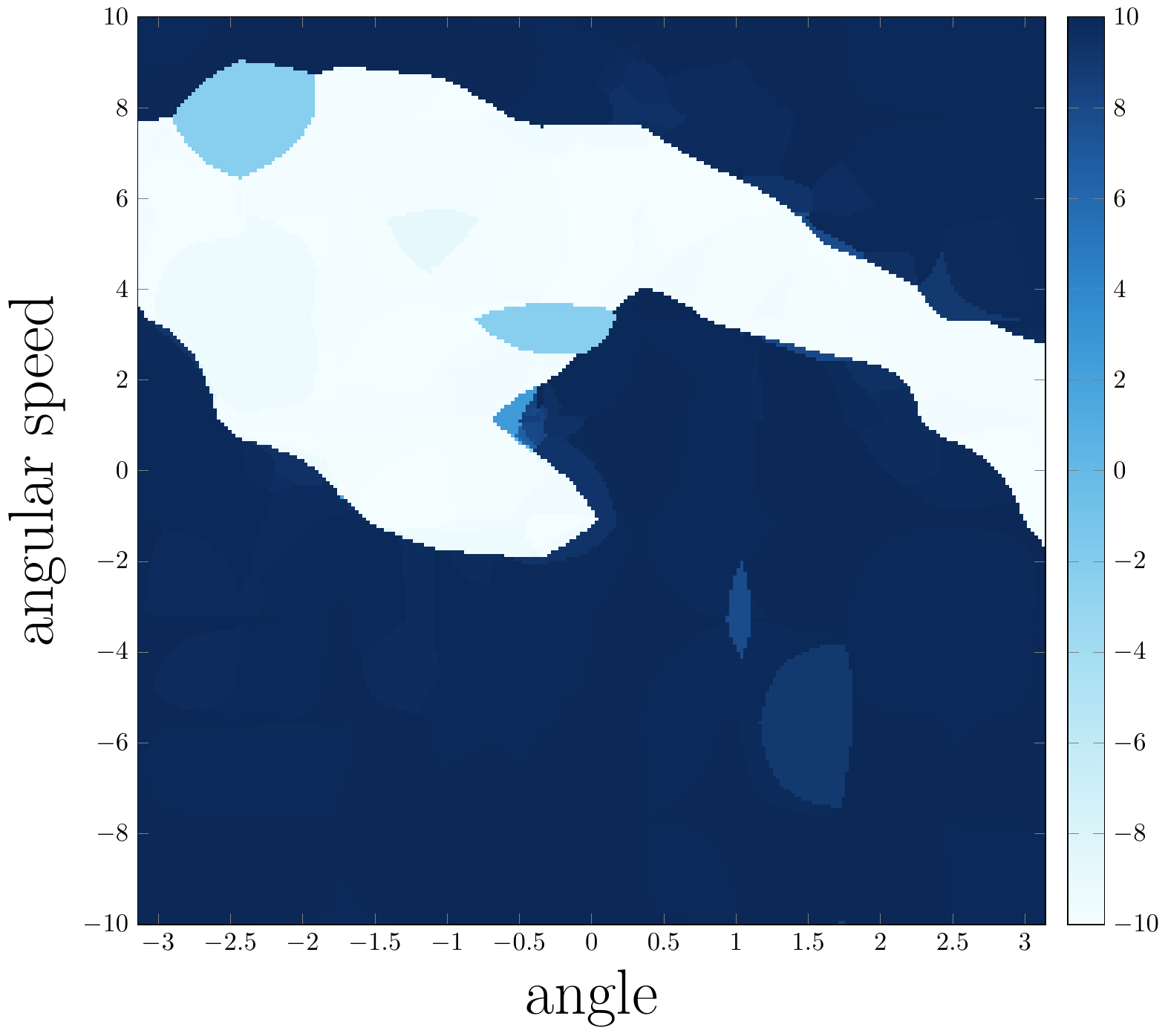}
}
\caption{Policy visualization of different methods on the Acrobot control task. The policy is obtained from the output $Q^{(T)}$  by taking $\arg\max_{a\in\mA}Q^{(T)}(s,a)$ at each state $s$. Recall that the state space is 4-dimensional. We hence visualize a 2-dimensional slice in the figure.}
\label{fig:policy-ac}
\vspace{-0.3cm}
\end{figure}

\subsection{Additional Study on the Discounting Factor $\gamma$}
\label{appendix:gamma-ablation}
Throughout the empirical study, we follow the literature~\cite{russ2019,yang2020harnessing} to use a large discounting factor $\gamma$ (i.e., $0.9$) on several real control tasks. We have demonstrated that the proposed low-rank algorithm can perform well on those settings, confirming the efficacy of our method. Just as a final proof of concept for our theoretical guarantees, 
 we provide in this section an ablation study on the $\ell_\infty$  error with smaller value of $\gamma$. We choose $\gamma=0.5$ on the Inverted Pendulum control task.
 note that this affects the reward design and changes the original task. The experiment is only meant to further validate our guarantees.


We show the sample complexity as well as the $\ell_\infty$ errors in Fig.~\ref{fig:gamma.5_ip}.
As expected, with a smaller $\gamma$, the convergence is faster. Again, the  overall conclusion is consistent with the previous experiments: 
significant gains on sample complexity are achieved by our efficient algorithm, and the performance of our simple ME method is competitive.

\begin{figure}[H]
\vspace{-0.16in}
\centering

\subfigure[Sample Complexity, $\gamma=0.5$]{
    \label{fig:gamma.5_ip_complexity}
    \includegraphics[width=0.4\textwidth]{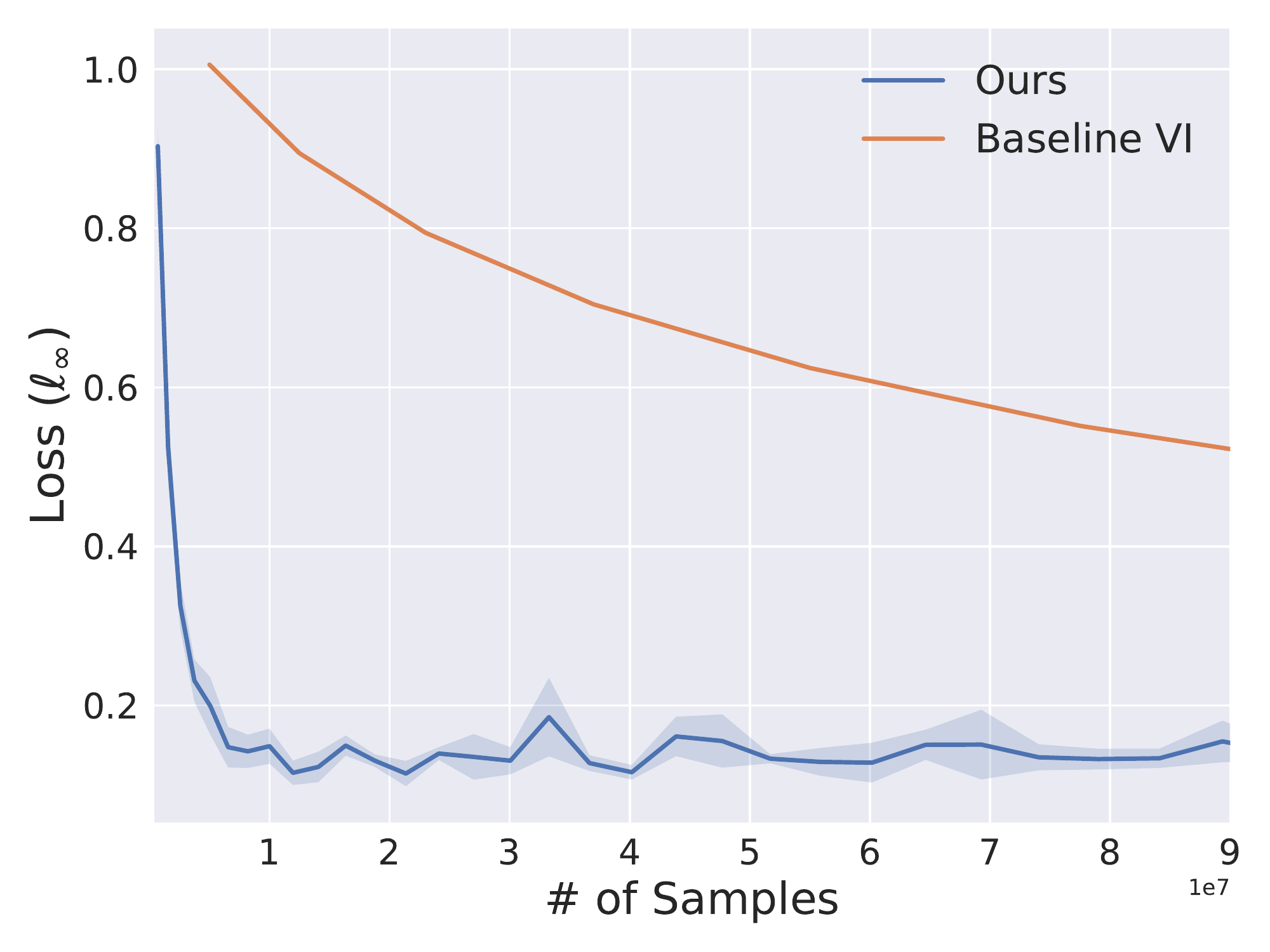}
}
\hspace{3ex}
\subfigure[$\ell_{\infty}$ Errors, $\gamma=0.5$]{
    \label{fig:gamma.5_ip_loss}
    \includegraphics[width=0.4\textwidth]{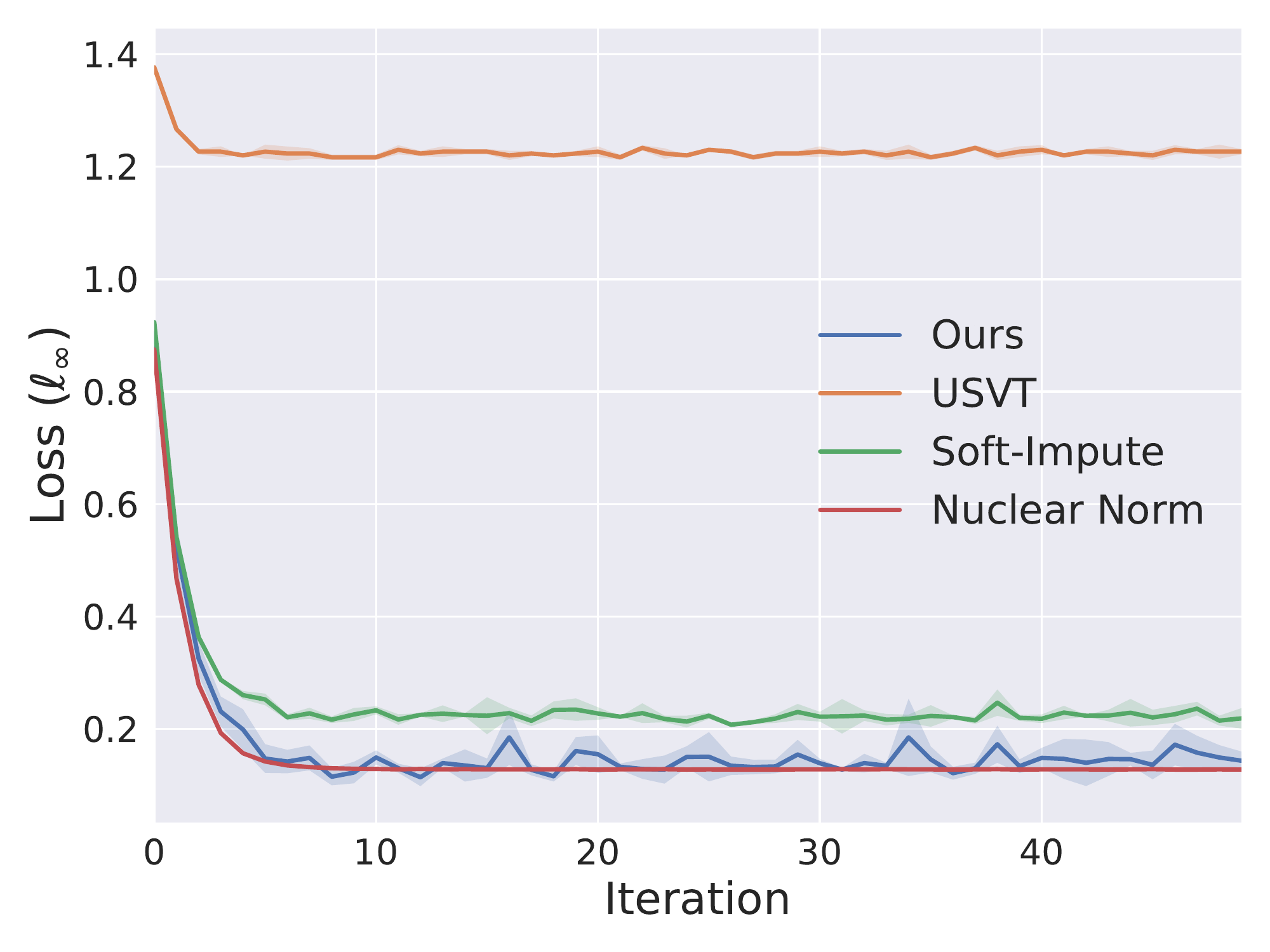}
}
\caption{
Empirical results on the Inverted Pendulum control task, with  $\gamma=0.5$. We show the improved sample complexity in (a) and compare the $\ell_\infty$ error  for various ME methods in (b).}
\label{fig:gamma.5_ip}
\vspace{-0.2cm}
\end{figure}

\end{document}